\definecolor{light-gray}{gray}{0.85}
\newcommand{\norm}[1]{\left\lVert#1\right\rVert}
\newcommand\inner[2]{\left\langle #1, #2 \right\rangle}
\newcommand{\argmin}{\mathop{\rm argmin}}
\newcommand{\mc}{\mathcal}
\newcommand{\mbb}{\mathbb}
\newcommand{\mcL}{{\mathcal{L}}}
\newcommand{\mcN}{{\mathcal{N}}}
\newcommand{\mcNk}{\mathcal{N}^\kappa}
\newcommand{\pt}{{\pi_\theta}}
\newcommand{\pti}{{\pi_{\theta_i}^i}}
\newcommand{\ptt}{{\pi_{\theta^t}}}
\newcommand{\ptti}{{\pi_{\theta^t_i}^i}}
\newcommand{\hpt}{{\hat{\pi}_\theta}}
\newcommand{\hpti}{{\hat{\pi}_{\theta_i}^i}}
\newcommand{\Ltt}{{L_{\theta\theta}}}
\newcommand{\Ltu}{{L_{\theta\mu}}}
\newcommand{\ba}{\begin{array}}
\newcommand{\ea}{\end{array}}
\theoremstyle{plain}
\newtheorem{theorem}{Theorem}[section]
\newtheorem{proposition}[theorem]{Proposition}
\newtheorem{lemma}[theorem]{Lemma}
\newtheorem{example}[theorem]{Example}
\newtheorem{definition}[theorem]{Definition}
\newtheorem{assumption}[theorem]{Assumption}
\newtheorem{remark}[theorem]{Remark}
\title{Scalable Primal-Dual Actor-Critic Method for Safe Multi-Agent RL with General Utilities}
\author{%
  Donghao Ying \\
  IEOR Department\\
  UC Berkeley\\
  \texttt{donghaoy@berkeley.edu} \\
  \And
    Yunkai Zhang \\
  IEOR Department\\
  UC Berkeley\\
  \texttt{yunkai\_zhang@berkeley.edu} \\
  \And
    Yuhao Ding \\
  IEOR Department\\
  UC Berkeley\\
  \texttt{yuhao\_ding@berkeley.edu} \\
  \And
    Alec Koppel \\
  J.P. Morgan AI Research\\
  \texttt{alec.koppel@jpmchase.com} \\
  \And
  Javad Lavaei \\
  IEOR Department\\
  UC Berkeley\\
  \texttt{lavaei@berkeley.edu} \\
}
\begin{document}

\maketitle

\begin{abstract}
We investigate safe multi-agent reinforcement learning, where agents seek to collectively maximize an aggregate sum of local objectives while satisfying their own safety constraints. The objective and constraints are described by {\it general utilities}, i.e., nonlinear functions of the long-term state-action occupancy measure, which encompass broader decision-making goals such as risk, exploration, or imitations. 
The exponential growth of the state-action space size with the number of agents presents challenges for global observability, further exacerbated by the global coupling arising from agents' safety constraints.
To tackle this issue, we propose a primal-dual method utilizing shadow reward and $\kappa$-hop neighbor truncation under a form of correlation decay property, where $\kappa$ is the communication radius. In the exact setting, our algorithm converges to a first-order stationary point (FOSP) at the rate of $\mathcal{O}\left(T^{-2/3}\right)$. In the sample-based setting, we demonstrate that, with high probability, our algorithm requires $\widetilde{\mathcal{O}}\left(\epsilon^{-3.5}\right)$ samples to achieve an $\epsilon$-FOSP with an approximation error of $\mathcal{O}(\phi_0^{2\kappa})$, where $\phi_0\in (0,1)$.
Finally, we demonstrate the effectiveness of our model through extensive numerical experiments.
\end{abstract}

\section{Introduction}

Cooperative multi-agent reinforcement learning (MARL) involves agents operating within a shared environment, where each agent's decisions influence not only their objectives, but also those of others and the state trajectories \cite{zhang2021multi}.
In seeking to bring conceptually sound MARL techniques out of simulation \cite{silver2016mastering,vinyals2019grandmaster} and into real-world environments \cite{shalev2016safe,levine2016end}, some key issues emerge: safety and communications overhead implied by a training mechanism. 
Although experimentally, the centralized training decentralized execution (CTDE) framework has gained traction recently \cite{rashid2020weighted,peng2021facmac}, its requirement for centralized data collection can pose issues for large-scale \cite{munir2021multi} or privacy-sensitive applications \cite{amrouni2021abides}.
Therefore, we prioritize decentralized training, where to date most MARL techniques impose global state observability for performance certification \cite{zhang2021multi}. In this work, we extend recent efforts to alleviate this bottleneck \cite{qu2020scalable} especially in the case of safety critical settings, in a flexible manner that allows agents to incorporate risk, exploration, or prior information.

More specifically, we hypothesize that the multi-agent system consists of a network of agents that interact with each other locally according to an underlying dependence graph \cite{qu2020scalable}. Second, to model safety constraints in reinforcement learning (RL), we adopt a standard approach based on constrained Markov Decision Processes (CMDPs) \cite{altman1999constrained}, where one maximizes the expected total reward subject to a safety-related constraint on the expected total utility. 
Third, since many decision-making problems take a form beyond the classic cumulative reward, such as apprenticeship learning \cite{abbeel2004apprenticeship}, diverse skill discovery \cite{eysenbach2018diversity}, pure exploration \cite{hazan2019provably}, and state marginal matching \cite{lee2019efficient}, we focus on utility functions defined as nonlinear functions of the induced state-action occupancy measure, which can be abstracted as RL with general utilities \cite{zahavy2021reward,zhang2020variational}.

Towards formalizing the approach, we consider an MARL model consisting of $n$ agents, each with its own local state $s_i$ and action $a_i$, where the multi-agent system is associated with an underlying dependence graph $\mathcal{G}$. 
Each agent is privately associated with two local general utilities $f_i(\cdot)$ and $g_i(\cdot)$, where $f_i(\cdot)$ and $g_i(\cdot)$ are functions of the local occupancy measure. 
The objective is to find a safe policy for each agent that maximizes the average of the local objective utilities, namely, ${1}/{n}\cdot \sum_{i=1}^n f_i(\cdot)$, and satisfies each agent's individual safety constraint described by its local utility $g_i(\cdot)$.
This setting captures a wide range of safety-critical applications, for example, resource allocation for the control of networked epidemic models \cite{nowzari2015optimal}, influence maximization in social networks \cite{chen2011influence}, portfolio optimization in interbank network structures \cite{georg2013effect}, intersection management for connected vehicles \cite{jin2013platoon}, and energy constraints of wireless communication networks \cite{goldsmith2002design}.

Despite the significance of safe MARL with general utilities, prior works have either ignored the necessity of safety \cite{zhang2021marl} or the computational bottleneck associated with global information exchange regarding the state and action per step \cite{qu2022scalable}. In fact, the interaction of these two aspects requires addressing the fact that each agent's own safety constraint requires information from all others.
In particular, the existing works in safe MARL allow full access to the global state or unlimited communications among all agents for policy implementation, value estimation, and constraint satisfaction \cite{lu2021decentralized,mondal2022mean,ding2023provably}.
However, this assumption is impractical due to the ``curse of dimensionality'' \cite{blondel2000survey}, as well as the limited information exchanges and communications among agents \cite{rotkowitz2005characterization}.

Therefore, to our knowledge, there is no methodology to both guarantee safety and incur manageable communications overhead for each agent. Compounding these issues is the fact that standard RL training schemes based on the {\it policy gradient theorem} \cite{sutton1999policy} are not applicable in the context of general utilities.
This deviation from the cumulative rewards adds to the difficulty of estimating the gradient, since there does not exist a policy-independent reward function.
We refer the reader to Appendix \ref{sec:related-work} for an extended discussion of related works.

To address these challenges, we focus on the setting of {\bf distributed training without global observability} and aim to develop a scalable algorithm with theoretical guarantees. Our main contributions are summarized below: 

\begin{itemize}[leftmargin = *]
    \item 
    
     Compared with existing theoretical works on safe MARL \cite{lu2021decentralized,mondal2022mean,gu2021multi}, we present the first safe MARL formulation that extends beyond cumulative forms in both the objective and constraints.  We develop a truncated policy gradient estimator utilizing shadow reward and $\kappa$-hop policies under a form of correlation decay property, where $\kappa$ represents the communication radius. The approximation errors arising from both policy implementation and value estimation are quantified. 
    \item 
    
     Despite of the global coupling of agents' local utility functions, we propose a scalable Primal-Dual Actor-Critic method, which allows each agent to update its policy based  only on the states and actions of its close neighbors and under limited communications. The effectiveness of the proposed algorithm is verified through numerical experiments.
    \item 
    
     From the perspective of optimization, we devise new tools to analyze the convergence of the algorithm.
    In the exact setting, we establish an $\mc{O}\left(T^{-2/3}\right)$ convergence rate for finding an FOSP, matching the standard convergence rate for solving nonconcave-convex saddle point problems. In the sample-based setting, we prove that, with high probability, the algorithm requires $\widetilde{\mc{O}}\left(\epsilon^{-3.5}\right)$ samples to obtain an $\epsilon$-FOSP with an approximation error of $\mc{O}(\phi_0^{2\kappa})$, where $\phi_0\in (0,1)$.

\end{itemize}
\section{Problem formulation}\label{sec:problem formulation}
Consider a Constrained Markov Decision Process (CMDP) over a finite state space $\mc{S}$ and a finite action space $\mc{A}$ with a discount factor $\gamma\in [0,1)$. 
A policy $\pi$ is a function that specifies the decision rule of the agent, i.e., the agent takes action $a\in \mc{A}$ with probability $\pi(a\vert s)$ in state $s\in \mc{S}$.
When action $a$ is taken, the transition to the next state $s^\prime$ from state $s$ follows the probability distribution $s^\prime \sim \mbb{P}(\cdot\vert s,a)$.
Let $\rho$ be the initial distribution.
For each policy $\pi$ and state-action pair $(s,a)\in \mc{S}\times \mc{A}$, the {\it discounted state-action occupancy measure} is defined as
\begin{equation}\label{eq:occupancy measure}
\lambda^\pi(s,a)=\sum_{k=0}^{\infty} \gamma^{k} \mathbb{P}\left(s^{k}=s, a^k = a\big\vert \pi, s^0 \sim \rho \right).
\end{equation}
The goal of the agent is to find a policy $\pi$ that maximizes a general objective described by a
(possibly) nonlinear function $f(\cdot)$ of $\lambda^\pi$, known as the {\it general utility}, subject to a constraint in the form of another general utility $g(\cdot)$, namely
\begin{equation}\label{eq:general prob}
\underset{\pi}\max\ f(\lambda^\pi) \quad \text {s.t.}\quad  {g}(\lambda^\pi) \geq {0}.
\end{equation}
When $f(\cdot)= \langle r,\cdot\rangle$ and $g(\cdot)=\langle u,\cdot\rangle$ are linear functions, \eqref{eq:general prob} recovers the standard CMDP problem:
\begin{equation}
\max_{\pi} V^\pi(r)\hspace{-0.5mm}=\hspace{-0.5mm}  \mbb{E}\left[\sum_{k=0}^{\infty} \gamma^{k} r\left(s^{k}, a^{k}\right) \bigg| \pi, s^{0}\sim \rho \right],\text{ s.t. } V^\pi(u)\hspace{-0.5mm}=\hspace{-0.5mm}  \mbb{E}\left[\sum_{k=0}^{\infty} \gamma^{k} u\left(s^{k}, a^{k}\right) \bigg|  \pi, s^{0}\sim \rho \right] \geq 0,\hspace{-0.5mm}
\end{equation}
where $V^\pi(\cdot)$ is usually referred to as the \textit{value function}.
In contrast, it has been shown that for some MDPs, there is no standard value function that can be equivalent to the general utility \cite[Lemma 1]{zahavy2021reward}.
In Appendix \ref{sec:example}, we provide more examples of formulation \eqref{eq:general prob} beyond standard value functions.

In this work, we study the decentralized version of problem \eqref{eq:general prob}.
Consider the system is composed of  a network of agents associated with a graph $\mc{G} = (\mc{N},\mc{E}_{\mc{G}})$ (not densely connected in general), where the vertex set $\mc{N}=\{1,2,\dots,n\}$ denotes the set of $n$ agents and the edge set $\mc{E}_{\mc{G}}$ prescribes the communication links among the agents.
Let $d(i,j)$ be the length of the shortest path between agents $i$ and $j$ on $\mc{G}$.
For $\kappa \geq 0$, let $\mcNk_i = \{j\in \mcN \vert d(i,j)\leq \kappa\}$ denote the set of agents in the $\kappa$-hop neighborhood of agent $i$, with the shorthand notation $\mcNk_{-i} \coloneqq \mcN\backslash  \mcNk_i$ and $-i \coloneqq \mcN\backslash\{i\}$.
The details of the decentralized nature of the system are summarized below:

\textbf{Space decomposition}$\quad$ The global state and action spaces are the product of local spaces, i.e., $\mc{S}=\mc{S}_1\times \mc{S}_2\times \cdots \times \mc{S}_n$, $\mc{A}=\mc{A}_1\times \mc{A}_2\times \cdots \times \mc{A}_n$, meaning that for every $s\in \mc{S}$ and $a\in \mc{A}$, we can write $s=(s_1,s_2,\dots,s_n)$ and $a = (a_1,a_2,\dots,a_n)$.
For each subset ${\mcN}^\prime\subset \mcN$, we use $(s_{{\mcN}^\prime},a_{{\mcN}^\prime})$ to denote the state-action pair for the agents in ${\mcN}^\prime$.

\textbf{Observation and communication}$\quad$
Each agent $i$ only has direct access to its own state $s_i$ and action $a_i$, while being allowed to communicate with its $\kappa$-hop neighborhood $\mcNk_i$ for information exchanges. The communication radius $\kappa$ is a given but tunable parameter.

\textbf{Transition decomposition}$\quad$ Given the current global state $s$ and action $a$, the local states in the next period are independently generated, i.e.,  $\mbb{P}(s^\prime \vert s,a) = \prod_{i\in \mc{N}} \mbb{P}_i(s_i^\prime \vert s,a)$, $\forall s^\prime \in \mc{S}$, where we use $\mbb{P}_i$ to denote the local transition probability for agent $i$.

\textbf{Policy factorization}$\quad$ The global policy can be expressed as the product of local policies, such that $\pi(a\vert s) = \prod_{i\in \mc{N}} \pi^{i}\left(a_{i} \vert s\right), \forall (s,a)$, i.e., given the global state $s$, each agent $i$ acts independently based on its local policy $\pi^i$. 
We assume that each local policy $\pi^i$ is parameterized by a parameter $\theta_i$ within a convex set $\Theta_i$.
Thus, we can write $\pi(a\vert s) = \pi_\theta(a\vert s) = \prod_{i\in \mc{N}}\pi_{\theta_{i}}^{i}\left(a_{i} \vert s\right)$, where $\theta \in \Theta = \Theta_1\times \Theta_2\times \cdots\times \Theta_n$ is the concatenation of local parameters.

\textbf{Localized objective and constraint}$\quad$
For each agent $i$ and its local state-action pair $(s_i,a_i)$, the {\it local state-action occupancy measure} under policy $\pi$ is defined as
\begin{equation}\label{eq:local_occupancy_measure}
\lambda^\pi_i(s_i,a_i)=\sum_{k=0}^{\infty} \gamma^{k} \mathbb{P}\left(s^{k}_i=s_i, a^k_i = a_i\big\vert \pi, s^0 \sim \rho \right),
\end{equation}
which can be viewed as the marginalization of the global occupancy measure, i.e., $\lambda^\pi_i( s_i, a_i) = \sum_{s_{-i},a_{-i}}\lambda^\pi(s,a)$.
Each agent $i$ is privately associated with two local (general) utilities $f_i(\cdot)$ and $g_i(\cdot)$, which are functions of the local occupancy measure $\lambda^\pi_i$.
Agents cooperate with each other aiming at maximizing the global objective $f(\cdot)$, defined as the average of local utilities $\{f_i(\cdot)\}_{i\in \mcN}$, while each agent $i$ needs to satisfy its own safety constraint described by the local utility $g_i(\cdot)$.
Then, under the parameterization $\pi_\theta$, \eqref{eq:general prob} can be rewritten as
\begin{equation}\label{eq:dec_prob}
\max_{\theta\in \Theta}\ F(\theta)\coloneqq\frac{1}{n}\sum_{i\in \mc{N}}f_i(\lambda_{i}^{\pi_\theta}), \text { s.t. } G_i(\theta) \coloneqq {g}_i(\lambda_{i}^{\pi_\theta}) \geq {0},\ \forall i\in \mc{N}.
\end{equation}
Note that problem \eqref{eq:dec_prob} is not separable among agents due to the coupling of occupancy measures.
Compared to the formulation where the constraint is modeled as the average of local constraints, e.g., \cite{ding2023provably}, \eqref{eq:dec_prob} is stricter and more interpretable.
We emphasize that the method proposed in this paper does not require the relaxation of local constraints in \eqref{eq:dec_prob} to a joint constraint and it directly generalizes to the case of multiple constraints per agent.

Consider the Lagrangian function associated with \eqref{eq:dec_prob}:
\begin{equation}\label{eq:lagrangian}
\mcL(\theta,\mu):=F(\theta)+\frac{1}{n}\sum_{i\in \mcN}\mu_iG_i(\theta)=\frac{1}{n}\sum_{i\in \mcN} \left[f_i(\lambda^\pt_i)+\mu_ig_i(\lambda^\pt_i) \right],
\end{equation}
where $\mu\in \mathbb{R}^{n}_+$ is the Lagrangian multiplier.
The Lagrangian formulation \cite{bertsekas1997nonlinear} of \eqref{eq:dec_prob} can be written as
\begin{equation}\label{eq:prob_lagrange}
\max_{\theta\in \Theta}\min_{\mu\geq 0} \mcL(\theta,\mu).
\end{equation}
Since the general utilities $f_i(\lambda^\pt_i)$ and $g_i(\lambda^\pt_i)$ may not be non-concave w.r.t. $\theta$ even in the form of cumulative rewards, finding the global optimum to \eqref{eq:dec_prob} is NP-hard in general \cite{murty1987some}.
Our goal in this work is to develop a scalable and provably efficient gradient-based primal-dual algorithm that can find the first-order stationary points of  \eqref{eq:dec_prob}.

\section{Scalable primal-dual actor-critic method}\label{sec:algorithm}
For a standard value function with the reward $r\in \mbb{R}^{|\mc{S}|\times|\mc{A}|}$, denoted as $V^\pt(r)=\langle r,\lambda^\pt\rangle $, the policy gradient theorem (see Lemma \ref{lemma:policy gradient_general}) yields that
\begin{equation*}
\nabla_\theta V^\pt(r)=r^\top\cdot\nabla_\theta\lambda^\pt = \frac{1}{1-\gamma}\mbb{E}_{s\sim d^\pt, a\sim\pt(\cdot\vert s)}
\big[\nabla_\theta \log \pt(a \vert s ) \cdot Q^\pt(r;s,a)\big],
\end{equation*}
where $d^\pt(s)\coloneqq  (1-\gamma)\sum_{a\in \mc{A}} \lambda^\pt(s,a)$ is the discounted state occupancy measure, $\nabla_{\theta}\log\pt(\cdot\vert\cdot)$ is the score function, and $Q^\pt(r;\cdot,\cdot)$ is the Q-function with the reward $r$, defined as
\begin{equation}\label{eq:Q_def}
Q^\pt(r;s,a)=\mbb{E}\left[\sum_{k=0}^{\infty} \gamma^{k} r\left(s^{k}, a^{k}\right) \bigg|  \pt, s^{0}=s,a^0=a \right].
\end{equation}
Although this elegant result no longer holds for general utilities, 
we can apply the chain rule:
\begin{equation}\label{eq:gradient_equiv}
\nabla_\theta f(\lambda^\pt) = \left[\nabla_\lambda f(\lambda^\pt)\right]^\top\cdot \nabla_\theta\lambda^\pt = \nabla_\theta V^\pt\big(\nabla_\lambda f(\lambda^\pt)\big),
\end{equation}
i.e., the gradient $\nabla_\theta f(\lambda^\pt)$ is equal to the policy gradient of a standard value function with the reward $\nabla_\lambda f(\lambda^\pt)$. 
We introduce the following definitions \cite{zhang2021marl} for the distributed problem \eqref{eq:dec_prob}.
\begin{definition}[Shadow reward and shadow Q-function]\label{def:shadow}
For each agent $i$, define $r^\pt_{f_i}\coloneqq  \nabla_{\lambda_i} f_i(\lambda^\pt_i)\in \mbb{R}^{|\mc{S}_i|\times|\mc{A}_i|}$ as the (local) shadow reward for the utility $f_i(\cdot)$ under policy $\pt$. 
Define $Q_{f_i}^\pt(s,a)\coloneqq  Q^\pt(r^\pt_{f_i};s,a)$ as the associated (local) shadow Q-function for $f_i(\cdot)$.
Similarly, let $r^\pt_{g_i}$ and $Q_{g_i}^\pt(s,a)$ be the shadow reward and the Q function for $g_i(\cdot)$.
\end{definition}
Combining Definition \ref{def:shadow} with \eqref{eq:gradient_equiv}, we can write the local gradient for agent $i$, i.e.,  $\nabla_{\theta_i}\mcL(\theta,\mu)$, as
\begin{equation}\label{eq:grad_lagrangian}
\begin{aligned}
\nabla_{\theta_i} \mcL(\theta,\mu) = \frac{1}{1-\gamma}\mathbb{E}_{s\sim d^\pt,a\sim\pt(\cdot\vert s)}\bigg[ \nabla_{\theta_i}\log \pti (a_i\vert s)\cdot \frac{1}{n}\sum_{j\in \mcN}\left(Q^\pt_{f_j}(s,a)+\mu_jQ^\pt_{g_j}(s,a) \right)\bigg],
\end{aligned}
\end{equation}
where we apply the policy factorization to arrive at $\nabla_{\theta_i} \log \pt (a\vert s) = \nabla_{\theta_i}\log \pti (a_i\vert s)$.
By \eqref{eq:grad_lagrangian}, each agent needs to know the shadow Q functions of all agents, as well as the global state, to evaluate its own gradient.
However, especially in large networks, this is both inefficient, due to the communication cost, and impractical because of the limited communication radius.
In the remainder of this section, we aim to design a scalable estimator for $\nabla_{\theta_i} \mcL(\theta,\mu)$ that requires only local communications.
\subsection{Spatial correlation decay and $\kappa$-hop policies}
Inspired by \cite{alfano2021dimension}, we assume that the transition probability satisfies a form of the spatial correlation decay property \cite{georgii2011gibbs, gamarnik2013correlation}. 
\begin{assumption}\label{assump:decay_transition}
For a matrix $M\in \mbb{R}^{n\times n}$ whose $(i,j)$-th entry is defined as
\begin{equation}
M_{i j}=\sup_{s_j, a_j,s_j^{\prime}, a_j^{\prime},s_{-j},a_{-j}}\left\|\mbb{P}_i\left(\cdot \vert s_j, s_{-j}, a_j, a_{-j}\right)- \mbb{P}_i\left(\cdot \vert s_j^{\prime}, s_{-j}, a_j^{\prime}, a_{-j}\right)\right\|_1,
\end{equation}
assume that there exists $\omega> 0$ such that $\max _{i \in \mcN} \sum_{j \in \mcN} e^{\omega d(i, j)} M_{i j} \leq \chi$ with $\chi<2/\gamma$, where $\gamma$ is the discount factor.
\end{assumption}
The value of $M_{ij}$ reflects the extent to which agent $j$'s state and action influence the local transition probability of agent $i$.
Thus, Assumption \ref{assump:decay_transition} amounts to requiring this influence to decrease exponentially with the distance between any two agents.
Such a decay is often observed in many large-scale real-world  systems, e.g., the strength of signals decreases exponentially with distance \cite{tse2005fundamentals}.

Furthermore, as mentioned earlier, the implementation of the local policy $\pti(\cdot\vert s)$ is still impractical, since it requires access to the global state $s$, while the allowable communication radius is limited to $\kappa$.
To alleviate this issue, we focus on a specific class of policies in which the local policy of agent $i$ only depends on the states of these agents in its $\kappa$-hop neighborhood $\mcNk_i$.
This class of policies is also referred to as $\kappa$-hop policies in the concurrent work \cite{zhang2022global}.
\begin{assumption}[$\kappa$-hop policies]\label{assump:khop_policy}
For each agent $i\in \mcN$ and $ \theta\in \Theta$, the local policy $\pti(\cdot\vert s)$ depends only on the neighbor states $s_{\mcNk_i}$, i.e.,
\begin{equation}
\pti(\cdot\vert s_{\mcNk_i}, s_{\mcNk_{-i}}) = \pti(\cdot\vert s_{\mcNk_i}, s^\prime_{\mcNk_{-i}}),\ \forall s\in \mc{S} \text{ and } \forall s^\prime_{\mcNk_{-i}}\in \mc{S}_{\mcNk_{-i}}.
\end{equation}
\end{assumption}
For simplicity, we use the notation $\pti(\cdot\vert s)= \pti(\cdot\vert s_{\mcNk_i})$ for $\kappa$-hop policies when it is clear from context.
We note that, for any original policy function $\pt(\cdot\vert s)$, an induced $\kappa$-hop policy $\hpt(\cdot \vert  s_{\mcNk_i})$ can be defined by fixing the states $s_{\mcNk_{-i}}$ to some arbitrary values and focusing only on the states of agents in ${\mcNk_i}$.
When considering only $\kappa$-hop policies, it is essential to understand how much information is lost compared to the case where agents have access to the global states.
The following proposition quantifies the maximum information loss in terms of the occupancy measure under the assumption that the original policy function also satisfies a spatial correlation decay property.
\begin{proposition}\label{prop:khop_policy}
Suppose that there exist $c\geq 0$ and $\phi\in[0,1)$ such that for every $\theta\in \Theta$, agent $i\in \mcN$, and states $s,s^\prime\in \mc{S}$ such that $s_{\mcNk_i}=s^\prime_{\mcNk_i}$, we have $\left\|\pti(\cdot\vert s)-\pti(\cdot\vert s^\prime) \right\|_1\leq c\phi^\kappa$.
Let $\hpt$ be an induced $\kappa$-hop policy of $\pt$. Then, it holds that
\begin{equation}\label{eq:khop_policy_prop}
\left\|\lambda^\hpt_i-\lambda^\pt_i\right\|_1\leq \dfrac{nc\phi^k}{(1-\gamma)^2}, \forall i\in \mcN.
\end{equation}
\end{proposition}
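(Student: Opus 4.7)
The plan is to pass from the local occupancy-measure gap to the global one, bound the perturbation of the global policy using its product structure, and then propagate that perturbation through the Markov dynamics. First, since $\lambda^\pt_i$ is a marginal of $\lambda^\pt$, marginalization is non-expansive in $\ell_1$, so
\begin{equation*}
\|\lambda^\hpt_i - \lambda^\pt_i\|_1 \leq \|\lambda^\hpt - \lambda^\pt\|_1,\quad \forall i\in \mcN,
\end{equation*}
and it suffices to bound the right-hand side by $nc\phi^\kappa/(1-\gamma)^2$. Second, I would control the one-step global policy gap: by the standard hybridization identity for product distributions applied to $\pt(a\vert s) = \prod_{i\in \mcN}\pti(a_i\vert s)$ and $\hpt(a\vert s)= \prod_{i\in \mcN}\hpti(a_i\vert s)$, and using the hypothesis $\|\pti(\cdot\vert s) - \pti(\cdot\vert s')\|_1 \leq c\phi^\kappa$ whenever $s_{\mcNk_i}=s'_{\mcNk_i}$ (which exactly matches the definition of an induced $\kappa$-hop policy), I obtain
\begin{equation*}
\|\pt(\cdot\vert s) - \hpt(\cdot\vert s)\|_1 \leq \sum_{i\in \mcN}\|\pti(\cdot\vert s) - \hpti(\cdot\vert s)\|_1 \leq nc\phi^\kappa,\quad \forall s\in \mc{S}.
\end{equation*}

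Third, I would propagate this bound through the state-action chain. Let $d^\pt_k(s) = \mbb{P}(s^k=s\vert \pt, s^0\sim\rho)$ and $\mu^\pt_k(s,a) = d^\pt_k(s)\pt(a\vert s)$, and similarly $d^\hpt_k, \mu^\hpt_k$. Using the recursion $d^\pt_k(s) = \sum_{s',a'} \mbb{P}(s\vert s',a')\pt(a'\vert s')d^\pt_{k-1}(s')$ (and likewise for $\hpt$), splitting the difference into a policy-perturbation term and a previous-step distribution term, and using that $\mbb{P}$ is a stochastic kernel, I get
\begin{equation*}
\|d^\pt_k - d^\hpt_k\|_1 \leq nc\phi^\kappa + \|d^\pt_{k-1} - d^\hpt_{k-1}\|_1,
\end{equation*}
which unrolls to $\|d^\pt_k - d^\hpt_k\|_1 \leq k\,nc\phi^\kappa$ since $d^\pt_0 = d^\hpt_0 = \rho$. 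Decomposing $\mu^\pt_k - \mu^\hpt_k$ in the same additive way, $\|\mu^\pt_k - \mu^\hpt_k\|_1 \leq (k+1)\,nc\phi^\kappa$. Multiplying by $\gamma^k$ and summing,
\begin{equation*}
\|\lambda^\pt - \lambda^\hpt\|_1 \leq \sum_{k=0}^{\infty}\gamma^k(k+1)\,nc\phi^\kappa = \dfrac{nc\phi^\kappa}{(1-\gamma)^2},
\end{equation*}
which, combined with the marginalization step, yields \eqref{eq:khop_policy_prop}.

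\textbf{Main obstacle.} The conceptual steps are all standard; the main piece of care is in step three, where one must cleanly separate the contributions of ``policy mismatch at time $k$'' and ``state-distribution mismatch inherited from time $k-1$'' so that the per-step error is exactly $nc\phi^\kappa$ rather than something that compounds geometrically, and then recognize that $\sum_{k\geq 0}\gamma^k(k+1) = (1-\gamma)^{-2}$ is what produces the quadratic horizon factor. A secondary subtlety is verifying that the product-of-locals structure of $\hpt$ remains a valid joint policy under the $\kappa$-hop truncation, so that the telescoping bound on $\|\pt(\cdot\vert s) - \hpt(\cdot\vert s)\|_1$ applies verbatim at every global state $s$.
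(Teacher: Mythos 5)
Your proof is correct and reaches the exact constant $nc\phi^\kappa/(1-\gamma)^2$. The first two ingredients coincide with the paper's: the telescoping (hybridization) bound $\norm{\pt(\cdot\vert s)-\hpt(\cdot\vert s)}_1\leq \sum_{i\in\mcN}\norm{\pti(\cdot\vert s)-\hpti(\cdot\vert s_{\mcNk_i})}_1\leq nc\phi^\kappa$ on the product policy, and the observation that marginalizing to $\lambda_i^\pi$ is $\ell_1$-non-expansive. Where you diverge is the passage from the one-step policy gap to the occupancy-measure gap. The paper works with the matrix representation $\lambda^\pi=(I-\gamma\mbb{P}^\pi)^{-1}\rho^\pi$, splits the difference into an initial-distribution term and a resolvent term via the identity $(I-\gamma\mbb{P}^\hpt)^{-1}-(I-\gamma\mbb{P}^\pt)^{-1}=\gamma(I-\gamma\mbb{P}^\pt)^{-1}(\mbb{P}^\hpt-\mbb{P}^\pt)(I-\gamma\mbb{P}^\hpt)^{-1}$, and uses $\norm{(I-\gamma\mbb{P}^\pi)^{-1}}_1=1/(1-\gamma)$ to get $\tfrac{nc\phi^\kappa}{1-\gamma}+\tfrac{\gamma\,nc\phi^\kappa}{(1-\gamma)^2}=\tfrac{nc\phi^\kappa}{(1-\gamma)^2}$. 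You instead run a per-time-step recursion on $\mu_k^\pi(s,a)=d_k^\pi(s)\pi(a\vert s)$, obtaining $\norm{\mu_k^\pt-\mu_k^\hpt}_1\leq(k+1)nc\phi^\kappa$ and summing $\sum_{k\geq 0}\gamma^k(k+1)=(1-\gamma)^{-2}$; I verified the recursion (the stochastic kernel gives $\norm{d_k^\pt-d_k^\hpt}_1\leq\norm{\mu_{k-1}^\pt-\mu_{k-1}^\hpt}_1$, and the additive split of $\mu_k$ adds exactly $nc\phi^\kappa$ per step) and it is sound. The two arguments are really generating-function and time-domain versions of the same simulation lemma: your approach is more elementary and makes the source of the $(1-\gamma)^{-2}$ factor transparent (linear accumulation of a constant per-step error, discounted), while the paper's resolvent computation is more compact and reusable whenever one already has the occupancy measure in its linear-algebraic form.
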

The condition on the local policy in Proposition \ref{prop:khop_policy} encodes that every $\pti$ is exponentially less sensitive to the states of agents outside $\mcNk_i$, which is a common assumption in MARL to alleviate computationally burdensome and practically intractable communication requirements imposed by the global observability \cite{alfano2021dimension,shin2022near,zhang2022global}.
By Proposition \ref{prop:khop_policy}, the difference in occupancy measures under $\pt$ and $\hpt$ is controlled by $\|\pti-\hpti\|_1$. 
Therefore, if $f_i(\lambda^\pi)$ and $g_i(\lambda^\pi)$ are Lipschitz continuous w.r.t. $\lambda^\pi$, Proposition \ref{prop:khop_policy} implies an $\mc{O}(\phi^\kappa)$ approximation of the Lagrangian function \eqref{eq:lagrangian} using $\kappa$-hop policies.
The faster the spatial decay of policy is, the more accurate the approximation of the $\kappa$-hop policy is.
This justifies our focus on learning a $\kappa$-hop policy.

\subsection{Truncated policy gradient estimator}
In the absence of global observability, it is critical to find a scalable estimator for the local gradient $\nabla_{\theta_i} \mcL(\theta,\mu)$ in \eqref{eq:grad_lagrangian}, so that each agent can update its local policy with limited communications.

By leveraging the similar idea in the definition of $\kappa$-hop policies, we define the {\it $\kappa$-hop truncated (shadow) Q-function}, denoted as $\widehat{Q}^\pt_{\diamondsuit_i}: \mc{S}_{\mcNk_i}\times \mc{A}_{\mcNk_i}\rightarrow \mbb{R}$, to be
\begin{equation}
\widehat{Q}^\pt_{\diamondsuit_i}(s_{\mcNk_i},a_{\mcNk_i}) \coloneqq  Q^\pt_{\diamondsuit_i}(s_{\mcNk_i},\bar{s}_{\mcNk_{-i}},a_{\mcNk_i},\bar{a}_{\mcNk_{-i}}),\ \forall (s_{\mcNk_i},a_{\mcNk_i})\in \mc{S}_{\mcNk_i}\times \mc{A}_{\mcNk_i},\diamondsuit\in \{f,g\},
\end{equation}
where $(\bar{s}_{\mcNk_{-i}},\bar{a}_{\mcNk_{-i}})$ is any fixed state-action pair for the agents in $\mcNk_{-i}$. Now, we introduce the following \textit{truncated policy gradient estimator} for agent $i$:
\begin{equation}\label{eq:truncated_grad}
\begin{aligned}
\hspace{-1.5mm}
\widehat{\nabla}_{\theta_i} \mcL(\theta,\mu) \hspace{-0.7mm}=\hspace{-0.7mm} \frac{1}{1-\gamma}\mathbb{E}_{\hspace{-0.5mm}{\genfrac{}{}{0pt}{2}{s\sim d^\pt,\ \ }{a\sim\pt(\cdot\vert s)}}}\hspace{-0.5mm}\bigg[\hspace{-0.5mm} \nabla_{\theta_i}\hspace{-0.5mm}\log \pti (a_i\vert s_{\mcNk_i}\hspace{-0.5mm})\hspace{-0.5mm}\cdot\hspace{-0.5mm} \frac{1}{n}\hspace{-0.8mm}\sum_{j\in {\mcNk_i}}\hspace{-1.3mm}\left(\hspace{-0.3mm}\widehat{Q}^\pt_{f_j}(s_{\mcNk_j},a_{\mcNk_j}\hspace{-0.5mm})\hspace{-0.5mm}+\hspace{-0.5mm}\mu_j\widehat{Q}^\pt_{g_j}(s_{\mcNk_j},a_{\mcNk_j}\hspace{-0.5mm})\hspace{-0.7mm} \right)\hspace{-1mm}\bigg].\hspace{-1mm}
\end{aligned}
\end{equation}
In comparison to the true policy gradient \eqref{eq:grad_lagrangian}, $\widehat{\nabla}_{\theta_i} \mcL(\theta,\mu)$ replaces the shadow Q-functions with their truncated versions and only considers the agents in the $\kappa$-hop neighborhood $\mcNk_i$.
Surprisingly, the following lemma shows that the approximation error of $\widehat{\nabla}_{\theta_i} \mcL(\theta,\mu)$ decreases exponentially with $\kappa$ when the shadow rewards and the score functions are bounded.
\begin{lemma}\label{lemma:truncated_grad}
Suppose that Assumptions \ref{assump:decay_transition} and \ref{assump:khop_policy} hold and there exist $M_r, M_\pi >0$ such that $\left\|r_{\diamondsuit_i}^\pt\right\|_\infty\leq M_r$ and $\big\|\nabla_{\theta_i}\log\pti\big\|_2 \leq M_\pi$, for every $ \diamondsuit\in \{f,g\}$, $\theta\in \Theta$, $i\in \mcN$.
Then, for all $\theta\in \Theta$, $i\in \mc{N}$, we have that
\begin{equation}\label{eq:truncated_eval}
\|\widehat{\nabla}_{\theta_i} \mcL(\theta,\mu)-\nabla_{\theta_i} \mcL(\theta,\mu)\|_2\leq
\dfrac{(1+\norm{\mu}_\infty)M_\pi c_0\phi_0^\kappa}{1-\gamma} = \mc{O}(\phi_0^\kappa),
\end{equation}
where $c_0 = {2\gamma \chi M_r}\big/{(2-\gamma\chi) }$ and $\phi_0=e^{-\omega}$.
\end{lemma}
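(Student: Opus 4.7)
The plan is to decompose the error $\widehat{\nabla}_{\theta_i}\mcL(\theta,\mu)-\nabla_{\theta_i}\mcL(\theta,\mu)$ into two parts: a \emph{within-neighborhood truncation error} from replacing $Q^\pt_{\diamondsuit_j}(s,a)$ by $\widehat{Q}^\pt_{\diamondsuit_j}(s_{\mcNk_j},a_{\mcNk_j})$ for $j\in\mcNk_i$, and an \emph{outside-neighborhood error} from omitting the indices $j\notin\mcNk_i$ entirely. By Assumption~\ref{assump:khop_policy}, $\nabla_{\theta_i}\log\pti(a_i\vert s)=\nabla_{\theta_i}\log\pti(a_i\vert s_{\mcNk_i})$ in both expressions, so no additional error arises from the score-function factor itself.

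The backbone of the argument is a spatial-decay estimate for the shadow Q-functions: for every $\theta\in\Theta$, $j\in\mcN$, $\diamondsuit\in\{f,g\}$, and state-action pairs $(s,a),(s',a')$ that agree on $\mcNk_j$,
\begin{equation*}
\bigl|Q^\pt_{\diamondsuit_j}(s,a)-Q^\pt_{\diamondsuit_j}(s',a')\bigr|\le c_0\,\phi_0^\kappa.
\end{equation*}
To establish this I would couple two rollouts under the common policy $\pt$ starting from $(s,a)$ and $(s',a')$, exploit that the shadow reward $r^\pt_{\diamondsuit_j}$ depends only on agent $j$'s local state-action, and control the total-variation distance between the induced marginal laws at agent $j$ over time. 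Assumption~\ref{assump:decay_transition} bounds each one-step propagation in the $e^{\omega d(j,\cdot)}$-weighted norm, so that after $k$ steps the influence of a perturbation initially supported on $\mcNk_{-j}$ reaches agent $j$ with weight at most a constant multiple of $\chi^k\phi_0^\kappa$. Combined with the $\gamma^k$ discounting, the resulting geometric series converges precisely because $\gamma\chi<2$, yielding the stated constant $c_0=2\gamma\chi M_r/(2-\gamma\chi)$. Specializing $(s',a')$ to $(s_{\mcNk_j},\bar s_{\mcNk_{-j}},a_{\mcNk_j},\bar a_{\mcNk_{-j}})$ then gives the pointwise bound $|Q^\pt_{\diamondsuit_j}-\widehat Q^\pt_{\diamondsuit_j}|\le c_0\phi_0^\kappa$.

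Given this estimate, the within-neighborhood error is controlled by Cauchy--Schwarz together with $\|\nabla_{\theta_i}\log\pti\|_2\le M_\pi$, contributing at most $|\mcNk_i|\cdot M_\pi(1+\|\mu\|_\infty)c_0\phi_0^\kappa$ in total. For the outside-neighborhood error the key observation is that whenever $j\notin\mcNk_i$ we have $i\notin\mcNk_j$, so $\widehat Q^\pt_{\diamondsuit_j}(s_{\mcNk_j},a_{\mcNk_j})$ does not depend on $a_i$. Combined with policy factorization (which makes $a_i$ conditionally independent of $a_{-i}$ given $s$) and the zero-mean identity $\sum_{a_i}\pti(a_i\vert s_{\mcNk_i})\nabla_{\theta_i}\log\pti(a_i\vert s_{\mcNk_i})=0$, this yields
\begin{equation*}
\mbb{E}_{s\sim d^\pt,\,a\sim\pt(\cdot\vert s)}\bigl[\nabla_{\theta_i}\log\pti(a_i\vert s_{\mcNk_i})\cdot\widehat Q^\pt_{\diamondsuit_j}(s_{\mcNk_j},a_{\mcNk_j})\bigr]=0.
\end{equation*}
Subtracting this zero from $\mbb{E}[\nabla_{\theta_i}\log\pti\cdot Q^\pt_{\diamondsuit_j}]$ reduces every outside term to the same per-index bound $M_\pi c_0\phi_0^\kappa$. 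Summing both parts over $j\in\mcN$, weighting each by $1+\mu_j\le 1+\|\mu\|_\infty$, dividing by $n(1-\gamma)$, and using $|\mcNk_i|+(n-|\mcNk_i|)=n$ reproduces exactly the bound $(1+\|\mu\|_\infty)M_\pi c_0\phi_0^\kappa/(1-\gamma)$.

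The principal obstacle is the spatial-decay estimate for the shadow Q-functions: pinning down the precise constant $c_0$ requires careful bookkeeping of how the $e^{\omega d(\cdot,\cdot)}$-weighted contraction on transition perturbations propagates through the $\gamma$-discounted infinite horizon, with the hypothesis $\gamma\chi<2$ being exactly what keeps the resulting series summable. The remaining pieces---the decomposition, the score-function cancellation, and the norm estimates---are essentially algebraic and should fall out routinely once the decay lemma is in hand.
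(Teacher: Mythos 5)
Your proposal is correct and follows essentially the same route as the paper: the same exponential-decay estimate for the shadow Q-functions with the same constant $c_0=2\gamma\chi M_r/(2-\gamma\chi)$, the same score-function cancellation for $j\notin\mcNk_i$ (via $i\notin\mcNk_j$ and $\sum_{a_i}\nabla_{\theta_i}\pti(a_i\vert s)=0$), and the same per-index bookkeeping yielding $n$ terms of size $(1+\|\mu\|_\infty)M_\pi c_0\phi_0^\kappa$ divided by $n(1-\gamma)$. The only difference is that the paper imports the Q-function decay estimate directly from prior work (Proposition 6 of Alfano--Rebeschini) rather than re-deriving it by the coupling argument you sketch, so the step you identify as the principal obstacle is dispatched by citation.
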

Recall that the shadow reward is defined as the gradient of $f_i(\cdot)$ or $g_i(\cdot)$ w.r.t. the local occupancy measure. 
Since the set of all possible occupancy measures is compact (see \eqref{eq:Lambda}), the existence of $M_r>0$ in Lemma \ref{lemma:truncated_grad} is satisfied if $f_i(\cdot)$ and $g_i(\cdot)$ are continuously differentiable.
The main advantage of using the estimator $\widehat{\nabla}_{\theta_i} \mcL(\theta,\mu)$ lies in that every agent $i$ only needs to know the truncated Q-functions of agents in its neighborhood $\mcNk_i$, which can significantly reduce the communication burden and the storage requirement when graph $\mc{G}$ is not densely connected.
The proof of Lemma \ref{lemma:truncated_grad} can be found in Appendix \ref{app_subsec:proof_lemma_truncated_grad}.

\subsection{Algorithm design}\label{subsec:algorithm_design}
Using the results of the preceding section, we put together all the pieces and propose the {\it Primal-Dual Actor-Critic Method with Shadow Reward and $\kappa$-hop Policy}, which includes three stages: policy evaluation by the critic, Lagrangian multiplier update, and policy update by the actor.
Below, we provide an overview of the algorithm, while referring the reader to Appendix \ref{subsec:algorithm_diss} for the pseudocode (Algorithm \ref{alg:pdac}), flow diagram (Figure \ref{fig:alg_flow}), as well as a more detailed discussion.

\textbf{Stage 1 (policy evaluation by the critic, lines 3-6)}$\quad$ In each iteration $t$, the current policy $\ptt$ is simulated to generate a batch of trajectories, while each agent $i$ collects its neighborhood trajectories, i.e., the state-action pairs of the agents in $\mcNk_i$, as batch $\mc{B}_i^t$.
Then, the batch is used to estimate the local occupancy measures $\lambda_i^\ptt$ through
\begin{equation}
\widetilde{\lambda}^t_i = \frac{1}{B} \sum_{\tau \in \mathcal{B}^{t}_i} \sum_{k=0}^{H-1} \gamma^{k} \cdot \mathbbm{1}_i\left(s_{i}^{k}, a_{i}^{k}\right) \in\mbb{R}^{|\mc{S}_i|\times |\mc{A}_i|},
\end{equation}
which are subsequently applied to compute the empirical values for the constraint function $g_i(\lambda_i^\ptt)$ and shadow rewards $r_{f_i}^\ptt$ and $r_{g_i}^\ptt$, denoted as $\widetilde{g}_i^t$, $\widetilde{r}_{f_i}^t$, and $\widetilde{r}_{g_i}^t$, respectively.
It is worth mentioning that, when all utility functions reduce to the form of cumulative rewards, the above operation is unnecessary, since all agents have policy-independent local reward functions.

Next, the agents jointly conduct a distributed evaluation subroutine to estimate their truncated shadow Q-functions $\{\widehat{Q}^\ptt_{\diamondsuit_i}\}_{i\in \mcN}$ using empirical shadow rewards $\{\widetilde{r}_{\diamondsuit_i}^t\}_{i\in \mcN}$, where $\diamondsuit\in \{f,g\}$.
During the subroutine, each agent $i$ communicates with its neighbor in $\mcNk_i$ to exchange state-action information, but only needs to access its own empirical shadow reward $\widetilde{r}_{\diamondsuit_i}^t$.
In principle, any existing approach that satisfies the observation and communication requirements can be used for the truncated Q-function estimation, such as \cite{gheshlaghi2013minimax, nachum2019dualdice, li2020sample}.
As an example subroutine, we introduce the {\it Temporal Difference (TD) learning} method \cite{lin2021multi}, which is outlined as Algorithm \ref{alg:subroutine} in Appendix \ref{subsec:algorithm_diss}.

\textbf{Stage 2 (Lagrangian multiplier update, line 7)}$\quad$ Instead of employing the projected gradient descent, we propose to update the dual variables by the following formula:
\begin{equation}\label{eq:dual_update_prob}
\begin{aligned}
\mu^{t+1}&= \argmin_{\mu\in \mc{U}} \mcL(\theta^t,\mu) + \frac{1}{2\eta_\mu}\|\mu\|_2^2 =\mc{P}_{\mc{U}}\left( - \eta_\mu\nabla_\mu\mcL(\theta^t,\mu^t)\right),
\end{aligned}
\end{equation}
where weight $\eta_\mu$ can be viewed as the dual ``step-size''.
In practice, we replace the true dual gradient $\nabla_{\mu_i} \mcL(\theta^t,\mu^t) = g_i(\lambda_i^\ptt)/n$ with its empirical estimator $\widetilde{\nabla}_{\mu_i} \mcL(\theta^t,\mu^t)$.
The feasible region for the dual variable is denoted by $\mc{U}\subseteq \mbb{R}_+^n$ and will be specified later.

\textbf{Stage 3 (policy update by the actor, lines 8-9)}$\quad$ To perform the policy update, each agent $i$ first shares its updated dual variable $\mu_i^{t+1}$ and the values of its estimated truncated Q-functions along the trajectories in batch $\mc{B}^t_i$ with the agents in its $\kappa$-hop neighborhood $\mcNk_i$.
Then, the agent estimates its truncated policy gradient $\widehat{\nabla}_{\theta_i}\mcL(\theta^t,\mu^{t+1})$ through a REINFORCE-based mechanism \cite{williams1992simple} as follows
    \begin{equation*}
    \begin{aligned}
\widetilde{\nabla}_{\theta_i}\mcL(\theta^t\hspace{-0.5mm},\mu^{t+1}\hspace{-0.5mm}) \hspace{-0.7mm}=\hspace{-0.7mm}\frac{1}{B}\hspace{-1mm}\sum_{\tau\in \mc{B}^t_i}\hspace{-0.5mm} \bigg[\hspace{-0.8mm}\sum_{k=0}^{H-1}\hspace{-0.5mm}\gamma^k 
\nabla_{\theta_i}\hspace{-0.5mm}\log \pti \hspace{-0.5mm}(a_i^k\vert s^k_{\mcNk_i}\hspace{-0.5mm})\frac{1}{n}\hspace{-0.7mm} \sum_{j\in \mcNk_i} \hspace{-0.8mm}\left[\widetilde{Q}^t_{f_j}\hspace{-0.5mm}(s_{\mcNk_j}^k,a_{\mcNk_j}^k\hspace{-0.5mm})\hspace{-0.7mm}+\hspace{-0.7mm}\mu_j^{t+1}\widetilde{Q}^t_{g_j}\hspace{-0.5mm}(s_{\mcNk_j}^k,a_{\mcNk_j}^k\hspace{-0.5mm})\hspace{-0.5mm}\right]\hspace{-0.5mm}\bigg].\hspace{-0.5mm}
    \end{aligned}
    \end{equation*}
Finally, each agent $i$ updates its local policy parameter by a projected gradient ascent, i.e.,
\begin{equation}
\theta_i^{t+1} = \mc{P}_{\Theta_i}\left(\theta_i^{t}+\eta_\theta\cdot  \widetilde{\nabla}_{\theta_i}\mcL(\theta^t,\mu^{t+1})\right).
\end{equation}

We emphasize that Algorithm \ref{alg:pdac} is based on the distributed training regime and does not require full observability of global states and actions.

\section{Convergence analysis}\label{sec:convergence}
In this section, we analyze the convergence behavior and the sample complexity of Algorithm \ref{alg:pdac}.
We begin by summarizing the technical assumptions, including some mentioned previously in the paper.
We direct the reader to Appendices \ref{app:sec_convergence} and \ref{app:sec_proof_thm} where we provide discussions for each assumption and present proofs for the results in this section.
\begin{assumption}\label{assump:utility}
There exists $L_\lambda >0$ such that $\nabla_{\lambda_i}f_i(\cdot)$ and $\nabla_{\lambda_i}g_i(\cdot)$ are $L_\lambda$-Lipschitz continuous w.r.t. $\lambda_i$, i.e., $\|\nabla_{\lambda_i}f_i(\lambda_i) -\nabla_{\lambda_i}f_i(\lambda_i^\prime)\|_\infty \leq L_\lambda\|\lambda_i-\lambda_i^\prime\|_2$ and $\|\nabla_{\lambda_i}g_i(\lambda_i) -\nabla_{\lambda_i}g_i(\lambda_i^\prime)\|_\infty \leq L_\lambda\|\lambda_i-\lambda_i^\prime\|_2$, $\forall i\in \mcN$.
\end{assumption}

\begin{assumption}\label{assump:policy}
The parameterized policy $\pt$ is such that
\textbf{\textit{(I)}} the score function is bounded, i.e., $\exists M_\pi>0$ s.t. $\|\nabla_{\theta_i}\log\pti(a_i\vert s_{\mcNk_i})\|_2\leq M_\pi $, $\forall (s,a)\in \mc{S}\times\mc{A}$, $\theta\in \Theta$, $i\in\mcN$.
\textbf{\textit{(II)}} $\exists L_\theta>0$ s.t. the utility functions $F(\theta)=f(\lambda^\pt)$ and $G_i(\theta)=g_i(\lambda_i^\pt)$ are $L_\theta$-smooth w.r.t. $\theta$, $\forall i\in \mcN$.
\end{assumption}

\begin{assumption}\label{assump:dual_variable}
There exist an FOSP $(\theta^\star,\mu^\star)$ of \eqref{eq:dec_prob} and a constant $\overline{\mu} >0$ s.t. $\mu_i^\star < \overline{\mu}$, $\forall i\in \mcN$. Let $\mc{U}= U^n = [0,\overline{\mu}]^n$.
\end{assumption}

In Lemma \ref{lemma:properties}, we summarize a few properties that are the direct consequence consequence of Assumptions \ref{assump:utility}-\ref{assump:dual_variable}.
Due to the non-concavity of problem \eqref{eq:dec_prob}, our focus is to find an approximate first-order stationary point (FOSP). 
A point $(\theta,\mu)\in \Theta\times \mc{U}$ is said to be an $\epsilon$-FOSP if 
\begin{equation}\label{eq:metrics}
\begin{aligned}
\mc{E}(\theta,\mu)&\coloneqq  \left[\mc{X}(\theta,\mu)\right]^2+ \left[\mc{Y}(\theta,\mu)\right]^2\leq \epsilon,
\end{aligned}
\end{equation}
where the metrics $\mc{X}(\cdot,\cdot)$ and $\mc{Y}(\cdot,\cdot)$ are defined as
\begin{equation}
\begin{aligned}
\mc{X}(\theta,\mu)\coloneqq  \max_{\theta^\prime\in \Theta,\norm{\theta^\prime-\theta}_2\leq 1} \inner{\nabla_\theta \mcL(\theta,\mu)}{\theta^\prime-\theta},\ \
\mc{Y}(\theta,\mu)\coloneqq  -\hspace{-3mm}\min_{\mu^\prime\in \mc{U},\norm{\mu^\prime-\mu}_2\leq 1} \inner{\nabla_\mu \mcL(\theta,\mu)}{\mu^\prime-\mu}.
\end{aligned}
\end{equation}
The definitions of $\mc{X}(\cdot,\cdot)$ and $\mc{Y}(\cdot,\cdot)$ are based on the first-order optimality condition \cite{conn2000trust,nouiehed2019solving}.
Given $\theta^\star \in \Theta$ and $\mu^\star \in \mc{U}$, it can be shown that $\mc{E}(\theta^\star,\mu^\star)=0$ implies that $(\theta^\star,\mu^\star)$ is an FOSP of \eqref{eq:dec_prob} (see Lemma \ref{lemma: kkt equavilence in appendix}).
In the following, we first consider the exact setting where the agents can obtain the true values of their local occupancy measures, shadow Q-functions, and truncated policy gradients.
Therefore, the only source of approximation error is the truncation of the policy gradient.
\begin{theorem}[Exact setting]\label{thm:convergence}
Let Assumptions \ref{assump:decay_transition}, \ref{assump:khop_policy}, \ref{assump:utility}-\ref{assump:dual_variable} hold and suppose that the agents can accurately estimate their local occupancy measures, shadow Q-functions, and truncated policy gradients.
For every $T>0$, let $\left\{\left(\mu^t,\theta^t \right)\right\}_{t=0}^{T}$ be the sequence generated by Algorithm \ref{alg:pdac} with $\eta_\mu = \mc{O}\left(T^{1/3}\right)$ and $\eta_\theta = {1}\big/\big({\Ltt+{4L_{\theta\mu}^2\eta_\mu}}\big)$, where $\Ltt,\Ltu$ are Lipschitz constants defined in Lemma \ref{lemma:properties}. Then, there exists $t^\star\in \{0,1,\dots,T-1\}$ such that
\begin{equation}\label{eq:thm_convergence_rate}
\mc{E}\left(\theta^{t^\star}, \mu^{t^\star+1}\right) = \mc{O}\left(T^{-2/3}\right) + \mc{O}\left(\phi_0^{2\kappa}\right).
\end{equation}
\end{theorem}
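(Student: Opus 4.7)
The plan is to combine a two-timescale primal-dual gradient descent-ascent analysis with the truncation bias bound from Lemma \ref{lemma:truncated_grad}. I will bound the dual optimality gap $\mc{Y}(\theta^t,\mu^{t+1})$ uniformly in $t$, and then show that $\mc{X}(\theta^t,\mu^{t+1})^2$ is small at some index $t^\star$ via a descent argument on the Lagrangian itself.

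\textbf{Dual side.} Since $\mcL(\theta,\mu)$ is linear in $\mu$, the first-order optimality condition of the proximal update \eqref{eq:dual_update_prob} yields $\inner{\nabla_\mu\mcL(\theta^t,\mu^{t+1})+\mu^{t+1}/\eta_\mu}{\mu'-\mu^{t+1}}\geq 0$ for every $\mu'\in\mc{U}$. Combined with the uniform bound $\|\mu^{t+1}\|_2\leq \sqrt{n}\,\overline{\mu}$ from Assumption \ref{assump:dual_variable} and Cauchy--Schwarz, this gives $\mc{Y}(\theta^t,\mu^{t+1})\leq \sqrt{n}\,\overline{\mu}/\eta_\mu$, hence $[\mc{Y}(\theta^t,\mu^{t+1})]^2=\mc{O}(T^{-2/3})$ for \emph{every} $t$. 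Thus the dual part of the metric is automatically controlled.

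\textbf{Primal ascent and cross-coupling.} Using the $L_{\theta\theta}$-smoothness of $\mcL(\cdot,\mu^{t+1})$, the variational inequality for the projected step $\theta^{t+1}=\mc{P}_\Theta(\theta^t+\eta_\theta\widetilde{\nabla}_\theta\mcL(\theta^t,\mu^{t+1}))$ gives $\inner{\widetilde{\nabla}_\theta\mcL}{\theta^{t+1}-\theta^t}\geq \|\theta^{t+1}-\theta^t\|_2^2/\eta_\theta$. Converting $\widetilde{\nabla}_\theta\mcL$ to $\nabla_\theta\mcL$ costs a bias of order $\mc{O}(\phi_0^\kappa)$ by Lemma \ref{lemma:truncated_grad}, which Young's inequality absorbs as an additive $\mc{O}(\eta_\theta\phi_0^{2\kappa})$. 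I obtain
\begin{equation*}
\mcL(\theta^{t+1},\mu^{t+1})-\mcL(\theta^t,\mu^{t+1})\geq \left(\tfrac{1}{2\eta_\theta}-\tfrac{L_{\theta\theta}}{2}\right)\|\theta^{t+1}-\theta^t\|_2^2-\mc{O}(\eta_\theta\phi_0^{2\kappa}).
\end{equation*}
A parallel projection argument gives $\mc{X}(\theta^t,\mu^{t+1})\leq \|\theta^{t+1}-\theta^t\|_2/\eta_\theta+\mc{O}(\phi_0^\kappa)$, so controlling the squared step size suffices for $\mc{X}^2$.

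\textbf{Telescoping.} To pass from per-step ascent to an aggregated bound, I decompose
\begin{equation*}
\mcL(\theta^{t+1},\mu^{t+2})-\mcL(\theta^t,\mu^{t+1})=[\mcL(\theta^{t+1},\mu^{t+1})-\mcL(\theta^t,\mu^{t+1})]+[\mcL(\theta^{t+1},\mu^{t+2})-\mcL(\theta^{t+1},\mu^{t+1})].
\end{equation*}
The first bracket is the primal ascent above; the second is the \emph{dual drift}. Using the $1/\eta_\mu$-strong convexity of the proximal objective $h_t(\mu):=\mcL(\theta^t,\mu)+\|\mu\|_2^2/(2\eta_\mu)$ at its minimizer $\mu^{t+1}$, the linearity of $\mcL$ in $\mu$, and the $L_{\theta\mu}$-Lipschitzness of $\nabla_\mu\mcL$ in $\theta$, I obtain a lower bound of the form
\begin{equation*}
A_t:=\mcL(\theta^{t+1},\mu^{t+2})-\mcL(\theta^{t+1},\mu^{t+1})\geq \tfrac{1}{2\eta_\mu}\!\left(\|\mu^{t+1}\|_2^2-\|\mu^{t+2}\|_2^2\right)-\Ltu^2\eta_\mu\|\theta^{t+1}-\theta^t\|_2^2,
\end{equation*}
after applying Young's inequality with weight $1/(4\eta_\mu)$ to the cross term $\Ltu\|\theta^{t+1}-\theta^t\|_2\cdot\|\mu^{t+2}-\mu^{t+1}\|_2$. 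Summing the telescoped inequality and using $|\mcL|$ bounded on $\Theta\times\mc{U}$ yields
\begin{equation*}
\Big(\tfrac{1}{2\eta_\theta}-\tfrac{L_{\theta\theta}}{2}-\Ltu^2\eta_\mu\Big)\sum_{t=0}^{T-1}\|\theta^{t+1}-\theta^t\|_2^2\leq 2M_\mcL+\tfrac{n\overline{\mu}^2}{2\eta_\mu}+\mc{O}(T\eta_\theta\phi_0^{2\kappa}).
\end{equation*}
The stepsize choice $\eta_\theta=1/(\Ltt+4\Ltu^2\eta_\mu)$ is \emph{exactly} the one that makes the coefficient on the left equal to $\Ltu^2\eta_\mu$, i.e. strictly positive and of order $\eta_\mu$. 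Taking the minimum over $t$, rearranging, and substituting into the bound for $\mc{X}^2$ gives
\begin{equation*}
\min_{0\leq t\leq T-1}[\mc{X}(\theta^t,\mu^{t+1})]^2=\mc{O}\!\left(\tfrac{1}{\eta_\theta^2\eta_\mu T}\right)+\mc{O}(\phi_0^{2\kappa}),
\end{equation*}
which, with $\eta_\mu=\Theta(T^{1/3})$ and $\eta_\theta=\Theta(T^{-1/3})$, produces the advertised $\mc{O}(T^{-2/3})+\mc{O}(\phi_0^{2\kappa})$ rate; combined with the dual bound this completes the proof at $t^\star=\arg\min_t\mc{E}(\theta^t,\mu^{t+1})$.

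\textbf{Main obstacle.} The crux is the primal-dual cross-coupling in the telescope: the primal ascent is evaluated at $\mu^{t+1}$ while the telescoped dual $\mu^{t+2}$ has drifted, producing a term of order $\eta_\mu\|\theta^{t+1}-\theta^t\|_2$ that could destroy the descent structure if naively handled. The calibration $\eta_\theta=1/(\Ltt+4\Ltu^2\eta_\mu)$, paired with $\eta_\mu=\Theta(T^{1/3})$, is precisely what is needed so the quadratic term $\Ltu^2\eta_\mu\|\theta^{t+1}-\theta^t\|_2^2$ from the cross bound is \emph{strictly dominated} by the primal ascent coefficient, leaving a positive residual of the same order. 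Carrying the $\mc{O}(\phi_0^\kappa)$ bias through this delicate balance—without any amplification by $T$—is achieved because the bias only enters multiplied by $\eta_\theta$ in the primal-ascent residual, which is then divided out when $\mc{X}^2$ is reconstructed.
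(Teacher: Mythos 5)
Your proposal is correct and follows essentially the same route as the paper's proof: the projection variational inequality for the primal step, strong convexity of the proximal dual objective, the Young-weighted $L_{\theta\mu}$ cross-coupling bound that the step-size calibration turns into a positive $L_{\theta\mu}^2\eta_\mu$ residual, a telescoping sum, and the $\mc{O}(\phi_0^\kappa)$ bias from Lemma \ref{lemma:truncated_grad} entering only through $\eta_\theta$-weighted terms. Your telescoping of $\mcL(\theta^{t+1},\mu^{t+2})-\mcL(\theta^t,\mu^{t+1})$ with an explicit dual-drift term is just a bookkeeping variant of the paper's regularized potential $\mcL^t(\mu)=\mcL(\theta^t,\mu)+\|\mu\|_2^2/(2\eta_\mu)$, and the only (harmless) imprecision is dropping the additive $M_\theta\|\theta^{t+1}-\theta^t\|_2$ term in the bound on $\mc{X}$, which is dominated by $\|\theta^{t+1}-\theta^t\|_2/\eta_\theta$ since $1/\eta_\theta=\Theta(\eta_\mu)$.
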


Next, we delve into the sample complexity of Algorithm \ref{alg:pdac}.
For theoretical analysis, we assume that the estimation process for the truncated Q-function offers an approximation to the true function, with the error being associated with the magnitude of the reward function.
Let $\widehat{Q}^\pt_i(r_i;\cdot,\cdot)\in \mbb{R}^{|\mc{S}_{\mcNk_i}|\times|\mc{A}_{\mcNk_i}|}$ be the truncated Q-function with the reward function $r_i(\cdot,\cdot)\in \mbb{R}^{|\mc{S}_i|\times |\mc{A}_i|}$ for agent $i\in \mcN$.
\begin{assumption}\label{assump:Q_estimation}
For every reward function $r_i(\cdot,\cdot)$ and $\epsilon_0 >0$, the subroutine computes an approximation $\widetilde{Q}_i^\pt(r_i;\cdot,\cdot)$ to the truncated Q-function $\widehat{Q}^\pt_i(r_i;\cdot,\cdot)$ such that
\begin{equation}\label{eq:Q_estimation_assump}
\norm{\widetilde{Q}_i^\pt(r_i;\cdot,\cdot) - \widehat{Q}_i^\pt(r_i;\cdot,\cdot)}_\infty\leq \|r_i\|_\infty\epsilon_0
\end{equation}
with $\mc{O}(1/(\epsilon_0)^2)$ samples, for every $i\in \mcN, \theta\in \Theta $.
\end{assumption}
We comment that the sample complexity of the truncated Q-function evaluation described in Assumption \ref{assump:Q_estimation} is not restrictive. 
It can be achieved with high probability by the TD-learning procedure outlined in Algorithm \ref{alg:subroutine} when the agents have enough exploration \cite{qu2020scalable,lin2021multi}.
For brevity, we assume that \eqref{eq:Q_estimation_assump} holds almost surely.
The only difference in the probabilistic version would be the presence of an additional term for the failure probability, which does not affect the order of the sample complexity.

\begin{theorem}[Sample-based setting]\label{thm:sample_complexity}
Suppose that Assumptions \ref{assump:decay_transition}, \ref{assump:khop_policy}, \ref{assump:utility}-\ref{assump:dual_variable}, and \ref{assump:Q_estimation} hold.
For every $\epsilon>0$ and $\delta\in (0,1)$, let $\left\{\left(\mu^t, \theta^t\right)\right\}_{t=0}^{T}$ be the sequence generated by Algorithm \ref{alg:pdac} with $T = \mc{O}\left(\epsilon^{-1.5}\right)$, $\eta_\mu = \mc{O}\left(\epsilon^{-0.5}\right)$, $\eta_\theta = {1}\big/\big({\Ltt+{4L_{\theta\mu}^2\eta_\mu}}\big)$, $\epsilon_0 = \mc{O}\left(\sqrt{\epsilon}\right)$, $\delta_0 = \delta/(2n(T+1))$, batch size $B = \mc{O}\left(\log(1/\delta_0)\epsilon^{-2}\right)$, episode length $H =\log(1/\epsilon)$,  where $\Ltt,\Ltu$ are Lipschitz constants defined in Lemma \ref{lemma:properties}.
Then, with probability $1-\delta$, there exists $t^\star\in \{0,1,\dots,T-1\}$ such that
\begin{equation}
\mc{E}\left(\theta^{t^\star}, \mu^{t^\star+1}\right) 
= \mc{O}\left(\epsilon\right) + \mc{O}(\phi_0^{2\kappa}).
\end{equation}
The required number of samples is $\widetilde{\mc{O}}\left( \epsilon^{-3.5}\right)$.
\end{theorem}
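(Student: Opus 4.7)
The plan is to extend the exact-setting analysis of Theorem \ref{thm:convergence} by carefully tracking every source of finite-sample error introduced in Algorithm \ref{alg:pdac} and then balancing them against the deterministic convergence rate. I would decompose the error in the empirical estimators $\widetilde{\nabla}_{\theta_i}\mcL$ and $\widetilde{\nabla}_{\mu_i}\mcL$ relative to the true gradients into five components: (i) the $\kappa$-hop truncation bias, already $\mc{O}(\phi_0^\kappa)$ by Lemma \ref{lemma:truncated_grad}; (ii) the shadow-reward error from substituting the empirical occupancy measure $\widetilde{\lambda}_i^t$ into $\nabla_{\lambda_i} f_i$ and $\nabla_{\lambda_i} g_i$; (iii) the truncated Q-function approximation error, bounded by $\|r_i\|_\infty\epsilon_0$ via Assumption \ref{assump:Q_estimation}; (iv) the REINFORCE sample-average error from approximating the expectation over $s\sim d^{\pt},\,a\sim\pt(\cdot\vert s)$ by the $B$ trajectories in $\mc{B}_i^t$; and (v) the finite-horizon bias, of order $\gamma^H$, incurred by truncating the discounted sums at length $H$.

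Next I would apply standard concentration tools — Hoeffding's inequality for bounded random variables together with the Lipschitz transport of Assumption \ref{assump:utility} — to bound each of (ii), (iv), and (v) with failure probability $\delta_0=\delta/(2n(T+1))$, so that a union bound over the $n$ agents and $T$ iterations yields an aggregate failure probability of at most $\delta$. Assumption \ref{assump:utility} converts the occupancy-measure concentration $\|\widetilde{\lambda}_i^t-\lambda_i^{\ptt}\|\lesssim 1/\sqrt{B}+\gamma^H$ into the shadow-reward bound needed for (ii) and likewise into the dual-gradient error $\widetilde{\nabla}_{\mu_i}\mcL - \nabla_{\mu_i}\mcL$. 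For (iv), the REINFORCE estimator is an average of $B$ bounded random variables under Assumption \ref{assump:policy}(I), which gives the same $\widetilde{\mc{O}}(1/\sqrt{B}+\gamma^H)$ rate; the Q-function noise enters multiplicatively through Assumption \ref{assump:Q_estimation}, so its propagation into $\widetilde{\nabla}_{\theta_i}\mcL$ is also linear in the high-probability gradient error.

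I would then import the one-step descent inequality used to prove Theorem \ref{thm:convergence} — which combines $\Ltt$-smoothness of $\mcL$ in $\theta$ with the proximal/projected dual update on $\mu$ — and replace every appearance of an exact gradient by its empirical counterpart plus the errors enumerated above. This yields an inequality of the shape $\mc{E}(\theta^t,\mu^{t+1})\lesssim \Delta^t+\mc{O}(\phi_0^{2\kappa})+\mc{O}(\epsilon_{\mathrm{grad}}^2)$, where $\Delta^t$ is a telescoping quantity whose running average over $t=0,\ldots,T-1$ is $\mc{O}(T^{-2/3})$ under the choice $\eta_\mu=\mc{O}(\epsilon^{-0.5})$ and $\eta_\theta=1/(\Ltt+4L_{\theta\mu}^2\eta_\mu)$, exactly as in the exact setting. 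Summing and taking the minimum over $t$ produces some $t^\star$ satisfying the claimed bound whenever $\epsilon_{\mathrm{grad}}^2=\mc{O}(T^{-2/3})$.

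Finally, I would balance parameters to extract the sample complexity. Choosing $T=\mc{O}(\epsilon^{-1.5})$ forces $T^{-2/3}=\mc{O}(\epsilon)$; $\epsilon_0=\mc{O}(\sqrt{\epsilon})$ makes contribution (iii) $\mc{O}(\epsilon)$ at a cost of $\mc{O}(1/\epsilon)$ samples per Q-function call; $B=\widetilde{\mc{O}}(\epsilon^{-2})$ and $H=\log(1/\epsilon)$ make (ii), (iv), and (v) each $\mc{O}(\epsilon)$; and $\delta_0=\delta/(2n(T+1))$ absorbs the union bound with only a logarithmic penalty in $B$. The per-iteration sample cost is $B\cdot H$ for the trajectory-based estimators plus $1/\epsilon_0^2$ per Q-evaluation, totaling $\widetilde{\mc{O}}(\epsilon^{-2})$, and multiplying by $T=\mc{O}(\epsilon^{-1.5})$ yields $\widetilde{\mc{O}}(\epsilon^{-3.5})$ overall. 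The main obstacle I anticipate is propagating the coupled noise cleanly through the actor--critic--dual loop: the dual iterate $\mu^{t+1}$ is formed from noisy constraint values, and each $\mu_j^{t+1}$ then multiplies a noisy truncated Q-function inside $\widetilde{\nabla}_{\theta_i}\mcL$, so keeping the high-probability gradient error quadratically small without blowing up the dependence on $\overline{\mu}$ from Assumption \ref{assump:dual_variable}, while retaining only logarithmic overhead from the union bound over all $T$ iterations, is the delicate part; the rest of the argument is a faithful adaptation of the exact-setting descent inequality.
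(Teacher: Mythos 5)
Your proposal is correct and follows essentially the same route as the paper: the five-way error decomposition you describe is precisely the content of Proposition \ref{prop:estimators} (whose proof splits the policy-gradient error into the Q-estimation, Monte-Carlo concentration, horizon-truncation, and $\kappa$-hop truncation terms), the union bound over $n$ agents and $T+1$ iterations with $\delta_0=\delta/(2n(T+1))$ is identical, and the paper likewise reuses the exact-setting recursion \eqref{eq:sum_grad_L_square} verbatim with the generic error bounds $\epsilon_\mu,\epsilon_\theta$ before substituting the stated parameter choices. The one "delicate part" you flag — noisy $\mu^{t+1}$ multiplying noisy Q-functions — is resolved in the paper simply by the projection onto the bounded box $\mc{U}=[0,\overline{\mu}]^n$, so $|\mu_j^{t+1}|\leq\overline{\mu}$ deterministically and no extra care is needed.
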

\subsection{Technical discussions}
Theorem \ref{thm:convergence} implies an $\mc{O}\left(T^{-2/3}\right)$ iteration complexity of Algorithm \ref{alg:pdac}, matching the fastest convergence rate for solving nonconcave-convex maximin problems in the literature \cite{tran2020hybrid}.
The approximation error $\mc{O}\left(\phi_0^{2\kappa}\right)$ decays at a linear rate w.r.t. the radius of communications.
Thus, as long as the underlying network is not densely connected, such as those in wireless communication \cite{tse2005fundamentals} and autonomous driving \cite{wang2018networking}, an approximate FOSP to \eqref{eq:dec_prob} can be efficiently computed, while each agent $i$ only needs to communicate with a small number of agents in its neighborhood. .

In Theorem \ref{thm:convergence}, we have chosen large step-sizes for the dual variable update to achieve the best convergence rate.
This aggressive update ensures that the dual metric $\mc{Y}(\theta^t,\mu^{t+1})$ always remains within a small range and also provides a satisfactory ascent direction for the policy update.
Then, the average primal metric $1/T\cdot\sum_{t=0}^{T-1}\left[\mc{X}\left(\theta^t,\mu^{t+1}\right)\right]^2$ is upper-bounded by exploiting a recursive relation between any two consecutive dual updates.
Hence, the existence of a point $\left(\theta^{t^\star}, \mu^{t^\star+1}\right)$ that satisfies \eqref{eq:thm_convergence_rate} is guaranteed.
It is worth noting that the proof of Theorem \ref{thm:convergence} can be easily generalized to the scenario where $T$ is unspecified, and the same convergence rate can still be achieved with adaptive step-sizes $\eta_\mu^t = \mc{O}\left(t^{1/3}\right)$ and $\eta_\theta^t = {1}\big/\big({\Ltt+{4L_{\theta\mu}^2\eta_\mu^t}}\big)$.

Theorem \ref{thm:sample_complexity} states that, with high probability, Algorithm \ref{alg:pdac} has an $\widetilde{\mc{O}}\left( \epsilon^{-3.5}\right)$ sample complexity for finding an $\epsilon$-FOSP of \eqref{eq:dec_prob} with an approximation error $\mc{O}(\phi_0^{2\kappa})$.
Note that we absorb the logarithmic terms in the notation $\widetilde{\mc{O}}(\cdot)$.
The proof of Theorem \ref{thm:sample_complexity} can be broken down into two parts. Firstly, we evaluate the approximation errors of the estimators used in Algorithm \ref{alg:pdac} in relation to the model parameters, as outlined in Proposition \ref{prop:estimators}.
Then, we integrate these errors into the iteration complexity result established in Theorem \ref{thm:convergence} and optimize the selection of parameters.

\section{Numerical experiment}\label{sec:numerical_exp}
In this section, we validate Algorithm \ref{alg:pdac} via numerical experiments, focusing on three key questions:

\begin{itemize}[leftmargin = *]

    \item 
     How does Algorithm \ref{alg:pdac} perform with multiple agents, and does the policy gradient truncation effectively alleviate computational load?
    
    \item 
     While Algorithm \ref{alg:pdac} is the first approach that provably solves the safe MARL problem with general utilities, how does it compare with existing methods for standard Safe MARL?
    
    \item 
     What benefits does the use of general utilities offer over standard cumulative rewards?
\end{itemize}

To answer these questions, we performed multiple experiments in three environments\footnote{See Appendix \ref{app:sec_numerical} for detailed descriptions and complete experimental results.}.
The objective functions are based on cumulative rewards, while constraint functions leverage general utilities to incentivize or dissuade agents from exploring the environments.

\textbf{Synthetic environment}$\quad$ Analogous to \cite[Section 5.1]{qu2022scalable}, where agents are linearly arranged as $1 - 2 - \dots - n$. Each agent $i$ has binary local state and action spaces, i.e., $\mc{S}_i=\mc{A}_i = \{0,1\}$, and the local transition matrix $\mbb{P}_i$ depends solely on its action $a_i$ and the state of agent $i+1$. The reward functions are constructed such that the optimal unconstrained policy compels all agents to continuously choose action $1$, irrespective of their states.

\textbf{Pistonball}$\quad$ A physics-based game that emphasizes \textit{cooperations and high-dimensional states} as illustrated in Figure \ref{fig:a}.
Each piston represents an agent, where its local neighborhood includes adjacent pistons, and the goal is to collectively move the ball from right to left. The agent can move up, down, or remain still. We modify the original game\cite{terry2021pettingzoo} so that the agent can only observe the ball when it enters the local neighborhood, as well as the height of neighboring pistons.

\textbf{Wireless communication}$\quad$ An access control problem following a similar setup as in \cite{qu2022scalable,liu2022scalable}. As illustrated in Figure \ref{fig:b}, the agents try to transmit packets to common access points, and the transmission fails if the access point receives more than one packet simultaneously. As there are more agents than access points, \textit{some agents need to learn to forego their benefits for the collective good}.

In addition to the objective, we incorporate two types of safety constraints characterized by general utilities that cannot be easily encapsulated by standard value functions based on cumulative rewards.

\begin{itemize}[leftmargin = *]
    \item   \textbf{Entropy constraints} that stimulates exploration, formalized as $\operatorname{Entropy}(\lambda^{\pi_{\theta}}_i) \geq c$, $\forall i\in \mathcal{N}$. The function $\operatorname{Entropy}(\lambda^{\pi_{\theta}}_i)$ represents the local entropy, defined as $-\sum_{s\in \mc{S}} d_i^\pi(s)\cdot \log\big(d_i^\pi(s)\big)$, where $d^\pt_i(s_i) = (1-\gamma)\sum_{a_i\in \mc{A}_i}\lambda^\pt_i(s_i,a_i)$ is the local state occupancy measure.
    
    \item   \textbf{$\boldsymbol{\ell_2}$-constrains} that deter agents from learning overly randomized policies, formulated as $\norm{\sum_{s_i\in \mathcal{S}_i}\lambda_i^{\pi_{\theta}}}_2^2\geq c,\ \forall i\in \mathcal{N}$.
    This constraint is beneficial in applications like autonomous driving and human-AI collaboration, where an agent's policy needs to be predictable for other agents.
\end{itemize}

In Figure \ref{fig:piston20}, we demonstrate the performance of Algorithm \ref{alg:pdac} in the 20-agent Pistonball environment under entropy constraints.
We observe that, while the truncation with $\kappa = 3$ converges in fewer iterations, truncation with $\kappa=1$ also yields comparable performance.
This underscores the efficiency of Algorithm \ref{alg:pdac} as employing a smaller communication radius can significantly reduce the computation.

Finally, we compare Algorithm \ref{alg:pdac} with three baselines based on the MAPPO-Lagrangian method by \cite{gu2021multi}. For a fair comparison, we consider two standard safe MARL problems, where both objectives and constraints are shaped by cumulative rewards (see Appendix \ref{subsec: app_baseline}).
The results demonstrate that our method consistently outperforms both the centralized and decentralized variants of MAPPO-Lagrangian.
In Appendix \ref{app:sec_numerical}, we provide the comprehensive experimental results to fully answer the three questions raised at the beginning of this section.

\begin{figure}[tb]
\centering
\begin{subfigure}{0.25\textwidth}
\includegraphics[width = \linewidth]{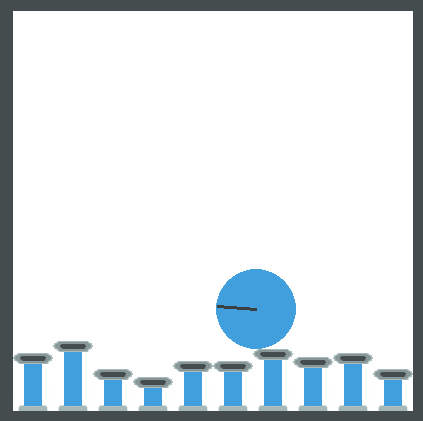}
\caption{Pistonball\label{fig:a}}
\end{subfigure}\hspace{1mm}
\begin{subfigure}{0.225\textwidth}
\includegraphics[width = \linewidth]{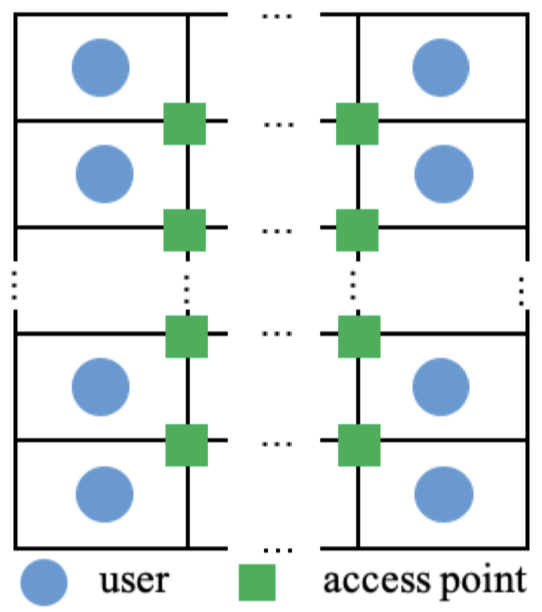}
\caption{Wireless comm.\label{fig:b}}
\end{subfigure}
\begin{subfigure}{0.24\textwidth}
\includegraphics[width = \linewidth]{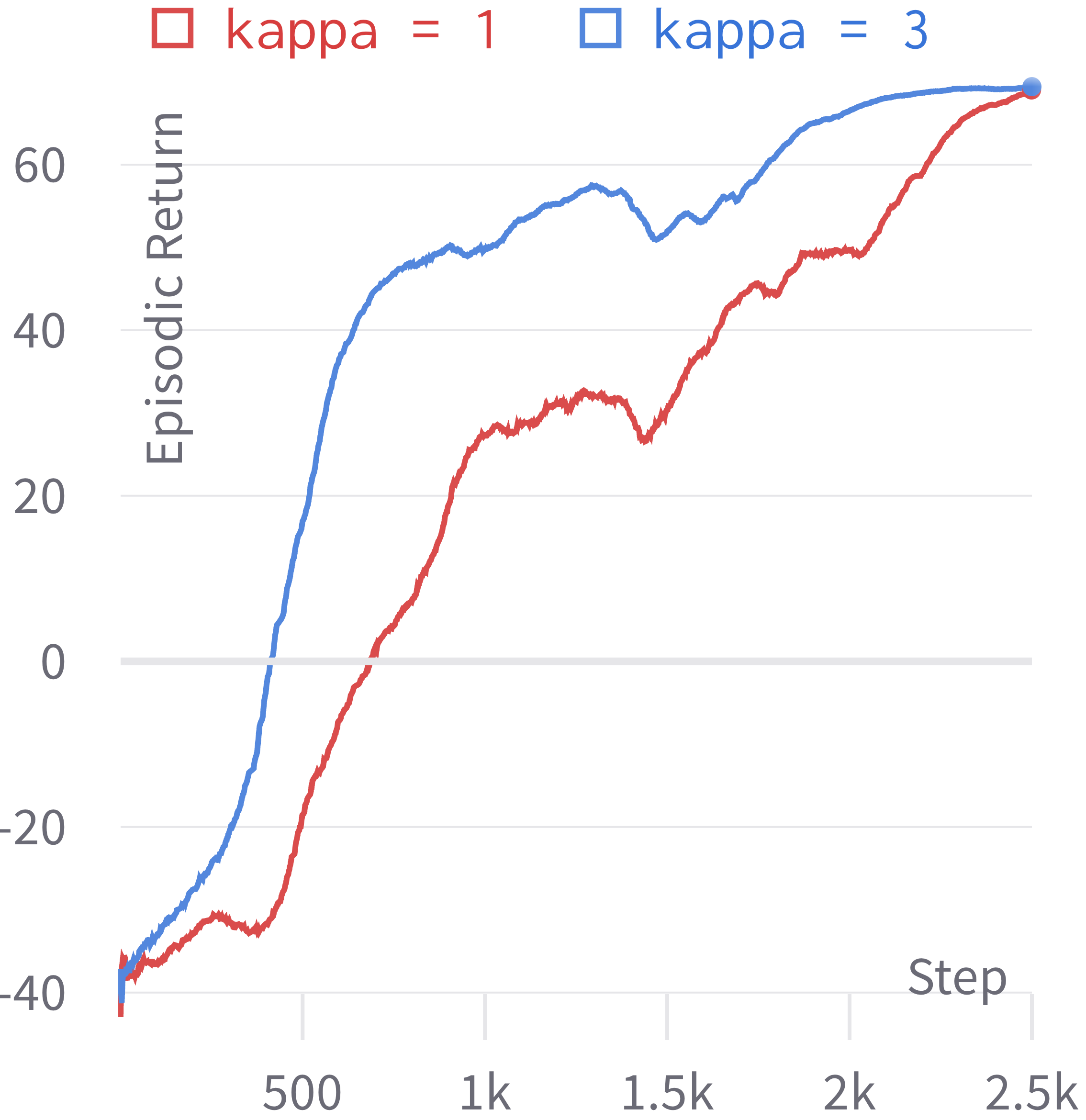}
\caption{Episodic return\label{fig:c}}
\end{subfigure}
\begin{subfigure}{0.24\textwidth}
\includegraphics[width = \linewidth]{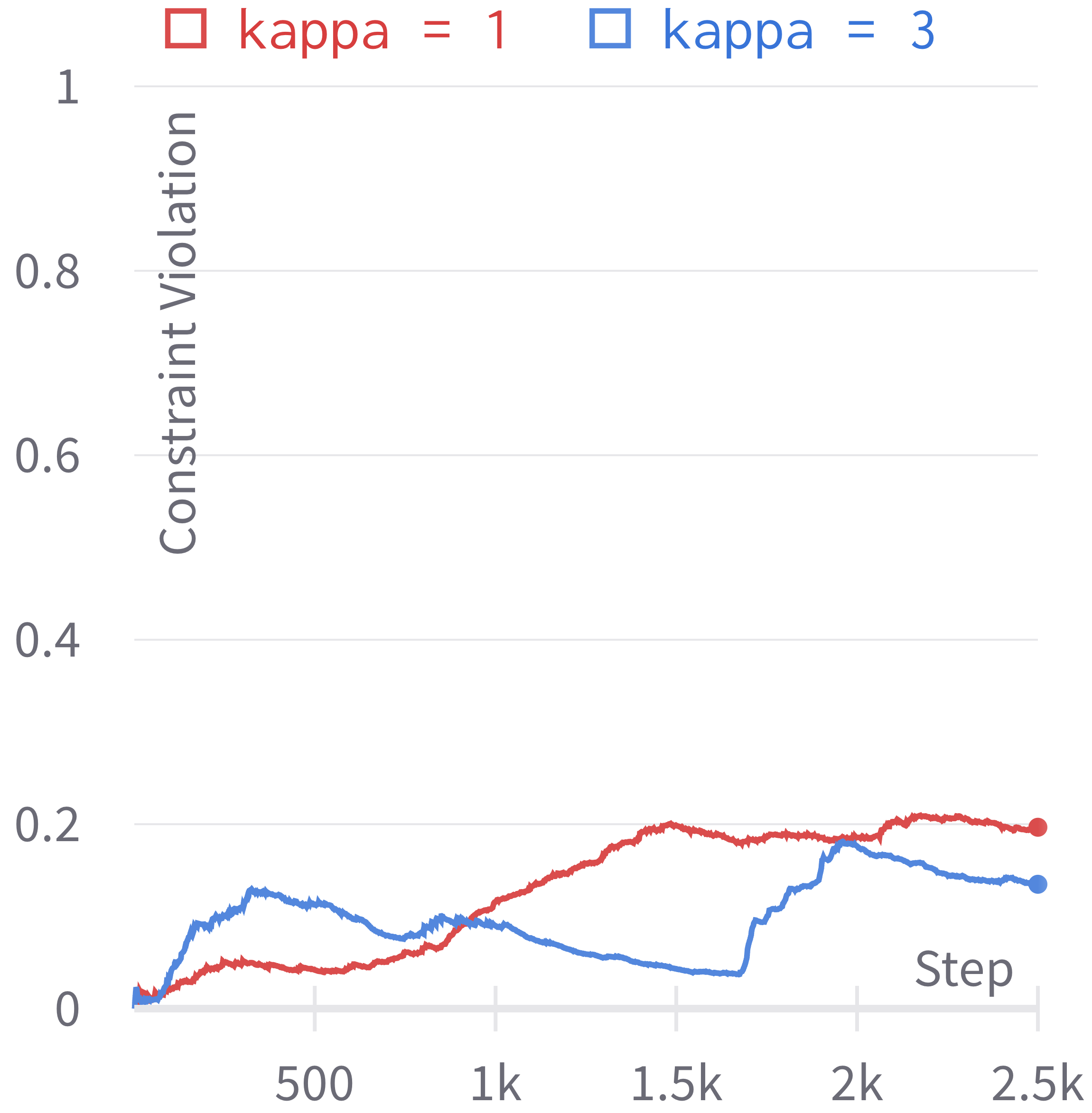}
\caption{Constraint violation\label{fig:d}}
\end{subfigure}
\caption{(a,b) Environment illustration. (c,d) Performance of Algorithm \ref{alg:pdac} in Pistonball with 20 agents under entropy constraints.\label{fig:piston20}}
\end{figure}

\section{Conclusion}
In this work, we study the safe MARL with general utilities, with a focus on the setting of distributed training without global observability. 
To address the challenge of scalability and incorporating general utilities, we propose a primal-dual actor-critic method with shadow reward and $\kappa$-hop policy.
Taking advantage of the spatial correlation decay property of the transition dynamics, we show that the proposed method achieves an $\mathcal{O}\left(T^{-2/3}\right)$ convergence rate to the FOSP of the problem in the exact setting and achieves an $\widetilde{\mathcal{O}}\left(\epsilon^{-3.5}\right)$ sample complexity, with high probability, in the sample-based setting. 
Finally, the effectiveness of our model and approach is verified by numerical studies.
For future research, it would be interesting to develop scalable safe MARL algorithms with adaptive communication of agents' state/action information and intelligent sampling of agents' trajectories.

\bibliography{reference}
\bibliographystyle{unsrt}

\newpage

\appendix

{\centerline{\LARGE \bf  Supplementary Materials}}
\vspace{3pt}
\begin{itemize}
    \item {\bf Appendix \ref{sec:limitation}}: Limitations
    \item {\bf Appendix \ref{sec:related-work}}: Related work
    \item {\bf Appendix \ref{app:sec_notations}}: Notations
    \item {\bf Appendix \ref{sec:example}}: Further details on CMDPs with general utilities
    \item {\bf Appendix \ref{app:algorithm_design}}: Algorithm design
    \begin{itemize}
       \item {\bf Appendix \ref{subsec:algorithm_diss}}: Further discussions on Algorithm \ref{alg:pdac}
        \item {\bf Appendix \ref{app:sec_auxiliary}}: Policy gradient theorem
    \end{itemize}
    \item {\bf Appendix \ref{app:sec_algorithm}}: Supplementary materials for Section \ref{sec:algorithm}
    \begin{itemize}

        \item {\bf Appendix \ref{app_subsec:proof_prop_khop}}: Proof of Proposition \ref{prop:khop_policy}
        \item {\bf Appendix \ref{app_subsec:proof_lemma_truncated_grad}}: Proof of Lemma \ref{lemma:truncated_grad}
    \end{itemize}
    \item {\bf Appendix \ref{app:sec_convergence}}: Supplementary materials for Section \ref{sec:convergence} 
    \begin{itemize}
        \item {\bf Appendix \ref{app_sub:assumptions}}: Discussions about assumptions
        \begin{itemize}
            \item {\bf Appendix \ref{app:subsec_properties}} Direct consequences of Assumptions \ref{assump:utility}-\ref{assump:dual_variable}
        \end{itemize}
        \item {\bf Appendix \ref{app:subsec_implication}}: Implication of metric $\mathcal{E}(\theta,\mu)$
    \end{itemize}
    \item {\bf Appendix \ref{app:sec_proof_thm}}: Proof of Theorems \ref{thm:convergence} and \ref{thm:sample_complexity}
    \begin{itemize}
        \item {\bf Appendix \ref{app:subsec_prop_estimators}}: Proof of Proposition \ref{prop:estimators}.
    \end{itemize}
    \item {\bf Appendix \ref{app:sec_numerical}}: Numerical experiments
\end{itemize}

\section*{Limitations}\label{sec:limitation}
This is a theoretical work that concerns with algorithm design for safe multi-agent reinforcement learning with general utilities. The main results in the paper characterize the convergence rate of the proposed algorithm to an approximate first-order stationary point. 
The proof of the main results rely on several technical assumptions, which are detailedly discussed in Appendix \ref{app_sub:assumptions}.

\section{Related work}\label{sec:related-work}
\textbf{Safe MARL}\quad The study of provably efficient algorithms for safe RL has received
considerable attention due to the crucial role of safety in autonomous systems \cite{altman1999constrained, paternain2019safe,yu2019convergent,dulac2019challenges,gu2022review,de2021constrained}. 
Our work is closely related to Lagrangian-based CMDP algorithms \cite{efroni2020exploration, ding2021provably,liu2021learning,ying2022dual, ying2022policy, ding2022provably, bai2022achieving}, which update the primal variable via policy gradient ascent and updates the dual variable via projected sub-gradient descent. 
The concept of safe RL has also been extended to multi-agent systems. 
Specifically, \cite{lu2021decentralized} study the distributed consensus CMDP with networked agents and propose a decentralized policy gradient method to perform policy optimization over a network. Furthermore, \cite{gu2021multi} propose a safe multi-agent policy iteration procedure that attains the monotonic improvement guarantee and constraints satisfaction guarantee at every iteration, but has no convergence guarantee.
In addition, \cite{mondal2022mean} adopt a mean-field control approach for safe MARL and provide a natural policy gradient-based algorithm.  Furthermore, the Nash equilibrium for constrained Markov potential games has been studied in \cite{altman2000constrained, gomez2003saddle,altman2008constrained,singh2014characterization}, using the notion of constrained Nash equilibrium \cite{yaji2015necessary, wei2021constrained,zhang2022constrained}. 
These results are not applicable to the RL setting that assumes unknown models. Recently, \cite{ding2023provably} prove the first result on the non-asymptotic convergence to the constrained Nash equilibrium by adding built-in exploration mechanisms under constraints. 
However, these works all assume access to the global state of all agents, whereas our method only requires the state-action information in the local neighborhood while also guaranteeing small performance loss.

\textbf{MARL with general utilities}\quad
A series of recent works have focused on developing general approaches for RL with general utilities (also known as convex MDPs) \cite{zhang2020variational, zhang2021convergence,zhang2021marl, zahavy2021reward, geist2021concave, ying2022policy, yingscalable}.
In particular, our work is closely related to \cite{zhang2021marl,yingscalable} which extend RL with general utilities to multi-agent systems. Specifically, \cite{zhang2021marl} propose a decentralized shadow reward actor-critic (DSAC) method in which agents alternate between policy evaluation (critic), weighted averaging with neighbors (information mixing), and local gradient updates for their policy parameters (actor). 
The DSAC approach augments the classic critic step by requiring the agents to estimate their local occupancy measure in order to estimate the derivative of the local utility with respect to their occupancy measure, i.e., the “shadow reward”. 
However, this approach assumes full observability, i.e., each agent should have access to the global states and actions of the team, which limits its application to systems with a large numbers of agents. 
To address this issue, \cite{yingscalable} develop a scalable algorithm for multi-agent RL with general utilities without the
full observability assumption by exploiting the spatial correlation decay property of the network structure \cite{qu2020scalable}.
However, these works only consider the unconstrained RL problem, which may lead to undesired policies in safety-critical applications. 
Therefore, additional effort is required to deal with the emerging safety constraints while guaranteeing the scalability, and our work addresses this problem.

\section{Notations}\label{app:sec_notations}
For a finite set $\mc{S}$, let $\left|\mathcal{S}\right|$ denote its cardinality.
When the variable $s$ follows the distribution $\rho$, we write it as $s\sim \rho$.
Let $\mbb{E}[\cdot]$ and $\mbb{E}[\cdot\mid \cdot]$, respectively, denote the expectation and conditional expectation of a random variable.
Let $\mbb{R}$ denote the set of real numbers.
For a vector $x$, we use $x^\top$ to denote the transpose of $x$ and use $\langle x,y\rangle$ to denote the inner product $x^\top y$.
We use the convention that $\|x\|_1 = \sum_i |x_i|$, $\|x\|_2 = \sqrt{\sum_i x_i^2}$, and $\|x\|_\infty = \max_i |x_i|$.
When applied to a matrix $A$, the norms are referred to as the induced norms, e.g., $\|A\|_1 = \max_{\|x\|_1\neq 0}\left\{ \|Ax\|_1/\|x\|_1\right\}$ is the induced $1$-norm and $\|A\|_2 = \max_{\|x\|_2\neq 0}\left\{ \|Ax\|_2/\|x\|_2\right\}$ stands for the spectral norm.
For a set $X\subset \mbb{R}^p$, let $\operatorname{cl}(X)$ denote the closure of $X$.
Let $\mc{P}_X$ denote the projection onto $X$, defined as $\mathcal{P}_{X}(y)\coloneqq \arg \min _{x \in X}\|x-y\|_{2}$.
For a function $f(x)$, let $\operatorname{arg}\min f(x)$ (resp. $\operatorname{arg}\max f(x)$) denote any global minimum (resp. global maximum) of $f(x)$ and let $\nabla_x f(x)$ denote its gradient with respect to $x$. 


\section{Further details on CMDPs with general utilities}\label{sec:example}
In standard CMDPs, the objective function and the constraint function take the form of discounted cumulative rewards, i.e.,
\begin{equation}\label{eq:cmdp}
\begin{aligned}
\max_{\pi}\ &V^\pi(r)\coloneqq  \mbb{E}\left[\sum_{k=0}^{\infty} \gamma^{k} r\left(s^{k}, a^{k}\right) \bigg| a^k \sim \pi(\cdot \vert s^k), s^{0}\sim \rho \right],\\
\text{s.t. } &V^\pi(u)\coloneqq  \mbb{E}\left[\sum_{k=0}^{\infty} \gamma^{k} u\left(s^{k}, a^{k}\right) \bigg| a^k \sim \pi(\cdot \vert s^k), s^{0}\sim \rho \right] \geq 0.
\end{aligned}
\end{equation}
By using the definition of the occupancy measure $\lambda^\pi$ (see \eqref{eq:occupancy measure}), we can equivalently write problem \eqref{eq:cmdp} as
\begin{equation}\label{eq:cmdp-LP}
\max_{\pi}\  f(\lambda^\pi) = \langle r,\lambda^\pi\rangle,\quad \text{s.t.}\quad  g(\lambda^\pi)= \langle u,\lambda^\pi\rangle\geq 0.
\end{equation}
When viewing $\lambda^\pi$ as the decision variable, \eqref{eq:cmdp-LP} is known to be the linear programming formulation of the CMDP \cite{altman1999constrained}.
Thus, the standard CMDP problem is a special case of CMDPs with general utilities.

However, many decision-making problems of interests take a form beyond cumulative rewards. We give three examples of such problems below.
\begin{example}[Safety-aware apprenticeship learning \cite{zhou2018safety}]
In apprenticeship learning, the agent learns to mimic an expert's demonstrations instead of maximizing the long-term reward.
In the presence of critical safety requirements, the learner will also strive to satisfy given constraints on the expected total cost.
This problem can be formulated as
$$
\underset{\pi}\max\ f(\lambda^\pi) = -\operatorname{dist}(\lambda^\pi,\lambda_e)\ \ \text {s.t.}\ g(\lambda^\pi) =  \langle c,\lambda^\pi \rangle\leq {0},
$$
where $\lambda_e$ is the occupancy measure corresponding to the expert demonstration, $c$ denotes the vector of costs, and $\operatorname{dist}(\cdot,\cdot)$ can be any distance function on the set of occupancy measures, e.g., $\ell^2$-distance or Kullback-Liebler (KL) divergence. 
\end{example}

\begin{example}[Feasibility constrained MDPs \cite{miryoosefi2019reinforcement}]\label{exp:feasibility}
As an extension to standard CMDPs, the designer may desire to control the MDP through limiting the deviation of the learned policy from a convex feasibility region $C$, e.g., $C$ may be a single point representing the occupancy measure of a known safe policy.
In this case, the problem can be cast as 
$$
\underset{\pi}\max\ f(\lambda^\pi) =  \langle r,\lambda \rangle \ \ \text {s.t.}\  g(\lambda^\pi) =  \operatorname{dist}(\lambda,C)\leq d_0,
$$
where $r$ is the reward vector of the underlying MDP and $d_0\geq 0$ denotes the threshold of the allowable deviation.
\end{example}

\begin{example}[Constrained entropy maximization \cite{yang2023cem}]\label{exp:entropy}
In the absence of a reward function, a suitable intrinsic objective for the agent is to maximize the speed at which it explores the environment.
However, in safety-critical systems, it is important to account for the safety risks inevitably brought by the pursuit of exploration.
In this scenario, one can consider the problem
\begin{equation*}
\max_{\pi} f(\lambda^\pi) = -\sum_{s\in \mc{S}} d^\pi(s)\cdot \log\big(d^\pi(s)\big), \ \ \text {s.t.}\ g(\lambda^\pi) =  \langle c,\lambda^\pi \rangle\leq {0},
\end{equation*}
where $d^\pi(s)\coloneqq  (1-\gamma)\sum_{a\in \mc{A}} \lambda^\pi(s,a)$ is the discounted state occupancy measure and $f(\lambda^\pi)$ computes the entropy of the distribution $d^\pi(\cdot)$ under policy $\pi$.
\end{example}

Finally, for the distributed problem \eqref{eq:dec_prob}, we remark that it recovers the standard constrained MARL when all local utilities are linear, namely
\begin{equation}\label{eq:dec_cmdp}
\begin{aligned}
\max_{\theta\in \Theta}\ &F(\theta) = \frac{1}{n}\sum_{i\in \mcN} \langle r_i, \lambda^\pt_i\rangle= \frac{1}{n} \sum_{i\in \mcN}\mbb{E}\left[\sum_{k=0}^{\infty} \gamma^{k} r_i\left(s^{k}_i, a^{k}_i\right) \bigg| a^k \sim \pt(\cdot \vert s^k), s^{0}\sim \rho \right],\\
\text{s.t. } &G_i(\theta)= \langle u_i, \lambda^\pt_i\rangle =\mbb{E}\left[\sum_{k=0}^{\infty} \gamma^{k} u_i\left(s^{k}_i, a^{k}_i\right) \bigg| a^k \sim \pt(\cdot \vert s^k), s^{0}\sim \rho \right] \geq 0,\ \forall i\in \mcN,
\end{aligned}
\end{equation}
where $r_i: \mc{S}_i\times \mc{A}_i \rightarrow \mbb{R}$ and $u_i: \mc{S}_i\times \mc{A}_i \rightarrow \mbb{R}$ are vectors of local rewards and utilities, respectively.
The problem \eqref{eq:dec_cmdp} is still not separable since the transition dynamics are coupled and the decisions of the agents are intertwined through their policies.

\section{Algorithm design}\label{app:algorithm_design}
In this section, we provide the pseudocode for the proposed method, as outlined in Algorithm \ref{alg:pdac}.
A detailed flow diagram of Algorithm \ref{alg:pdac} at each iteration $t$ is provided in Figure \ref{fig:alg_flow}. 
Then, we provide a line-by-line discussion of the algorithm in Appendix \ref{subsec:algorithm_diss} to offer further clarity.
\begin{algorithm}[!htb]
   \caption{Primal-Dual Actor-Critic Method with Shadow Reward and $\kappa$-hop Policy\label{alg:pdac}}
\begin{algorithmic}[1]
   \STATE {\bfseries Input:} Initial policy $\theta^0$ and dual variable $\mu^0$; initial distribution $\rho$; communication radius $\kappa$; step-sizes $\eta_\theta$ and $\eta_\mu$; batch size $B$; episode length $H$.
   \FOR{iteration $t=0,1,2,\dots$}
   \STATE Sample $B$ trajectories with length $H$ under the $\kappa$-hop policy ${\pi}_{\theta^t}$ and initial distribution $\rho$. 
   Each agent $i$ collects its neighborhood trajectories $\tau = \big\{(s_{\mcNk_i}^{0}, a_{\mcNk_i}^{0}), \cdots, (s_{\mcNk_i}^{H-1}, a_{\mcNk_i}^{H-1})\big\}$ as batch $\mc{B}^t_i$.
    \STATE Each agent $i$ estimates its local occupancy measure $\lambda^\ptt_i$ under $\ptt$:
    \begin{equation}\label{eq:occupancy_estimate}
    \widetilde{\lambda}^t_i = \frac{1}{B} \sum_{\tau \in \mathcal{B}^{t}_i} \sum_{k=0}^{H-1} \gamma^{k} \cdot \mathbbm{1}_i\left(s_{i}^{k}, a_{i}^{k}\right) \in\mbb{R}^{|\mc{S}_i|\times |\mc{A}_i|}.
    \end{equation}
    \STATE Each agent $i$ computes the empirical constraint function value $\widetilde{g}_i^t = g_i(\widetilde{\lambda}^t_i)$ and empirical shadow rewards $\widetilde{r}_{f_i}^t = \nabla_{\lambda_i}f_i(\widetilde{\lambda}^t_i)$ and $\widetilde{r}_{g_i}^t = \nabla_{\lambda_i}g_i(\widetilde{\lambda}^t_i)$.
    \STATE Each agent $i$ communicates with its neighborhood $\mcNk_i$ and jointly executes an evaluation subroutine to estimate the truncated shadow Q-functions with the empirical shadow rewards $\widetilde{r}_{\diamondsuit_i}^t$ for $\diamondsuit\in \{f,g\}$:
    \begin{equation} \big(\widetilde{Q}^t_{\diamondsuit_1},\dots,\widetilde{Q}^t_{\diamondsuit_n})\leftarrow \operatorname{Eval}\big(\ptt,(\widetilde{r}_{\diamondsuit_1}^t,\dots, \widetilde{r}_{\diamondsuit_n}^t)\big).
    \end{equation}
    \STATE Each agent $i$ updates the dual variable with the empirical gradient $\widetilde{\nabla}_{\mu_i}\mcL(\theta^t,\mu^t) = \widetilde{g}_i^t/n$:
    \begin{equation}\label{eq:dual_update}
        \mu^{t+1}_i = \mc{P}_{U}\left(-\eta_\mu \widetilde{\nabla}_{\mu_i}\mcL(\theta^t,\mu^t) \right).
    \end{equation}
    \STATE Each agent $i$ shares $\mu_i^{t+1}$ and values of $\widetilde{Q}^t_{f_i}$, $\widetilde{Q}^t_{g_i}$ along the trajectories in $\mc{B}^t_i$ with agents in $\mcNk_i$ and estimates the truncated policy gradient at $(\theta^t,\mu^{t+1})$:
    \begin{equation}\label{eq:grad_estimate}
    \begin{aligned}
&\widetilde{\nabla}_{\theta_i}\mcL(\theta^t,\mu^{t+1}) =\frac{1}{B}\sum_{\tau\in \mc{B}^t_i} \bigg[\sum_{k=0}^{H-1}\gamma^k 
\nabla_{\theta_i}\log \pti (a_i^k\vert s^k_{\mcNk_i})\cdot\\
&\hspace{3.5cm}  \frac{1}{n} \sum_{j\in \mcNk_i} \left[\widetilde{Q}^t_{f_j}(s_{\mcNk_j}^k,a_{\mcNk_j}^k)+\mu_j^{t+1}\widetilde{Q}^t_{g_j}(s_{\mcNk_j}^k,a_{\mcNk_j}^k)\right]\bigg].
    \end{aligned}
    \end{equation}
    \STATE Each agent $i$ updates the local policy parameter:
    \begin{equation}\label{eq:policy_update}
    \theta_i^{t+1} = \mc{P}_{\Theta_i}\left(\theta_i^{t}+\eta_\theta\cdot  \widetilde{\nabla}_{\theta_i}\mcL(\theta^t,\mu^{t+1})\right).
    \end{equation}
   \ENDFOR
\end{algorithmic}
\end{algorithm}

\begin{figure}[!htb]
\hspace{25pt}\includegraphics[width=0.7\textwidth]{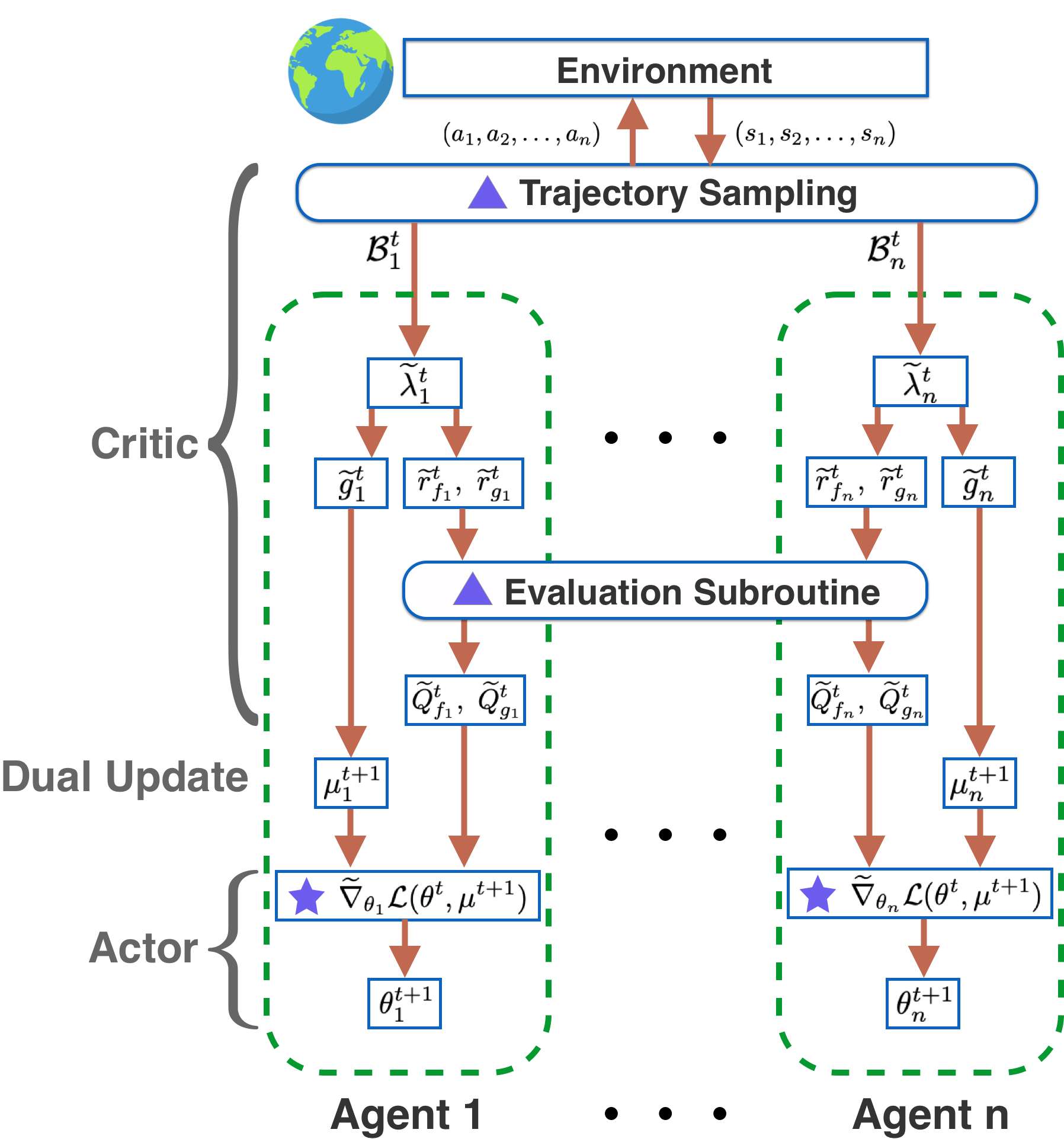}
\caption{The flow diagram of Algorithm \ref{alg:pdac} at iteration $t$. There are three stages: policy evaluation by the critic (line 3-6); Lagrangian multiplier update (line 7); policy update by the actor (line 8-9). 
The steps highlighted by $\color{blue}{\blacktriangle}$ require each agent $i$ to access the states/actions of the agents in its $\kappa$-hop neighborhood, i.e., $\left(s_{\mcNk_i},a_{\mcNk_i}\right)$.
The step highlighted by $\color{blue}\bigstar$ corresponds to the computation of policy gradient, which requires each agent $i$ to share its local shadow Q-functions and dual variable with the agents in $\mcNk_i$.
\label{fig:alg_flow}}
\end{figure}

\subsection{Further discussions on Algorithm \ref{alg:pdac}}\label{subsec:algorithm_diss}
\begin{itemize}[leftmargin = *]
    \item {\bf Line 3 (trajectory sampling):} In order to estimate the Lagrangian $\mcL(\theta,\mu)$ and its gradients $\nabla_\theta \mcL(\theta,\mu), \nabla_\mu \mcL(\theta,\mu)$, which depend on occupancy measures, we first make the agents estimate their local occupancy measures through trajectory sampling. 
    At the beginning of each period $t$, $B$ batches of trajectories are sampled under the $\kappa$-hop policy $\ptt$.
    Because the local policy $\ptti$ only depends on the states of agent $i$'s $\kappa$-hop neighbors, each agent $i$ only needs to communicate with these neighbors to make decisions. 
    Specifically, in each period $k$, the environment first samples state $s^k\sim \mbb{P}(\cdot\vert s^{k-1},a^{k-1})$. 
    Then, each agent $i$ obtains the states of its neighbors $s^k_{\mcNk_i}$ and takes action according to $a_i^k\sim \pti(\cdot\vert s^k_{\mcNk_i})$.
    After the sampling procedure, each agent $i$ collects the partial trajectories within its communication radius, which are trajectories formed by the state-action pairs of the agents in $\mcNk_i$.

    \item {\bf Line 4 (occupancy measure estimation):} With access to the batch of trajectories $\mc{B}^t_i$, each agent then forms an estimate for its local occupancy measure $\lambda^\ptt_i$ under $\ptt$ through \eqref{eq:occupancy_estimate}. Note that $\mathbbm{1}_i\left(s_{i}^{k}, a_{i}^{k}\right)\in \mbb{R}^{|\mc{S}_i|\times |\mc{A}_i|}$ is an indicator vector, where all its  entries are zero except for its $(s_{i}^{k}, a_{i}^{k})$-th entry being one.
    Thus, the estimator \eqref{eq:occupancy_estimate} approximates the expected value \eqref{eq:local_occupancy_measure} by counting the discounted visiting time of different state-action pairs and taking the average over a batch of trajectories.
    Such a Monte Carlo estimation is also used in \cite{zhang2021marl, zhang2021convergence}.
    The accuracy of this occupancy measure estimator is quantified in Proposition \ref{prop:estimators}.

    \item {\bf Line 5 (constraint and shadow reward evaluation):} Recall that the shadow rewards are defined as $r^\pt_{f_i} = \nabla_{\lambda_i} f_i(\lambda^\pt_i)$ and $r^\pt_{g_i} = \nabla_{\lambda_i} g_i(\lambda^\pt_i)$ in Definition \ref{def:shadow}.
    With empirical occupancy measures, each agent $i$ can directly compute their constraint function value as $\widetilde{g}_i^t = g_i(\widetilde{\lambda}^t_i)$ and shadow rewards as $\widetilde{r}_{f_i}^t = \nabla_{\lambda_i}f_i(\widetilde{\lambda}^t_i)$ and $\widetilde{r}_{g_i}^t = \nabla_{\lambda_i}g_i(\widetilde{\lambda}^t_i)$, where $\widetilde{g}_i^t$ is used in the dual update (Line 7) and shadow rewards are used in the Q-function evaluation (Line 6).
    When $f_i(\cdot)$ and $g_i(\cdot)$ satisfy proper smoothness assumptions, e.g., Assumption \ref{assump:utility}, the approximation errors of these estimators are proportional to the errors of empirical occupancy measures.

    \item {\bf Line 6 (truncated shadow Q-function evaluation):} To compute the truncated policy gradient estimator, the agents need to estimate their truncated shadow Q-functions. 
    For the estimation process, we introduce the distributed TD-learning algorithm \cite{lin2021multi}, which is a model-free method as outlined in Algorithm \ref{alg:subroutine}.
    In each iteration, a new state $s^k$ is sampled by the environment according to the transition probability $\mbb{P}(\cdot\vert s^{k-1},a^{k-1})$. Then, each agent $i$ exchanges its state information with agents in the neighborhood $\mcNk_i$ and makes a decision using its $\kappa$-hop local policy $\pti$, i.e., sampling an action $a_i^k\sim \pti(\cdot\vert s^k_{\mcNk_i})$.
    Finally, the existing estimation $\widetilde{Q}^{k-1}_i$ is updated using the TD-learning update in \eqref{eq:td_learning}.
    This update is based on the Bellman equation \cite{sutton2018reinforcement}.
    The term $\left(r_i(s^{k-1}_i,a^{k-1}_i) + \gamma \widetilde{Q}^{k-1}_i(s_{\mcNk_i}^{k},a_{\mcNk_i}^{k})\right) -\widetilde{Q}^{k-1}_i(s_{\mcNk_i}^{k-1},a_{\mcNk_i}^{k-1})$ is referred to as the temporal difference error, which can be viewed as a correction to the prior estimate after receiving a new reward.
As described in Section \ref{subsec:algorithm_design}, this subroutine serves as an example of how the truncated Q-function estimation can be computed and can be replaced by any other suitable approach that satisfies the observation and communication requirements (also see the discussion in Appendix \ref{app:subsubsec_Q_esti}).

\begin{algorithm}[!htb]
   \caption{Evaluation Subroutine Based on Temporal Difference Learning \cite{lin2021multi} (Eval)\label{alg:subroutine}}
\begin{algorithmic}[1]
   \STATE {\bfseries Input:} $\kappa$-hop policy $\pt$; local shadow rewards $\{r_i\}_{i\in \mcN}$; communication radius $\kappa$; initial truncated shadow Q-functions $\left\{\widetilde{Q}^0_i\in \mbb{R}^{|\mc{S}_{\mcNk_i}|\times |\mc{A}_{\mcNk_i}|}\right\}_{i\in \mcN}$ as zero vectors; uniform initial distribution $\rho_0$; episode length $K$; step-sizes $\{\eta_Q^k\}_{k = 0}^{K-1}$.
   \STATE Sample the initial state $s^0\sim \rho_0$. Each agent $i$ obtains the states of neighbors $s^0_{\mcNk_i}$, takes action according to $a_i^0\sim \pti(\cdot\vert s^0_{\mcNk_i})$, and receives the reward $r_i(s^0_i,a^0_i)$.
   \FOR{iteration $k=1,2,\dots, K$}
   \STATE Sample state $s^k\sim \mbb{P}(\cdot\vert s^{k-1},a^{k-1})$. Each agent $i$ obtains the states of neighbors $s^k_{\mcNk_i}$, takes action according to $a_i^k\sim \pti(\cdot\vert s^k_{\mcNk_i})$, and receives the reward $r_i(s^k_i,a^k_i)$.
   \STATE Each agent $i$ communicates with its neighbor agents $\mcNk_i$ and updates its truncated shadow Q-functions through
   \begin{equation}\label{eq:td_learning}
    \begin{aligned}
&\widetilde{Q}^k_i(s_{\mcNk_i}^{k-1},a_{\mcNk_i}^{k-1})\leftarrow \widetilde{Q}^{k-1}_i(s_{\mcNk_i}^{k-1},a_{\mcNk_i}^{k-1}) + \eta_Q^{k-1}\Big[\left(r_i(s^{k-1}_i,a^{k-1}_i) + \gamma \widetilde{Q}^{k-1}_i(s_{\mcNk_i}^{k},a_{\mcNk_i}^{k})\right) \\
&\hspace{9cm}-\widetilde{Q}^{k-1}_i(s_{\mcNk_i}^{k-1},a_{\mcNk_i}^{k-1}) \Big],\\
&\widetilde{Q}^k_i(s_{\mcNk_i},a_{\mcNk_i})\leftarrow \widetilde{Q}^{k-1}_i(s_{\mcNk_i},a_{\mcNk_i}),\ \forall (s_{\mcNk_i},a_{\mcNk_i})\neq (s_{\mcNk_i}^{k-1},a_{\mcNk_i}^{k-1}).
    \end{aligned}
   \end{equation}
   \ENDFOR
   \STATE {\bfseries Output:} Empirical truncated shadow Q-functions $\left\{\widetilde{Q}^K_i \right\}_{i\in \mcN}$.
\end{algorithmic}
\end{algorithm}

    \item {\bf Line 7 (dual variable update):}  The dual variable is updated by solving the sub-problem in \eqref{eq:dual_update_prob}, which is equivalent to
    \begin{equation*}
    \mu^{t+1} = \argmin_{\mu\in \mc{U}} \langle \nabla_\mu\mcL(\theta^t,\mu^t),\mu-\mu^t\rangle + \frac{1}{2\eta_\mu}\|\mu\|_2^2,
    \end{equation*}
    by the linearity of $\mcL(\theta^t,\mu)$ in $\mu$.
    Thus, it is clear that the sub-problem yields the solution in \eqref{eq:dual_update_prob}.
    The regularization term ${1}/{(2\eta_\mu)}\cdot \|\mu\|_2^2$ helps to provide curvature to the problem and can also be substituted by ${1}/{(2\eta_\mu)}\cdot \|\mu-\mu_0\|_2^2/2$ for any fixed point $\mu_0\in \mc{U}$.
    We assume the feasible region for $\mu$ is a high-dimensional box, denoted by $\mc{U}\coloneqq  U^n = [0,\overline{\mu}]^n$, where $\overline{\mu}$ is some fixed number.
    To optimize the convergence rate/sample complexity in the analysis, we will choose large values for $\eta_\mu$, resulting in an aggressive dual update.
    Empirically, the benefit of having a large $\eta_\mu$ is to also ensure a relative low constraint violation during the training stage, which is essential in many safety-critical systems.

    \item {\bf Line 8 (policy gradient evaluation):} The agents approximates their policy gradients through the truncated policy gradient defined in \eqref{eq:truncated_grad}. By the equivalent forms of the policy gradient theorem (see Lemma \ref{lemma:policy gradient_general}), \eqref{eq:truncated_grad} can be written as
    \begin{equation*}\label{eq:truncated_gradient_sum}
    \widehat{\nabla}_{\theta_i} \mcL(\theta,\mu) = \mathbb{E}\left[\sum_{k=0}^\infty \gamma^k  \nabla_{\theta_i}\log \pti (a_i^k\vert s_{\mcNk_i}^k)\cdot \frac{1}{n}\sum_{j\in {\mcNk_i}}\left(\widehat{Q}^\pt_{f_j}(s_{\mcNk_j}^k,a_{\mcNk_j}^k)+\mu_j\widehat{Q}^\pt_{g_j}(s_{\mcNk_j}^k,a_{\mcNk_j}^k) \right) \right],
    \end{equation*}
    where the expectation is taken over all possible trajectories under policy $\pt$. 
    Thus, the truncated policy gradient $\widehat{\nabla}_{\theta_i}\mcL(\theta^t,\mu^{t+1})$ can be estimated through a REINFORCE-based mechanism \cite{williams1992simple} as shown in \eqref{eq:grad_estimate}. 
    It is important to note that, since all the batches $\{\mc{B}_i^t\}_{i\in \mcN}$ come from the same global trajectories sampled in Line 3, the values of each agent $i$'s empirical truncated Q-functions along the trajectories in its batch, i.e., $\left\{\{\widetilde{Q}^t_{\diamondsuit_i}(s_{\mcNk_i}^k,a_{\mcNk_i}^k)\}_{k=0}^{H-1}\right\}_{\tau\in \mc{B}_i^t}$ for $\diamondsuit\in \{f,g\}$, are used in the computation of $\widetilde{\nabla}_{\theta_j}\mcL(\theta^t,\mu^{t+1})$ for all agents $j\in \mcN_i$. 
    Therefore, it is sufficient for each agent $i$ to share this information and its updated dual variable $\mu_i^{t+1}$ with all other agents in its neighborhood $\mcNk_i$. 

     \item {\bf Line 9 (policy parameter update):} The policy update uses the vanilla projected gradient ascent with the estimated gradient in \eqref{eq:grad_estimate}.    
\end{itemize}

\subsection{Policy gradient theorem}\label{app:sec_auxiliary}
In this section, we present the well-known policy gradient theorem \cite{sutton1999policy}. 
\begin{lemma}[Policy gradient under general parameterization]\label{lemma:policy gradient_general}
Let $V^{\pi_\theta}(r)$ be a standard value function under policy $\pi_\theta$ with an arbitrary reward function $r:\mc{S}\times \mc{A}\rightarrow \mbb{R}$, defined as
\begin{equation*}
V^\pt(r)\coloneqq  \langle r, \lambda^\pt \rangle = \mbb{E}\left[\sum_{k=0}^{\infty} \gamma^{k} r\left(s^{k}, a^{k}\right) \bigg| a^k \sim \pt(\cdot \vert s^k), s^{0}\sim \rho \right].
\end{equation*}

The gradient of $V^{\pi_\theta}(r)$ with respect to $\theta$ can be given by the following three equivalent forms:
\begin{equation*}
\begin{aligned}
\nabla_{\theta} V^{\pi_{\theta}}\left(r\right) 
= r^\top\cdot\nabla_\theta\lambda^\pt&=\frac{1}{1-\gamma} \mathbb{E}_{s \sim d^{\pi_{\theta}}} \mathbb{E}_{a \sim \pi_{\theta}(\cdot \vert s)}\big[\nabla_{\theta} \log \pi_{\theta}(a \vert s)\cdot Q^{\pi_{\theta}}(r;s, a)\big]\\
&=\mathbb{E}\left[\sum_{k=0}^\infty \gamma^k \nabla_\theta \log \pi_\theta(a^k\vert s^k)\cdot Q^{\pi_\theta}(r;s^k,a^k) \bigg\vert a^k \sim \pt(\cdot \vert s^k), s^0 \sim \rho\right]\\
&=\mathbb{E}\left[\sum_{k=0}^\infty \gamma^k \cdot r(s^k,a^k)\cdot \left(\sum_{k^\prime = 0}^k \nabla_\theta \log \pi_\theta(a^{k^\prime}\vert s^{k^\prime}) \right)\bigg\vert a^k \sim \pt(\cdot \vert s^k), s^0 \sim \rho\right],
\end{aligned}
\end{equation*}
where $d^\pt(s)\coloneqq  (1-\gamma)\sum_{a\in \mc{A}} \lambda^\pt(s,a)$ is the discounted state occupancy measure, and $Q^\pt(r;\cdot,\cdot)$ is the state-action value function (Q-function) with reward $r$ defined in \eqref{eq:Q_def}.
\end{lemma}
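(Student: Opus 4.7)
The plan is to establish the four equivalent expressions in sequence, starting from the algebraic identity and then deriving the three expectation-based representations. The identity $\nabla_\theta V^{\pt}(r) = r^\top \nabla_\theta \lambda^{\pt}$ is immediate by linearity, since $V^{\pt}(r) = \langle r, \lambda^{\pt}\rangle$ and $r$ is independent of $\theta$.

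For the Q-function / state-occupancy form, I would invoke the Bellman decomposition $V^{\pt}(r;s) = \sum_a \pt(a\vert s)\, Q^{\pt}(r;s,a)$ together with $Q^{\pt}(r;s,a) = r(s,a) + \gamma \sum_{s'} \mbb{P}(s'\vert s,a)\, V^{\pt}(r;s')$. Differentiating in $\theta$ and using that $\mbb{P}$ and $r$ do not depend on $\theta$ gives
\begin{equation*}
\nabla_\theta V^{\pt}(r;s) = \sum_a \nabla_\theta \pt(a\vert s)\, Q^{\pt}(r;s,a) + \gamma \sum_{a,s'} \pt(a\vert s)\, \mbb{P}(s'\vert s,a)\, \nabla_\theta V^{\pt}(r;s').
\end{equation*}
Unrolling this recursion and taking expectation over $s^0\sim \rho$ introduces the discounted state occupancy measure $d^{\pt}$ with the prefactor $1/(1-\gamma)$. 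Applying the log-derivative trick $\nabla_\theta \pt(a\vert s) = \pt(a\vert s)\, \nabla_\theta \log \pt(a\vert s)$ then yields the second form.

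The third, trajectory-summation form follows by unfolding $\mbb{E}_{s\sim d^{\pt}}$ as $(1-\gamma)\sum_{k=0}^\infty \gamma^k \mbb{P}(s^k=s \vert \pt, s^0\sim \rho)$ and identifying the resulting joint expectation with sampling along a trajectory generated by $\pt$. For the fourth, REINFORCE-style expression, I would differentiate $V^{\pt}(r) = \mbb{E}_{\tau}\bigl[\sum_{k=0}^\infty \gamma^k r(s^k,a^k)\bigr]$ via the likelihood-ratio identity $\nabla_\theta \log \mbb{P}(\tau;\pt) = \sum_{k'=0}^\infty \nabla_\theta \log \pt(a^{k'}\vert s^{k'})$, which is valid because the transition kernel and initial distribution are $\theta$-independent. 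Writing the gradient as $\mbb{E}_\tau\bigl[(\sum_k \gamma^k r(s^k,a^k))(\sum_{k'} \nabla_\theta \log \pt(a^{k'}\vert s^{k'}))\bigr]$ and invoking the causality fact $\mbb{E}\bigl[r(s^k,a^k)\,\nabla_\theta \log \pt(a^{k'}\vert s^{k'})\bigr] = 0$ for $k' > k$ (since conditioned on $s^{k'}$ the score has zero mean) allows the inner sum to be restricted to $k' \leq k$, giving the stated form.

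The main technical obstacle is justifying the interchange of the infinite sum and the $\nabla_\theta$ operator in each derivation, which requires uniform summability. This is handled by the boundedness of $r$ on the finite state-action space combined with $\gamma \in [0,1)$: the partial sums are dominated by $\|r\|_\infty/(1-\gamma)$ in absolute value, so dominated convergence applies, and all series converge absolutely, uniformly in $\theta$ on any compact parameter set. A secondary subtlety is verifying that $\nabla_\theta \log \pt(a\vert s)$ is well-defined wherever it appears, which holds under the standard assumption that $\pt(a\vert s)>0$ throughout the support and $\pt$ is differentiable in $\theta$.
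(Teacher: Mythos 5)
Your outline is correct: the linearity identity $\nabla_\theta V^{\pi_\theta}(r)=r^\top\nabla_\theta\lambda^{\pi_\theta}$, the differentiated Bellman recursion unrolled into the discounted state occupancy measure with the $1/(1-\gamma)$ prefactor, the trajectory form obtained by expanding $d^{\pi_\theta}$, and the likelihood-ratio derivation with the zero-mean-score causality argument for dropping the $k'>k$ terms are all sound, and flagging the sum--gradient interchange (handled by $\|r\|_\infty/(1-\gamma)$ domination on a finite state-action space) is the right technical caveat. Note that the paper itself offers no proof of this lemma --- it is stated as the well-known policy gradient theorem with a citation to Sutton et al.\ (1999) --- so there is no in-paper argument to compare against; your derivation is the standard one found in that reference.
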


\section{Supplementary materials for section \ref{sec:algorithm}}\label{app:sec_algorithm}
In this section, we provide the proofs of the results in Section \ref{sec:algorithm}.
\subsection{Proof of Proposition \ref{prop:khop_policy}} \label{app_subsec:proof_prop_khop}
\begin{proof}
For ease of notations, we treat the set of agents $\mcN = \{1,2,\dots,n\}$ and the set of numbers $[n] = \{1,2,\dots,n\}$ as equivalent when it is clear from the context.
Given the decay condition $\left\|\pti(\cdot\vert s)-\pti(\cdot\vert s^\prime) \right\|_1\leq c\phi^\kappa$, it is clear that the induced $\kappa$-hop policy $\hpt$ satisfies that
\begin{equation*}
\left\|\hpti(\cdot\vert s_{\mcNk_i})-\pti(\cdot\vert s) \right\|_1\leq c\phi^\kappa,\  \forall s\in \mc{S}, i\in \mcN.
\end{equation*}
Below, we first bound the difference between the global policies $\pt$ and $\hpt$ by leveraging the policy factorization as follows
\begin{equation}\label{eq:policy_difference1}
\begin{aligned}
\|\hpt(\cdot\vert s) - \pt(\cdot\vert s)\|_1 
&=\sum_{a\in \mc{A}} |\hpt(a\vert s) - \pt(a\vert s) | \\
&=\sum_{a\in \mc{A}} \left|\prod_{i=1}^n \hpti(a_i\vert s_{\mcNk_i}) - \prod_{i=1}^n \pti(a_i\vert s) \right|\\
&= \sum_{a\in \mc{A}} \bigg|\prod_{i=1}^n \hpti(a_i\vert s_{\mcNk_i}) -  \hat{\pi}^1_{\theta_1}(a_1\vert s_{\mcNk_1})\prod_{i=2 }^n\pti(a_i\vert s) \\
&\quad+ \hat{\pi}^1_{\theta_1}(a_1\vert s_{\mcNk_1})\prod_{i=2 }^n\pti(a_i\vert s)  -\prod_{i=1}^n \pti(a_i\vert s) \bigg|\\
&\leq \sum_{a\in \mc{A}}\left|\prod_{i=1}^n \hpti(a_i\vert s_{\mcNk_i}) -  \hat{\pi}^1_{\theta_1}(a_1\vert s_{\mcNk_1})\prod_{i=2 }^n\pti(a_i\vert s) \right|\\ 
&\quad+ \sum_{a\in \mc{A}} \left| \hat{\pi}^1_{\theta_1}(a_1\vert s_{\mcNk_1})\prod_{i=2 }^n\pti(a_i\vert s)  -\prod_{i=1}^n \pti(a_i\vert s)\right|\\
&=\sum_{a\in \mc{A}}\hat{\pi}^1_{\theta_1}(a_1\vert s_{\mcNk_1})\cdot\left|\prod_{i=2}^n \hpti(a_i\vert s_{\mcNk_i}) - \prod_{i=2}^n \pti(a_i\vert s) \right|\\
&\quad+\sum_{a\in \mc{A}}\left| \hat{\pi}^1_{\theta_1}(a_1\vert s_{\mcNk_1}) -{\pi}^1_{\theta_1}(a_1\vert s) \right| \cdot \prod_{i=2 }^n\pti(a_i\vert s),
\end{aligned}
\end{equation}
where we used the triangular inequality.
For the second term above, it holds that
\begin{equation}\label{eq:policy_difference2}
\begin{aligned}
&\quad\ \sum_{a\in \mc{A}}\left| \hat{\pi}^1_{\theta_1}(a_1\vert s_{\mcNk_1}) -{\pi}^1_{\theta_1}(a_1\vert s) \right| \cdot \prod_{i=2 }^n\pti(a_i\vert s) \\ 
&= \sum_{a_i\in \mc{A}_1}\left| \hat{\pi}^1_{\theta_1}(a_1\vert s_{\mcNk_1}) -{\pi}^1_{\theta_1}(a_1\vert s) \right|\cdot  \sum_{a_{-1}\in \mc{A}_{-1}}\prod_{i=2 }^n\pti(a_i\vert s)\\
&=\sum_{a_i\in \mc{A}_1}\left| \hat{\pi}^1_{\theta_1}(a_1\vert s_{\mcNk_1}) -{\pi}^1_{\theta_1}(a_1\vert s) \right|\\
&=\left\|\hat{\pi}^1_{\theta_1}(\cdot\vert s_{\mcNk_1}) -{\pi}^1_{\theta_1}(\cdot\vert s) \right\|_1.
\end{aligned}
\end{equation}
The first term in the right-hand side of \eqref{eq:policy_difference1} can be further written as
\begin{equation}\label{eq:policy_difference3}
\begin{aligned}
&\quad\ \sum_{a\in \mc{A}}\hat{\pi}^1_{\theta_1}(a_1\vert s_{\mcNk_1})\cdot\left|\prod_{i=2}^n \hpti(a_i\vert s_{\mcNk_i}) - \prod_{i=2}^n \pti(a_i\vert s) \right| \\
&= \sum_{a_1\in \mc{A}_1}\hat{\pi}^1_{\theta_1}(a_1\vert s_{\mcNk_1})\sum_{a_{-1}\in \mc{A}_{-1}} \left|\prod_{i=2}^n \hpti(a_i\vert s_{\mcNk_i}) - \prod_{i=2}^n \pti(a_i\vert s) \right|\\
&=\sum_{a_{-1}\in \mc{A}_{-1}} \left|\prod_{i=2}^n \hpti(a_i\vert s_{\mcNk_i}) - \prod_{i=2}^n \pti(a_i\vert s) \right|.
\end{aligned}
\end{equation}
Together, by substituting \eqref{eq:policy_difference2} and \eqref{eq:policy_difference3} into \eqref{eq:policy_difference1}, we have that
\begin{equation*}
\begin{aligned}
&\quad\|\hpt(\cdot\vert s) - \pt(\cdot\vert s)\|_1 \\
&= \sum_{a\in \mc{A}} \left|\prod_{i=1}^n \hpti(a_i\vert s_{\mcNk_i}) - \prod_{i=1}^n \pti(a_i\vert s) \right|\\
&\leq \left\|\hat{\pi}^1_{\theta_1}(\cdot\vert s_{\mcNk_1}) -{\pi}^1_{\theta_1}(\cdot\vert s) \right\|_1 + \sum_{a_{-1}\in \mc{A}_{-1}} \left|\prod_{i=2}^n \hpti(a_i\vert s_{\mcNk_i}) - \prod_{i=2}^n \pti(a_i\vert s) \right|\\
&\leq \sum_{i\in \mcN}\left\|\hpti(\cdot\vert s_{\mcNk_i})-\pti(\cdot\vert s) \right\|_1\\
&\leq nc\phi^\kappa,\ \forall s\in \mc{S},
\end{aligned}
\end{equation*}
where the second inequality follows from recursively applying the derivations in \eqref{eq:policy_difference1}. 

Now, before showing the desired bound \eqref{eq:khop_policy_prop}, we first derive an upper bound on $\left\|\lambda^\hpt- \lambda^\pt\right\|_1$ using the matrix representation of the occupancy measure.
For a given policy $\pi$, let $\rho^\pi\in \mbb{R}^{|\mc{S}||\mc{A}|}$ be the vector that represents the joint distribution of the state-action pair in the initial period, i.e., $\rho^\pi(s,a) \coloneqq  \rho(s)\pi(a\vert s)$.
Let $\mbb{P}^\pi \in \mbb{R}^{|\mc{S}||\mc{A}|\times |\mc{S}||\mc{A}|}$ be the matrix of transition probability under policy $\pi$, where its $\big((s^\prime,a^\prime), (s,a) \big)$-th entry is the probability of transiting from the state-action pair $(s,a)$ to $(s^\prime,a^\prime)$ in the next period, i.e.,
\begin{equation*}
\mbb{P}^\pi \big((s^\prime,a^\prime), (s,a) \big) \coloneqq  \mbb{P}(s^\prime\vert s,a) \pi(a^\prime \vert s^\prime).
\end{equation*}
As the occupancy measure $\lambda^\pi$ is defined as the discounted expected number of times that the agent will visit a particular state-action pair under policy $\pi$, we can represent $\lambda^\pi$ as
\begin{equation*}
\begin{aligned}
\lambda^\pi = \rho^\pi + \gamma\mbb{P}^\pi\rho^\pi + (\gamma\mbb{P}^\pi)^2\rho^\pi+\cdots = \left[\sum_{k=0}^\infty (\gamma\mbb{P}^\pi)^k \right]\rho^\pi = (1-\gamma\mbb{P}^\pi)^{-1} \rho^\pi.
\end{aligned}
\end{equation*}
Thus, the difference $\left\|\lambda^\hpt- \lambda^\pt\right\|_1$ is equal to
\begin{equation}\label{eq:khop_policy_prop_derive1}
\begin{aligned}
\left\|\lambda^\hpt_i-\lambda^\pt_i\right\|_1
&= \left\|(1-\gamma\mbb{P}^\hpt )^{-1} \rho^\hpt - (1-\gamma\mbb{P}^\pt)^{-1} \rho^\pt  \right\|_1\\
&=\left\|(1-\gamma\mbb{P}^\hpt )^{-1} \rho^\hpt - (1-\gamma\mbb{P}^\hpt )^{-1} \rho^\pt + (1-\gamma\mbb{P}^\hpt )^{-1} \rho^\pt- (1-\gamma\mbb{P}^\pt)^{-1} \rho^\pt \right\|_1\\
&\leq \left\|(1-\gamma\mbb{P}^\hpt )^{-1} \rho^\hpt - (1-\gamma\mbb{P}^\hpt )^{-1} \rho^\pt \right\|_1+ \left\|(1-\gamma\mbb{P}^\hpt )^{-1} \rho^\pt- (1-\gamma\mbb{P}^\pt)^{-1} \rho^\pt \right\|_1\\
&\leq \left\|(1-\gamma\mbb{P}^\hpt )^{-1} \right\|_1\left\|\rho^\hpt-\rho^\pt \right\|_1 +  
\left\|\left[(1-\gamma\mbb{P}^\hpt )^{-1} - (1-\gamma\mbb{P}^\pt)^{-1}\right] \rho^\pt \right\|_1,
\end{aligned}
\end{equation}
where $\|A\|_1: = \sup_{x} \frac{\|Ax\|_1}{\|x\|_1} = \sup_{\|x\|=1} {\|Ax\|_1}$ is the induced $1$-norm for matrices and the last line follows from the norm inequality $\|Ax\|_1\leq \|A\|_1\|x\|_1$.
Below, we separately bound the terms that appear on the right-hand side of \eqref{eq:khop_policy_prop_derive1}.
Note that the induced one norm is indeed the maximum absolute column sum of the matrix, i.e., $\|A\|_1 = \max_{j} \sum_{i} |a_{ij}|$.
Then, for any policy $\pt$, since $\mbb{P}^\pt$ is the transition matrix and has its column sum equal to $1$, it holds that
\begin{equation*}
\begin{aligned}
\left\|(1-\gamma\mbb{P}^\pt )^{-1} \right\|_1 = \sup_{(s,a)\in \mc{S}\times\mc{A}}\sum_{(s^\prime,a^\prime)\in \mc{S}\times\mc{A}} (1-\gamma\mbb{P}^\pt )^{-1} \big((s^\prime,a^\prime), (s,a) \big)= \sum_{k=0}^\infty \gamma^k = \frac{1}{1-\gamma}.
\end{aligned}
\end{equation*}
Thus, by utilizing the definition of $\rho^\pi(s,a) = \rho(s)\pi(a\vert s)$, we have that
\begin{equation}\label{eq:1-gamma-rho}
\begin{aligned}
\left\|(1-\gamma\mbb{P}^\hpt )^{-1} \right\|_1\left\|\rho^\hpt-\rho^\pt \right\|_1 &= \dfrac{1}{1-\gamma} \left\|\rho^\hpt-\rho^\pt \right\|_1\\
&=\dfrac{1}{1-\gamma}\sum_{s\in \mc{S} } \sum_{a\in \mc{A}} \left|\rho(s)\hpt(a\vert s) - \rho(s)\pt(a\vert s) \right|\\
&= \dfrac{1}{1-\gamma}\sum_{s\in \mc{S} }\rho(s)  \sum_{a\in \mc{A}}\left|\hpt(a\vert s) -\pt(a\vert s) \right|\\
&\leq \dfrac{1}{1-\gamma} \max_{s\in \mc{S}} \|\hpt(\cdot\vert s) - \pt(\cdot\vert s)\|_1\\
&\leq \dfrac{nc\phi^\kappa}{1-\gamma},
\end{aligned}
\end{equation}
where the first inequality holds since $\rho(\cdot)$ is a distribution.
To bound the second term in \eqref{eq:khop_policy_prop_derive1}, we can first derive that
\begin{equation}\label{eq:D_pi_difference}
\begin{aligned}
&\quad\left\|\left[(1-\gamma\mbb{P}^\hpt )^{-1} - (1-\gamma\mbb{P}^\pt)^{-1}\right] \rho^\pt \right\|_1\\
&= \left\|\left[(1-\gamma\mbb{P}^\pt )^{-1}\cdot \left[ (1-\gamma\mbb{P}^\pt) - (1-\gamma\mbb{P}^\hpt)\right]\cdot (1-\gamma\mbb{P}^\hpt )^{-1}\right]\rho^\pt\right\|_{1}\\
&=\gamma\left\|\left[ (1-\gamma\mbb{P}^\pt )^{-1}\cdot \left[\mbb{P}^\hpt -  \mbb{P}^\pt\right]\cdot (1-\gamma\mbb{P}^\hpt )^{-1}\right] \rho^\pt\right\|_{1}\\
&\leq \gamma \left\|(1-\gamma\mbb{P}^\pt )^{-1} \right\|_1 \left\| \mbb{P}^\hpt -  \mbb{P}^\pt\right\|_1\left\| (1-\gamma\mbb{P}^\hpt )^{-1}\right\|_1 \left\|\rho^\pt\right\|_1\\
&= \dfrac{\gamma}{(1-\gamma)^2}\left\| \mbb{P}^\hpt -  \mbb{P}^\pt\right\|_1,
\end{aligned}
\end{equation}
where we apply the norm equality again and use the fact that $\left\|\rho^\pt\right\|_1 = 1$.
The term $\left\| \mbb{P}^\hpt -  \mbb{P}^\pt\right\|_1$ can be further upper-bounded as follows
\begin{equation}\label{eq:P_pi_difference}
\begin{aligned}
\left\| \mbb{P}^\hpt -  \mbb{P}^\pt\right\|_1 &= \sup_{(s,a)\in \mc{S}\times\mc{A}}\sum_{(s^\prime,a^\prime)\in \mc{S}\times\mc{A}} \left| \mbb{P}^\hpt\big((s^\prime,a^\prime), (s,a) \big) - \mbb{P}^\pt\big((s^\prime,a^\prime), (s,a) \big)\right|\\
&=\sup_{(s,a)\in \mc{S}\times\mc{A}}\sum_{(s^\prime,a^\prime)\in \mc{S}\times\mc{A}} \left|\mbb{P}(s^\prime\vert s,a)\hpt(a^\prime\vert s^\prime) - \mbb{P}(s^\prime\vert s,a)\pt(a^\prime\vert s^\prime) \right|\\
&=\sup_{(s,a)\in \mc{S}\times\mc{A}}\sum_{(s^\prime,a^\prime)\in \mc{S}\times\mc{A}} \mbb{P}(s^\prime\vert s,a)\left|\hpt(a^\prime\vert s^\prime) - \pt(a^\prime\vert s^\prime) \right|\\
&\leq \max_{s^\prime \in \mc{S}} \|\hpt(\cdot\vert s^\prime) - \pt(\cdot\vert s^\prime)\|_1\\
&\leq nc\phi^\kappa,
\end{aligned}
\end{equation}
where we used the definition of $\mbb{P}^\pt$ in the second line and the fact that $\mbb{P}(\cdot\vert s,a)$ is a distribution in the fourth line.
By substituting inequalities \eqref{eq:1-gamma-rho}, \eqref{eq:D_pi_difference}, and \eqref{eq:P_pi_difference} back into \eqref{eq:khop_policy_prop_derive1}, we conclude that
\begin{equation*}
\begin{aligned}
\left\|\lambda^\hpt- \lambda^\pt\right\|_1&\leq \left\|\left[(1-\gamma\mbb{P}^\hpt )^{-1} - (1-\gamma\mbb{P}^\pt)^{-1}\right] \rho^\pt \right\|_1\\
&\leq \dfrac{nc\phi^\kappa}{1-\gamma} + \dfrac{\gamma}{(1-\gamma)^2}\cdot nc\phi^\kappa\\
&= \dfrac{nc\phi^k}{(1-\gamma)^2}.
\end{aligned}
\end{equation*}
Finally, recall that the local occupancy measure is the marginalization of the global occupancy measure (see the discussion below \eqref{eq:local_occupancy_measure}). Therefore, for every agent $i\in \mcN$, it holds that
\begin{equation*}
\begin{aligned}
\left\|\lambda^\hpt_i-\lambda^\pt_i\right\|_1 &= \sum_{(s_i,a_i)\in \mc{S}_i\times \mc{A}_i} \left| \lambda^\hpt_i(s_i,a_i)-\lambda^\pt_i(s_i,a_i)\right|\\
&=\sum_{(s_i,a_i)\in \mc{S}_i\times \mc{A}_i} \left| \sum_{(s_{-i},a_{-i})\in \mc{S}_{-i}\times \mc{A}_{-i}}\lambda^\hpt(s,a)-\sum_{(s_{-i},a_{-i})\in \mc{S}_{-i}\times \mc{A}_{-i}}\lambda^\pt(s,a)\right|\\
&\leq \sum_{(s,a)\in \mc{S}\times \mc{A}} \left|\lambda^\hpt(s,a)-\lambda^\pt(s,a) \right|\\
&=\left\|\lambda^\hpt- \lambda^\pt\right\|_1\\
&\leq \dfrac{nc\phi^k}{(1-\gamma)^2},
\end{aligned}
\end{equation*}
where the first inequality follows from the triangular inequality. This completes the proof.
\end{proof}

\subsection{Proof of Lemma \ref{lemma:truncated_grad}} \label{app_subsec:proof_lemma_truncated_grad}
Firstly, when $\left\|r_{\diamondsuit_i}^\pt\right\|_\infty\leq M_r$ for every $\diamondsuit\in \{f,g\}$ and $\theta\in \Theta$, it follows from \cite[Proposition 6]{alfano2021dimension} that the shadow Q-functions satisfy the so-called \textit{exponential decay property}.
Specifically, for every $\theta\in \Theta$, agent $i\in \mcN$, and state-action pairs $(s,a),(s^\prime,a^\prime)\in \mc{S}\times\mc{A}$ such that $s_{\mcNk_i}=s^\prime_{\mcNk_i}$, $a_{\mcNk_i}=a^\prime_{\mcNk_i}$, it holds that
\begin{equation}
\left|Q_{\diamondsuit_i}^\pt(s,a)-Q_{\diamondsuit_i}^\pt(s^\prime,a^\prime)\right|\leq c_0\phi_0^{\kappa},\ \forall \diamondsuit \in \{f,g\},
\end{equation}
where $(c_0,\phi_0) = \left(\frac{2\gamma \chi M_r}{2-\gamma\chi },e^{-\omega} \right)$.
Then, it is clear from the definition of the truncated Q-function that
\begin{equation}
\sup_{s,a}\left|\widehat{Q}^\pt_{\diamondsuit_i}(s_{\mcNk_i},a_{\mcNk_i})  - {Q}^\pt_{\diamondsuit_i}(s,a)\right|\leq c_0\phi_0^\kappa,\ \forall \theta\in \Theta, i\in \mcN.
\end{equation}

In the proof below, we write the expectation $\mbb{E}_{s\sim d^\pt,a\sim\pt(\cdot\vert s)}$ simply as $\mbb{E}^\pt$ to reduce the burden of notations.
By the definitions of the truncated policy gradient in \eqref{eq:truncated_grad} and the true policy gradient with $\kappa$-hop policies in \eqref{eq:grad_lagrangian}, we have that
\begin{equation}\label{eq:truncated_grad_proof}
\begin{aligned}
&\quad n(1-\gamma)\left[\widehat{\nabla}_{\theta_i} \mcL(\theta,\mu)-\nabla_{\theta_i} \mcL(\theta,\mu) \right]\\
&=  \mbb{E}^\pt \Bigg[\nabla_{\theta_i}\log \pti (a_i\vert s_{\mcNk_i}) \Bigg[\hspace{-1mm}\sum_{j\in {\mcNk_i}}\hspace{-0.5mm}\left(\widehat{Q}^\pt_{f_j}(s_{\mcNk_j},a_{\mcNk_j})+\mu_j\widehat{Q}^\pt_{g_j}(s_{\mcNk_j},a_{\mcNk_j}) \right) \hspace{-0.5mm}-\hspace{-1mm} \sum_{j\in \mcN}\left(Q^\pt_{f_j}(s,a)+\mu_jQ^\pt_{g_j}(s,a) \hspace{-0.5mm}\right)\hspace{-0.5mm}\Bigg]\hspace{-0.2mm}  \Bigg]\\
&= \underbrace{\mbb{E}^\pt\left[\nabla_{\theta_i}\log \pti (a_i\vert s_{\mcNk_i})\cdot \left[\sum_{j\in \mcN} \left(\widehat{Q}^\pt_{f_j}(s_{\mcNk_j},a_{\mcNk_j})- Q^\pt_{f_j}(s,a)\right) + \mu_j \left(\widehat{Q}^\pt_{g_j}(s_{\mcNk_j},a_{\mcNk_j})- Q^\pt_{g_j}(s,a)\right)\right] \right]}_{\mc{T}_1}\\
&\quad- \underbrace{\mbb{E}^\pt\left[\nabla_{\theta_i}\log \pti (a_i\vert s_{\mcNk_i})\cdot\sum_{j\in {\mcNk_{-i}}}\left(\widehat{Q}^\pt_{f_j}(s_{\mcNk_j},a_{\mcNk_j})+\mu_j\widehat{Q}^\pt_{g_j}(s_{\mcNk_j},a_{\mcNk_j}) \right) \right]}_{\mc{T}_2},
\end{aligned}
\end{equation}
where we add and subtract the truncated shadow Q-functions of agents in $\mcNk_{-i}$ in the second equality.
Below, we first show that the term $\mc{T}_2$ is actually equal to $0$.
For any given state $s\in \mc{S}$, one can write
\begin{equation}
\begin{aligned}
&\quad\ \mbb{E}_{a\sim \pt(\cdot\vert s)} \left[\nabla_{\theta_i}\log \pti (a_i\vert s_{\mcNk_i})\cdot\sum_{j\in {\mcNk_{-i}}}\left(\widehat{Q}^\pt_{f_j}(s_{\mcNk_j},a_{\mcNk_j})+\mu_j\widehat{Q}^\pt_{g_j}(s_{\mcNk_j},a_{\mcNk_j}) \right)  \right]\\
&\ =\sum_{a\in \mc{A}}\pt(a\vert s)\cdot \frac{\nabla_{\theta_i}\pti (a_i\vert s)}{\pti(a_i\vert s)}\cdot \left[\sum_{j\in \mcNk_{-i}}\left(\widehat{Q}^\pt_{f_j}(s_{\mcNk_j},a_{\mcNk_j})+\mu_j\widehat{Q}^\pt_{g_j}(s_{\mcNk_j},a_{\mcNk_j}) \right)\right]\\
&\ =\sum_{a\in \mc{A}}\left(\prod_{k\in \mcN} \pi_{\theta_k}^k(a_k\vert s)\right)\cdot \frac{\nabla_{\theta_i}\pti (a_i\vert s)}{\pti(a_i\vert s)}\cdot \left[\sum_{j\in \mcNk_{-i}}\left(\widehat{Q}^\pt_{f_j}(s_{\mcNk_j},a_{\mcNk_j})+\mu_j\widehat{Q}^\pt_{g_j}(s_{\mcNk_j},a_{\mcNk_j}) \right)\right]\\
&\ = \sum_{a\in \mc{A}}\left(\prod_{k\in \mcN\backslash\{i\}} \pi_{\theta_k}^k(a_k\vert s)\right)\cdot {\nabla_{\theta_i}\pti (a_i\vert s)}\cdot \left[\sum_{j\in \mcNk_{-i}}\left(\widehat{Q}^\pt_{f_j}(s_{\mcNk_j},a_{\mcNk_j})+\mu_j\widehat{Q}^\pt_{g_j}(s_{\mcNk_j},a_{\mcNk_j}) \right)\right]\\
&{\overset{(\Delta)}{=}} \left[\sum_{a_i\in \mc{A}_i}\nabla_{\theta_i}\pti (a_i\vert s)\right]\underbrace{\sum_{a_{-i}\in \mc{A}_{-i}}\left(\prod_{k\in \mcN\backslash\{i\}} \pi_{\theta_k}^k(a_k\vert s)\right)\cdot\left[\sum_{j\in \mcNk_{-i}}\left(\widehat{Q}^\pt_{f_j}(s_{\mcNk_j},a_{\mcNk_j})+\mu_j\widehat{Q}^\pt_{g_j}(s_{\mcNk_j},a_{\mcNk_j}) \right)\right]}_{\mc{T}_3}\\
&\ = \mc{T}_3\cdot\nabla_{\theta_i} \left[\sum_{a_i\in \mc{A}_i}\pti (a_i\vert s)\right]\\
&\ = \mc{T}_3\cdot\nabla_{\theta_i} 1\\
&\ =0,
\end{aligned}
\end{equation}
where we expand the summation $\sum_{a\in \mc{A}}$ to $\sum_{a_i\in \mc{A}_i}\sum_{a_{-i}\in \mc{A}_{-i}}$ in equality $(\Delta)$.
Since $j\in \mcNk_{-i}$ means that agent $j$ is not in the $\kappa$-hop neighborhood of agent $i$, which further implies that $i\notin \mcNk_j$, we know that the term $\mc{T}_3$ is not relevant to agent $i$.
Thus, the expansion in $(\Delta)$ is justified.
The last two lines follow from the facts that $\pti(\cdot\vert s)$ is a distribution over $\mc{A}_i$ and the gradient of a constant is equal to $0$.

Thus, it suffices to bound the term $\mc{T}_1$ only. 
Using the exponential decay property of shadow Q-functions, we can derive from \eqref{eq:truncated_grad_proof} that
\begin{equation*}
\begin{aligned}
&\quad n(1-\gamma)\left\|\widehat{\nabla}_{\theta_i} \mcL(\theta,\mu)-\nabla_{\theta_i} \mcL(\theta,\mu) \right\|_2\\
&=\norm{\mbb{E}^\pt\left[\nabla_{\theta_i}\log \pti (a_i\vert s_{\mcNk_i})\cdot \left[\sum_{j\in \mcN} \left(\widehat{Q}^\pt_{f_j}(s_{\mcNk_j},a_{\mcNk_j})- Q^\pt_{f_j}(s,a)\right) + \mu_j\left(\widehat{Q}^\pt_{g_j}(s_{\mcNk_j},a_{\mcNk_j})- Q^\pt_{g_j}(s,a)\right)\right] \right]}_2\\
&\leq \mbb{E}^\pt\Bigg[\norm{\nabla_{\theta_i}\log \pti (a_i\vert s_{\mcNk_i})}_2 \cdot \Bigg[\sum_{j\in \mcN} \norm{\widehat{Q}^\pt_{f_j}(s_{\mcNk_j},a_{\mcNk_j})- Q^\pt_{f_j}(s,a)}_2\\
&\hspace{5cm}+ |\mu_j| \norm{\widehat{Q}^\pt_{g_j}(s_{\mcNk_j},a_{\mcNk_j})- Q^\pt_{g_j}(s,a)}_2\Bigg] \Bigg]\\
&\leq M_\pi\cdot \max_{(s,a)\in \mc{S}\times\mc{A}}\left[\sum_{j\in \mcN} \norm{\widehat{Q}^\pt_{f_j}(s_{\mcNk_j},a_{\mcNk_j})- Q^\pt_{f_j}(s,a)}_2 + |\mu_j| \norm{\widehat{Q}^\pt_{g_j}(s_{\mcNk_j},a_{\mcNk_j})- Q^\pt_{g_j}(s,a)}_2\right]\\
&\leq M_\pi\left[n\cdot c_0\phi_0^\kappa + n\|\mu\|_\infty c_0\phi_0^\kappa \right]\\
&= M_\pi\cdot(1+\norm{\mu}_\infty)\cdot nc_0\phi_0^\kappa,
\end{aligned}
\end{equation*}
where we use the assumption $\big\|\nabla_{\theta_i}\log\pti\big\|_2 \leq M_\pi$ in the second inequality.
Thus, we conclude that 
\begin{equation*}
\left\|\widehat{\nabla}_{\theta_i} \mcL(\theta,\mu)-\nabla_{\theta_i} \mcL(\theta,\mu) \right\|_2\leq \dfrac{(1+\norm{\mu}_\infty)M_\pi c_0\phi_0^\kappa}{1-\gamma},
\end{equation*}
which completes the proof.

\section{Supplementary materials for Section \ref{sec:convergence}}\label{app:sec_convergence}
In this appendix, we first provide a detailed explanation for the assumptions used Section \ref{sec:convergence}.
We then present a summary of the problem's properties under these assumptions in Appendix \eqref{app:subsec_properties}.
\subsection{Discussions about assumptions}\label{app_sub:assumptions}
Besides the boundedness of score functions, Assumptions \ref{assump:utility} and \ref{assump:policy} require that $f_i(\lambda_i^\pt)$ and $g_i(\lambda_i^\pt)$ be smooth w.r.t. both the local occupancy measure $\lambda_i^\pt$ and the parameter $\theta$.
These assumptions are standard in the literature of reinforcement learning with general utilities \cite{hazan2019provably, zhang2021marl,zhang2021convergence,ying2022policy}.
Assumption \ref{assump:dual_variable} mainly ensures the existence an FOSP within the search region.
When Slater's condition is met and the general utilities are concave in the occupancy measure, Assumption \ref{assump:dual_variable} is naturally satisfied since the strong duality holds and the optimal dual variable is bounded (see Lemma \ref{lemma:boundedness_dual}).
\subsubsection{\bf Discussion about Assumption \ref{assump:utility}}
Let $\Lambda$ be the set of all possible (global) occupancy measures. It is well-known that $\Lambda$ is a convex polytope \cite{puterman2014markov} and can be defined as:
\begin{equation}\label{eq:Lambda}
\Lambda\coloneqq \left\{\lambda \in \mbb{R}^{|S||A|}\ \bigg\vert\ \lambda \geq 0, \sum_{a\in \mc{A}} \lambda(s, a)=(1-\gamma) \cdot \rho(s)+\gamma \sum_{(s^{\prime}, a^{\prime})\in \mc{S}\times \mc{A}} \mathbb{P}\left(s\vert s^{\prime}, a^{\prime}\right)\cdot \lambda\left(s^{\prime}, a^{\prime}\right), \forall s \in S\right\},
\end{equation}
where $\rho(\cdot)$ is the initial distribution. 
Since $\Lambda$ is a compact set, and if the general utilities $f_i(\cdot)$ and $g_i(\cdot)$ are twice continuously differentiable, the smoothness property required by Assumption \ref{assump:utility} naturally holds on $\Lambda$.

\subsubsection{\bf Discussion about Assumption \ref{assump:policy}}
The assumption on the boundedness of the score function is standard in the study of RL with/without general utilities \cite{kumar2019sample, zhang2021convergence,zhang2021marl,qu2020scalable,lin2021multi}.
Specifically, this assumption is essential in quantifying the approximation error of REINFORCE-based gradient estimators \cite{williams1992simple}.
Similarly, the assumption of the smoothness of general utilities with respect to the policy parameter is common in the literature  \cite{zhang2020variational,agarwal2021theory,zhang2021convergence,zhang2021marl,ying2022policy}.
Indeed, the following existing results show that Assumption \ref{assump:policy} holds true for two classes of policies under mild conditions.
For the ease of notations, we present these results in the centralized (single-agent) setting, while they naturally generalize to the distributed (multi-agent) setting.

\begin{proposition}[Direct parameterization \cite{ying2022policy}]\label{prop:smoothness_of_F}
Suppose that the general utility $f(\lambda)$ has a bounded and Lipschitz gradient in $\Lambda$, namely, there exist $\ell_{f,1}, \ell_{f,2} >0$ such that 
\begin{equation*}
\|\nabla_\lambda f(\lambda)\|_\infty\leq \ell_{f,1},\quad \|\nabla_\lambda f(\lambda)-\nabla_\lambda f(\lambda^\prime)\|_\infty\leq \ell_{f,2}\|\lambda-\lambda^\prime\|_2,\quad \forall\ \lambda,\lambda^\prime \in \Lambda.
\end{equation*}
Then, $f(\lambda^\pi)$ is $\ell_F$-smooth with respect to the policy $\pi$, 
where 
\begin{equation*}
\ell_F =  \frac{4\ell_{f,1} \gamma|\mathcal{A}|+\ell_{f,2}{|\mc{A}|^{3/2}}}{(1-\gamma)^{2}}.
\end{equation*}
\end{proposition}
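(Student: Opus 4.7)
The plan is to prove this via the chain rule together with careful bounds on the first and second derivatives of the occupancy measure $\lambda^\pi$ with respect to $\pi$, exploiting the explicit representation $\lambda^\pi = (I - \gamma \mathbb{P}^\pi)^{-1} \rho^\pi$ developed in the proof of Proposition \ref{prop:khop_policy}. Treating $\pi$ as a vector in $\mathbb{R}^{|\mc{S}||\mc{A}|}$ under direct parameterization, I first apply the chain rule to write
\begin{equation*}
\nabla_\pi f(\lambda^\pi) = [\nabla_\pi \lambda^\pi]^\top \nabla_\lambda f(\lambda^\pi),
\end{equation*}
and then decompose the difference of gradients at two policies $\pi$ and $\pi'$ via add-and-subtract:
\begin{equation*}
\nabla_\pi f(\lambda^\pi) - \nabla_\pi f(\lambda^{\pi'}) = [\nabla_\pi \lambda^\pi]^\top\bigl(\nabla_\lambda f(\lambda^\pi) - \nabla_\lambda f(\lambda^{\pi'})\bigr) + \bigl([\nabla_\pi \lambda^\pi]^\top - [\nabla_\pi \lambda^{\pi'}]^\top\bigr) \nabla_\lambda f(\lambda^{\pi'}).
\end{equation*}

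The next step is to bound each factor. First I would establish that $\|\nabla_\pi \lambda^\pi\|_{\mathrm{op}} = \mc{O}\bigl(\sqrt{|\mc{A}|}/(1-\gamma)\bigr)$ and that $\lambda^\pi$ is Lipschitz in $\pi$, by differentiating the identity $\lambda^\pi = (I-\gamma\mathbb{P}^\pi)^{-1}\rho^\pi$ componentwise and using $\|(I-\gamma\mathbb{P}^\pi)^{-1}\|_1 = 1/(1-\gamma)$, exactly as computed in inequality \eqref{eq:P_pi_difference} and its aftermath. Combined with the Lipschitz-gradient hypothesis on $f$, the first term can be bounded by a constant multiple of $\ell_{f,2}|\mc{A}|^{3/2}/(1-\gamma)^2 \cdot \|\pi - \pi'\|_2$.

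The harder term is the second one, which requires controlling how $\nabla_\pi \lambda^\pi$ itself varies with $\pi$, effectively a bound on second derivatives of $\lambda^\pi$. The plan is to apply the resolvent identity $A^{-1} - B^{-1} = A^{-1}(B-A)B^{-1}$ to $(I-\gamma \mathbb{P}^\pi)^{-1} - (I-\gamma \mathbb{P}^{\pi'})^{-1}$ and differentiate, producing expressions of the form $(I-\gamma \mathbb{P}^\pi)^{-1}[\partial_\pi \mathbb{P}^\pi](I-\gamma \mathbb{P}^\pi)^{-1}$ whose operator norms pick up another factor of $1/(1-\gamma)$. A similar identity applied to $\rho^\pi$ yields the remaining $\gamma|\mc{A}|$ scaling, so that $\|\nabla_\pi \lambda^\pi - \nabla_\pi \lambda^{\pi'}\|_{\mathrm{op}} \leq \mc{O}\bigl(\gamma|\mc{A}|/(1-\gamma)^2\bigr)\|\pi-\pi'\|_2$. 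Multiplying by the bound $\|\nabla_\lambda f\|_\infty \leq \ell_{f,1}$ (via Hölder-type inequalities to convert from $\infty$-norm on $\nabla_\lambda f$ to the appropriate norm) yields the first constant $4\ell_{f,1}\gamma|\mc{A}|/(1-\gamma)^2$ in $\ell_F$.

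The main obstacle is the careful bookkeeping of matrix norms when passing from the $\ell_1$/$\ell_\infty$ bounds that are natural for stochastic matrices (as used in Appendix \ref{app_subsec:proof_prop_khop}) to the Euclidean $\ell_2$ norm required by smoothness, while tracking the correct $|\mc{A}|$ factors. Specifically, converting $\|\cdot\|_1$ to $\|\cdot\|_2$ costs a factor of $\sqrt{|\mc{A}|}$ or $\sqrt{|\mc{S}||\mc{A}|}$ at each step, and these factors must combine precisely to yield the stated $|\mc{A}|$ and $|\mc{A}|^{3/2}$ exponents. Once the two resolvent-based bounds are in hand and the norm conversions are tracked, summing the two contributions gives the claimed Lipschitz constant $\ell_F$.
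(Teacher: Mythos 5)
You should first be aware that the paper does not prove this proposition at all: it is imported verbatim from \cite{ying2022policy} as evidence that Assumption \ref{assump:policy} is satisfiable, so there is no in-paper proof to compare against. Your overall strategy --- chain rule $\nabla_\pi f(\lambda^\pi) = [\nabla_\pi\lambda^\pi]^\top\nabla_\lambda f(\lambda^\pi)$, an add-and-subtract split into a ``Lipschitz gradient of $f$'' term and a ``Lipschitz Jacobian of $\lambda^\pi$'' term, and resolvent identities applied to $(I-\gamma\mbb{P}^\pi)^{-1}\rho^\pi$ to control both --- is the canonical route for results of this type and is the right skeleton.

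The genuine gap is that the proposal never establishes the statement it is supposed to prove, because the statement \emph{is} the explicit constant $\ell_F$, and every quantitative step in your plan is left as an unverified $\mc{O}(\cdot)$. Two of these placeholders are exactly where the difficulty lives. First, your claimed bound $\|\nabla_\pi\lambda^\pi\|_{\mathrm{op}} = \mc{O}\bigl(\sqrt{|\mc{A}|}/(1-\gamma)\bigr)$ is not obviously consistent with the paper's unnormalized convention $\|\lambda^\pi\|_1 = 1/(1-\gamma)$: differentiating $V^\pi(r)=\langle r,\lambda^\pi\rangle$ in the direct parameters gives $\partial V^\pi(r)/\partial\pi(a'\vert s') = \tfrac{1}{1-\gamma}d^\pi(s')Q^\pi(r;s',a')$, and since $|Q^\pi(r;\cdot,\cdot)|\le\|r\|_\infty/(1-\gamma)$ the natural operator bound carries $(1-\gamma)^{-2}$, not $(1-\gamma)^{-1}$. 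Likewise, the Lipschitz modulus of $\lambda^\pi$ in $\pi$ obtained from the resolvent computation in Appendix \ref{app_subsec:proof_prop_khop} is of order $(1-\gamma)^{-2}\max_s\|\pi(\cdot\vert s)-\pi'(\cdot\vert s)\|_1$. Composing these naive bounds yields a smoothness constant scaling like $(1-\gamma)^{-3}$ or worse, whereas the proposition asserts $(1-\gamma)^{-2}$; reconciling this (e.g., by exploiting that $d^\pi$ and $\rho$ are probability distributions so that sums over $s$ do not each cost a factor, and by tracking exactly where $\|\cdot\|_1\to\|\cdot\|_2$ conversions introduce $\sqrt{|\mc{A}|}$) is precisely the ``bookkeeping'' you defer, and it is the entire content of the result. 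Until those two bounds are derived with explicit constants and shown to combine into $4\ell_{f,1}\gamma|\mc{A}|/(1-\gamma)^2$ and $\ell_{f,2}|\mc{A}|^{3/2}/(1-\gamma)^2$ respectively, what you have is a plausible outline of the argument in \cite{ying2022policy}, not a proof of the proposition.
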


\begin{proposition}[General soft-max parameterization \cite{zhang2021convergence}]\label{prop:F_theta_smooth}
Consider the general soft-max parameterization $\pi_\theta(\cdot\vert \cdot)$, defined as
\begin{equation*}\label{eq:policy_parameterization}
\pi_{\theta}(a \vert s)=\frac{\exp \{\psi(\theta;s, a)\}}{\sum_{a^{\prime}\in \mc{A}} \exp \left\{\psi\left(\theta;s, a^{\prime}\right)\right\}},\ \forall\ (s,a)\in \mc{S}\times\mc{A}.
\end{equation*}
Suppose that $\psi(\cdot;s,a)$ is twice differentiable for all $(s,a)\in \mc{S}\times \mc{A}$ and there exist $\ell_{\psi,1}, \ell_{\psi,2} >0$ such that
\begin{equation*}
\max _{(s,a) \in \mathcal{S}\times \mathcal{A}} \sup _{\theta}\left\|\nabla_{\theta} \psi( \theta;s, a )\right\|_2 \leq \ell_{\psi,1} \quad \text { and } \max _{(s,a) \in \mathcal{S}\times \mathcal{A}} \sup _{\theta}\left\|\nabla_{\theta}^{2} \psi( \theta;s, a)\right\|_2 \leq \ell_{\psi,2}.
\end{equation*}
Assume that $f(\lambda)$ has a bounded and Lipschitz gradient in $\Lambda$, 
namely, there exist $\ell_{f,1}, \ell_{f,2} >0$ such that 
\begin{equation*}
\|\nabla_\lambda f(\lambda)\|_\infty\leq \ell_{f,1},\quad \|\nabla_\lambda f(\lambda)-\nabla_\lambda f(\lambda^\prime)\|_\infty\leq \ell_{f,2}\|\lambda-\lambda^\prime\|_2,\quad \forall\ \lambda,\lambda^\prime \in \Lambda.
\end{equation*}
The following statements hold:

\textit{\textbf{(I)}} For every $\theta\in \Theta$ and $(s,a)\in \mc{S}\times \mc{A}$, it holds that
$$
\left\{\begin{array}{l}
\left\|\nabla_{\theta} \log \pi_{\theta}(a \vert s)\right\|_2 \leq 2 \ell_{\psi,1},\vspace{3pt} \\
\left\|\nabla_{\theta}^{2} \log \pi_{\theta}(a \vert s)\right\|_2 \leq 2\left(\ell_{\psi,2}+\ell_{\psi,1}^{2}\right),
\end{array} \quad \text { and } \quad\left\|\nabla_{\theta} f(\lambda^\pt)\right\|_2 \leq \frac{2 \ell_{\psi,1} \cdot \ell_{f,1}}{(1-\gamma)^{2}}.\right.
$$

\textit{\textbf{(II)}} For every $\theta_1,\theta_2\in \Theta$, it holds that
$$
\left\|\lambda^{\pi_{\theta_1}}-\lambda^{\pi_{\theta_2}}\right\|_{1} \leq \frac{2 \ell_{\psi,1}}{(1-\gamma)^{2}} \cdot\left\|\theta_{1}-\theta_{2}\right\|_2.
$$

\textit{\textbf{(III)}} The function $ f(\lambda^\pt)$ is $\ell_F$-smooth with respect to the parameter $\theta$, where
$$
\ell_F  =\frac{4 \ell_{f,2} \cdot \ell_{\psi,1}^{2}}{(1-\gamma)^{4}}+\frac{8 \ell_{\psi,1}^{2} \cdot \ell_{f,1}}{(1-\gamma)^{3}}+\frac{2 \ell_{f,1} \cdot\left(\ell_{\psi,2}+\ell_{\psi,1}^{2}\right)}{(1-\gamma)^{2}}.
$$
\end{proposition}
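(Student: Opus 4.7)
The plan is to prove the three parts in sequence, with Part (III) doing most of the work. For Part (I), I would begin from the log-soft-max identity
\begin{equation*}
\nabla_\theta\log\pi_\theta(a\vert s) = \nabla_\theta\psi(\theta;s,a) - \sum_{a'\in\mc{A}}\pi_\theta(a'\vert s)\nabla_\theta\psi(\theta;s,a'),
\end{equation*}
and apply the triangle inequality together with $\|\nabla_\theta\psi\|_2\le \ell_{\psi,1}$ to obtain the $2\ell_{\psi,1}$ bound. Differentiating once more and grouping the resulting terms as a ``covariance piece'' $\sum_{a'}\pi_\theta(a'\vert s)[\nabla_\theta\psi][\nabla_\theta\psi]^\top - [\sum \pi\nabla\psi][\sum\pi\nabla\psi]^\top$ (bounded by $2\ell_{\psi,1}^2$) plus a ``Hessian piece'' $\nabla_\theta^2\psi - \sum\pi\nabla_\theta^2\psi$ (bounded by $2\ell_{\psi,2}$) yields the Hessian estimate. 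The gradient-norm bound on $f(\lambda^{\pi_\theta})$ follows from \eqref{eq:gradient_equiv} and Lemma \ref{lemma:policy gradient_general}: since $\|r^{\pi_\theta}\|_\infty\le \ell_{f,1}$, we have $|Q^{\pi_\theta}(r^{\pi_\theta};\cdot,\cdot)|\le \ell_{f,1}/(1-\gamma)$, and combining with the score-function bound and the $1/(1-\gamma)$ prefactor from the discounted state distribution representation gives the $2\ell_{\psi,1}\ell_{f,1}/(1-\gamma)^2$ estimate.

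For Part (II), I would first upgrade the score bound to a policy Lipschitz bound via $\nabla_\theta\pi_\theta(a\vert s) = \pi_\theta(a\vert s)\nabla_\theta\log\pi_\theta(a\vert s)$: summing $\|\nabla_\theta\pi_\theta(a\vert s)\|_2$ over $a$ gives $\sum_a \|\nabla_\theta\pi_\theta(a\vert s)\|_2\le 2\ell_{\psi,1}$, so by the mean-value theorem $\|\pi_{\theta_1}(\cdot\vert s)-\pi_{\theta_2}(\cdot\vert s)\|_1\le 2\ell_{\psi,1}\|\theta_1-\theta_2\|_2$. I would then recycle the resolvent calculation from the proof of Proposition \ref{prop:khop_policy}: writing $\lambda^\pi = (I-\gamma\mbb{P}^\pi)^{-1}\rho^\pi$, using $\|(I-\gamma\mbb{P}^\pi)^{-1}\|_1 = 1/(1-\gamma)$, and bounding $\|\mbb{P}^{\pi_{\theta_1}}-\mbb{P}^{\pi_{\theta_2}}\|_1$ and $\|\rho^{\pi_{\theta_1}}-\rho^{\pi_{\theta_2}}\|_1$ by $\max_s\|\pi_{\theta_1}(\cdot\vert s)-\pi_{\theta_2}(\cdot\vert s)\|_1$. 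This gives the $2\ell_{\psi,1}/(1-\gamma)^2$ Lipschitz constant directly.

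Part (III) is the core of the proof. I would decompose
\begin{equation*}
\nabla_\theta f(\lambda^{\pi_{\theta_1}}) - \nabla_\theta f(\lambda^{\pi_{\theta_2}}) = \underbrace{\nabla_\theta V^{\pi_{\theta_1}}(r^{\pi_{\theta_1}}-r^{\pi_{\theta_2}})}_{(\mathrm{A})} + \underbrace{\nabla_\theta V^{\pi_{\theta_1}}(r^{\pi_{\theta_2}}) - \nabla_\theta V^{\pi_{\theta_2}}(r^{\pi_{\theta_2}})}_{(\mathrm{B})},
\end{equation*}
exploiting linearity of the value function in its reward argument. For (A), Part (I) applied to the fixed reward $r^{\pi_{\theta_1}}-r^{\pi_{\theta_2}}$ gives $\|(\mathrm{A})\|_2\le (2\ell_{\psi,1}/(1-\gamma)^2)\|r^{\pi_{\theta_1}}-r^{\pi_{\theta_2}}\|_\infty$, and then $\|r^{\pi_{\theta_1}}-r^{\pi_{\theta_2}}\|_\infty\le \ell_{f,2}\|\lambda^{\pi_{\theta_1}}-\lambda^{\pi_{\theta_2}}\|_2$ together with Part (II) yields the $4\ell_{f,2}\ell_{\psi,1}^2/(1-\gamma)^4$ contribution. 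For (B), I would prove a stand-alone lemma that $V^{\pi_\theta}(r)$ is smooth in $\theta$ whenever $\|r\|_\infty\le \ell_{f,1}$ by using the trajectory-form policy gradient
\begin{equation*}
\nabla_\theta V^{\pi_\theta}(r) = \mbb{E}\!\left[\sum_{k=0}^\infty\!\gamma^k r(s^k,a^k)\Big(\!\sum_{k'=0}^k\nabla_\theta\log\pi_\theta(a^{k'}\vert s^{k'})\Big)\right],
\end{equation*}
and differentiating once more. The resulting Hessian splits into (i) a term where the trajectory distribution is differentiated, producing an extra $\sum_{k''=0}^k\nabla_\theta\log\pi_\theta$ factor whose norm contributes $k+1$ copies of $2\ell_{\psi,1}$; the double summation $\sum_k\gamma^k k = \gamma/(1-\gamma)^2$ then yields the $8\ell_{\psi,1}^2\ell_{f,1}/(1-\gamma)^3$ piece; and (ii) a term where the inner $\sum\nabla_\theta\log\pi_\theta$ is differentiated, producing the log-policy Hessian bounded by $2(\ell_{\psi,2}+\ell_{\psi,1}^2)$ from Part (I), and yielding the $2\ell_{f,1}(\ell_{\psi,2}+\ell_{\psi,1}^2)/(1-\gamma)^2$ piece.

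The main obstacle is the careful bookkeeping of $1/(1-\gamma)$ factors in Part (III), since differentiating inside the infinite-horizon expectation creates cross-terms whose magnitudes must be summed via identities like $\sum_{k\ge 0}\gamma^k k = \gamma/(1-\gamma)^2$; a conceptual cleanup is to first establish the abstract smoothness lemma for $V^{\pi_\theta}(r)$ (as in \cite{agarwal2021theory,zhang2021convergence}) and then reduce Part (III) to combining it with Parts (I)--(II) through the chain rule. An alternative route would bound $\|\nabla_\theta\lambda^{\pi_\theta}-\nabla_\theta\lambda^{\pi_{\theta'}}\|$ directly via second-order differentiation of the resolvent $(I-\gamma\mbb{P}^{\pi_\theta})^{-1}$, but this route requires tracking three separate contributions from $\rho^\pi$, $\mbb{P}^\pi$, and their cross terms and appears strictly more involved than the trajectory-form argument.
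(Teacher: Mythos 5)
Your proposal is essentially correct and follows the same route as the source the paper cites: the paper itself imports Proposition~\ref{prop:F_theta_smooth} from \cite{zhang2021convergence} without reproducing a proof, and your reconstruction --- the log-soft-max identity for Part (I), the resolvent bound recycled from the proof of Proposition~\ref{prop:khop_policy} for Part (II), and the shadow-reward decomposition $\nabla_\theta V^{\pi_{\theta_1}}(r^{\pi_{\theta_1}}-r^{\pi_{\theta_2}})+[\nabla_\theta V^{\pi_{\theta_1}}(r^{\pi_{\theta_2}})-\nabla_\theta V^{\pi_{\theta_2}}(r^{\pi_{\theta_2}})]$ for Part (III) --- is exactly the standard argument and reproduces all three terms of $\ell_F$ with the correct constants. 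The only slip is in the bookkeeping for the $8\ell_{\psi,1}^2\ell_{f,1}/(1-\gamma)^3$ piece: the outer product of the two score-function sums is bounded by $(k+1)^2\cdot 4\ell_{\psi,1}^2$, so the identity you actually need is $\sum_{k\ge 0}\gamma^k(k+1)^2=(1+\gamma)/(1-\gamma)^3\le 2/(1-\gamma)^3$ rather than $\sum_{k\ge 0}\gamma^k k=\gamma/(1-\gamma)^2$ (and one must first discard the future-time cross terms using $\mathbb{E}[\nabla_\theta\log\pi_\theta(a^{k''}\vert s^{k''})\mid s^{k''}]=0$ for $k''>k$); with that correction the stated constant follows.
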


\subsubsection{\bf Discussion about Assumption \ref{assump:dual_variable}}
In constrained optimization, it is common to assume that the feasible region for the dual variable is bounded \cite{lu2021decentralized}. 
In particular, when all the utilities are concave in the occupancy measure $\lambda^\pi$, problem \eqref{eq:dec_prob} becomes a convex program with respect to $\lambda^\pi$.
Under this circumstance, if the feasible region contains an interior point, which is usually the case when no equality constraints are enforced, it can be proven that the strong duality holds and the optimal dual variable is bounded \cite{boyd2004convex,ding2020natural,ying2022policy}. This assumption of having an interior point is also referred to as Slater's condition.
\begin{lemma}[Strong duality and boundedness of the optimal dual variable \cite{ying2022policy}]\label{lemma:boundedness_dual}
Consider the centralized reinforcement learning problem with general utilities
$$\underset{\theta\in \Theta}\max\ f(\lambda^\pt) \quad \text {s.t.}\quad  {g}(\lambda^\pt) \geq {0},$$
where $f(\cdot)$ and $g(\cdot)$ are concave functions.
Denote $\theta^\star$ and $\mu^\star$ as the optimal primal variable and dual variable, respectively.
Suppose Slater's condition holds true, i.e., there exist  $\widetilde{\theta} \in \Theta$ and $\xi >{0}$ such that $ g(\lambda^{\pi_{\widetilde{\theta}}}) \geq \xi $, and the set $\operatorname{cl}\left(\left\{\lambda^\pt\big\vert \theta\in \Theta \right\}\right)$ is convex.
Then we have:

\textbf{\textit{(I)}} the strong duality holds, i.e.,
$$
f(\lambda^{\pi_{\theta^\star}}) = \mcL(\theta^\star,\mu^\star) = \max_{\theta\in \Theta} \mcL(\theta,\mu^\star),
$$

\textbf{\textit{(II)}} the optimal dual variable is bounded, s.t. 
$$0 \leq \mu^\star \leq \dfrac{f(\lambda^{\pi_{\theta^\star}})-f(\lambda^{\pi_{\widetilde{\theta}}})}{\xi}.$$
\end{lemma}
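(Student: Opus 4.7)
\textbf{Proof proposal for Lemma \ref{lemma:boundedness_dual}.} The plan is to lift the problem from the parameter space $\Theta$ to the occupancy measure space and then invoke standard convex duality. Specifically, let $\Lambda_\Theta := \operatorname{cl}(\{\lambda^{\pi_\theta}\mid \theta\in \Theta\})$, which is assumed to be convex. Since $f$ and $g$ are concave, the surrogate problem
\begin{equation*}
\max_{\lambda\in \Lambda_\Theta}\ f(\lambda)\quad \text{s.t.}\quad g(\lambda)\geq 0
\end{equation*}
is a convex program, and by construction its optimal value equals $f(\lambda^{\pi_{\theta^\star}})$. The Slater point $\widetilde{\lambda}:=\lambda^{\pi_{\widetilde{\theta}}}$ satisfies $g(\widetilde{\lambda})\geq \xi>0$, so the classical Slater condition holds for this convex problem.

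For part (I), I would apply the standard strong duality theorem for convex optimization under Slater's condition (e.g., Boyd--Vandenberghe Chapter 5), which yields the existence of a dual optimizer $\mu^\star\geq 0$ with
\begin{equation*}
f(\lambda^{\pi_{\theta^\star}}) = \max_{\lambda\in \Lambda_\Theta}\bigl[f(\lambda)+\mu^\star g(\lambda)\bigr].
\end{equation*}
The next step is to translate this back to the parameter space: because every $\lambda\in \Lambda_\Theta$ is (a limit of) $\lambda^{\pi_\theta}$ for some $\theta\in \Theta$, the right-hand side equals $\sup_{\theta\in \Theta}\mcL(\theta,\mu^\star)$, and continuity of $f$ and $g$ in $\lambda$ together with the definition of $\Lambda_\Theta$ as the closure lets me replace the supremum by a maximum attained at $\theta^\star$. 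This gives $f(\lambda^{\pi_{\theta^\star}})=\mcL(\theta^\star,\mu^\star)=\max_{\theta\in \Theta}\mcL(\theta,\mu^\star)$, which is exactly (I).

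For part (II), I would evaluate the identity from (I) at the Slater point: since $\theta^\star$ is a maximizer of $\mcL(\cdot,\mu^\star)$,
\begin{equation*}
f(\lambda^{\pi_{\theta^\star}}) = \mcL(\theta^\star,\mu^\star) \geq \mcL(\widetilde{\theta},\mu^\star) = f(\lambda^{\pi_{\widetilde{\theta}}}) + \mu^\star g(\lambda^{\pi_{\widetilde{\theta}}}) \geq f(\lambda^{\pi_{\widetilde{\theta}}}) + \mu^\star \xi,
\end{equation*}
where the last step uses $g(\lambda^{\pi_{\widetilde\theta}})\geq \xi$ and $\mu^\star\geq 0$. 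Rearranging yields the stated bound $\mu^\star\leq (f(\lambda^{\pi_{\theta^\star}})-f(\lambda^{\pi_{\widetilde{\theta}}}))/\xi$, while $\mu^\star\geq 0$ is immediate from the definition of the Lagrange dual for an inequality constraint.

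The only delicate step is the passage between the two representations in part (I): although the natural convex structure lives on $\Lambda_\Theta$, the Lagrangian in the statement is written as a function of $\theta$, and a priori $\Lambda_\Theta$ is the closure of the image of $\Theta$ rather than the image itself. Handling this requires noting that $f$ and $g$ are continuous (indeed, smooth by Assumption \ref{assump:utility}) on the compact polytope $\Lambda$, so a maximizer over $\Lambda_\Theta$ lies in the image when $\Theta$ is chosen so that $\theta^\star$ is attained, which is exactly what the lemma's hypothesis asserts. Aside from this bookkeeping, the argument is a direct application of convex Lagrangian duality.
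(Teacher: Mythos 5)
The paper does not prove this lemma: it is stated in Appendix F and imported verbatim from the cited reference \cite{ying2022policy}, so there is no in-paper proof to compare against. Your argument is the standard one for results of this type — lift the problem to the occupancy-measure space, observe that maximizing a concave $f$ subject to a concave $g\geq 0$ over the convex set $\operatorname{cl}(\{\lambda^{\pi_\theta}\})$ is a convex program with Slater point $\lambda^{\pi_{\widetilde\theta}}$, invoke classical strong duality, and then obtain the dual bound by evaluating $\max_\theta \mcL(\theta,\mu^\star)\geq \mcL(\widetilde\theta,\mu^\star)$ at the Slater point — and it is essentially sound. You correctly flag the one delicate step, namely that the convex structure lives on the \emph{closure} of the image of $\Theta$ while the Lagrangian is written over $\Theta$ itself; the clean way to close that gap is to note that any feasible $\lambda_0$ in the closure can be pushed strictly into the feasible region via the convex combination $(1-t)\lambda_0+t\lambda^{\pi_{\widetilde\theta}}$ (which satisfies $g\geq t\xi>0$ by concavity) and then approximated by genuinely feasible points $\lambda^{\pi_{\theta_k}}$ using continuity of $f$ and $g$, so the two optimal values coincide. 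One small omission: the middle equality $f(\lambda^{\pi_{\theta^\star}})=\mcL(\theta^\star,\mu^\star)$ in part (I) requires complementary slackness $\mu^\star g(\lambda^{\pi_{\theta^\star}})=0$, which you never state; it follows immediately from the sandwich $f(\lambda^{\pi_{\theta^\star}})=\max_\theta\mcL(\theta,\mu^\star)\geq \mcL(\theta^\star,\mu^\star)=f(\lambda^{\pi_{\theta^\star}})+\mu^\star g(\lambda^{\pi_{\theta^\star}})\geq f(\lambda^{\pi_{\theta^\star}})$, so it is a bookkeeping step rather than a gap.
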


\subsubsection{\bf Discussion about Assumption \ref{assump:Q_estimation}}\label{app:subsubsec_Q_esti}
The following proposition on the effectiveness of Algorithm \ref{alg:subroutine} is adapted from \cite{qu2020scalable}.
\begin{proposition}[Sample complexity of Algorithm \ref{alg:subroutine} \cite{qu2020scalable}]\label{prop:subroutine_complexity}
Suppose that there exists positive integer $k_0$ and $\sigma\in (0,1)$ such that for any policy $\pt$ and any initial state-action pair $(s,a)\in \mc{S}\times \mc{A}$, it holds that
\begin{equation}\label{eq:exploration_condition}
\left.\mbb{P}\left(\left(s^{k_0}_{\mcNk_i},a^{k_0}_{\mcNk_i}\right)= \left(s^\prime_{\mcNk_i},a^\prime_{\mcNk_i}\right) \ \right\vert\  \left(s^0,a^0\right) = (s,a)  \right) \geq \sigma,\ \forall  \left(s^\prime_{\mcNk_i},a^\prime_{\mcNk_i}\right)\in \mc{S}_{\mcNk_i}\times \mc{A}_{\mcNk_i},  i\in \mcN.
\end{equation}
Let $k_1$ and $h$ be two numbers such that $h\geq 1/\sigma\cdot\max\left\{2,1/(1-\sqrt{\gamma})\right\}$ and $k_1 \geq \max\left\{2h,4\sigma h, k_0\right\}$.
For given local shadow rewards $\{r_i\}_{i\in \mcN}$ such that $\max_{i\in \mcN} \norm{r_i}_\infty \leq M_r$, denote $\left\{\widehat{Q}^\pt_i\right\}_{i\in \mcN}$ as the true truncated Q-functions and $\left\{\widetilde{Q}^K_i \right\}_{i\in \mcN}$ as the empirical truncated Q-functions output by Algorithm \ref{alg:subroutine} with step-sizes $\left\{\eta_Q^k = h/(k+k_1)\right\}_{k = 0}^{K-1}$.
For each agent $i\in \mcN$, with probability at least $1-\delta$, it holds that
\begin{equation}\label{eq:Q_prop_difference}
\norm{ \widetilde{Q}^K_i- \widehat{Q}^\pt_i }_\infty \leq \frac{C_i}{\sqrt{K+k_1}}+\frac{C_i^{\prime}}{K+k_1},
\end{equation}
where
\begin{equation}
\begin{aligned}
C_i&=\frac{6 \bar{\epsilon}}{1-\sqrt{\gamma}} \sqrt{\frac{hk_0}{\sigma}\left[\log \left(\frac{2 k_0 K^2}{\delta}\right)+\left|\mcNk_i\right| \log\left(|\mc{S}_i| |\mc{A}_i|\right)\right]}\\ C_i^{\prime}&=\frac{2}{1-\sqrt{\gamma}} \max \left(\frac{16 \bar{\epsilon}  hk_0}{\sigma}, \frac{2 M_r}{1-\gamma}\left(k_0+k_1\right)\right),
\end{aligned}
\end{equation}
with $\bar\epsilon = 4M_r/(1-\gamma) + 2M_r$.
\end{proposition}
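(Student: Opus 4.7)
The plan is to adapt the asynchronous TD(0) convergence analysis of \cite{qu2020scalable} to the truncated shadow Q-function setting. First I would define the truncated Bellman operator $\widehat{\mc{T}}^\pt_i$ acting on functions $Q:\mc{S}_{\mcNk_i}\times\mc{A}_{\mcNk_i}\to\mbb{R}$ by fixing the out-of-neighborhood coordinates to $(\bar{s}_{\mcNk_{-i}},\bar{a}_{\mcNk_{-i}})$ and taking conditional expectation under $\pt$ and $\mbb{P}$. This operator inherits the $\gamma$-contraction property in $\|\cdot\|_\infty$ from the standard Bellman operator, so $\widehat{Q}^\pt_i$ is its unique fixed point. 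Rewriting \eqref{eq:td_learning} in terms of the error $e^k \coloneqq \widetilde{Q}^k_i - \widehat{Q}^\pt_i$ gives, at the visited coordinate $(s^{k-1}_{\mcNk_i},a^{k-1}_{\mcNk_i})$, a standard stochastic approximation recursion with contraction factor $\gamma$ and a martingale-difference noise $\zeta^{k-1}$ bounded by $\bar{\epsilon}=4M_r/(1-\gamma)+2M_r$; the entry is unchanged elsewhere.

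The central technical step is to handle the asynchronous nature of the updates, where only one coordinate is touched per iteration. Here the exploration condition \eqref{eq:exploration_condition} is crucial: it guarantees that every coordinate $(s_{\mcNk_i},a_{\mcNk_i})\in \mc{S}_{\mcNk_i}\times\mc{A}_{\mcNk_i}$ is visited with probability at least $\sigma$ within any window of $k_0$ steps, so each coordinate is refreshed sufficiently often. Combining this with the harmonic step-size schedule $\eta_Q^k = h/(k+k_1)$, where the stated lower bounds on $h$ and $k_1$ (in particular $h\geq 1/[\sigma(1-\sqrt{\gamma})]$) provide an effective per-coordinate step-size $\sigma\eta_Q^k$ strong enough to counteract the $\gamma$-contraction, an inductive unfolding of the recursion following \cite[Lemma 11 and Theorem 12]{qu2020scalable} shows that the deterministic contraction envelope decays as $M_r(k_0+k_1)/[(1-\gamma)(K+k_1)]$. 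This yields the $1/(K+k_1)$ term and the constant $C_i'$.

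For the stochastic part, the weighted cumulative sum of bounded martingale differences $\sum_k \eta_Q^{k-1}\zeta^{k-1}$ is controlled by an Azuma-Hoeffding-style concentration, producing a high-probability bound of order $\bar{\epsilon}\sqrt{(hk_0/\sigma)\log(1/\delta)/(K+k_1)}$ per coordinate. Taking a union bound over all $\prod_{j\in \mcNk_i}|\mc{S}_j||\mc{A}_j|$ coordinates of the truncated Q-function and over the $K$ time indices introduces the factor $|\mcNk_i|\log(|\mc{S}_i||\mc{A}_i|) + \log(2k_0K^2/\delta)$ inside the square root, which gives the constant $C_i$ and the $1/\sqrt{K+k_1}$ leading-order term in \eqref{eq:Q_prop_difference}. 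The main obstacle is the asynchronous-update analysis, because the classical synchronous TD rate does not apply directly; this is resolved by the averaging-over-visits argument that converts the $\sigma$-minimum-visit frequency into an effective synchronous step-size, allowing the stochastic approximation machinery to go through coordinate by coordinate and produce a uniform high-probability bound.
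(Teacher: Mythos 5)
The paper does not actually prove this proposition: it is stated in Appendix F.1.4 as ``adapted from \cite{qu2020scalable}'' and imported without proof, serving only to justify that Assumption \ref{assump:Q_estimation} is achievable. Your outline is a faithful reconstruction of the argument behind the cited result --- truncated Bellman operator as a $\gamma$-contraction with $\widehat{Q}^\pt_i$ as fixed point, asynchronous stochastic-approximation recursion whose martingale-difference noise is bounded by $\bar\epsilon = 2(M_r + 2M_r/(1-\gamma))$, the exploration condition \eqref{eq:exploration_condition} converting per-coordinate visit frequency into an effective synchronous step-size, Azuma--Hoeffding for the noise sum, and a union bound over the $\prod_{j\in\mcNk_i}|\mc{S}_j||\mc{A}_j|$ coordinates producing the $|\mcNk_i|\log(|\mc{S}_i||\mc{A}_i|)$ term --- and the constants you trace through are consistent with $C_i$ and $C_i'$ as stated. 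Since the paper offers nothing to compare against beyond the citation, the only caveat is that your write-up remains a sketch: a complete proof would still need to verify the inductive envelope bound (your ``Lemma 11 / Theorem 12'' step) and the maintained bound $\|\widetilde{Q}^k_i\|_\infty \le M_r/(1-\gamma)$ that your noise bound implicitly relies on, both of which are carried out in the reference rather than here.
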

The condition \eqref{eq:exploration_condition} requires every state-action pair in the $\kappa$-hop neighborhood to be visited with some probability $\sigma>0$ after some period $k_0$.
Intuitively, it means that the agents can quickly explore the environment no matter what the initial distribution is.
Under this assumption, Proposition \eqref{prop:subroutine_complexity} implies that the error bound described in \eqref{eq:Q_estimation_assump} can be achieved using $\mc{O}(1/(\epsilon_0)^2)$ samples with high probability.
We remark that, since the error term on the right-hand side of \eqref{eq:Q_prop_difference} only logarithmically depends on the failure probability, the probabilistic version of Assumption \ref{assump:Q_estimation} can be easily adapted to the proof by applying a similar argument as the the one before \eqref{eq:sum_grad_L_square_new}.
The same order of the sample complexity would still hold true.

Besides the TD-learning method introduced in this work, various algorithms in the literature that enjoy faster convergence rates can be used in the truncated Q-function evaluations, such as the two timescale linear TD with gradient correction (TDC) \cite{dalal2020tale,kaledin2020finite,xu2021sample} and the nonlinear TDC \cite{borkar2018concentration,qiu2020single}.

\subsubsection{\bf Direct consequences of Assumptions \ref{assump:utility}-\ref{assump:dual_variable}}\label{app:subsec_properties}
The following properties are the direct consequence of Assumptions \ref{assump:utility}-\ref{assump:dual_variable}.
\begin{lemma}\label{lemma:properties}
Under Assumptions \ref{assump:utility}-\ref{assump:dual_variable}, it holds that 
\begin{enumerate}[leftmargin = *]
    \item[\textbf{\textit{(I)}}] the shadow rewards are bounded, i.e., $\exists M_r>0$ s.t. $\left\|r_{\diamondsuit_i}^\pt\right\|_2\leq M_r$,  $\forall \diamondsuit \in \{f,g\}$, $\theta\in \Theta$, $i\in \mcN $.
    \item[\textbf{\textit{(II)}}]  the Lagrangian and its gradient are bounded, i.e., $\exists M_L, M_\theta>0$ s.t. $\left|\mcL(\theta,\mu)\right|\leq M_L$, $\left\|\nabla_{\theta}\mcL(\theta,\mu) \right\|_2\leq M_\theta$, $\forall \theta\in \Theta,\mu\in \mc{U}$.
    \item[\textbf{\textit{(III)}}] $\nabla_{\theta}\mcL(\theta,\mu)$ is Lipschitz continuous w.r.t. $\theta$ and $\mu$, i.e., $\exists L_{\theta\theta}, L_{\theta\mu}>0$ s.t. $\forall \theta,\theta^\prime\in \Theta$ and $\mu,\mu^\prime \in \mc{U}$
\begin{equation}
\begin{aligned}
\norm{\nabla_{\theta}\mcL(\theta,\mu) - \nabla_{\theta}\mcL(\theta^\prime,\mu)}_2&\leq \Ltt \norm{\theta-\theta^\prime}_2,\quad \norm{\nabla_{\theta}\mcL(\theta,\mu) - \nabla_{\theta}\mcL(\theta,\mu^\prime)}_2&\leq \Ltu\norm{\mu-\mu^\prime}_2.
\end{aligned}
\end{equation}
\end{enumerate}
\end{lemma}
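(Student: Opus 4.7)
\textbf{Plan for Lemma \ref{lemma:properties}.} All three parts reduce to combining the Lipschitz/smoothness hypotheses in Assumptions \ref{assump:utility}--\ref{assump:dual_variable} with the compactness of the set of (local) occupancy measures and the policy gradient formula \eqref{eq:grad_lagrangian}. I would establish them in the stated order because each later step reuses the bounds obtained previously.

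\textbf{Part (I).} The set $\Lambda$ defined in \eqref{eq:Lambda} is a compact polytope, and each local occupancy measure $\lambda_i^\pt$ is a linear marginalization of a point in $\Lambda$, hence lies in a compact set $\Lambda_i$. Assumption \ref{assump:utility} makes $\nabla_{\lambda_i} f_i$ and $\nabla_{\lambda_i} g_i$ Lipschitz, hence continuous, on $\Lambda_i$. Fixing any reference point $\lambda_i^{0}\in\Lambda_i$ and applying the triangle inequality
\begin{equation*}
\|\nabla_{\lambda_i} f_i(\lambda_i^\pt)\|_\infty \leq \|\nabla_{\lambda_i} f_i(\lambda_i^{0})\|_\infty + L_\lambda\|\lambda_i^\pt - \lambda_i^{0}\|_2
\end{equation*}
yields a uniform $(\theta,i)$-independent bound. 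Taking the maximum over $\diamondsuit\in\{f,g\}$ and $i\in\mcN$ produces the required $M_r$.

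\textbf{Part (II).} Continuity of $f_i,g_i$ on the compact set $\Lambda_i$ yields uniform bounds $M_f,M_g$; together with $\mu_i\in[0,\overline{\mu}]$ from Assumption \ref{assump:dual_variable}, this gives $|\mcL(\theta,\mu)|\leq M_f + \overline{\mu}M_g =: M_L$. For the policy gradient I would invoke the closed form \eqref{eq:grad_lagrangian}. Part (I) together with the definition \eqref{eq:Q_def} implies $\|Q^\pt_{\diamondsuit_j}\|_\infty \leq M_r/(1-\gamma)$ uniformly in $\theta$. Combining with the score-function bound $M_\pi$ from Assumption \ref{assump:policy}\textit{(I)} and $\|\mu\|_\infty\leq\overline{\mu}$ gives
\begin{equation*}
\|\nabla_{\theta_i}\mcL(\theta,\mu)\|_2 \leq \frac{M_\pi(1+\overline{\mu})M_r}{(1-\gamma)^2},
\end{equation*}
from which $M_\theta$ follows by aggregating the $n$ coordinate-block contributions.

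\textbf{Part (III).} Write $\mcL(\theta,\mu)=F(\theta)+\frac{1}{n}\sum_{i\in\mcN}\mu_i G_i(\theta)$. Assumption \ref{assump:policy}\textit{(II)} directly provides $L_\theta$-Lipschitzness of $\nabla_\theta F$ and of each $\nabla_\theta G_i$ in $\theta$, and combined with $|\mu_i|\leq\overline{\mu}$ this gives the constant $\Ltt \leq L_\theta(1+\overline{\mu})$. For the cross-Lipschitz bound, linearity of $\mcL$ in $\mu$ yields
\begin{equation*}
\nabla_\theta\mcL(\theta,\mu)-\nabla_\theta\mcL(\theta,\mu')=\frac{1}{n}\sum_{i\in\mcN}(\mu_i-\mu_i')\nabla_\theta G_i(\theta),
\end{equation*}
and a uniform bound on $\|\nabla_\theta G_i(\theta)\|_2$, obtained by the same policy gradient argument as in Part (II) applied with $\mu\equiv 0$, combined with Cauchy--Schwarz supplies $\Ltu$. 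The main obstacle, if any, is only bookkeeping: one must ensure that the constants do not secretly depend on $\theta$, which is exactly why the uniform reward bound $M_r$ from Part (I) (exploiting compactness of $\Lambda_i$) must be established first so that it can propagate through Parts (II) and (III) to yield Lagrangian-level bounds that are genuinely uniform over $(\theta,\mu)\in\Theta\times\mc{U}$.
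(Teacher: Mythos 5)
Your proposal is correct and takes essentially the same route as the paper's proof: compactness of the occupancy-measure polytope together with the Lipschitz hypotheses for Part (I) and the $M_L$ bound in (II), the policy-gradient formula with $\left|Q^\pt(r;s,a)\right|\le \norm{r}_\infty/(1-\gamma)$ and the score-function bound for $M_\theta$, and linearity of $\mcL$ in $\mu$ plus a uniform bound on $\norm{\nabla_\theta G_i(\theta)}_2$ for $\Ltu$. The only differences are cosmetic bookkeeping (e.g., you invoke Cauchy--Schwarz where the paper uses the $\ell_1$--$\ell_2$ norm inequality, yielding the same constants).
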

\begin{proof}[Proof of \textbf{\textit{(I)}}]
 By Definition \ref{def:shadow}, the shadow rewards are the gradients of local utility functions w.r.t. the correponding local occupancy measures, i.e., $r^\pt_{f_i}\coloneqq  \nabla_{\lambda_i} f_i(\lambda^\pt_i)$ and $r^\pt_{g_i}\coloneqq  \nabla_{\lambda_i} g_i(\lambda^\pt_i)$, $\forall i\in \mcN$.
Since the local occupancy measure $\lambda^\pt_i$ can be expressed by the global occupancy measure through $\lambda^\pi_i( s_i, a_i) = \sum_{s_{-i},a_{-i}}\lambda^\pi(s,a)$, we can also view $f_i(\cdot)$ and $g_i(\cdot)$ as functions of $\lambda^\pt$.

Recall that the set of global occupancy measures, denoted as $\Lambda$, is compact (see \eqref{eq:Lambda}). 
When Assumption \ref{assump:utility} holds, $r^\pt_{f_i}$ and $r^\pt_{g_i}$ are Lipschitz continuous functions on a compact set. Thus, there $\exists M_r>0$ such that $\left\|r_{\diamondsuit_i}^\pt\right\|_2$ is universally bounded by $M_r$ $\forall \diamondsuit \in \{f,g\}$, $i\in \mcN $.
\end{proof}

\begin{proof}[Proof of \textbf{\textit{(II)}}]
Similarly, since utilities functions $f_i(\cdot)$ and $g_i(\cdot)$ are assumed to be continuously differentiable w.r.t. $\lambda_i^\pt$, they are continuous functions on the compact set $\Lambda$. Thus, there exists $M_f>0$ and $M_g>0$ such that $|f_i(\cdot)|\leq M_f$ and $|g_i(\cdot)|\leq M_g$ hold for all $\lambda^\pt \in \Lambda$.
As the feasible region region for $\mu$ is assumed to be bounded according to Assumption \ref{assump:dual_variable}, we have that
\begin{equation*}
|\mcL(\theta,\mu)|\coloneqq  \left|\frac{1}{n}\sum_{i\in \mcN} \left[f_i(\lambda^\pt_i)+\mu_ig_i(\lambda^\pt_i) \right] \right|\leq M_f + \overline{\mu}M_g =: M_L.
\end{equation*}
The boundedness of $\left\|\nabla_{\theta}\mcL(\theta,\mu) \right\|_2$ follows from the boundedness of score functions, shadow rewards, and dual variables.
Similar as \eqref{eq:grad_lagrangian}, we can write that 
\begin{equation}\label{eq:bounded_lagrangian_grad}
\begin{aligned}
\norm{\nabla_{\theta} \mcL(\theta,\mu)}_2 &\ = \norm{\frac{1}{1-\gamma}\mathbb{E}_{s\sim d^\pt,a\sim\pt(\cdot\vert s)}\bigg[ \nabla_{\theta}\log \pt (a\vert s)\cdot \frac{1}{n}\sum_{i\in \mcN}\left(Q^\pt_{f_i}(s,a)+\mu_iQ^\pt_{g_i}(s,a) \right)\bigg]}_2\\
&\ \leq \dfrac{1}{1-\gamma} \cdot \max_{\theta\in \Theta,(s,a)\in \mc{S}\times \mc{A}} \left\{\norm{\nabla_{\theta}\log \pt (a\vert s)}_2 \right\}\cdot \max_{\theta\in \Theta,i\in \mcN} \left\{\dfrac{\norm{r_{f_i}^\pt}_\infty +|\mu_i|\norm{r_{g_i}^\pt}_\infty}{1-\gamma}\right\}\\
&\ \leq \dfrac{(1+\overline{\mu})M_r}{(1-\gamma)^2}\cdot \max_{\theta\in \Theta,(s,a)\in \mc{S}\times \mc{A}} \left\{\norm{\nabla_{\theta}\log \pt (a\vert s)}_2 \right\}\\
&\overset{(\Delta)}{=} \dfrac{(1+\overline{\mu})M_r}{(1-\gamma)^2}\cdot \max_{\theta\in \Theta,(s,a)\in \mc{S}\times \mc{A}}\sqrt{\sum_{i\in \mcN} \norm{\nabla_{\theta_i}\log \pti (a_i\vert s)}_2^2}\\
&\ \leq \dfrac{(1+\overline{\mu})M_r}{(1-\gamma)^2}\cdot \sqrt{nM_\pi^2}\\
&\ = \dfrac{\sqrt{n}(1+\overline{\mu})M_rM_\pi}{(1-\gamma)^2} =: M_\theta,
\end{aligned}
\end{equation}
where the first inequality is due to $\left|Q^\pt(r;s,a)\right|\leq \norm{r}_\infty/(1-\gamma)$ for any reward function $r(\cdot,\cdot)$.
Then, the subsequent inequality follows from the norm inequality $\|x\|_\infty\leq \|x\|_2$ for any vector $x$.
In equality $(\Delta)$ above, we use the fact that the global policy can be factorized as the product of local policies, so that $\nabla_{\theta}\log \pt (a\vert s) = \sum_{i\in \mcN} \nabla_\theta \log \pti(a_i\vert s_{\mcNk_i}) = \sum_{i\in \mcN} \nabla_{\theta_i} \log \pti(a_i\vert s_{\mcNk_i})$. Thus, $\nabla_{\theta}\log \pt (a\vert s)$ can be viewed as the concatenation of $\nabla_{\theta_i} \log \pti(a_i\vert s_{\mcNk_i})$ for all $i\in \mcN$, which implies $(\Delta)$.
\end{proof}

\begin{proof}[Proof of \textbf{\textit{(III)}}] 
By Assumption \ref{assump:policy}, the general utilities $F(\theta)=f(\lambda^\pt)$ and $G_i(\theta)=g_i(\lambda_i^\pt)$ are $L_\theta$-smooth w.r.t. $\theta$. Thus, when $\mu\in \mc{U}=[0,\overline{u}]^n$, we have that the Lagrangian function $\mcL(\theta,\mu) = F(\theta)+{1}/{n}\cdot\sum_{i\in \mcN}\mu_iG_i(\theta)$ is $\Ltt \coloneqq  (1+\overline{\mu})L_\theta$-smooth, i.e.,
\begin{equation*}
\norm{\nabla_{\theta}\mcL(\theta,\mu) - \nabla_{\theta}\mcL(\theta^\prime,\mu)}_2\leq \Ltt \norm{\theta-\theta^\prime}_2,\ \forall \theta,\theta^\prime\in \Theta \text{ and }\mu \in \mc{U}.
\end{equation*}
To show the second inequality, we can write that
\begin{equation*}
\begin{aligned}
\norm{\nabla_{\theta}\mcL(\theta,\mu) - \nabla_{\theta}\mcL(\theta,\mu^\prime)}_2 
&= \norm{\nabla_\theta \left[F(\theta)+\frac{1}{n}\sum_{i\in \mcN}\mu_iG_i(\theta)\right] - \nabla_\theta\left[F(\theta)+\frac{1}{n}\sum_{i\in \mcN}\mu_i^\prime G_i(\theta)\right]}_2\\
&=\frac{1}{n}\norm{\sum_{i\in \mcN}(\mu_i-\mu_i^\prime)\nabla_\theta G_i(\theta) }_2\\
&\leq \frac{1}{n}\max_{i\in \mcN,\theta\in \Theta}\left\{\norm{\nabla_\theta G_i(\theta)}_2\right\}\cdot \norm{\mu-\mu^\prime}_1\\
&\leq \frac{1}{\sqrt{n}}\max_{i\in \mcN,\theta\in \Theta}\left\{\norm{\nabla_\theta G_i(\theta)}_2\right\}\cdot \norm{\mu-\mu^\prime}_2,
\end{aligned}
\end{equation*}
where the last line follows from the norm inequality that $\norm{x}_1\leq \sqrt{n}\norm{x}_2$ for any vector $x\in \mbb{R}^n$.
Following the same derivation as \eqref{eq:bounded_lagrangian_grad}, one can show that
\begin{equation*}
\max_{i\in \mcN,\theta\in \Theta}\left\{\norm{\nabla_\theta G_i(\theta)}_2\right\}\leq \dfrac{\sqrt{n}M_rM_\pi}{(1-\gamma)^2},
\end{equation*}
which subsequently implies that $\norm{\nabla_{\theta}\mcL(\theta,\mu) - \nabla_{\theta}\mcL(\theta,\mu^\prime)}_2\leq \Ltu\norm{\mu-\mu^\prime}_2$ with $\Ltu\coloneqq {M_rM_\pi}/{(1-\gamma)^2}$.
This completes the proof.
\end{proof}

\subsection{Implication of metric $\mathcal{E}(\theta,\mu)$}\label{app:subsec_implication}
The following lemma states the relation of the metric $\mathcal{E}(\theta,\mu)$, defined in \eqref{eq:metrics}, to the first-order stationary point of problem \eqref{eq:dec_prob}.
\begin{lemma}\label{lemma: kkt equavilence in appendix}
Given $\theta^\star \in \Theta$ and $\mu^\star \in \mc{U}$, if $\mc{E}(\theta^\star,\mu^\star)=0$ and $u^\star$ is in the interior of $\mc{U}$, then $(\theta^\star,\mu^\star)$ is a pair of first-order stationary point of problem \eqref{eq:dec_prob}.
\end{lemma}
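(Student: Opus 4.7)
The plan is to decompose $\mc{E}(\theta^\star,\mu^\star)=0$ into the two separate conditions $\mc{X}(\theta^\star,\mu^\star)=0$ and $\mc{Y}(\theta^\star,\mu^\star)=0$ (both terms are squares, hence nonnegative), and then translate each into one of the KKT conditions of \eqref{eq:dec_prob}. Since $\theta'=\theta^\star$ (respectively $\mu'=\mu^\star$) is feasible in the inner optimizations defining $\mc{X}$ and $\mc{Y}$, these maxima are bounded below by $0$ and above by $0$, so both inner optimal values are exactly $0$.

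First, I would handle the primal condition. From $\mc{X}(\theta^\star,\mu^\star)=0$, for every $\theta'\in\Theta$ with $\|\theta'-\theta^\star\|_2\le 1$ we get $\inner{\nabla_\theta\mcL(\theta^\star,\mu^\star)}{\theta'-\theta^\star}\le 0$. By the convexity of $\Theta$, for an arbitrary $\theta'\in\Theta$ I can pick $t\in(0,1]$ small enough that $\widetilde\theta:=\theta^\star+t(\theta'-\theta^\star)$ satisfies $\|\widetilde\theta-\theta^\star\|_2\le 1$ and $\widetilde\theta\in\Theta$. Plugging $\widetilde\theta$ into the above inequality and dividing by $t$ yields
\begin{equation*}
\inner{\nabla_\theta \mcL(\theta^\star,\mu^\star)}{\theta'-\theta^\star}\le 0,\qquad \forall\,\theta'\in\Theta,
\end{equation*}
which, after expanding $\nabla_\theta\mcL$, is exactly the Lagrangian stationarity condition for the constrained problem \eqref{eq:dec_prob}.

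Next, I would handle the dual condition. From $\mc{Y}(\theta^\star,\mu^\star)=0$, for every $\mu'\in\mc{U}$ with $\|\mu'-\mu^\star\|_2\le 1$ we have $\inner{\nabla_\mu\mcL(\theta^\star,\mu^\star)}{\mu'-\mu^\star}\ge 0$. Because $\mu^\star$ lies in the interior of $\mc{U}$, for any unit vector $v\in\mbb{R}^n$ there is some $\epsilon>0$ such that $\mu^\star\pm\epsilon v\in\mc{U}$. Applying the inequality to both $\mu'=\mu^\star+\epsilon v$ and $\mu'=\mu^\star-\epsilon v$ forces $\inner{\nabla_\mu\mcL(\theta^\star,\mu^\star)}{v}=0$; since $v$ is arbitrary, $\nabla_\mu\mcL(\theta^\star,\mu^\star)=0$. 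Using $\nabla_{\mu_i}\mcL(\theta^\star,\mu^\star)=G_i(\theta^\star)/n$, this gives $G_i(\theta^\star)=0$ for every $i\in\mcN$.

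Finally, I would assemble the KKT conditions: primal feasibility $G_i(\theta^\star)=0\ge 0$ and dual feasibility $\mu^\star\ge 0$ (since $\mc{U}\subseteq\mbb{R}_+^n$) are immediate, while complementary slackness $\mu_i^\star G_i(\theta^\star)=0$ follows automatically from $G_i(\theta^\star)=0$; the Lagrangian stationarity was established in the first step. Hence $(\theta^\star,\mu^\star)$ is an FOSP of \eqref{eq:dec_prob}. There is no real obstacle here; the only subtlety is noting that the unit-ball restriction in the definitions of $\mc{X}$ and $\mc{Y}$ does not weaken the resulting optimality conditions, which is resolved by the scaling/convexity argument for the primal side and by the interiority hypothesis on $\mu^\star$ for the dual side.
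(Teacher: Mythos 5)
Your proof is correct. The primal half coincides with the paper's: both reduce $\mc{X}(\theta^\star,\mu^\star)=0$ to the variational inequality $\inner{\nabla_\theta\mcL(\theta^\star,\mu^\star)}{\theta'-\theta^\star}\le 0$ for all $\theta'\in\Theta$ (the paper simply asserts the equivalence; your scaling-by-$t$ argument is the justification it leaves implicit). The dual half, however, takes a genuinely different route. The paper runs a two-case contradiction argument: if some $G_i(\theta^\star)<0$ it perturbs $\mu_i$ upward (using interiority only through $\mu_i^\star<\overline\mu$) to make the inner product negative, and if $G_i(\theta^\star)>0$ with $\mu_i^\star>0$ it perturbs $\mu_i$ downward; this yields the standard KKT conditions $G_i(\theta^\star)\ge 0$ and $\mu_i^\star G_i(\theta^\star)=0$ separately. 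You instead exploit the full two-sided interiority of $\mu^\star$ to conclude directly that $\nabla_\mu\mcL(\theta^\star,\mu^\star)=0$, i.e., $G_i(\theta^\star)=0$ for every $i$, from which feasibility and complementary slackness are immediate. Your argument is shorter and cleaner, at the cost of deriving a stronger conclusion (all constraints exactly active) that is forced anyway once every multiplier is strictly positive; the paper's case analysis is marginally more robust in that it would survive weakening the hypothesis to one-sided interiority ($\mu_i^\star<\overline\mu$ only), in which case inactive constraints with $\mu_i^\star=0$ remain possible. Either way, all four KKT conditions are established, so the lemma follows.
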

\begin{proof}
We denote $g_i(\theta^\star)\coloneqq {g}_i(\lambda_{i}^{\pi_{\theta^\star}})=n\cdot \left[\nabla_\mu \mcL(\theta^\star,\mu^\star)\right]_i$ for the ease of notation.
The first-order optimality condition for problem \eqref{eq:dec_prob} is 
\begin{subequations}
\begin{align}
&\inner{\nabla_\theta \mcL(\theta^\star,\mu^\star)}{\theta^\prime-\theta^\star} \leq 0, \forall {\theta^\prime\in \Theta} , \label{eq: KKT primal}\\
& g_i(\theta^\star) \geq 0, \forall i\in\mcN, \label{eq: KKT dual}\\
&g_i(\theta^\star)\mu_i^\star = 0, \forall i\in\mcN, \label{eq: KKT slack}\\
&\mu_i^\star\geq 0, \forall i\in\mcN.
\end{align}
\end{subequations}
By reformulation, we observe that \eqref{eq: KKT primal} is equivalent to 
\begin{align}
 \max_{\theta^\prime\in \Theta,\norm{\theta^\prime-\theta^\star}_2\leq 1} \inner{\nabla_\theta \mcL(\theta^\star,\mu^\star)}{\theta^\prime-\theta^\star}=0
\end{align}
Then, we use a contradictory argument to show that \eqref{eq: KKT dual} and \eqref{eq: KKT slack} are implied by the equality $\min_{\mu^\prime\in \mc{U},\norm{\mu^\prime-\mu^\star}_2\leq 1} \inner{\nabla_\mu \mcL(\theta^\star,\mu^\star)}{\mu^\prime-\mu}=0$.

Firstly, if there exists an index $i$ such that $g_i(\theta^\star)<0$, since $\mu^\star$ is in the interior of $\mc{U}$, there must exist some $\mu^\prime \in \mathcal{U}$ with $\norm{\mu^\prime - \mu}\leq 1$ and $\mu_i^\prime > \mu_i^\star$ as well as $\mu_j^\prime = \mu_j^\star$ for all $i\neq j$ such that $\left[\nabla_\mu \mcL(\theta,\mu)\right]_i (\mu_i^\prime-\mu_i)= g_i(\theta^\star)(\mu_i^\prime-\mu_i^\star)/n<0  $. Then, we have that
\begin{align}\label{eq: stationarity equavilence}
\min_{\mu^\prime\in \mc{U},\norm{\mu^\prime-\mu^\star}_2\leq 1} \inner{\nabla_\mu \mcL(\theta^\star,\mu^\star)}{\mu^\prime-\mu}\cdot n \leq g_i(\theta^\star) (\mu_i^\prime-\mu_i^\star) + \sum_{j\neq i}  g_j (\theta^\star) (\mu_j^\star-\mu_j^\star)<0,
\end{align}
which violates the condition that $\mc{Y}(\theta^\star,\mu^\star)=0$. Thus, it holds that $g_i(\theta^\star)\geq 0$ for all $i$.

Furthermore, if there exists an index $i$ such that $g_i(\theta^\star)>0$ and $\mu_i^\star>0$, then there must exist some $\mu^\prime \in \mathcal{U}$, with $ \norm{\mu^\prime - \mu}\leq 1$ and $0\leq \mu_i^\prime < \mu_i^\star$ such that 
$\left[\nabla_\mu \mcL(\theta,\mu)\right]_i (\mu_i^\prime-\mu_i)= g_i(\theta^\star)(\mu_i^\prime-\mu_i^\star)/n<0  $. 
By a similar argument as \eqref{eq: stationarity equavilence}, we conclude that the condition $\mc{Y}(\theta^\star,\mu^\star)=0$ is also violated.
Thus, it holds that $g_i(\theta^\star)\mu_i= 0$ for all $i\in \mcN$.
\end{proof}
\section{Proof of Theorems \ref{thm:convergence} and \ref{thm:sample_complexity}}\label{app:sec_proof_thm}
Before proving the main theorems, we first quantify the approximation errors of the estimators $\widetilde{\lambda}^t_i$, $\widetilde{r}_{\diamondsuit_i}^t$, $\widetilde{Q}^t_{\diamondsuit_i}$, $\widetilde{\nabla}_{\mu_i}\mcL(\theta^t,\mu^t)$, and $\widetilde{\nabla}_{\theta_i}\mcL(\theta^t,\mu^{t+1})$, which are computed in Algorithm \ref{alg:pdac} in the sampled-based setting.
The results are summarized in the proposition below, whose proof can be found in Appendix \ref{app:subsec_prop_estimators}.

\begin{proposition}\label{prop:estimators}
Suppose that Assumptions \ref{assump:decay_transition}, \ref{assump:khop_policy}, \ref{assump:utility}-\ref{assump:Q_estimation} hold.
Let $\delta_0 \in (0,1/(2n))$ be the failure probability. 
Denote $\widetilde{\nabla}_\theta \mcL(\theta^t,\mu^t)$ and $\widetilde{\nabla}_\mu \mcL(\theta^t,\mu^t)$ as the concatenations of gradient estimators $\left\{\widetilde{\nabla}_{\theta_i} \mcL(\theta^t,\mu^t)\right\}_{i\in \mcN}$ and $\left\{\widetilde{\nabla}_\mu \mcL(\theta^t,\mu^t)\right\}_{\in \mcN}$, respectively.
Then, for every period $t\geq 0$ in Algorithm \ref{alg:pdac}, the following inequalities hold with probability at least $1-2n\delta_0$:
\begin{subequations}
\begin{align}
\text{(Occupancy measures): }&\norm{\widetilde{\lambda}^t_i - \lambda^\ptt_i}_2\leq \epsilon_1(\delta_0),\ \forall i\in \mcN \label{eq:prop_occupancy_measure}\\[3pt]
\text{(Shadow rewards): }&\norm{\widetilde{r}_{\diamondsuit_i}^t - r_{\diamondsuit_i}^\ptt}_\infty\leq L_\lambda \epsilon_1(\delta_0),\ \forall \diamondsuit\in \{f,g\}, i\in \mcN.\label{eq:prop_shadow_reward}\\[3pt]
\text{(Truncated Q-functions): }&\norm{\widetilde{Q}^t_{\diamondsuit_i}-\widehat{Q}^\ptt_{\diamondsuit_i}}_\infty\leq M_r\epsilon_0 + \dfrac{L_\lambda \epsilon_1(\delta_0)}{1-\gamma},\ \forall \diamondsuit\in \{f,g\}, i\in \mcN.\label{eq:prop_shadow_q}\\[3pt]
\text{(Dual gradient): }&\norm{\widetilde{\nabla}_\mu \mcL(\theta^t,\mu^t) - {\nabla}_\mu \mcL(\theta^t,\mu^t)}_2^2\leq \frac{\left(M_r\epsilon_1(\delta_0)\right)^2}{{n}} =: \epsilon_\mu 
\label{eq:prop_dual_gradient}\\[3pt]
\text{(Policy gradient): }&\norm{\widetilde{\nabla}_\theta \mcL(\theta^t,\mu^{t+1}) - {\nabla}_\theta \mcL(\theta^t,\mu^{t+1})}_2^2\leq \left(\sum_{i\in \mcN}\dfrac{|\mcNk_i|^2}{n^2}\right) \epsilon_2(\delta_0) + n\epsilon_3=:\epsilon_\theta.\label{eq:prop_policy_gradient} 
\end{align}
\end{subequations}
where the constant $M_r$ is defined in Lemma \ref{lemma:properties} and 
\begin{equation}\label{eq:def_epsilon}
\begin{aligned}
\epsilon_1(\delta_0) &\coloneqq  \sqrt{\frac{4+2\gamma^{2H}B-16\log\delta_0}{(1-\gamma)^2B}}\\
\epsilon_2(\delta_0) &\coloneqq 4\left[\dfrac{(1+\overline{\mu})M_rM_\pi}{(1-\gamma)^2}\right]^2\cdot\left[\left((1-\gamma)\epsilon_0+\dfrac{L_\lambda \epsilon_1(\delta_0)}{M_r}\right)^2+ {\dfrac{2-8\log \delta_0}{B}}+\gamma^{2H}\right]\\
&= \mc{O}\left(\epsilon_0^2 +\dfrac{\log(1/\delta_0)}{B}+\gamma^{2H}\right)\\
\epsilon_3 &\coloneqq  4\left[\dfrac{(1+\overline{\mu})M_\pi c_0\phi_0^\kappa}{1-\gamma}\right]^2 = \mc{O}\left(\phi_0^{2\kappa}\right).
\end{aligned}
\end{equation}
\end{proposition}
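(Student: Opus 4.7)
}
The plan is to prove the five bounds in the stated order, since each subsequent estimator either reuses the preceding one or its error propagates through Lipschitz or linear operations. A union bound over agents at the end will cost a factor $2n$ in the failure probability, justifying the conclusion at confidence level $1-2n\delta_0$.

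\emph{Bounds \eqref{eq:prop_occupancy_measure} and \eqref{eq:prop_shadow_reward}.} I would start from the bias-variance decomposition
\begin{equation*}
\bigl\|\widetilde{\lambda}^t_i - \lambda^{\pi_{\theta^t}}_i\bigr\|_2 \;\leq\; \bigl\|\widetilde{\lambda}^t_i - \mathbb{E}[\widetilde{\lambda}^t_i]\bigr\|_2 + \bigl\|\mathbb{E}[\widetilde{\lambda}^t_i] - \lambda^{\pi_{\theta^t}}_i\bigr\|_2.
\end{equation*}
The bias term is controlled by the truncation to horizon $H$: each trajectory's contribution to entry $(s_i,a_i)$ misses a tail of the geometric series, giving a deterministic bias of order $\gamma^H/(1-\gamma)$, which is absorbed by the $\gamma^{2H}$ term inside $\epsilon_1(\delta_0)^2$. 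For the random fluctuation term, I would apply McDiarmid's (bounded-differences) inequality to the function of $B$ i.i.d. trajectories $\tau\mapsto \bigl\|\widetilde{\lambda}^t_i - \mathbb{E}\widetilde{\lambda}^t_i\bigr\|_2$, noting that a single trajectory contributes a vector of $\ell_2$-norm at most $1/(1-\gamma)$, so replacing one trajectory shifts the function by at most $2/((1-\gamma)B)$. Combining with $\mathbb{E}\|\widetilde{\lambda}^t_i-\mathbb{E}\widetilde{\lambda}^t_i\|_2 \leq 1/((1-\gamma)\sqrt{B})$ yields the high-probability bound matching $\epsilon_1(\delta_0)$. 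Inequality \eqref{eq:prop_shadow_reward} is then immediate from $L_\lambda$-Lipschitzness of $\nabla_{\lambda_i}f_i$ and $\nabla_{\lambda_i}g_i$ (Assumption \ref{assump:utility}), using $\|\cdot\|_\infty \leq \|\cdot\|_2$ to move between norms.

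\emph{Bounds \eqref{eq:prop_shadow_q} and \eqref{eq:prop_dual_gradient}.} For the truncated Q-function, I would insert the intermediate quantity $\widehat{Q}^{\pi_{\theta^t}}_i(\widetilde{r}^t_{\diamondsuit_i};\cdot,\cdot)$ and apply the triangle inequality:
\begin{equation*}
\bigl\|\widetilde{Q}^t_{\diamondsuit_i} - \widehat{Q}^{\pi_{\theta^t}}_{\diamondsuit_i}\bigr\|_\infty \;\leq\; \bigl\|\widetilde{Q}^t_{\diamondsuit_i} - \widehat{Q}^{\pi_{\theta^t}}_i(\widetilde{r}^t_{\diamondsuit_i})\bigr\|_\infty + \bigl\|\widehat{Q}^{\pi_{\theta^t}}_i(\widetilde{r}^t_{\diamondsuit_i}) - \widehat{Q}^{\pi_{\theta^t}}_i(r^{\pi_{\theta^t}}_{\diamondsuit_i})\bigr\|_\infty.
\end{equation*}
The first piece is bounded by $\|\widetilde{r}^t_{\diamondsuit_i}\|_\infty \epsilon_0 \leq M_r\epsilon_0$ by Assumption \ref{assump:Q_estimation} (using Lemma \ref{lemma:properties}(I) to bound $\|\widetilde{r}\|_\infty$), and the second piece is bounded by $\|\widetilde{r}^t_{\diamondsuit_i} - r^{\pi_{\theta^t}}_{\diamondsuit_i}\|_\infty/(1-\gamma)$ by the linearity of the Q-function in its reward argument together with the discount factor. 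Plugging in \eqref{eq:prop_shadow_reward} gives \eqref{eq:prop_shadow_q}. For the dual gradient \eqref{eq:prop_dual_gradient}, I would use that $\widetilde{\nabla}_{\mu_i}\mcL = g_i(\widetilde{\lambda}^t_i)/n$ and $\nabla_{\mu_i}\mcL = g_i(\lambda^{\pi_{\theta^t}}_i)/n$, and since $g_i$ has gradient bounded by $M_r$ in $\ell_\infty$-norm, it is $M_r$-Lipschitz in $\ell_2$; squaring and summing over $i$ yields the claimed form.

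\emph{Bound \eqref{eq:prop_policy_gradient}.} This is the main technical obstacle, and I would attack it by splitting
\begin{equation*}
\bigl\|\widetilde{\nabla}_\theta\mcL - \nabla_\theta\mcL\bigr\|_2^2 \;\leq\; 2\bigl\|\widetilde{\nabla}_\theta\mcL - \widehat{\nabla}_\theta\mcL\bigr\|_2^2 + 2\bigl\|\widehat{\nabla}_\theta\mcL - \nabla_\theta\mcL\bigr\|_2^2.
\end{equation*}
The second term is handled by Lemma \ref{lemma:truncated_grad} applied componentwise, giving the $n\epsilon_3$ contribution after summing the per-agent squared errors. For the first term, I would work agent-by-agent and decompose the REINFORCE-based estimator against its infinite-horizon expectation $\widehat{\nabla}_{\theta_i}\mcL(\theta^t,\mu^{t+1})$ into three sources of error: (a) replacing the true truncated shadow Q-functions inside the expectation with their empirical estimates $\widetilde{Q}^t_{\diamondsuit_j}$, controlled via \eqref{eq:prop_shadow_q} and the sum $\sum_{j\in\mcNk_i}$ (which produces the $|\mcNk_i|^2/n^2$ factor after squaring); (b) truncating the infinite-horizon sum to $H$ terms, which is bounded deterministically by $\gamma^H$ times constants using $\|\nabla_{\theta_i}\log\pti\|_2\le M_\pi$ and the $\|\widetilde{Q}\|_\infty \le M_r/(1-\gamma)$; and (c) replacing the expectation over $d^{\pi_{\theta^t}}$ by the empirical average over $B$ trajectories, handled by another McDiarmid argument on the trajectory-wise bounded increments. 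Matching each piece to the three summands inside $\epsilon_2(\delta_0)$ and carrying the prefactor $[(1+\overline{\mu})M_r M_\pi/(1-\gamma)^2]^2$ (which dominates the supremum of the squared integrand) yields the claimed bound. The hardest bookkeeping step will be tracking how the $\kappa$-hop neighborhood structure interacts with the squared $\ell_2$-norm so that the $|\mcNk_i|^2/n^2$ factor emerges cleanly, and then applying a union bound across the $2n$ agent-utility estimation events to obtain the global confidence $1-2n\delta_0$.
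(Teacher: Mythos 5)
Your proposal is correct and follows essentially the same route as the paper's proof: the same bias--variance split with the $\gamma^H/(1-\gamma)$ truncation bias for the occupancy measure, the same insertion of the intermediate quantity $\widehat{Q}^{\ptt}_i(\widetilde{r}^t_{\diamondsuit_i};\cdot,\cdot)$ for the Q-function bound, Lipschitz propagation for the shadow rewards and dual gradient, and the same four error sources (empirical vs.\ true truncated Q-functions, Monte Carlo concentration, horizon truncation, and the Lemma \ref{lemma:truncated_grad} truncation error) for the policy gradient. The only differences are cosmetic: you invoke McDiarmid plus an expectation bound where the paper cites a vector-concentration lemma, and your nested two-then-three-way splitting would yield slightly different absolute constants than the single four-way inequality $\|\sum_{j=1}^4 x_j\|_2^2\leq 4\sum_j\|x_j\|_2^2$ used to define $\epsilon_2$ and $\epsilon_3$.
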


\begin{remark}[Exact setting]\label{rmk:exact_setting_error}
Consider the exact setting where the agents have accurate estimates of their local occupancy measures, shadow Q-functions, and truncated policy gradients. 
In this case, it is evident that the error bounds \eqref{eq:prop_dual_gradient} and \eqref{eq:prop_policy_gradient} always hold with $\epsilon_\mu = 0$ and $\epsilon_\theta = n\epsilon_3$, where $\epsilon_3$, as defined in \eqref{eq:def_epsilon}, represents the truncation error of the policy gradient. 
\end{remark}

\begin{remark}[Truncation error]
As stated in \eqref{eq:prop_policy_gradient}, the error of the policy gradient estimator, $\epsilon_\theta$, is composed of two parts.
The second part, $n\epsilon_3$, arises from the use of truncated Q-functions and truncated policy gradients. 
It it important to note that this error has the factor $n$ because we assume that the norm of each agent $i$'s local score function, $\norm{\nabla_{\theta_i}\log \pti(\cdot\vert \cdot)}_2$, is individually bounded by the constant $M_\pi$. 
If we instead assume a constant upper bound on the norm of the global score function $\nabla_\theta \log \pt(\cdot\vert \cdot)$, then the factor $n$ would not be present (as in \cite{qu2020scalable}).
\end{remark}

With the shorthand notations $\widetilde{\nabla}_\theta \mcL(\theta^t,\mu^t)$ and $\widetilde{\nabla}_\mu \mcL(\theta^t,\mu^t)$, we can express the updates in Algorithm \ref{alg:pdac} as
\begin{equation}\label{eq:update_actual}
\left\{\begin{array}{l}
    \mu^{t+1}=\mathcal{P}_{\mc{U}}\left(-\eta_\mu \widetilde{\nabla}_{\mu} \mathcal{L}\left(\theta^t, \mu^t\right)\right) \\[15pt]
\theta^{t+1}=\mathcal{P}_{\Theta}\left(\theta^t+\eta_\theta\cdot \widetilde{\nabla}_{\theta} \mathcal{L}\left(\theta^t, \mu^{t+1}\right)\right)  
\end{array}\right. ,\ \text{for }t=0,1,2,\dots.
\end{equation}

Recall that the exact dual variable update rule is given by \eqref{eq:dual_update_prob}.
For ease of the notation, we define ${\mcL}^t(\mu)$ as the exact objective function in sub-problem \eqref{eq:dual_update_prob} and $\widetilde{\mcL}^t(\mu)$ as the empirical objective function used in Algorithm \ref{alg:pdac}, i.e.,
\begin{equation}\label{eq:Ltmu}
\begin{aligned}
{\mcL}^t(\mu)&\coloneqq \mcL(\theta^t,\mu)+ \frac{1}{2\eta_\mu}\|\mu\|_2^2,\quad
\widetilde{\mcL}^t(\mu)&\coloneqq  \inner{\widetilde{\nabla}_\mu\mcL(\theta^t,\mu^t)}{\mu} + \frac{1}{2\eta_\mu}\|\mu\|_2^2.
\end{aligned}
\end{equation}
By definition, it is clear that $\mu^{t+1} = \argmin_{\mu\in \mc{U}}\widetilde{\mcL}^t(\mu)$.
Also, we note that both $\mcL^t(\mu)$ and $\widetilde{\mcL}^t(\mu)$ are $1/\eta_\mu$-strongly convex quadratic functions.

\begin{proof}[Proof of Theorem \ref{thm:convergence}]
Throughout the proof below, we assume that the following error bounds hold for $t = 0,1,\dots,T-1$.
\begin{equation}\label{eq:error_bound_gradients}
\norm{\widetilde{\nabla}_\mu \mcL(\theta^t,\mu^t) - {\nabla}_\mu \mcL(\theta^t,\mu^t)}_2^2\leq \epsilon_\mu,\quad \norm{\widetilde{\nabla}_\theta \mcL(\theta^t,\mu^{t+1}) - {\nabla}_\theta \mcL(\theta^t,\mu^{t+1})}_2^2\leq \epsilon_\theta.
\end{equation}
According to Remark \ref{rmk:exact_setting_error}, this is always the case in the exact setting with $\epsilon_\mu = 0$ and $\epsilon_\theta = n\epsilon_3$, where $\epsilon_3$ is the approximation error of the truncated policy gradient estimator.

We begin with a general argument that applies to both the exact setting (Theorem \ref{thm:convergence}) and the sample-based setting (Theorem \ref{thm:sample_complexity}).
Since the feasible set $\Theta$ is convex, by the property of the projection operator, it holds that
\begin{equation}\label{eq:by_projection_first}
\inner{\left[\theta^t+\eta_\theta \cdot \widetilde{\nabla}_{\theta} \mathcal{L}\left(\theta^t, \mu^{t+1}\right)\right]- \theta^{t+1}}{\theta-\theta^{t+1}}\leq 0,\ \forall \theta\in \Theta,
\end{equation}
which thus implies that
\begin{equation}\label{eq:by_projection_second}
\inner{\widetilde{\nabla}_{\theta} \mathcal{L}\left(\theta^t, \mu^{t+1}\right)}{\theta-\theta^{t+1}} \leq \dfrac{1}{\eta_\theta}\inner{\theta^{t+1}-\theta^t}{\theta-\theta^{t+1}}.
\end{equation}
Therefore, for any $\theta\in \Theta$, we have that
\begin{equation}\label{eq:gradient_theta_thetat}
\begin{aligned}
\inner{\widetilde{\nabla}_{\theta} \mathcal{L}\left(\theta^t, \mu^{t+1}\right)}{\theta-\theta^t}
&=\inner{\widetilde{\nabla}_{\theta} \mathcal{L}\left(\theta^t, \mu^{t+1}\right)}{\theta-\theta^{t+1}} + \inner{\widetilde{\nabla}_{\theta} \mathcal{L}\left(\theta^t, \mu^{t+1}\right)}{\theta^{t+1}-\theta^t}\\
&\leq \dfrac{1}{\eta_\theta}\inner{\theta^{t+1}-\theta^t}{\theta-\theta^{t+1}} + \inner{\widetilde{\nabla}_{\theta} \mathcal{L}\left(\theta^t, \mu^{t+1}\right)}{\theta^{t+1}-\theta^t}\\
&=\dfrac{1}{\eta_\theta}\inner{\theta^{t+1}-\theta^t}{\theta-\theta^{t}} + \dfrac{1}{\eta_\theta}\inner{\theta^{t+1}-\theta^t}{\theta^t-\theta^{t+1}}\\
&\quad +  \inner{\widetilde{\nabla}_{\theta} \mathcal{L}\left(\theta^t, \mu^{t+1}\right)}{\theta^{t+1}-\theta^t}\\
&= \dfrac{1}{\eta_\theta}\inner{\theta^{t+1}-\theta^t}{\theta-\theta^{t}} - \dfrac{1}{\eta_\theta}\norm{\theta^t-\theta^{t+1}}_2^2+  \inner{\widetilde{\nabla}_{\theta} \mathcal{L}\left(\theta^t, \mu^{t+1}\right)}{\theta^{t+1}-\theta^t}.
\end{aligned}
\end{equation}
By taking the maximum on both sides over all $\theta\in \Theta$ such that $\norm{\theta-\theta^t}_2\leq 1$, the inequality \eqref{eq:gradient_theta_thetat} becomes 
\begin{equation}\label{eq:grad_L_square_bound}
\begin{aligned}
&\quad\max_{\theta\in \Theta, \norm{\theta-\theta^t}\leq 1}\inner{\widetilde{\nabla}_{\theta} \mathcal{L}\left(\theta^t, \mu^{t+1}\right)}{\theta-\theta^t}\\
&\leq \max_{\theta\in \Theta, \norm{\theta-\theta^t}\leq 1} \left\{\dfrac{1}{\eta_\theta}\inner{\theta^{t+1}-\theta^t}{\theta-\theta^{t}}\right\}
- \dfrac{1}{\eta_\theta}\norm{\theta^t-\theta^{t+1}}_2^2+  \inner{\widetilde{\nabla}_{\theta} \mathcal{L}\left(\theta^t, \mu^{t+1}\right)}{\theta^{t+1}-\theta^t}\\
&\leq \dfrac{1}{\eta_\theta}\norm{\theta^{t+1}-\theta^t}_2 - \dfrac{1}{\eta_\theta}\norm{\theta^t-\theta^{t+1}}_2^2+  \norm{\widetilde{\nabla}_{\theta} \mathcal{L}\left(\theta^t, \mu^{t+1}\right)}_2\cdot\norm{\theta^{t+1}-\theta^t}_2\\
&\leq \left(\dfrac{1}{\eta_\theta}  + \norm{\widetilde{\nabla}_{\theta} \mathcal{L}\left(\theta^t, \mu^{t+1}\right)}_2\right)\norm{\theta^{t+1}-\theta^t}_2,
\end{aligned}
\end{equation}
where we apply the Cauchy's inequality $\inner{x}{y}\leq \norm{x}_2\norm{y}_2$ in the third line. Thus, it holds that
\begin{equation}\label{eq:true_grad_L_square_bound}
\begin{aligned}
&\ \quad \left[\mc{X}\left(\theta^t,\mu^{t+1}\right)\right]^2 \\
&\ = \left[\max_{\theta\in \Theta, \norm{\theta-\theta^t}\leq 1}\inner{{\nabla}_{\theta} \mathcal{L}\left(\theta^t, \mu^{t+1}\right)}{\theta-\theta^t}\right]^2\\
&\ = \left[\max_{\theta\in \Theta, \norm{\theta-\theta^t}\leq 1}\left\{\inner{\widetilde{\nabla}_{\theta} \mathcal{L}\left(\theta^t, \mu^{t+1}\right)}{\theta-\theta^t} + \inner{{\nabla}_{\theta} \mathcal{L}\left(\theta^t, \mu^{t+1}\right)-\widetilde{\nabla}_{\theta} \mathcal{L}\left(\theta^t, \mu^{t+1}\right)}{\theta-\theta^t} \right\}\right]^2\\
&\ \leq \left[\max_{\theta\in \Theta, \norm{\theta-\theta^t}\leq 1}\left\{\inner{\widetilde{\nabla}_{\theta} \mathcal{L}\left(\theta^t, \mu^{t+1}\right)}{\theta-\theta^t}\right\} + \max_{\theta\in \Theta, \norm{\theta-\theta^t}\leq 1} \left\{ \inner{{\nabla}_{\theta} \mathcal{L}\left(\theta^t, \mu^{t+1}\right)-\widetilde{\nabla}_{\theta} \mathcal{L}\left(\theta^t, \mu^{t+1}\right)}{\theta-\theta^t} \right\}   \right]^2\\
&\ \leq \left[\max_{\theta\in \Theta, \norm{\theta-\theta^t}\leq 1}\left\{\inner{\widetilde{\nabla}_{\theta} \mathcal{L}\left(\theta^t, \mu^{t+1}\right)}{\theta-\theta^t}\right\}+\norm{{\nabla}_{\theta} \mathcal{L}\left(\theta^t, \mu^{t+1}\right)-\widetilde{\nabla}_{\theta} \mathcal{L}\left(\theta^t, \mu^{t+1}\right)}_2   \right]^2\\
&\ \leq 2\left[ \max_{\theta\in \Theta, \norm{\theta-\theta^t}\leq 1}\left\{\inner{\widetilde{\nabla}_{\theta} \mathcal{L}\left(\theta^t, \mu^{t+1}\right)}{\theta-\theta^t}\right\}\right]^2 + 2\norm{{\nabla}_{\theta} \mathcal{L}\left(\theta^t, \mu^{t+1}\right)-\widetilde{\nabla}_{\theta} \mathcal{L}\left(\theta^t, \mu^{t+1}\right)}_2^2\\
&{\overset{(\Delta)}{\leq}} 2\left(\dfrac{1}{\eta_\theta}  + \norm{\widetilde{\nabla}_{\theta} \mathcal{L}\left(\theta^t, \mu^{t+1}\right)}_2\right)^2\norm{\theta^{t+1}-\theta^t}_2^2 + 2\epsilon_\theta\\
&\ {\leq} 2\left(\dfrac{1}{\eta_\theta}  + M_\theta\right)^2\norm{\theta^{t+1}-\theta^t}_2^2 + 2\epsilon_\theta,
\end{aligned}
\end{equation}
where we apply \eqref{eq:grad_L_square_bound} and \eqref{eq:error_bound_gradients} in $(\Delta)$.
The last line follows from the fact that $\widetilde{\nabla}_{\theta} \mathcal{L}\left(\theta^t, \mu^{t+1}\right)$ is the Monte Carlo estimator for the true gradient ${\nabla}_{\theta} \mathcal{L}\left(\theta^t, \mu^{t+1}\right)$, thus enjoying the same upper bound $\norm{\widetilde{\nabla}_{\theta} \mathcal{L}\left(\theta^t, \mu^{t+1}\right)}_2\leq M_\theta$ (see Lemma \ref{lemma:properties}).
Therefore, it is important to properly upper-bound the term $\norm{\theta^{t+1}-\theta^t}_2^2$.
We proceed by focusing on the dual variable update. By the definition of $\mcL^t(\mu)$ in \eqref{eq:Ltmu}, we can derive that
\begin{equation}\label{eq:L_t+1_L_t}
\begin{aligned}
\mcL^{t+1}(\mu^{t+2}) - \mcL^{t}(\mu^{t+2}) 
&{=} \left[\mcL(\theta^{t+1},\mu^{t+2}) + \frac{1}{2\eta_\mu}\|\mu^{t+2}\|_2^2\right] {-} \left[\mcL(\theta^{t},\mu^{t+2}) + \frac{1}{2\eta_\mu}\|\mu^{t+2}\|_2^2 \right]\\
& = \left[\mcL(\theta^{t+1},\mu^{t+2}) - \mcL(\theta^{t},\mu^{t+2})\right]\\
&\geq \inner{\nabla_\theta \mcL(\theta^{t},\mu^{t+2})}{\theta^{t+1}-\theta^t} -\dfrac{\Ltt}{2}\norm{\theta^{t+1}-\theta^t}_2^2 ,
\end{aligned}
\end{equation}
where we apply the $\Ltt$-smoothness of $\mcL(\theta,\mu)$ w.r.t. $\theta$ (see Lemma \ref{lemma:properties}), i.e.,
\begin{equation*}
-\mcL(\theta^{t+1},\mu^{t+2}) \leq -\mcL(\theta^{t},\mu^{t+2}) +\inner{-\nabla_\theta \mcL(\theta^{t},\mu^{t+2})}{\theta^{t+1}-\theta^t} + \dfrac{\Ltt}{2}\norm{\theta^{t+1}-\theta^t}_2^2.
\end{equation*}
Then, from \eqref{eq:L_t+1_L_t}, we further deduce that
\begin{equation}\label{eq:L_t+1_recursion}
\begin{aligned}
\mcL^{t+1}(\mu^{t+2}) 
&\ \geq \mcL^{t}(\mu^{t+2}) +\inner{\nabla_\theta \mcL(\theta^{t},\mu^{t+2})}{\theta^{t+1}-\theta^t} -\dfrac{\Ltt}{2}\norm{\theta^{t+1}-\theta^t}_2^2 \\
&\ =\mcL^{t}(\mu^{t+2}) +\inner{\nabla_\theta \mcL(\theta^{t},\mu^{t+2})-\nabla_\theta \mcL(\theta^{t},\mu^{t+1})}{\theta^{t+1}-\theta^t} -\dfrac{\Ltt}{2}\norm{\theta^{t+1}-\theta^t}_2^2 \\
&\ \quad +\inner{\nabla_\theta \mcL(\theta^{t},\mu^{t+1})}{\theta^{t+1}-\theta^t}\\
&{\overset{(\Delta)}{\geq}} \mcL^{t}(\mu^{t+2}) - \Ltu\norm{\mu^{t+2}-\mu^{t+1}}_2\norm{\theta^{t+1}-\theta^t}_2-\dfrac{\Ltt}{2}\norm{\theta^{t+1}-\theta^t}_2^2\\
&\ \quad+\inner{\nabla_\theta \mcL(\theta^{t},\mu^{t+1})}{\theta^{t+1}-\theta^t}\\
&\ =\mcL^{t}(\mu^{t+1}) - \Ltu\norm{\mu^{t+2}-\mu^{t+1}}_2\norm{\theta^{t+1}-\theta^t}_2-\dfrac{\Ltt}{2}\norm{\theta^{t+1}-\theta^t}_2^2 \\
&\ \quad +\inner{\nabla_\theta \mcL(\theta^{t},\mu^{t+1})}{\theta^{t+1}-\theta^t} + \left[\mcL^{t}(\mu^{t+2})-\mcL^{t}(\mu^{t+1})\right],
\end{aligned}
\end{equation}
where $(\Delta)$ is due to Lemma \ref{lemma:properties} and Cauchy's inequality.
Then, we lower-bound the term $\left[\mcL^{t}(\mu^{t+2})-\mcL^{t}(\mu^{t+1})\right]$ using the $1/\eta_\mu$-strong convexity of $\mcL^t(\cdot)$ as follows
\begin{equation}\label{eq:Lt_minus_Lt}
\begin{aligned}
\mcL^{t}(\mu^{t+2})-\mcL^{t}(\mu^{t+1}) 
&\ \ \ {\geq} \inner{\nabla_\mu \mcL^{t}(\mu^{t+1}) }{\mu^{t+2}-\mu^{t+1}} + \dfrac{1}{2\eta_\mu} \norm{\mu^{t+2}-\mu^{t+1}}_2^2\\
&\overset{(\Delta_1)}{=} \inner{{\nabla}_\mu\mcL(\theta^t,\mu^t)+\dfrac{1}{\eta_\mu}\mu^{t+1} }{\mu^{t+2}-\mu^{t+1}}+ \dfrac{1}{2\eta_\mu} \norm{\mu^{t+2}-\mu^{t+1}}_2^2\\
&\ \ \ {=}\dfrac{1}{\eta_\mu}\inner{\mu^{t+1}-\left(-\eta_\mu\widetilde{\nabla}_\mu\mcL(\theta^t,\mu^t)\right)-\eta_\mu\widetilde{\nabla}_\mu\mcL(\theta^t,\mu^t)}{\mu^{t+2}-\mu^{t+1}} \\
&\ \ \ \quad+\inner{{\nabla}_\mu\mcL(\theta^t,\mu^t)}{\mu^{t+2}-\mu^{t+1}}+ \dfrac{1}{2\eta_\mu} \norm{\mu^{t+2}-\mu^{t+1}}_2^2\\
&\overset{(\Delta_2)}{\geq}\inner{{\nabla}_\mu\mcL(\theta^t,\mu^t)-\widetilde{\nabla}_\mu\mcL(\theta^t,\mu^t)}{\mu^{t+2}-\mu^{t+1}}+ \dfrac{1}{2\eta_\mu} \norm{\mu^{t+2}-\mu^{t+1}}_2^2\\
&\overset{(\Delta_3)}{\geq} -\eta_\mu\norm{{\nabla}_\mu\mcL(\theta^t,\mu^t)-\widetilde{\nabla}_\mu\mcL(\theta^t,\mu^t)}_2^2 -\dfrac{1}{4\eta_\mu}\norm{\mu^{t+2}-\mu^{t+1}}_2^2\\
&\ \ \ \quad +\dfrac{1}{2\eta_\mu} \norm{\mu^{t+2}-\mu^{t+1}}_2^2\\
&\ \ \ \geq-{\eta_\mu\epsilon_\mu}+\dfrac{1}{4\eta_\mu} \norm{\mu^{t+2}-\mu^{t+1}}_2^2.
\end{aligned}
\end{equation}
In the above inequality, $(\Delta_1)$ follows from the definition of $\mcL^t(\cdot)$ in \eqref{eq:Ltmu}. 
Next, we use the property of the projection operator in inequality $(\Delta_2)$, i.e., 
\begin{equation*}
\begin{aligned}
&\quad\inner{\left(-\eta_\mu\widetilde{\nabla}_\mu\mcL(\theta^t,\mu^t)\right)-\mu^{t+1}}{\mu-\mu^{t+1}} \\
&=\inner{\left(-\eta_\mu\widetilde{\nabla}_\mu\mcL(\theta^t,\mu^t)\right)-\mathcal{P}_{\mc{U}}\left(-\eta_\mu \widetilde{\nabla}_{\mu} \mathcal{L}\left(\theta^t, \mu^t\right)\right)}{\mu-\mathcal{P}_{\mc{U}}\left(-\eta_\mu \widetilde{\nabla}_{\mu} \mathcal{L}\left(\theta^t, \mu^t\right)\right)} \leq 0,\ \forall \mu\in \mc{U}.
\end{aligned}
\end{equation*}
Finally, $(\Delta_3)$ is due to Cauchy's inequality $\inner{x}{y}\geq -k/2\cdot\norm{x}_2^2-1/(2k)\cdot \norm{y}_2^2$ for any $k>0$ and the last inequality follows from the error bound in \eqref{eq:error_bound_gradients}.

In addition, the term $\inner{\nabla_\theta \mcL(\theta^{t},\mu^{t+1})}{\theta^{t+1}-\theta^t}$ on the right-hand side of \eqref{eq:L_t+1_recursion} can be lower-bounded as follows
\begin{equation}\label{eq:gradL_theta_t}
\begin{aligned}
&\quad \inner{\nabla_\theta \mcL(\theta^{t},\mu^{t+1})}{\theta^{t+1}-\theta^t} \\
&= \inner{\widetilde{\nabla}_\theta \mcL(\theta^{t},\mu^{t+1})}{\theta^{t+1}-\theta^t} + \inner{\nabla_\theta \mcL(\theta^{t},\mu^{t+1})-\widetilde{\nabla}_\theta \mcL(\theta^{t},\mu^{t+1})}{\theta^{t+1}-\theta^t}\\
&\geq \dfrac{1}{\eta_\theta}\norm{\theta^{t+1}-\theta^t}_2^2 + \inner{\nabla_\theta \mcL(\theta^{t},\mu^{t+1})-\widetilde{\nabla}_\theta \mcL(\theta^{t},\mu^{t+1})}{\theta^{t+1}-\theta^t}\\
&\geq \dfrac{1}{\eta_\theta}\norm{\theta^{t+1}-\theta^t}_2^2 - \dfrac{\eta_\theta}{2} \norm{\nabla_\theta \mcL(\theta^{t},\mu^{t+1})-\widetilde{\nabla}_\theta \mcL(\theta^{t},\mu^{t+1})}_2^2 - \dfrac{1}{2\eta_\theta}\norm{\theta^{t+1}-\theta^t}_2^2 \\
&=\dfrac{1}{2\eta_\theta}\norm{\theta^{t+1}-\theta^t}_2^2 - \dfrac{\eta_\theta}{2} \norm{\nabla_\theta \mcL(\theta^{t},\mu^{t+1})-\widetilde{\nabla}_\theta \mcL(\theta^{t},\mu^{t+1})}_2^2\\
&\geq \dfrac{1}{2\eta_\theta}\norm{\theta^{t+1}-\theta^t}_2^2 - \dfrac{\eta_\theta}{2}\epsilon_\theta,
\end{aligned}
\end{equation}
where the first inequality uses \eqref{eq:by_projection_second} by taking $\theta = \theta^t$, and the second inequality is again due to Cauchy's inequality.

Substituting \eqref{eq:Lt_minus_Lt} and \eqref{eq:gradL_theta_t} into the right-hand side of \eqref{eq:L_t+1_recursion}, we have that
\begin{equation}\label{eq:L_recursion_continue}
\begin{aligned}
&\ \quad \mcL^{t+1}(\mu^{t+2})-\mcL^{t}(\mu^{t+1}) \\
&\ {\geq}  - \Ltu\norm{\mu^{t+2}-\mu^{t+1}}_2\norm{\theta^{t+1}-\theta^t}_2-\dfrac{\Ltt}{2}\norm{\theta^{t+1}-\theta^t}_2^2  +\dfrac{1}{2\eta_\theta}\norm{\theta^{t+1}-\theta^t}_2^2 +\dfrac{1}{4\eta_\mu} \norm{\mu^{t+2}-\mu^{t+1}}_2^2\\
&\ \quad - \dfrac{\eta_\theta}{2} \epsilon_\theta -{\eta_\mu\epsilon_\mu}\\
&\ {=} - \Ltu\norm{\mu^{t+2}-\mu^{t+1}}_2\norm{\theta^{t+1}-\theta^t}_2 + \left(\dfrac{1}{2\eta_\theta}-\dfrac{\Ltt}{2}\right) \norm{\theta^{t+1}-\theta^t}_2^2 +\dfrac{1}{4\eta_\mu} \norm{\mu^{t+2}-\mu^{t+1}}_2^2 \\
&\ \quad - \dfrac{\eta_\theta}{2} \epsilon_\theta -\eta_\mu \epsilon_\mu\\
&{\overset{(\Delta)}{\geq}} -\Ltu \left(\dfrac{1}{4\Ltu\eta_\mu}\norm{\mu^{t+2}-\mu^{t+1}}_2^2 + {\Ltu\eta_\mu}\norm{\theta^{t+1}-\theta^t}_2^2 \right)+ \left(\dfrac{1}{2\eta_\theta}-\dfrac{\Ltt}{2}\right) \norm{\theta^{t+1}-\theta^t}_2^2 \\
&\ \quad+\dfrac{1}{4\eta_\mu} \norm{\mu^{t+2}-\mu^{t+1}}_2^2 - \dfrac{\eta_\theta}{2} \epsilon_\theta -\eta_\mu \epsilon_\mu\\
&\ {=}\left(\dfrac{1}{2\eta_\theta}-\dfrac{\Ltt}{2}-{L_{\theta\mu}^2\eta_\mu}\right) \norm{\theta^{t+1}-\theta^t}_2^2  - \dfrac{\eta_\theta}{2} \epsilon_\theta -\eta_\mu \epsilon_\mu\\
&\ {=} {L_{\theta\mu}^2\eta_\mu}\norm{\theta^{t+1}-\theta^t}_2^2 - \dfrac{\eta_\theta}{2} \epsilon_\theta -\eta_\mu \epsilon_\mu,
\end{aligned}
\end{equation}
where we apply the Cauchy's inequality to the term $\norm{\mu^{t+2}-\mu^{t+1}}_2\norm{\theta^{t+1}-\theta^t}_2$ in ($\Delta$) and substitute in the value $\eta_\theta = {1}\big/\big({\Ltt+{4L_{\theta\mu}^2\eta_\mu}}\big)$ in the last line.
Therefore, by rearranging the terms in \eqref{eq:L_recursion_continue}, we obtain the desired upper bound on $\norm{\theta^{t+1}-\theta^t}_2^2$, i.e.,
\begin{equation}\label{eq:theta_t_diff_upper}
\norm{\theta^{t+1}-\theta^t}_2^2\leq \dfrac{1}{L_{\theta\mu}^2\eta_\mu} \cdot \left[\mcL^{t+1}(\mu^{t+2})-\mcL^{t}(\mu^{t+1})+ \dfrac{\eta_\theta}{2} \epsilon_\theta +\eta_\mu \epsilon_\mu  \right].
\end{equation}
We remark that \eqref{eq:theta_t_diff_upper} also implies the terms on the right-hand side must be strictly nonnegative.
Returning back to \eqref{eq:true_grad_L_square_bound} with the above inequality, we deduce that
\begin{equation}\label{eq:to_be_telescope}
\begin{aligned}
&\quad \left[\mc{X}\left(\theta^t,\mu^{t+1}\right)\right]^2\\
&\leq 2\left(\dfrac{1}{\eta_\theta}  + M_\theta\right)^2\dfrac{1}{L_{\theta\mu}^2\eta_\mu} \cdot \left[\mcL^{t+1}(\mu^{t+2})-\mcL^{t}(\mu^{t+1})+ \dfrac{\eta_\theta}{2} \epsilon_\theta +\eta_\mu \epsilon_\mu  \right] + 2\epsilon_\theta\\
&= 2\left({\Ltt+{4L_{\theta\mu}^2\eta_\mu}} + M_\theta\right)^2\dfrac{1}{L_{\theta\mu}^2\eta_\mu} \cdot \left[\mcL^{t+1}(\mu^{t+2})-\mcL^{t}(\mu^{t+1})+ \dfrac{\eta_\theta}{2} \epsilon_\theta +\eta_\mu \epsilon_\mu  \right] + 2\epsilon_\theta\\
&=2\left[\dfrac{(\Ltt+M_\theta)^2}{L_{\theta\mu}^2\eta_\mu}+ 8(\Ltt+M_\theta)+{16L_{\theta\mu}^2\eta_\mu} \right]\left[\mcL^{t+1}(\mu^{t+2})-\mcL^{t}(\mu^{t+1})+ \dfrac{\eta_\theta}{2} \epsilon_\theta +\eta_\mu \epsilon_\mu  \right] + 2\epsilon_\theta,
\end{aligned}
\end{equation}
where the first equality follows from substituting in the value of $\eta_\theta$
We sum the inequality \eqref{eq:to_be_telescope} over $t=0,1,\dots,T-1$ and divide it by $T$, which yields that
\begin{equation}\label{eq:sum_grad_L_square}
\begin{aligned}
&\quad\dfrac{1}{T}\sum_{t=0}^{T-1}\left[\mc{X}\left(\theta^t,\mu^{t+1}\right)\right]^2\\
&\leq 2\left[\dfrac{(\Ltt+M_\theta)^2}{L_{\theta\mu}^2\eta_\mu}+ 8(\Ltt+M_\theta)+{16L_{\theta\mu}^2\eta_\mu} \right]\cdot \left[\dfrac{\sum_{t=0}^{T-1}\left[\mcL^{t+1}(\mu^{t+2})-\mcL^{t}(\mu^{t+1})\right]}{T}+ \dfrac{\eta_\theta}{2}\epsilon_\theta+ {\eta_\mu\epsilon_\mu}\right]\\
&\quad  + 2\epsilon_\theta\\
&=2\left[\dfrac{(\Ltt+M_\theta)^2}{L_{\theta\mu}^2\eta_\mu}+ 8(\Ltt+M_\theta)+{16L_{\theta\mu}^2\eta_\mu} \right]\cdot \left[\dfrac{\left[\mcL^{T}(\mu^{T+1})-\mcL^{0}(\mu^{1})\right]}{T}+ \dfrac{\eta_\theta}{2}\epsilon_\theta + {\eta_\mu\epsilon_\mu}\right] + 2\epsilon_\theta\\
&\leq 2\left[\dfrac{(\Ltt+M_\theta)^2}{L_{\theta\mu}^2\eta_\mu}+ 8(\Ltt+M_\theta)+{16L_{\theta\mu}^2\eta_\mu} \right]\cdot \left[\dfrac{2M_L}{T}+ \dfrac{n\overline{\mu}^2}{2\eta_\mu T} + \dfrac{\epsilon_\theta}{{2\Ltt+{8L_{\theta\mu}^2\eta_\mu}}} + {\eta_\mu\epsilon_\mu}\right] + 2\epsilon_\theta,
\end{aligned}
\end{equation}
where the equality is due to a telescoping sum. The last line follows from the choice of $\eta_\theta$ and the boundedness of $\mcL^t(\cdot)$. Specifically, by Assumption \ref{assump:dual_variable} and Lemma \ref{lemma:properties}, we have that
\begin{equation*}
\begin{aligned}
\left|\mcL^{T}(\mu^{T+1})-\mcL^{0}(\mu^{1})\right| 
&= \left|\mcL(\theta^T,\mu^{T+1}) + \dfrac{1}{2\eta_\mu}\norm{\mu^{T+1}}_2^2 -\mcL(\theta^0,\mu^{1}) -\dfrac{1}{2\eta_\mu}\norm{\mu^1}_2^2  \right|\\
&\leq \left|\mcL(\theta^T,\mu^{T+1})\right| + \left|\mcL(\theta^0,\mu^{1})\right| + \dfrac{1}{2\eta_\mu}\max_{\mu\in \mc{U}}\norm{\mu}_2^2\\
&\leq 2M_L+\dfrac{1}{2\eta_\mu}n\overline{\mu}^2.
\end{aligned}
\end{equation*}

Now, we focus on evaluating the dual stationarity metric $\mc{Y}(\theta,\mu)$ defined in \eqref{eq:metrics}. 
Firstly, we recall that the dual gradient is equal to the values of constraint functions and is irrelevant to the value of the dual variable, i.e., $\nabla_\mu \mcL(\theta,\mu) = \nabla_\mu \mcL(\theta,\mu^\prime)$, $\forall \mu,\mu^\prime$.
Then, for any $ t= 0,1,\dots,T-1$, we have that
\begin{equation}\label{eq:D_theta_mu_start}
\begin{aligned}
&\quad \mc{Y}(\theta^t,\mu^{t+1}) \\
&= -\min_{\mu\in \mc{U},\norm{\mu-\mu^{t+1}}_2\leq 1} \inner{\nabla_\mu \mcL(\theta^t,\mu^{t})}{\mu-\mu^{t+1}}\\
&=-\min_{\mu\in \mc{U},\norm{\mu-\mu^{t+1}}_2\leq 1}\left\{ \inner{\widetilde{\nabla}_\mu \mcL(\theta^t,\mu^{t})}{\mu-\mu^{t+1}}
  + \inner{\nabla_\mu \mcL(\theta^t,\mu^{t})-\widetilde{\nabla}_\mu \mcL(\theta^t,\mu^{t})}{\mu-\mu^{t+1}}\right\}\\
&=-\min_{\mu\in \mc{U},\norm{\mu-\mu^{t+1}}_2\leq 1} \left\{  \inner{\nabla_\mu \widetilde{\mcL}^t(\mu^{t+1})-\dfrac{1}{\eta_\mu}\mu^{t+1}}{\mu-\mu^{t+1}} + \inner{\nabla_\mu \mcL(\theta^t,\mu^{t})-\widetilde{\nabla}_\mu \mcL(\theta^t,\mu^{t})}{\mu-\mu^{t+1}}\right\},
\end{aligned}
\end{equation}
where we use the definition of $\widetilde{\mcL}^t(\cdot)$ in the last equality.
Since $\mu^{t+1}$ is the minimizer of the convex quadratic function $\widetilde{\mcL}^t(\cdot)$ in $\mc{U}$, it follows that
\begin{equation*}
\inner{\nabla_\mu \widetilde{\mcL}^t(\mu^{t+1})}{\mu-\mu^{t+1}} \geq 0,\ \forall \mu \in\mc{U}.
\end{equation*}
Substituting the above inequality into \eqref{eq:D_theta_mu_start}, we conclude that
\begin{equation}\label{eq:D_theta_t_mu_t+1}
\begin{aligned}
&\quad \mc{Y}(\theta^t,\mu^{t+1})\\ 
&\leq -\min_{\mu\in \mc{U},\norm{\mu-\mu^{t+1}}_2\leq 1} \left\{  -\dfrac{1}{\eta_\mu}\inner{\mu^{t+1}}{\mu-\mu^{t+1}} + \inner{\nabla_\mu \mcL(\theta^t,\mu^{t})-\widetilde{\nabla}_\mu \mcL(\theta^t,\mu^{t})}{\mu-\mu^{t+1}}\right\}\\
&=\max_{\mu\in \mc{U},\norm{\mu-\mu^{t+1}}_2\leq 1}\left\{  \dfrac{1}{\eta_\mu}\inner{\mu^{t+1}}{\mu-\mu^{t+1}} + \inner{\nabla_\mu \mcL(\theta^t,\mu^{t})-\widetilde{\nabla}_\mu \mcL(\theta^t,\mu^{t})}{\mu^{t+1}-\mu}\right\}\\
&\leq \max_{\mu\in \mc{U},\norm{\mu-\mu^{t+1}}_2\leq 1}\left\{  \dfrac{1}{\eta_\mu}\norm{\mu^{t+1}}_2\norm{\mu^{t+1}-\mu}_2 + \norm{\nabla_\mu \mcL(\theta^t,\mu^{t})-\widetilde{\nabla}_\mu \mcL(\theta^t,\mu^{t})}_2\norm{\mu^{t+1}-\mu}_2\right\}\\
&\leq \dfrac{1}{\eta_\mu}\norm{\mu^{t+1}}_2 + \norm{\nabla_\mu \mcL(\theta^t,\mu^{t})-\widetilde{\nabla}_\mu \mcL(\theta^t,\mu^{t})}_2\\
&\leq \dfrac{1}{\eta_\mu}\sqrt{n}\overline{\mu} + \sqrt{\epsilon_\mu}.
\end{aligned}
\end{equation}
In the exact setting, according to Remark \ref{rmk:exact_setting_error}, inequality\eqref{eq:sum_grad_L_square} can be simplified by taking $\epsilon_\mu = 0$ and $\epsilon_\theta = n\epsilon_3$:
\begin{equation*}
\begin{aligned}
&\quad\dfrac{1}{T}\sum_{t=0}^{T-1}\left[\mc{X}\left(\theta^t,\mu^{t+1}\right)\right]^2\\
&\leq 2\left[\dfrac{(\Ltt+M_\theta)^2}{L_{\theta\mu}^2\eta_\mu}+ 8(\Ltt+M_\theta)+{16L_{\theta\mu}^2\eta_\mu} \right]\cdot \left[\dfrac{2M_L}{T}+ \dfrac{n\overline{\mu}^2}{2\eta_\mu T} + \dfrac{n\epsilon_3}{{2\Ltt+{8L_{\theta\mu}^2\eta_\mu}}}\right] + 2n\epsilon_3\\
&= 2\left[\mc{O}\left(T^{-1/3}\right)+ \mc{O}(1)+\mc{O}\left(T^{1/3}\right) \right]\cdot \left[\mc{O}\left(T^{-1}\right)+ \mc{O}\left(T^{-4/3}\right) + \dfrac{n\epsilon_3}{{\mc{O}\left(1+T^{1/3}\right)}}\right] + 2n\epsilon_3\\
&\leq \mc{O}\left(T^{1/3}\right)\cdot \left[ \mc{O}\left(T^{-1}\right)+ {n\epsilon_3}\cdot\mc{O}\left(T^{-1/3}\right) \right]+2n\epsilon_3\\
&= \mc{O}\left(T^{-2/3}\right) + \mc{O}(\epsilon_3)\\
&= \mc{O}\left(T^{-2/3}\right) + \mc{O}(\phi_0^{2\kappa}),
\end{aligned}
\end{equation*}
where the last equality follows from the definition of $\epsilon_3$ in \eqref{eq:def_epsilon}. 
Since $1/T\cdot \sum_{t=0}^{T-1}\left[\mc{X}\left(\theta^t,\mu^{t+1}\right)\right]^2$ is the average of $T$ non-negative numbers, there must exist $t^\star\in \{0,1,\dots,T-1\}$ such that 
\begin{equation*}
\left[\mc{X}\left(\theta^{t^\star}, \mu^{t^{\star}+1}\right)\right]_2^2 = \mc{O}\left(T^{-2/3}\right) + \mc{O}(\phi_0^{2\kappa}).
\end{equation*}

Therefore, it follows from inequality \eqref{eq:D_theta_t_mu_t+1} that
\begin{equation*}
\begin{aligned}
\mc{E}\left(\theta^{t^\star}, \mu^{t^\star+1}\right) 
&= \left[\mc{X}\left(\theta^{t^\star}, \mu^{t^{\star}+1}\right)\right]_2^2 +\left[\mc{Y}\left(\theta^{t^\star}, \mu^{t^\star+1}\right)\right]^2\\
&\leq   \mc{O}\left(T^{-2/3}\right) + \mc{O}(\phi_0^{2\kappa})+\left(\dfrac{1}{\eta_\mu}\sqrt{n}\overline{\mu}\right)^2 \\
&=  \mc{O}\left(T^{-2/3}\right) + \mc{O}(\phi_0^{2\kappa}),
\end{aligned}
\end{equation*}
which completes the proof.
\end{proof}

\begin{proof}[Proof of Theorem \ref{thm:sample_complexity}]
As stated in Proposition \ref{prop:estimators}, for any fixed $t\geq 0$, the empirical gradient estimators have the error bounds \eqref{eq:prop_dual_gradient} and \eqref{eq:prop_policy_gradient} with probability $1-2n\delta_0$.
By applying the union bound, we have that the error bounds are met for all $t=0,1,\dots,T$ with probability $1-(T+1)\cdot(2n\delta_0) = 1-\delta$.
Assuming that the error bounds hold true, the previously derived inequalities \eqref{eq:sum_grad_L_square} and \eqref{eq:D_theta_t_mu_t+1} are applicable.
In particular, for the primal stationarity metric $\mc{X}(\theta,\mu)$, we have that
\begin{equation}\label{eq:sum_grad_L_square_new}
\begin{aligned}
 &\quad \dfrac{1}{T}\sum_{t=0}^{T-1}\left[\mc{X}\left(\theta^t,\mu^{t+1}\right)\right]^2\\ 
&\leq 2\left[\dfrac{(\Ltt+M_\theta)^2}{L_{\theta\mu}^2\eta_\mu}+ 8(\Ltt+M_\theta)+{16L_{\theta\mu}^2\eta_\mu} \right]\cdot \left[\dfrac{2M_L}{T}+ \dfrac{n\overline{\mu}^2}{2\eta_\mu T} + \dfrac{\epsilon_\theta}{{2\Ltt+{8L_{\theta\mu}^2\eta_\mu}}} + {\eta_\mu\epsilon_\mu}\right] + 2\epsilon_\theta.
\end{aligned}
\end{equation}
With the choice of the batch size $B= \mc{O}\left(\log(1/\delta_0)\epsilon^{-2}\right)$ and episode length $H = \log(1/\epsilon)$ as stated in Theorem \ref{thm:sample_complexity}, it holds that
\begin{equation}
\begin{aligned}
\epsilon_\mu =  \frac{\left(M_r\epsilon_1(\delta_0)\right)^2}{{n}}= \dfrac{M_r^2}{n}  \cdot \frac{4+2\gamma^{2H}B-16\log\delta_0}{(1-\gamma)^2B} = \mc{O}\left(\dfrac{1}{B}+\gamma^{2H}+ \dfrac{\log (1/\delta_0)}{B} \right)=\mc{O}(\epsilon^2).
\end{aligned}
\end{equation}

Similarly, since $\epsilon_0 = \sqrt{\epsilon}$, the size of the policy gradient approximation error can be evaluated as
\begin{equation}
\begin{aligned}
\epsilon_\theta 
&= \left(\sum_{i\in \mcN}\dfrac{|\mcNk_i|^2}{n^2}\right) \epsilon_2(\delta_0) + n\epsilon_3\\
&=\mc{O}\left(\epsilon_0^2 +\dfrac{\log(1/\delta_0)}{B}+\gamma^{2H} +\phi_0^{2\kappa} \right) = \mc{O}\left(\epsilon + \epsilon^2+\phi_0^{2\kappa} \right) = \mc{O}\left(\epsilon +\phi_0^{2\kappa} \right).
\end{aligned}
\end{equation}
Therefore, since the step-sizes are chosen as $\eta_\mu = \mc{O}\left(\epsilon^{-0.5}\right)$ and $\eta_\theta = {1}\big/\big({\Ltt+{4L_{\theta\mu}^2\eta_\mu}}\big)$, and the number of periods is $T = \mc{O}\left(\epsilon^{-1.5}\right)$, we deduce from \eqref{eq:sum_grad_L_square_new} that
\begin{equation}
\begin{aligned}
&\quad\dfrac{1}{T}\sum_{t=0}^{T-1}\left[\mc{X}\left(\theta^t,\mu^{t+1}\right)\right]^2 \\
&=\left[\mc{O}\left(\sqrt{\epsilon}\right) + \mc{O}\left(1\right)+\mc{O}\left(\epsilon^{-0.5}\right) \right] \cdot \left[\mc{O}\left(\epsilon^{1.5}\right) + \mc{O}\left(\epsilon^2\right) + \dfrac{\mc{O}\left(\epsilon+\phi_0^{2\kappa}\right)}{\mc{O}\left(\epsilon^{-0.5}\right)}+\mc{O}\left(\epsilon^{1.5}\right)\right]+\mc{O}\left(\epsilon +\phi_0^{2\kappa} \right)\\
&=\mc{O}\left(\epsilon^{-0.5}\right)\cdot \left[\mc{O}\left(\epsilon^{1.5}\right)+ \dfrac{\mc{O}\left(\epsilon+\phi_0^{2\kappa}\right)}{\mc{O}\left(\epsilon^{-0.5}\right)} \right]+\mc{O}\left(\epsilon +\phi_0^{2\kappa} \right)\\
&=\mc{O}\left(\epsilon +\phi_0^{2\kappa} \right).
\end{aligned}
\end{equation}
As a result, there must exist $t^\star\in \{0,1,\dots,T-1\}$ that satisfies
\begin{equation*}
\left[\mc{X}\left(\theta^{t^\star}, \mu^{t^{\star}+1}\right)\right]_2^2 = \mc{O}\left(\epsilon\right) + \mc{O}(\phi_0^{2\kappa}).
\end{equation*}
At the meanwhile, it follows from \eqref{eq:D_theta_t_mu_t+1} that
\begin{equation*}
\mc{Y}\left(\theta^{t^\star}, \mu^{t^\star+1}\right) \leq \dfrac{1}{\eta_\mu}\sqrt{n}\overline{\mu} + \sqrt{\epsilon_\mu} = \mc{O}\left(\sqrt{\epsilon}+\epsilon\right) =\mc{O}\left(\sqrt{\epsilon}\right).
\end{equation*}
Thus, we conclude that 
\begin{equation}
\mc{E}\left(\theta^{t^\star}, \mu^{t^\star+1}\right) 
= \left[\mc{X}\left(\theta^{t^\star}, \mu^{t^{\star}+1}\right)\right]_2^2 +\left[\mc{Y}\left(\theta^{t^\star}, \mu^{t^\star+1}\right)\right]^2=\mc{O}\left(\epsilon\right) + \mc{O}(\phi_0^{2\kappa}) + \mc{O}(\epsilon) = \mc{O}\left(\epsilon\right) + \mc{O}(\phi_0^{2\kappa}).
\end{equation}
In each period, the number of samples required is
\begin{equation}
\begin{aligned}
{B\times H}+ {\mc{O}\left({1}/{(\epsilon_0)^2}\right)} 
&=  \mc{O}\left(\log(1/\delta_0)\epsilon^{-2}\right)\cdot \log(1/\epsilon) + \mc{O}\left(1/\epsilon\right)\\
& = \mc{O}\left(\log(T/\delta)\epsilon^{-2}\right)\cdot \log(1/\epsilon) + \mc{O}\left(1/\epsilon\right)\\
&= \mc{O}\left(\log(\epsilon^{-1.5}/\delta)\epsilon^{-2}\right)\cdot \log(1/\epsilon) + \mc{O}\left(1/\epsilon\right)\\
&=\widetilde{\mc{O}}\left(\epsilon^{-2}\right),
\end{aligned}
\end{equation}
where the first part comes from the trajectory sampling and the second part comes from the truncated shadow Q-function evaluation.
Therefore, the total number of samples required is $T\cdot \widetilde{\mc{O}}\left(\epsilon^{-2}\right) = \widetilde{\mc{O}}\left(\epsilon^{-3.5}\right)$.
This completes the proof.
\end{proof}

\subsection{Proof of Proposition \ref{prop:estimators}}\label{app:subsec_prop_estimators}
\begin{proof}[Proof of \eqref{eq:prop_occupancy_measure} and \eqref{eq:prop_shadow_reward}]
The proof can be found in \cite[Appendix D.1]{zhang2021marl}. For the sake of completeness, we will also provide it here.

Let $\mc{F}^{t-1}$ denote the $\sigma$-algebra generated by all trajectories sampled at $0,1,\dots,t-1$-th periods.
For any trajectory $\tau = \{(s^0,a^0),\cdots, (s^{H-1},a^{H-1})\}$ of length $H$, we use the shorthand notation $\lambda_i(\tau)\coloneqq  \sum_{k=0}^{H-1}\gamma^k \cdot \mathbbm{1}_i\left(s_{i}^{k}, a_{i}^{k}\right)$ to denote the empirical occupancy measure estimation along trajectory $\tau$.
Then, by the definition of $\widetilde{\lambda}^t_i$ in \eqref{eq:occupancy_estimate}, we have that $\widetilde{\lambda}^t_i = 1/B\cdot \sum_{\tau\in \mc{B}_i^t} \lambda_i(\tau)$ and thus
\begin{equation}\label{eq:lambda_sigma_algebra}
\begin{aligned}
\norm{\mbb{E}\left[\widetilde{\lambda}^t_i\big\vert \mc{F}^{t-1} \right]-\lambda^\ptt_i }_1 = \norm{ \mbb{E}\left[\frac{1}{B} \sum_{\tau \in \mathcal{B}^{t}_i} \lambda_i(\tau) \bigg\vert \mc{F}^{t-1} \right] - \lambda^\ptt_i}_1\leq \dfrac{\gamma^H}{1-\gamma}.
\end{aligned}
\end{equation}
Additionally, since it always holds that $\|\lambda_i(\tau)\|_2^2\leq \|\lambda_i(\tau)\|_1^2\leq 1/(1-\gamma)^2$,  by \cite[Lemma 18]{kohler2017sub}, we have that for an agent $i\in \mcN$,
\begin{equation*}
\mbb{P}\left(\norm{\widetilde{\lambda}^t_i - \mbb{E}\left[\widetilde{\lambda}^t_i\big\vert \mc{F}^{t-1} \right]}_2^2\geq \epsilon \right)\leq \exp\left(-\dfrac{2+(1-\gamma)^2\epsilon B}{8}\right).
\end{equation*}
By setting $\epsilon = \frac{2-8 \log \delta_0}{(1-\gamma)^2 B}$, the above equation becomes 
\begin{equation*}
\mbb{P}\left(\norm{\widetilde{\lambda}^t_i - \mbb{E}\left[\widetilde{\lambda}^t_i\big\vert \mc{F}^{t-1} \right]}_2^2\geq \dfrac{2-8 \log \delta_0}{(1-\gamma)^2 B}\right) \leq \delta_0.
\end{equation*}
Together with \eqref{eq:lambda_sigma_algebra}, we derive that with probability at least $1-\delta_0$, it holds that
\begin{equation}\label{eq:occupancy_measues_conc}
\begin{aligned}
\norm{\widetilde{\lambda}^t_i -\lambda^\ptt_i}_2^2 
&\leq 2\norm{\widetilde{\lambda}^t_i - \mbb{E}\left[\widetilde{\lambda}^t_i\big\vert \mc{F}^{t-1} \right]}_2^2 + 2\norm{\mbb{E}\left[\widetilde{\lambda}^t_i\big\vert \mc{F}^{t-1} \right]-\lambda^\ptt_i }_2^2\\
&\leq \dfrac{2\gamma^{2H}}{(1-\gamma)^2} + \dfrac{4-16 \log \delta_0}{(1-\gamma)^2 B}\\
&=\frac{4+2\gamma^{2H}B-16\log\delta_0}{(1-\gamma)^2B} =:\big(\epsilon_1(\delta_0)\big)^2,
\end{aligned}
\end{equation}
where the first inequality follows from the fact that $\|x+y\|_2^2\leq 2\|x\|_2^2+2\|y\|_2^2$ for any two vectors $x,y$.
Thus, by applying the union bound, we know that with probability $1-n\delta_0$, \eqref{eq:occupancy_measues_conc} holds for every agent $i\in \mcN$.

When \eqref{eq:occupancy_measues_conc} holds for all agents, by the Lipschitz continuity of $\nabla_{\lambda_i}f_i(\cdot)$ and $\nabla_{\lambda_i}g_i(\cdot)$ (see Assumption \ref{assump:utility}), we have that
\begin{equation}
\begin{aligned}
\norm{\widetilde{r}_{f_i}^t - r_{f_i}^\ptt}_\infty = \norm{\nabla_{\lambda_i}f_i(\widetilde{\lambda}^t_i) - \nabla_{\lambda_i}f_i({\lambda}^\ptt_i)}_\infty\leq L_\lambda\norm{\widetilde{\lambda}^t_i -\lambda^\ptt_i}_2\leq L_\lambda \epsilon_1(\delta_0).
\end{aligned}
\end{equation}
This also holds for the constraint shadow rewards $r_{g_i}$, which completes the proof of \eqref{eq:prop_occupancy_measure} and \eqref{eq:prop_shadow_reward}.
\end{proof}

\begin{proof}[Proof of \eqref{eq:prop_shadow_q}]
We note that Line 6 of Algorithm \ref{alg:pdac} aims at estimating the truncated Q-function $\widehat{Q}^\ptt_{\diamondsuit_i}$ with the empirical shadow reward $\widetilde{r}^t_{\diamondsuit_i}$.
Thus, the approximation error in this step can be attributed to two factors: the imprecision of the empirical shadow reward and the evaluation subroutine used.
Recall that we denote $\widehat{Q}_i^\pt(r_i;\cdot,\cdot)$ as the true truncated local Q-function with reward $r_i(\cdot,\cdot)$ under policy $\pt$.
Then, we can decompose the approximation error as follows
\begin{equation}
\begin{aligned}
\norm{\widetilde{Q}^t_{\diamondsuit_i}-\widehat{Q}^\ptt_{\diamondsuit_i}}_\infty 
&\leq \norm{\widetilde{Q}^t_{\diamondsuit_i} - \widehat{Q}_i^\ptt(\widetilde{r}^t_{\diamondsuit_i};\cdot,\cdot)}_\infty + \norm{\widehat{Q}_i^\ptt(\widetilde{r}^t_{\diamondsuit_i};\cdot,\cdot) - \widehat{Q}^\ptt_{\diamondsuit_i}}_\infty\\
&\leq \norm{\widetilde{r}^t_{\diamondsuit_i}}_\infty\epsilon_0 + \norm{\widehat{Q}_i^\ptt(\widetilde{r}^t_{\diamondsuit_i} - r^\ptt_{\diamondsuit_i};\cdot,\cdot)}_\infty\\
&\leq M_r \epsilon_0+ \dfrac{\norm{\widetilde{r}^t_{\diamondsuit_i} - r^\ptt_{\diamondsuit_i}}_\infty}{1-\gamma},
\end{aligned}
\end{equation}
where the second inequality follows from Assumption \ref{assump:Q_estimation} and the third inequality is due to $\norm{\widehat{Q}^\pt_i(r_i;\cdot,\cdot)}\leq \norm{r_i}_\infty/(1-\gamma)$ for any reward function $r_i(\cdot,\cdot)$.
Therefore, when \eqref{eq:prop_shadow_reward} holds, which happens with probability $1-n\delta_0$, it also holds that
\begin{equation*}
\norm{\widetilde{Q}^t_{\diamondsuit_i}-\widehat{Q}^\ptt_{\diamondsuit_i}}_\infty\leq M_r \epsilon_0+ \dfrac{\norm{\widetilde{r}^t_{\diamondsuit_i} - r^\ptt_{\diamondsuit_i}}_\infty}{1-\gamma}\leq M_r\epsilon_0 + \dfrac{L_\lambda \epsilon_1(\delta_0)}{1-\gamma},\ \forall \diamondsuit\in \{f,g\}, i\in \mcN,
\end{equation*}
which completes the proof of \eqref{eq:prop_shadow_q}.
\end{proof}

\begin{proof}[Proof of \eqref{eq:prop_dual_gradient}]
The dual gradient is equal to the constraint function value, i.e., $\nabla_{\mu_i} \mcL(\theta^t,\mu^t) = G_i(\theta^t)/n = g_i(\lambda^\ptt_i)/n$, and the empirical estimator we use has the expression $\widetilde{\nabla}_{\mu_i}\mcL(\theta^t,\mu^t) = \widetilde{g}_i^t/n = g_i(\widetilde{\lambda}^t_i)/n$.
By Lemma \ref{lemma:properties} \textbf{\textit{(I)}}, the shadow rewards are bounded by the constant $M_r$, which is equivalent to
\begin{equation}\label{eq:boundedness_grad_g}
\max_{i\in \mcN, \theta\in \Theta}\norm{\nabla_{\lambda_i}g(\lambda^\pt_i)}_2 = \max_{i\in \mcN, \theta\in \Theta}\norm{r^\pt_{g_i}}_2\leq M_r.
\end{equation}
Thus, for every $i\in \mcN$, the constraint utility $g_i(\cdot)$ is $M_r$-Lipschitz continuous w.r.t. its local occupancy measure.
Therefore, it holds that
\begin{equation*}
\begin{aligned}
\norm{\widetilde{\nabla}_\mu \mcL(\theta^t,\mu^t) - {\nabla}_\mu \mcL(\theta^t,\mu^t)}_2^2 
&= \sum_{i\in \mcN} \left|\widetilde{\nabla}_{\mu_i} \mcL(\theta^t,\mu^t) - {\nabla}_{\mu_i} \mcL(\theta^t,\mu^t)\right|_2^2\\
&=\dfrac{1}{n^2}\sum_{i\in \mcN} \left|g_i(\widetilde{\lambda}^t_i) - g_i(\lambda^\ptt_i)\right|_2^2\\
&\leq \dfrac{M_r^2}{n^2}\sum_{i\in \mcN} \norm{\widetilde{\lambda}^t_i-\lambda^\ptt_i}_2^2,
\end{aligned}
\end{equation*}
where we use the mean value theorem and \eqref{eq:boundedness_grad_g} in the last line.
Thus, when \eqref{eq:prop_occupancy_measure} holds, which happens with probability $1-n\delta_0$, we have that
\begin{equation*}
\norm{\widetilde{\nabla}_\mu \mcL(\theta^t,\mu^t) - {\nabla}_\mu \mcL(\theta^t,\mu^t)}_2^2\leq \frac{M_r^2}{n^2}\cdot{n\big(\epsilon_1(\delta_0)\big)^2} = \frac{\left(M_r\epsilon_1(\delta_0)\right)^2}{{n}},
\end{equation*}
which completes the proof or \eqref{eq:prop_dual_gradient}.
\end{proof}

\begin{proof}[Proof of \eqref{eq:prop_policy_gradient}]
 Similar to the proof of \eqref{eq:prop_occupancy_measure}, we denote $\mc{F}^{t-1}$ as the $\sigma$-algebra generated by all trajectories sampled at $0,1,\dots,t-1$-th periods.
For each $i\in \mcN$, let
\begin{equation}\label{eq:hat_nabla_L}
\mcL_i^t\coloneqq  \frac{1}{B}\sum_{\tau\in \mc{B}^t_i} \bigg[\sum_{k=0}^{H-1}\gamma^k 
\nabla_{\theta_i}\log \pti (a_i^k\vert s^k_{\mcNk_i})\cdot \frac{1}{n} \sum_{j\in \mcNk_i} \left[\widehat{Q}^\ptt_{f_j}(s_{\mcNk_j}^k,a_{\mcNk_j}^k)+\mu_j^{t+1}\widehat{Q}^\ptt_{g_j}(s_{\mcNk_j}^k,a_{\mcNk_j}^k)\right]\bigg].
\end{equation}
Note that the only distinction between $\mcL_i^t$ and $\widetilde{\nabla}_{\theta_i}\mcL(\theta^t,\mu^{t+1})$ is in the Q-function term, where we use the true truncated Q-functions in the definition of $\mcL_i^t$.
Then, the difference $\norm{\widetilde{\nabla}_{\theta_i} \mcL(\theta^t,\mu^{t+1}) - {\nabla}_{\theta_i} \mcL(\theta^t,\mu^{t+1})}_2^2$ can be decomposed into the following four parts
\begin{equation}\label{eq:divide_4_T}
\begin{aligned}
\norm{\widetilde{\nabla}_{\theta_i} \mcL(\theta^t,\mu^{t+1}) - {\nabla}_{\theta_i} \mcL(\theta^t,\mu^{t+1})}_2^2&\leq 4\Bigg[\underbrace{\norm{\widetilde{\nabla}_{\theta_i} \mcL(\theta^t,\mu^{t+1}) - \mcL_i^t}_2^2}_{\mc{T}_1}
+ \underbrace{\norm{\mcL_i^t - \mbb{E}\left[\mcL_i^t\big \vert \mc{F}^{t-1}\right]}_2^2}_{\mc{T}_2}\\
&\quad+ \underbrace{\norm{\mbb{E}\left[\mcL_i^t\big \vert \mc{F}^{t-1}\right] - \widehat{\nabla}_{\theta_i}\mcL(\theta^t,\mu^{t+1})}_2^2}_{\mc{T}_3}\\
&\quad+\underbrace{\norm{ \widehat{\nabla}_{\theta_i}\mcL(\theta^t,\mu^{t+1})- {\nabla}_{\theta_i}\mcL(\theta^t,\mu^{t+1})}_2^2}_{\mc{T}_4}\Bigg],
\end{aligned}
\end{equation}
where we used the inequality that $\norm{\sum_{j=1}^J x_j}_2^2\leq J\sum_{j=1}^J \norm{x_j}_2^2$.
Below, we separately upper-bound the terms ${\mc{T}_1}$ -  $\mc{T}_4$.
Firstly, by the boundedness of score function (see Assumption \ref{assump:policy}) and the dual variable, we can write that
\begin{equation}\label{eq:prop_T_1}
\begin{aligned}
\mc{T}_1
&\leq \norm{\frac{1}{B}\sum_{\tau\in \mc{B}^t_i} \bigg[\sum_{k=0}^{H-1}\gamma^k 
\nabla_{\theta_i}\log \pti (a_i^k\vert s^k_{\mcNk_i})\cdot \frac{1}{n} \sum_{j\in \mcNk_i} \left[\norm{\widetilde{Q}^t_{f_j}-\widehat{Q}^\ptt_{f_j}}_\infty+\big|\mu_j^{t+1}\big|\norm{\widetilde{Q}^t_{g_j}-\widehat{Q}^\ptt_{g_j}}_2\right]\bigg]}_2^2\\
&\leq \norm{\frac{1}{B}\sum_{\tau\in \mc{B}^t_i} \bigg[\sum_{k=0}^{H-1}\gamma^k 
\nabla_{\theta_i}\log \pti (a_i^k\vert s^k_{\mcNk_i})\bigg]}_2^2\cdot \left[ \dfrac{|\mcNk_i|(1+\overline{\mu})}{n}\left(M_r\epsilon_0 + \dfrac{L_\lambda \epsilon_1(\delta_0)}{1-\gamma}\right)\right]^2\\
&\leq \left(\dfrac{M_\pi}{1-\gamma}\right)^2\cdot \left[\dfrac{|\mcNk_i|(1+\overline{\mu})}{n}\left(M_r\epsilon_0 + \dfrac{L_\lambda \epsilon_1(\delta_0)}{1-\gamma}\right)\right]^2\\
&=\left[\dfrac{|\mcNk_i|(1+\overline{\mu})M_\pi}{n(1-\gamma)}\left(M_r\epsilon_0 + \dfrac{L_\lambda \epsilon_1(\delta_0)}{1-\gamma}\right)\right]^2,
\end{aligned}
\end{equation}
where we assume the upper bound in \eqref{eq:prop_shadow_q} holds in the second inequality, which happens with probability $1-n\delta_0$.

To upper-bound $\mc{T}_2$, we use a similar argument as \eqref{eq:lambda_sigma_algebra}. 
For any trajectory $\tau = \{(s^0,a^0),\cdots, (s^{H-1},a^{H-1})\}$ of length $H$, we define the shorthand notation $\mc{G}_i(\tau)$ as
\begin{equation}
\mc{G}_i(\tau)\coloneqq \sum_{k=0}^{H-1}\gamma^k 
\nabla_{\theta_i}\log \pti (a_i^k\vert s^k_{\mcNk_i})\cdot \frac{1}{n} \sum_{j\in \mcNk_i} \left[\widehat{Q}^\ptt_{f_j}(s_{\mcNk_j}^k,a_{\mcNk_j}^k)+\mu_j^{t+1}\widehat{Q}^\ptt_{g_j}(s_{\mcNk_j}^k,a_{\mcNk_j}^k)\right].
\end{equation}
Then, it is clear from \eqref{eq:hat_nabla_L} that $\mcL_i^t = 1/B\cdot \sum_{\tau\in \mc{B}^t_i} \mc{G}_i(\tau)$.
By the boundedness of the score function, dual variable, and the shadow Q-function, we have that
\begin{equation}\label{eq:G_tau_bound}
\begin{aligned}
\norm{\mc{G}_i(\tau)}_2^2 
&\leq \left(\dfrac{|\mcNk_i|(1+\overline{\mu})M_r}{n(1-\gamma)}\right)^2 \cdot \norm{\sum_{k=0}^{H-1}\gamma^k 
\nabla_{\theta_i}\log \pti (a_i^k\vert s^k_{\mcNk_i})}_2^2\\
&\leq \left(\dfrac{|\mcNk_i|(1+\overline{\mu})M_r}{n(1-\gamma)}\right)^2 \cdot \dfrac{M_\pi^2}{(1-\gamma)^2}\\
&=\left(\dfrac{|\mcNk_i|(1+\overline{\mu})M_rM_\pi}{n(1-\gamma)^2}\right)^2,
\end{aligned}
\end{equation}
where we bound the Q-function terms in the first step and apply the boundedness of the score function in the second step.
Again, it follows from \cite[Lemma 18]{kohler2017sub} that with probability $1-\delta_0$
\begin{equation}\label{eq:prop_T_2}
\mc{T}_2 = \norm{\mcL_i^t - \mbb{E}\left[\mcL_i^t\big \vert \mc{F}^{t-1}\right]}_2^2\leq \dfrac{2-8\log \delta_0}{B}\left(\dfrac{|\mcNk_i|(1+\overline{\mu})M_rM_\pi}{n(1-\gamma)^2}\right)^2.
\end{equation}

To upper-bound $\mc{T}_3$, which is the error due to trajectory truncation, we have that
\begin{equation}\label{eq:prop_T_3}
\begin{aligned}
\mc{T}_3 &\ {=} \norm{\mbb{E}\left[1/B\cdot \sum_{\tau\in \mc{B}^t_i} \mc{G}_i(\tau)\big\vert \mc{F}^{t-1}\right] - \widehat{\nabla}_{\theta_i}\mcL(\theta^t,\mu^{t+1})}_2^2\\
&\ {=}\norm{\mbb{E}\left[ \mc{G}_i(\tau)\big\vert \mc{F}^{t-1}\right] - \widehat{\nabla}_{\theta_i}\mcL(\theta^t,\mu^{t+1})}_2^2\\
&{\overset{(\Delta)}{=}} \norm{\mathbb{E}\left[\sum_{k=H}^\infty \gamma^k  \nabla_{\theta_i}\log \pti (a_i^k\vert s_{\mcNk_i}^k)\cdot \frac{1}{n}\sum_{j\in {\mcNk_i}}\left(\widehat{Q}^\pt_{f_j}(s_{\mcNk_j}^k,a_{\mcNk_j}^k)+\mu_j\widehat{Q}^\pt_{g_j}(s_{\mcNk_j}^k,a_{\mcNk_j}^k) \right) \right]}_2^2\\
&\ {\leq}   \left[\dfrac{M_\pi \gamma^H}{1-\gamma}   \cdot \dfrac{|\mcNk_i|(1+\overline{\mu})M_r}{n(1-\gamma)}\right]^2\\
&\ {=} \left[\dfrac{\gamma^H|\mcNk_i|(1+\overline{\mu})M_rM_\pi}{n(1-\gamma)^2}\right]^2,
\end{aligned}
\end{equation}
where $(\Delta)$ follows from the definition of the truncated policy gradient $\widehat{\nabla}_{\theta_i}\mcL(\theta^t,\mu^{t+1})$ (see \eqref{eq:truncated_gradient_sum}) and the inequality is due to a similar argument as \eqref{eq:G_tau_bound}.

Finally, the upper bound of the last term $\mc{T}_4$ is provided in Lemma \ref{lemma:truncated_grad}, and it holds that
\begin{equation}\label{eq:prop_T_4}
\mc{T}_4 = \norm{\widehat{\nabla}_{\theta_i}\mcL(\theta^t,\mu^{t+1})- {\nabla}_{\theta_i}\mcL(\theta^t,\mu^{t+1})}_2^2 \leq \left[\dfrac{(1+\norm{\mu}_\infty)M_\pi c_0\phi_0^\kappa}{1-\gamma}\right]^2\leq \left[\dfrac{(1+\overline{\mu})M_\pi c_0\phi_0^\kappa}{1-\gamma}\right]^2.
\end{equation}
Together, by substituting \eqref{eq:prop_T_1}, \eqref{eq:prop_T_2}, \eqref{eq:prop_T_3}, and \eqref{eq:prop_T_4} into \eqref{eq:divide_4_T}, we derive that
\begin{equation}\label{eq:grad_bound_i}
\begin{aligned}
&\quad\norm{\widetilde{\nabla}_{\theta_i} \mcL(\theta^t,\mu^{t+1}) - {\nabla}_{\theta_i} \mcL(\theta^t,\mu^{t+1})}_2^2 \\
&\leq 4\Bigg\{\left[\dfrac{|\mcNk_i|(1+\overline{\mu})M_\pi}{n(1-\gamma)}\left(M_r\epsilon_0 + \dfrac{L_\lambda \epsilon_1(\delta_0)}{1-\gamma}\right)\right]^2 + \dfrac{2-8\log \delta_0}{B}\left(\dfrac{|\mcNk_i|(1+\overline{\mu})M_rM_\pi}{n(1-\gamma)^2}\right)^2\\
&\quad+\left[\dfrac{\gamma^H|\mcNk_i|(1+\overline{\mu})M_rM_\pi}{n(1-\gamma)^2}\right]^2 + \left[\dfrac{(1+\overline{\mu})M_\pi c_0\phi_0^\kappa}{1-\gamma}\right]^2\Bigg\}\\
&=\dfrac{|\mcNk_i|^2}{n^2}\cdot \underbrace{4\left[\dfrac{(1+\overline{\mu})M_rM_\pi}{(1-\gamma)^2}\right]^2\cdot\left[\left((1-\gamma)\epsilon_0+\dfrac{L_\lambda \epsilon_1(\delta_0)}{M_r}\right)^2+ {\dfrac{2-8\log \delta_0}{B}}+\gamma^{2H}\right]}_{\epsilon_2(\delta_0)}\\
&\quad + \underbrace{4\left[\dfrac{(1+\overline{\mu})M_\pi c_0\phi_0^\kappa}{1-\gamma}\right]^2}_{\epsilon_3}\\
&=\dfrac{|\mcNk_i|^2}{n^2}\epsilon_2(\delta_0) + \epsilon_3.
\end{aligned}
\end{equation}
Note that, when \eqref{eq:prop_shadow_q} is satisfied (which has probability $1-n\delta_0$), \eqref{eq:grad_bound_i} has a failure probability of $\delta_0$ due to the probabilistic bound \eqref{eq:prop_T_2}.
Thus, by applying the union bound, we can conclude that \eqref{eq:grad_bound_i} holds for all agent $i\in \mcN$ with probability $1-2n\delta_0$.
Recall that $\widetilde{\nabla}_\theta \mcL(\theta^t,\mu^{t+1})$ is defined as the concatenation of local estimators $\left\{\widetilde{\nabla}_{\theta_i} \mcL(\theta^t,\mu^{t+1})\right\}_{i\in \mcN}$.
We conclude that with probability $1-2n\delta_0$, it holds that
\begin{equation}
\begin{aligned}
\norm{\widetilde{\nabla}_\theta \mcL(\theta^t,\mu^{t+1}) - {\nabla}_\theta \mcL(\theta^t,\mu^{t+1})}_2^2 
&=\sum_{i\in \mcN} \norm{\widetilde{\nabla}_{\theta_i} \mcL(\theta^t,\mu^{t+1}) - {\nabla}_{\theta_i} \mcL(\theta^t,\mu^{t+1})}_2^2\\
&\leq \left(\sum_{i\in \mcN}\dfrac{|\mcNk_i|^2}{n^2}\right) \epsilon_2(\delta_0) + n\epsilon_3.
\end{aligned}
\end{equation}
Finally, we remark that by definitions $\epsilon_3 = \mc{O}\left(\phi_0^{2\kappa}\right)$ and
\begin{equation}
\epsilon_2(\delta_0) 
= \mc{O}\left(\epsilon_0^2 + \left(\epsilon_1(\delta_0)\right)^2+\dfrac{\log(1/\delta_0)}{B}+\gamma^{2H}\right) = \mc{O}\left(\epsilon_0^2 +\dfrac{\log(1/\delta_0)}{B}+\gamma^{2H}\right),
\end{equation}
which completes the proof.

\end{proof}
\section{Numerical experiments}\label{app:sec_numerical}
In this section, we provide details on the experimental results.
First, in Appendices \ref{subsec: synthetic}-\ref{subsec: wireless}, we separately introduce the three environments considered in this work and discuss the performance of Algorithm \ref{alg:pdac} on these environments.
Then, in Appendix \ref{subsec: app_baseline}, we compare Algorithm \ref{alg:pdac} with three baselines based on the MAPPO-Lagrangian method by \cite{gu2021multi} in two standard safe MARL problems.
Finally, in Appendix \ref{subsec: benefits_general}, we illustrate the effectiveness of employing general utilities.

\subsection{Synthetic environment}\label{subsec: synthetic}

Consider an environment similar to that of \cite[Section 5.1]{qu2022scalable}, where the agents are placed along a line, i.e., $1 - 2 - \dots -n $.
The local state and action spaces of every agent $i$ are binary, i.e., $\mc{S}_i = \mc{A}_i = \{0,1\}$, with the transition dynamics specified as follows:
\begin{itemize}[leftmargin=*]
    \item For agent $1$, $s_1^{t+1} = 1$ if and only if $s_2^t = 1$.
    \item For agent $n$, $s_n^{t+1} = 1$ if and only if $a_n^t = 1$.
    \item For every agent $i\in \mcN\backslash\{1,n\}$, the local transition probability $\mbb{P}_i$ is specified by
\begin{equation*}
\hspace{-5mm}\mbb{P}_i(s_i^{t+1} = 1 \vert s^t,a^t) = 
\left\{\begin{array}{ll}
    1, & \text{if }a^t_i = 1, s^t_{i+1} = 1 \\
    0.8,& \text{if }a^t_i = 1, s^t_{i+1} = 0 \\
    0, & \text{otherwise.}
\end{array}\right.
\end{equation*}
\end{itemize}

\vspace{-2mm}
The goal of the agents is to jointly maximize a cumulative reward while complying with the exploration requirements, which can be formulated as
\begin{equation}\label{eq:formulation_exp}
\max_{\theta\in \Theta} \sum_{i\in \mcN} \inner{\lambda^\pt_i}{r_i}, \text{ s.t. } \operatorname{Entropy}(\lambda^\pt_i) \geq c,\ \forall i\in \mcN,
\end{equation}
where the local rewards only depend on the states of the agents with $r_1(1) = 1$ and $r_i(1) =0.1$, $\forall i\in \mcN\backslash\{1\}$. 
In all other cases, the reward is $0$.
The function $\operatorname{Entropy}(\lambda^\pt_i)= -\sum_{s\in \mc{S}} d_i^\pi(s)\cdot \log\big(d_i^\pi(s)\big)$ refers to the local entropy.
Without the constraint, it is clear that the optimal policy is that all agents take action $1$ regardless of their states.
However, the optimality of this policy is compromised in the presence of the entropy constraint, since the agents are restricted from taking the same action all the time.
\subsubsection{Experimental results}
\begin{figure}[tb] 
\centering
\begin{subfigure}{0.32\textwidth}
\includegraphics[width=\linewidth]{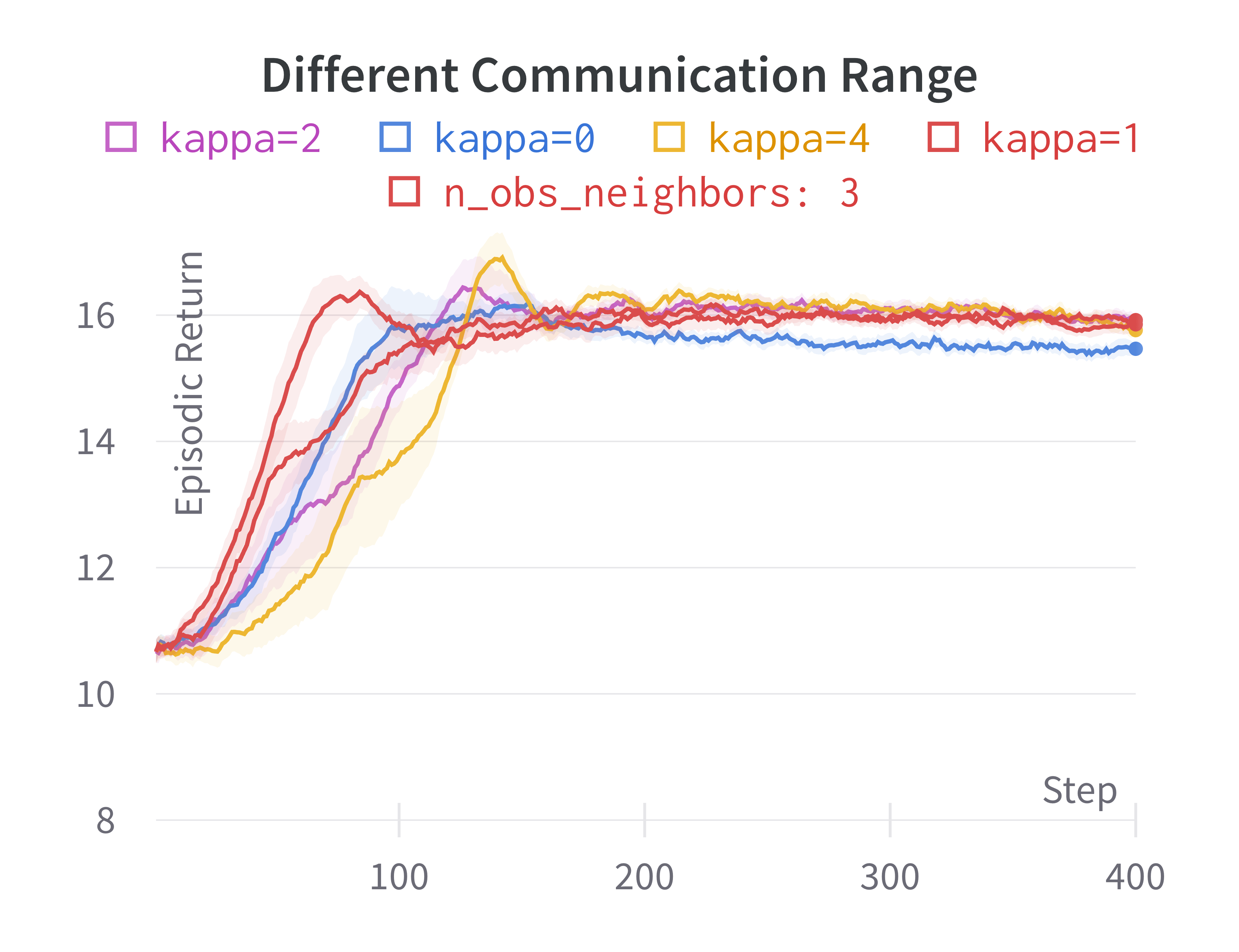}
\end{subfigure}
\begin{subfigure}{0.32\textwidth}
\includegraphics[width=\linewidth]{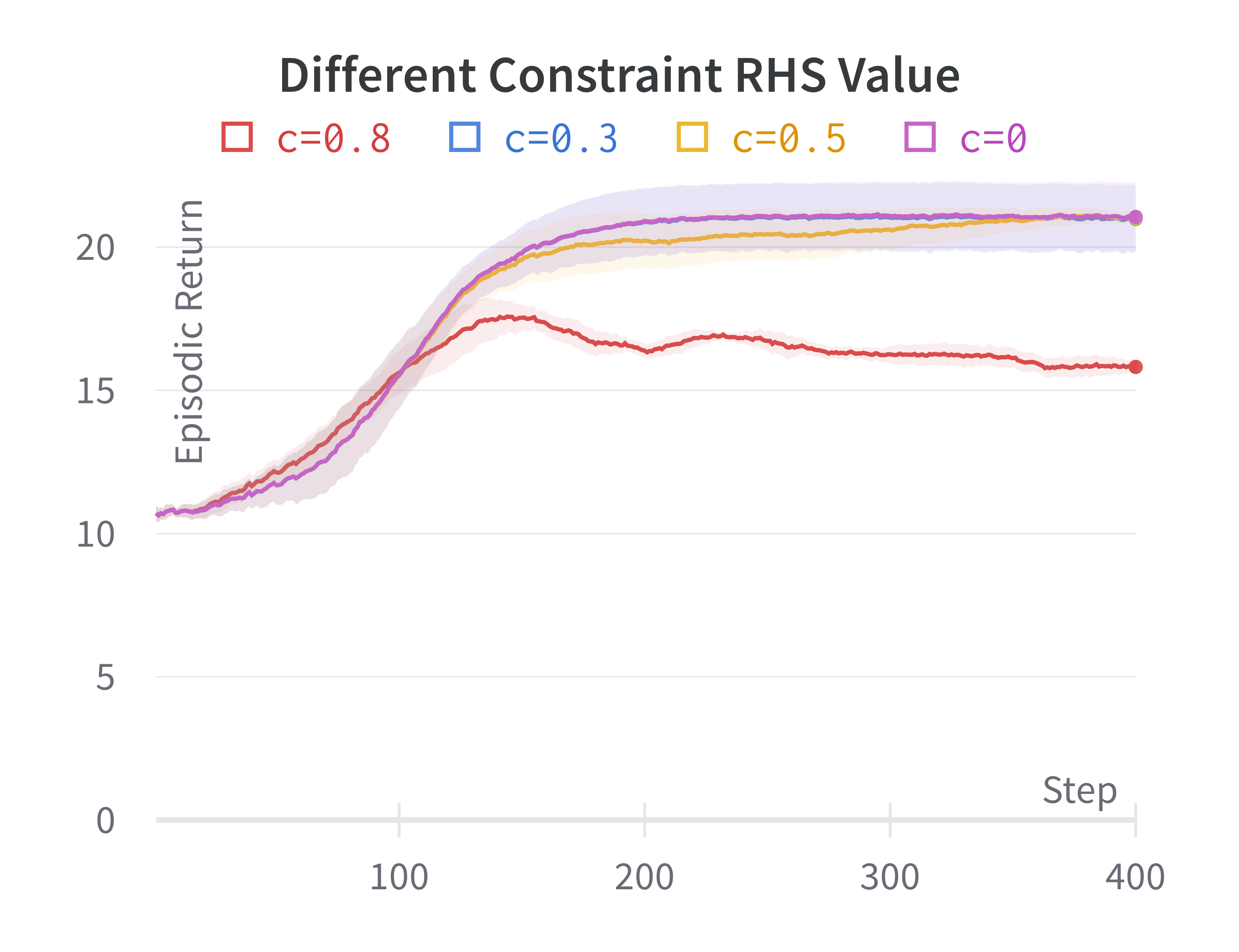}
\end{subfigure}
\begin{subfigure}{0.32\textwidth}
\includegraphics[width=\linewidth]{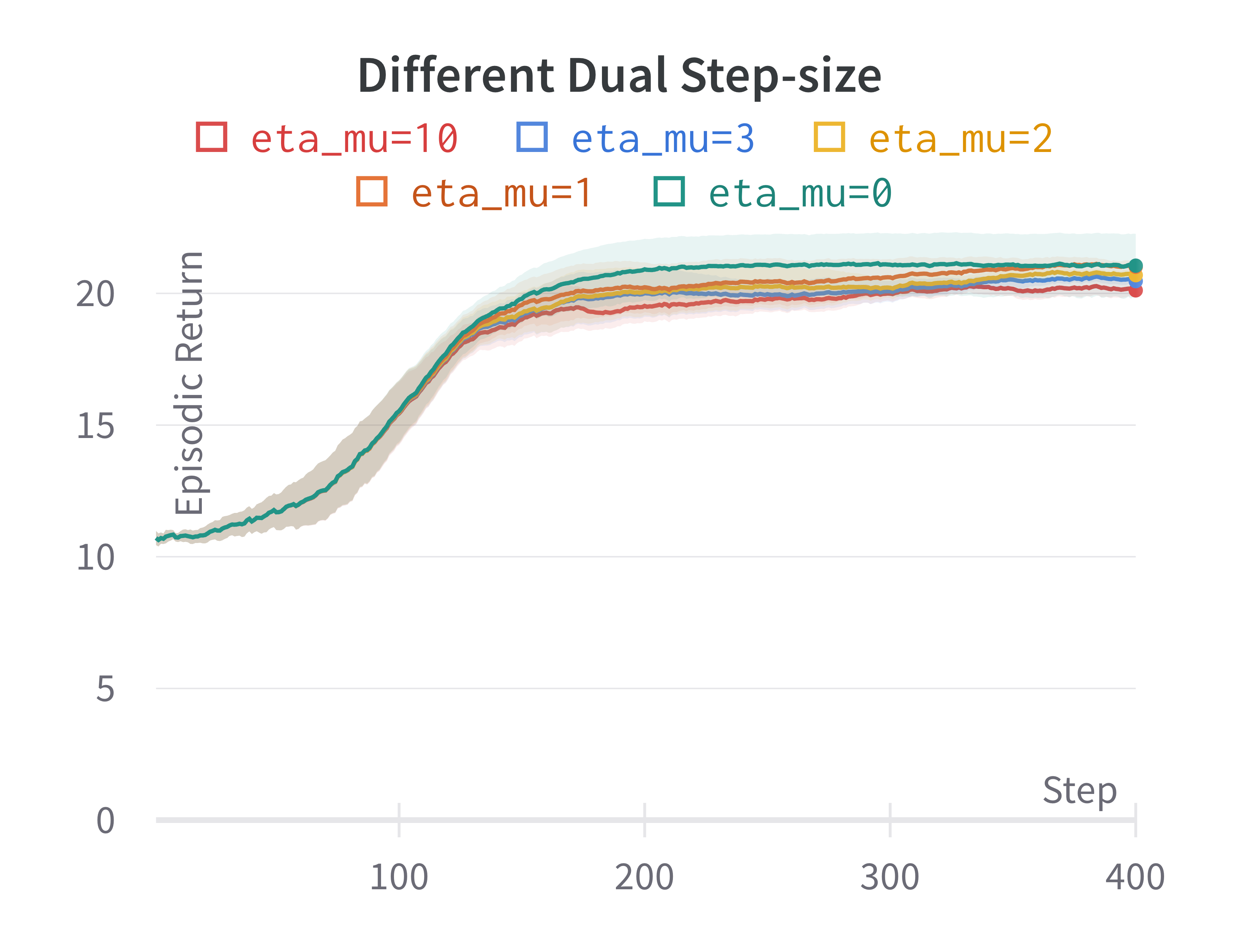}
\end{subfigure}
\medskip
\begin{subfigure}{0.32\textwidth}
\includegraphics[width=\linewidth]{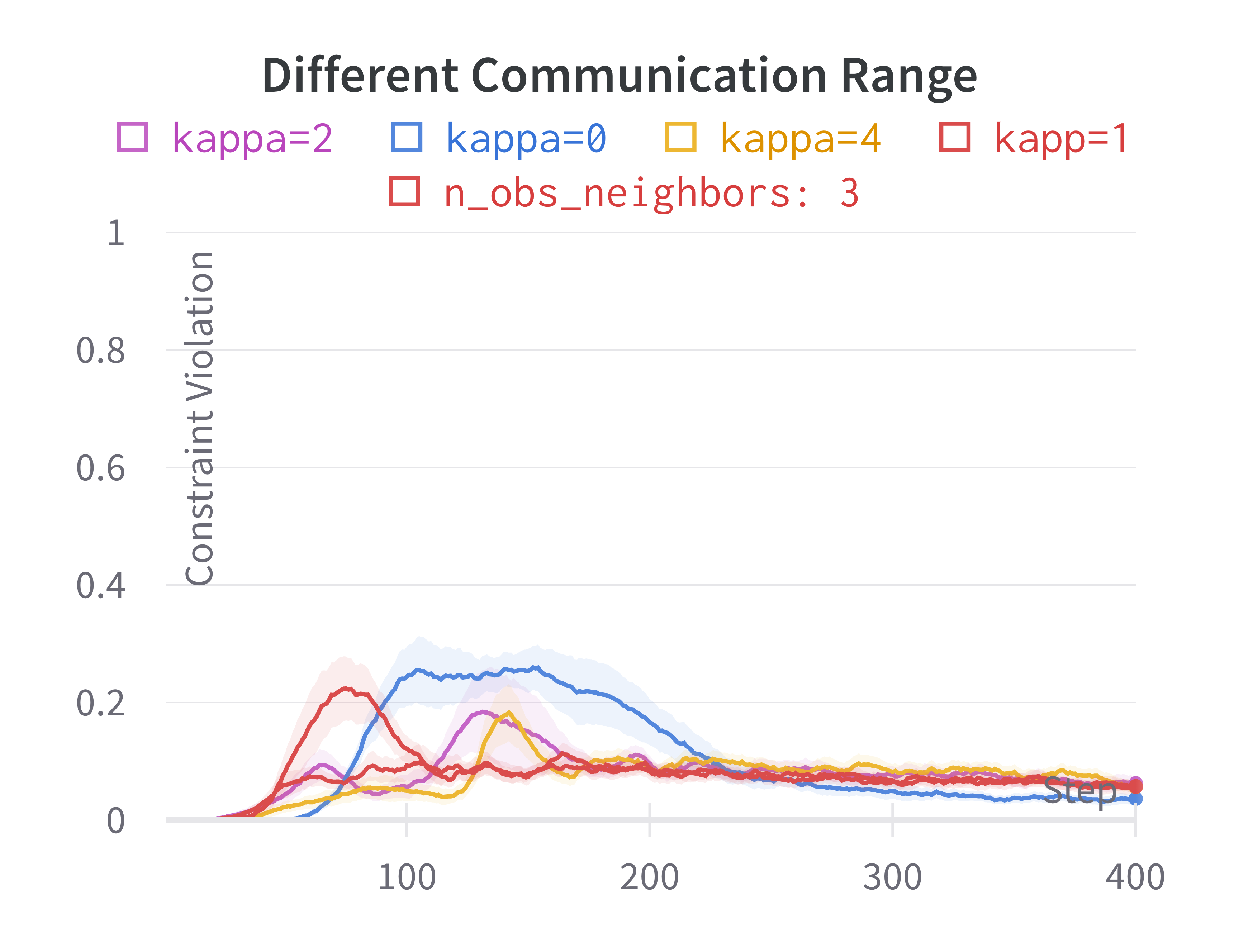}
\end{subfigure}
\begin{subfigure}{0.32\textwidth}
\includegraphics[width=\linewidth]{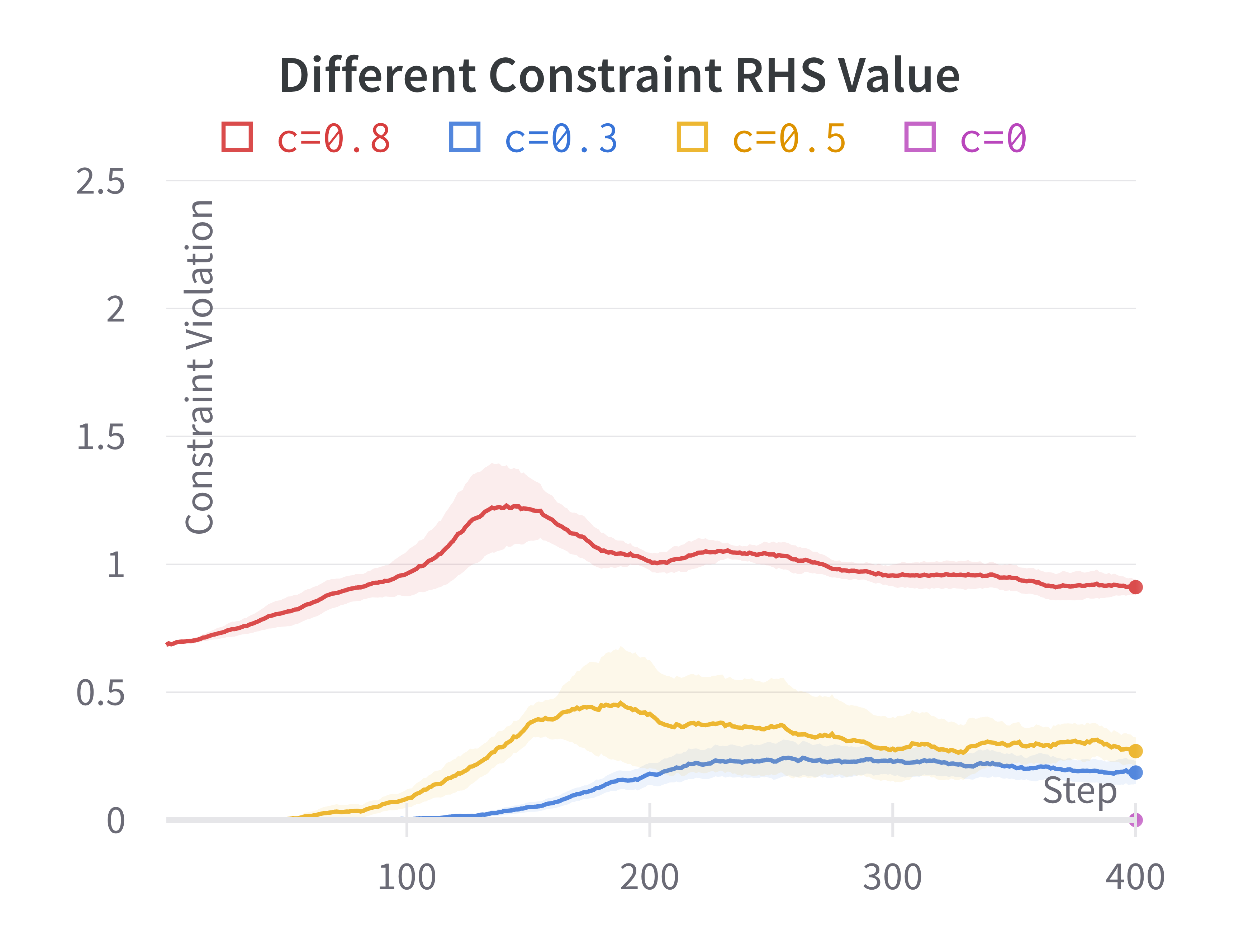}
\end{subfigure}
\begin{subfigure}{0.32\textwidth}
\includegraphics[width=\linewidth]{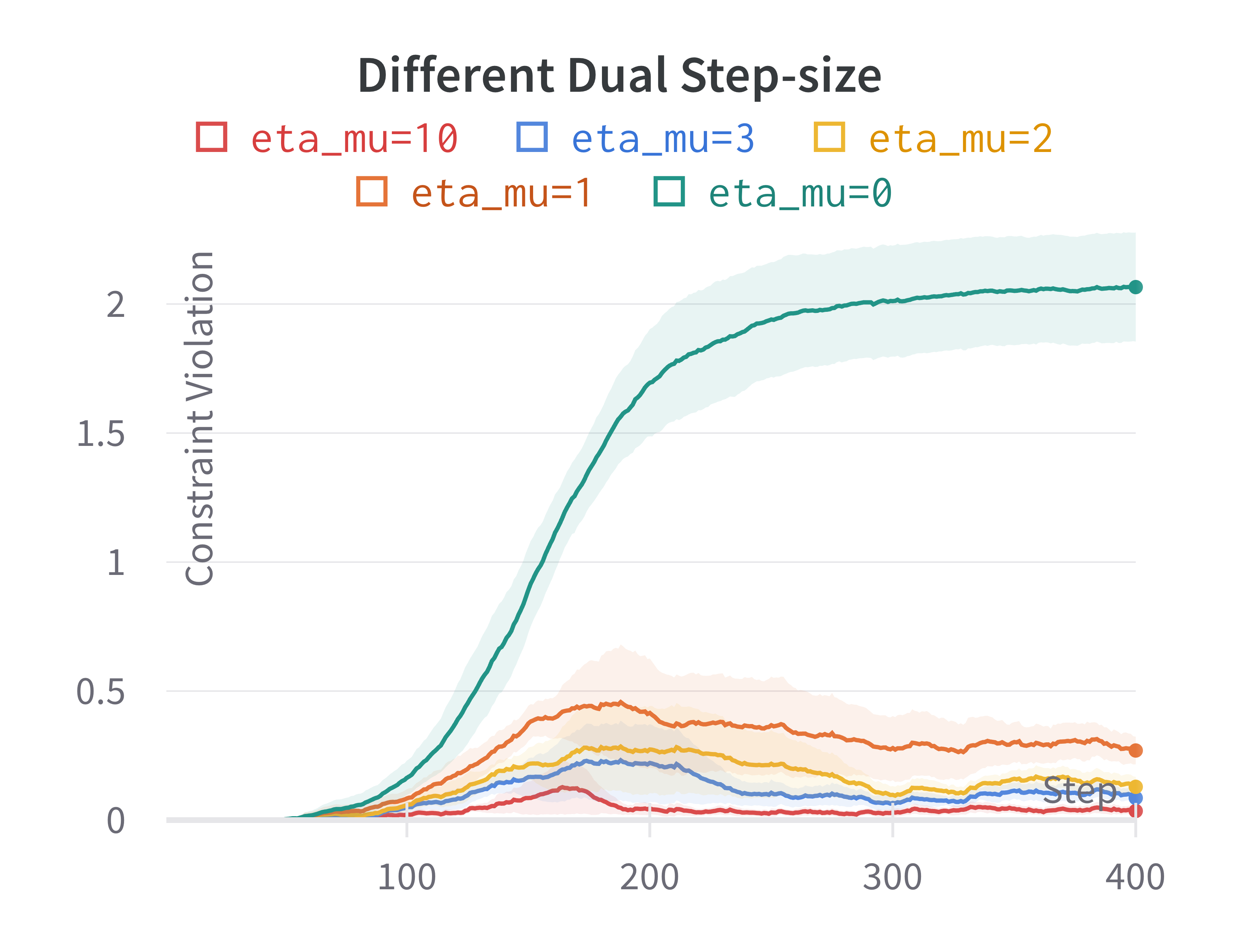}
\end{subfigure}
\caption{Performance of Algorithm \ref{alg:pdac} in synthetic environment with 10 agents under entropy constraints. \textbf{Left}: different communication ranges. \textbf{Middle}: different constraint right-hand side (RHS) values. \textbf{Right}: different dual step-sizes.} \label{fig:exp}
\end{figure}

In the experiment, we consider $n = 10$ and $\gamma = 0.99$. The results are plotted in Figure \ref{fig:exp}, where we evaluate the performance of Algorithm \ref{alg:pdac} using episodic return and total constraint violation as metrics.
The agents are initialized with random policies, resulting in a high entropy during the early stages of training. As a result, the constraints are always being strictly satisfied at the beginning. 
As the agents strive to increase their cumulative reward, they gradually begin to take action $1$ more frequently and spend more time in state $1$, which results in a decrease in entropy. Eventually, the agents find a balance between maximizing the cumulative reward and satisfying the entropy constraint.

Below, we discuss the experiment results shown in Figure \ref{fig:exp}.
\paragraph{Different communication ranges} In this experiment, we test the algorithm with communication radius $\kappa \in \{0,1,2,5\}$. We note that the case $\kappa = 5$ is close to global observation for agents in the middle. 
The results demonstrate that $\kappa = 1,2,5$ exhibit comparable performances, i.e., a stricter restriction on the communication range does not compromise the optimality in this environment, which is due to the simplicity of the environment.
The case with no communication ($\kappa = 0$) is slightly worse than others and suffers from a higher constraint violation during training.

\paragraph{Different constraint RHS values} In addition, we also vary the values for the threshold of the entropy constraints. A larger threshold value implies a stronger requirement for exploration, which subsequently results in a lower cumulative reward since the agents only receive rewards when their states are equal to $1$. As seen in the two middle plots of Figure \ref{fig:exp}, the experimental results uphold this argument.

\paragraph{Different dual step-sizes} Furthermore, we test how the size of the dual step-size/regularization-weight $\eta_\mu$ influences the learning process. The results show that when $\eta_\mu$ is reasonably large, e.g., $\eta_\mu\geq\in \{2,3,10\}$, the performances of the algorithm are roughly the same. 
Notably, we observe that having a large $\eta_\mu$ not only ensures a low constraint violation for the learned policy, but also guarantees a low violation during the training stage.
On the other side, a small $\eta_\mu$, e.g., $\eta_\mu \in \{0,1\}$ may not provide enough incentive to offset the violated constraints.

\subsection{Pistonball environment}\label{subsec: pistonball}
The Pistonball \cite{terry2021pettingzoo} is a physics-based cooperative game where each piston at the bottom represents an agent (see Figures \ref{fig:visualization} and \ref{fig:neighbor}). 
The agents naturally form a network where there is an edge between two adjacent pistons.
{The agents' goal is to collaboratively move the ball from the right wall to the left while satisfying the exploratory constraint defined by an entropy function.}
The action space $\mathcal{A}_i$ of each agent $i$ contains three elements: moving four pixels up, moving four pixels down, and remaining still.
The local state space $\mathcal{S}_i$ consists of two components: the $y$-position of agent $i$ and its observed information of the ball, which is a five tuple, namely the ball's $x$-position, $y$-position, $x$-velocity, $y$-velocity, and angular velocity.
Each agent $i$ can only observe the ball when it enters the space above itself, otherwise the agent receives a binary value indicating whether the ball is to its left or to its right.

The local reward function $r_i$ is constructed such that agent $i$ can receive a non-zero reward (penalty) only if any part of the ball is above itself at the current or the last time step.
The size of reward (penalty) depends on the change in the ball's x-position, where a rightwards move receives a penalty of twice the size of the reward for a leftwards move.
When the ball stays at the same place for over three steps, the agents below will receive a negative time penalty.
Mathematically, the problem can be formulated as:
\begin{equation}\label{eq:formulation_piston}
\max_{\theta\in \Theta}\ \frac{1}{n} \sum_{i\in \mathcal{N}} \inner{\lambda^{\pi_{\theta}}_i}{r_i}, \text{ s.t. } \operatorname{Entropy}(\lambda^{\pi_{\theta}}_i) \geq c,\ \forall i\in \mathcal{N},
\end{equation}
where we use a common constraint threshold $c$ for all agents.

\begin{figure}[tb] 
\centering
\begin{subfigure}{0.32\textwidth}
\includegraphics[width=\linewidth]{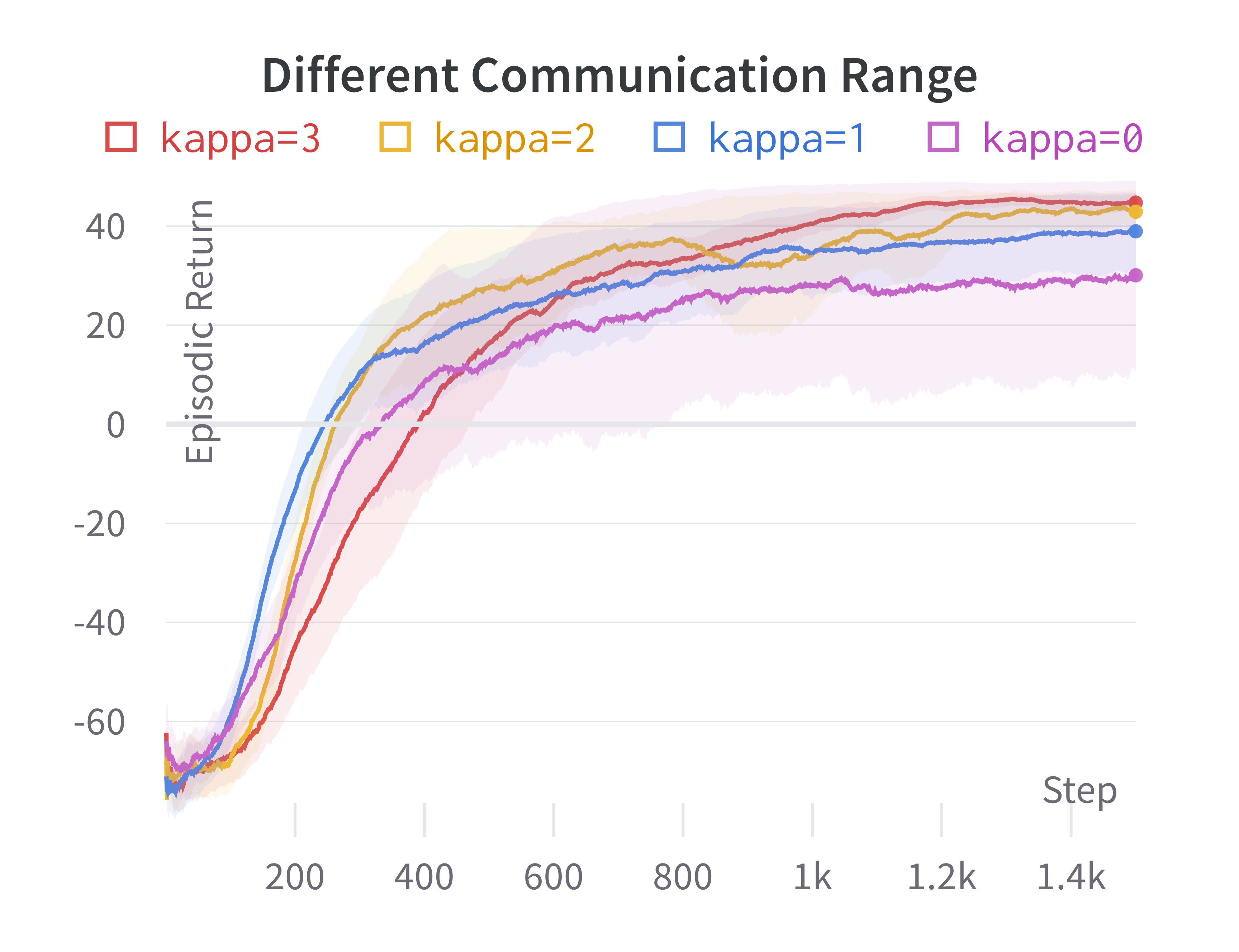}
\end{subfigure}
\begin{subfigure}{0.32\textwidth}
\includegraphics[width=\linewidth]{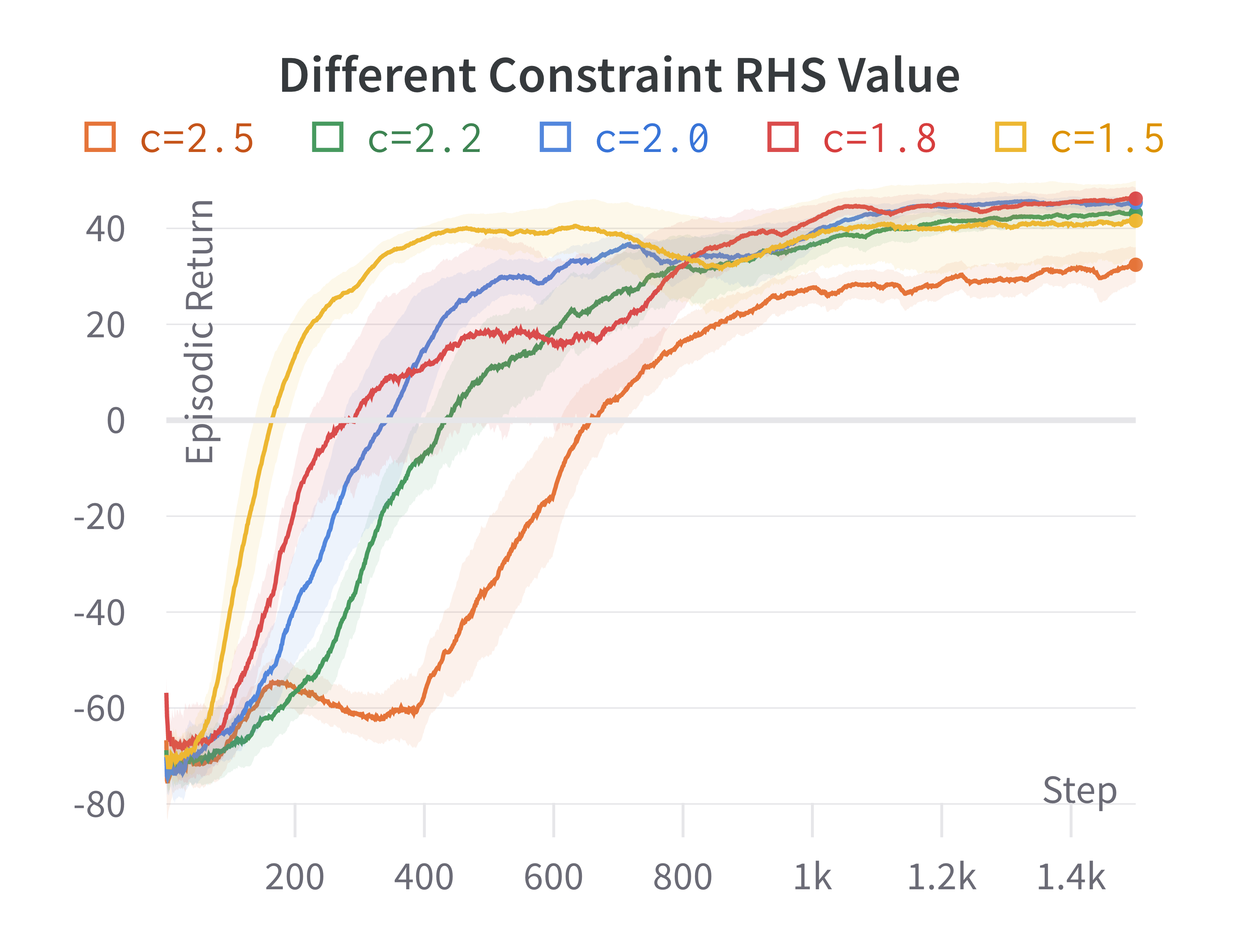}
\end{subfigure}
\begin{subfigure}{0.32\textwidth}
\includegraphics[width=\linewidth]{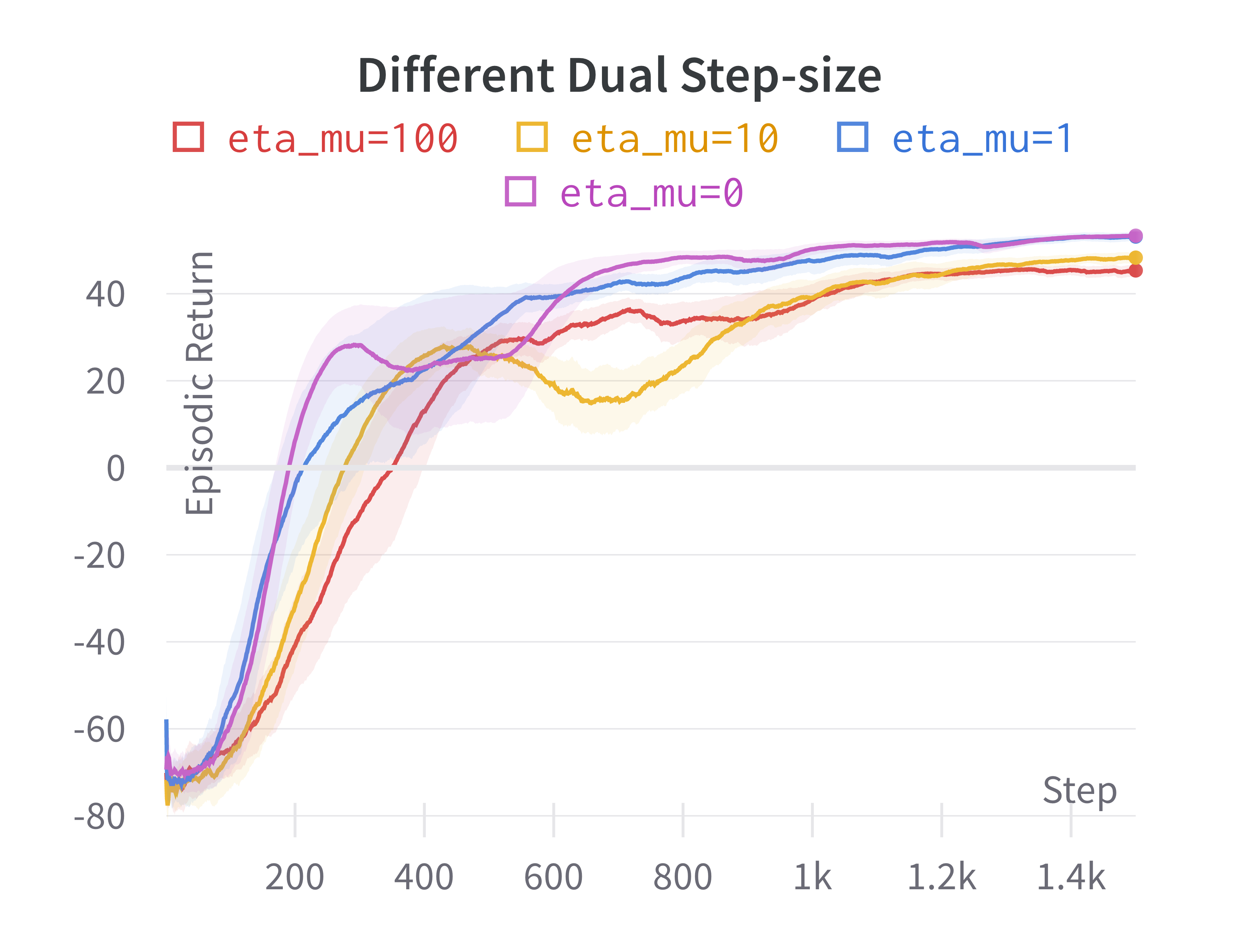}
\end{subfigure}
\medskip
\begin{subfigure}{0.32\textwidth}
\includegraphics[width=\linewidth]{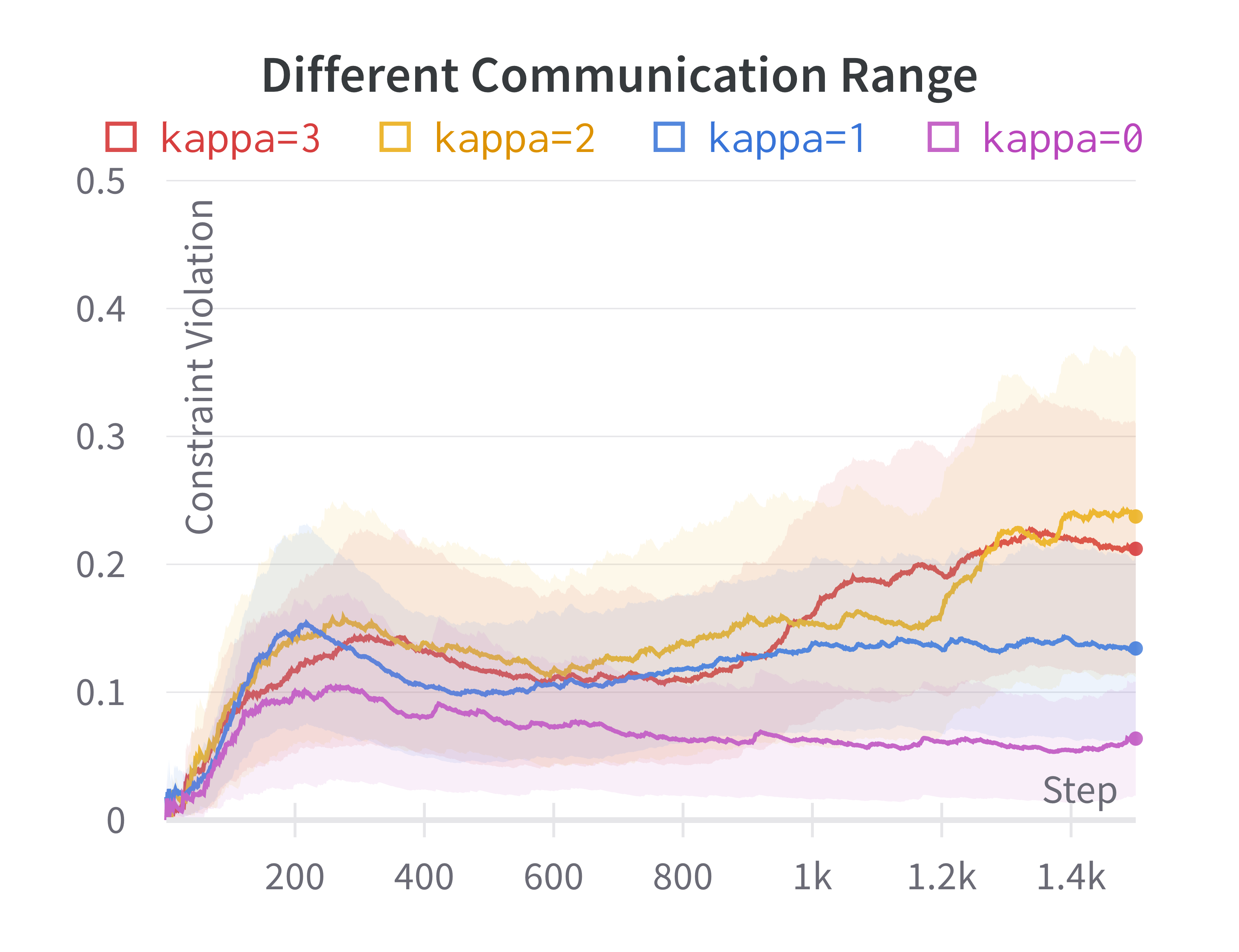}
\end{subfigure}
\begin{subfigure}{0.32\textwidth}
\includegraphics[width=\linewidth]{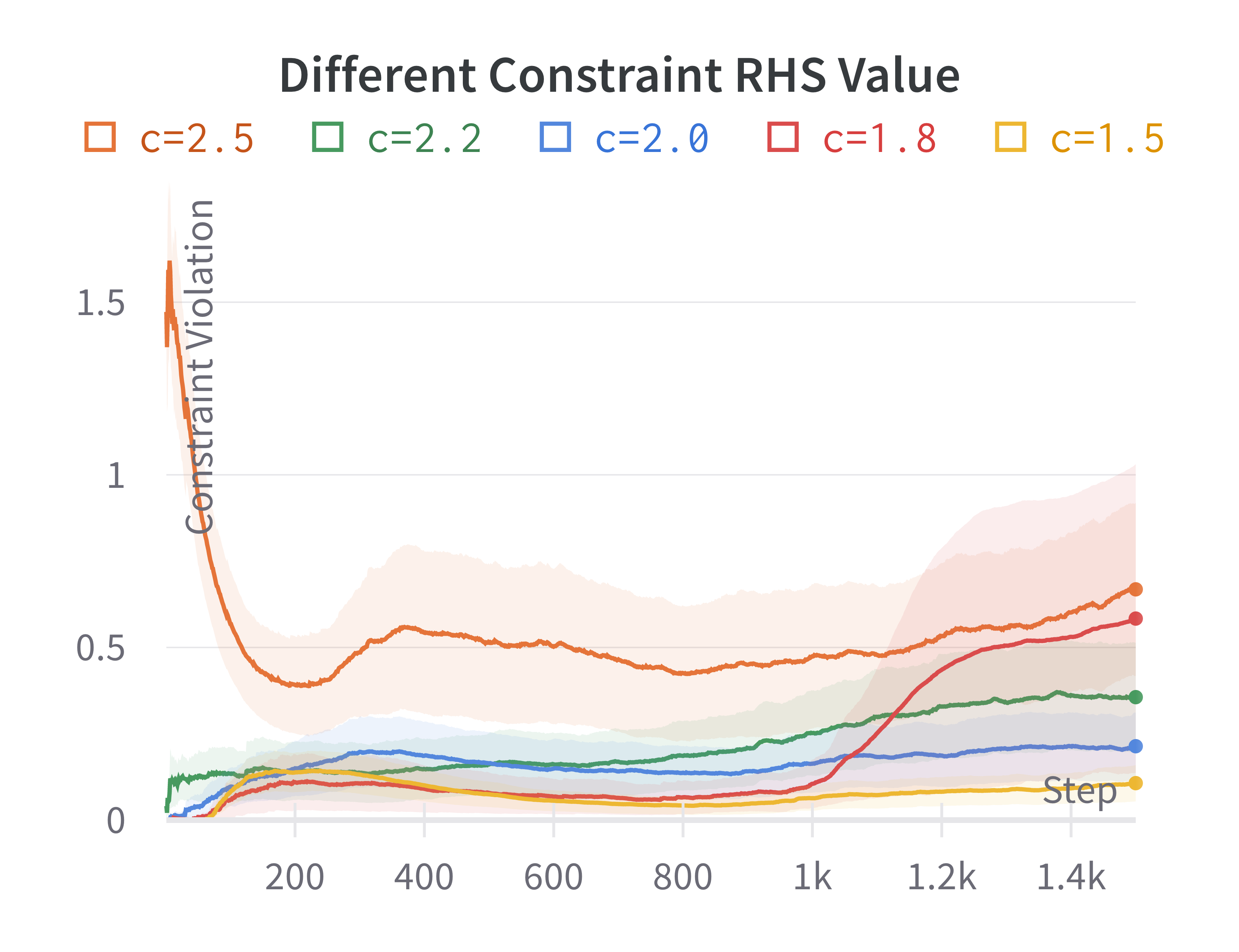}
\end{subfigure}
\begin{subfigure}{0.32\textwidth}
\includegraphics[width=\linewidth]{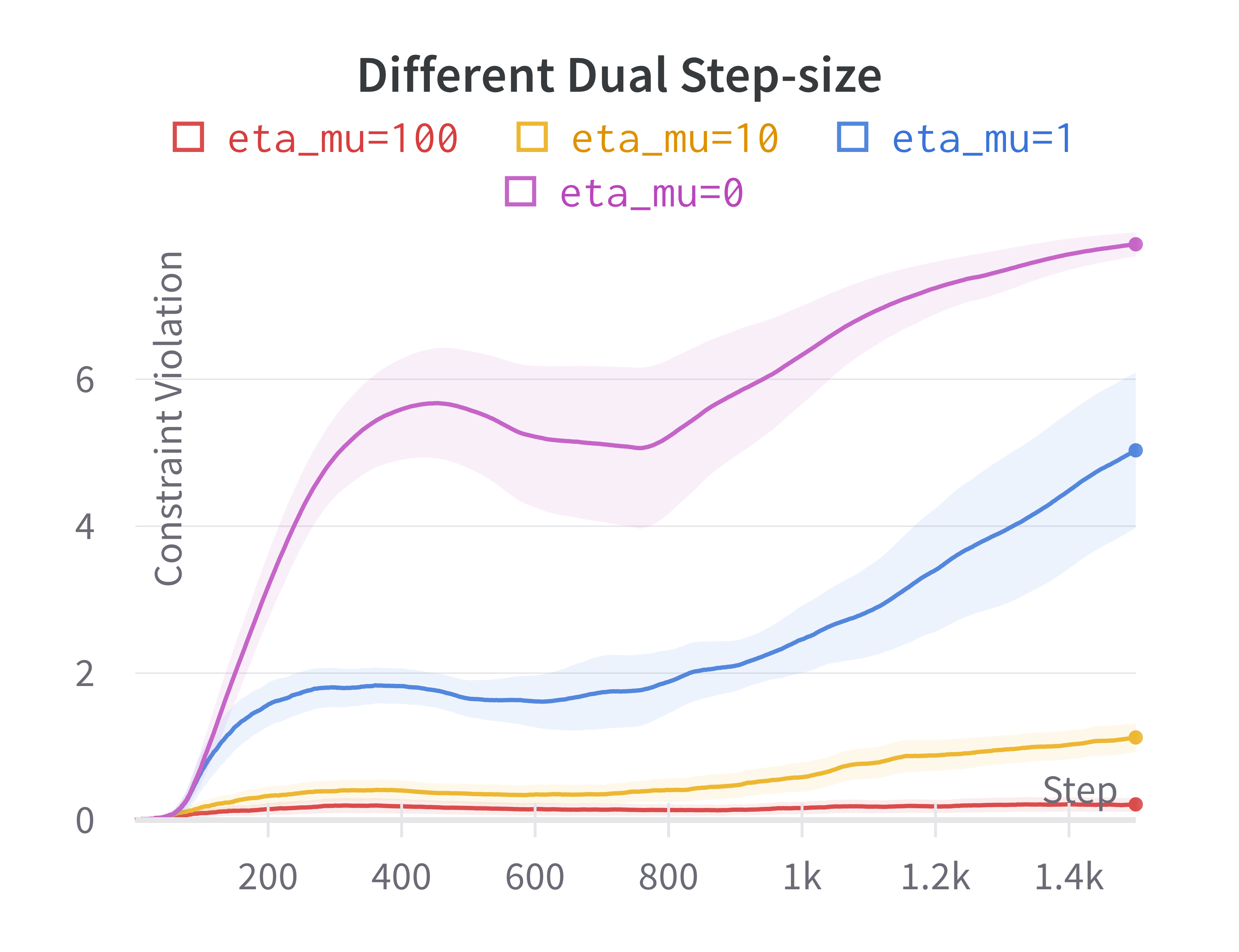}
\end{subfigure}
\caption{Performance of Algorithm 1 in the Pistonball environment with 10 agents under entropy constraints. \textbf{Left}: different communication range. \textbf{Middle}: different constraint RHS value. \textbf{Right}: different dual step-size. The total constraint violation is defined as the sum of absolute violations for each local constraints.\label{fig:exp_piston}} 
\end{figure}

\begin{figure}[tb]
\centering
\begin{subfigure}{0.3\textwidth}
\includegraphics[width = \linewidth]{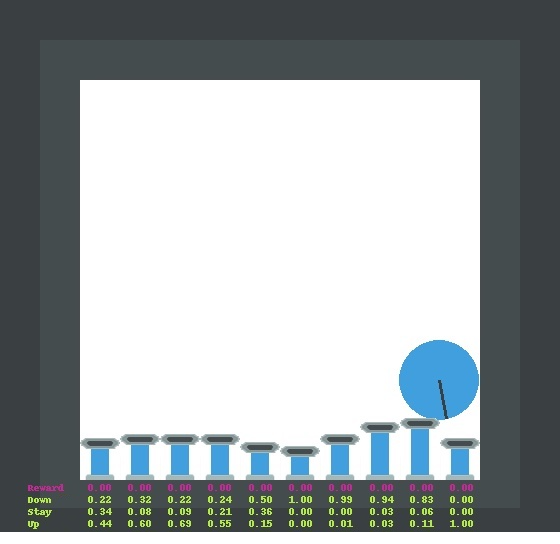}
\caption{Start\label{fig:beginning}}
\end{subfigure}\hspace{4mm}
\begin{subfigure}{0.3\textwidth}
\includegraphics[width = \linewidth]{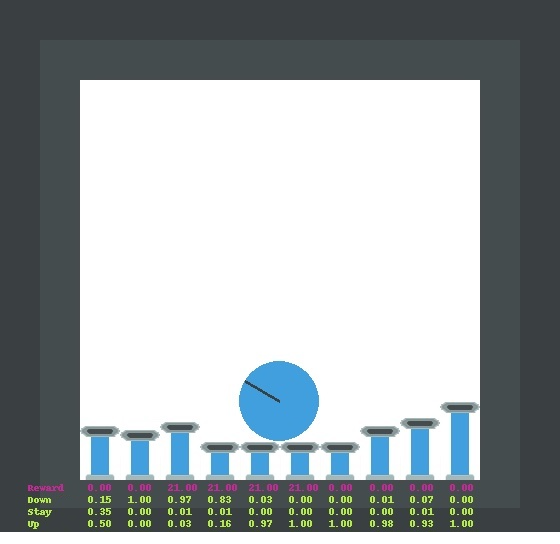}
\caption{Middle\label{fig:middle}}
\end{subfigure}\hspace{4mm}
\begin{subfigure}{0.3\textwidth}
\includegraphics[width = \linewidth]{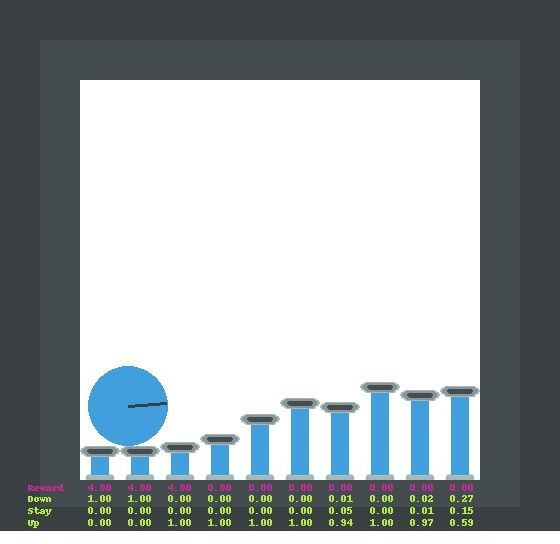}
\caption{End\label{fig:end}}
\end{subfigure}
\caption{Visualization of Pistonball environment at three different stages when executing the learned policy.\label{fig:visualization}}
\end{figure}

\begin{figure}[tb]
    \centering
    \includegraphics[width=0.33\textwidth]{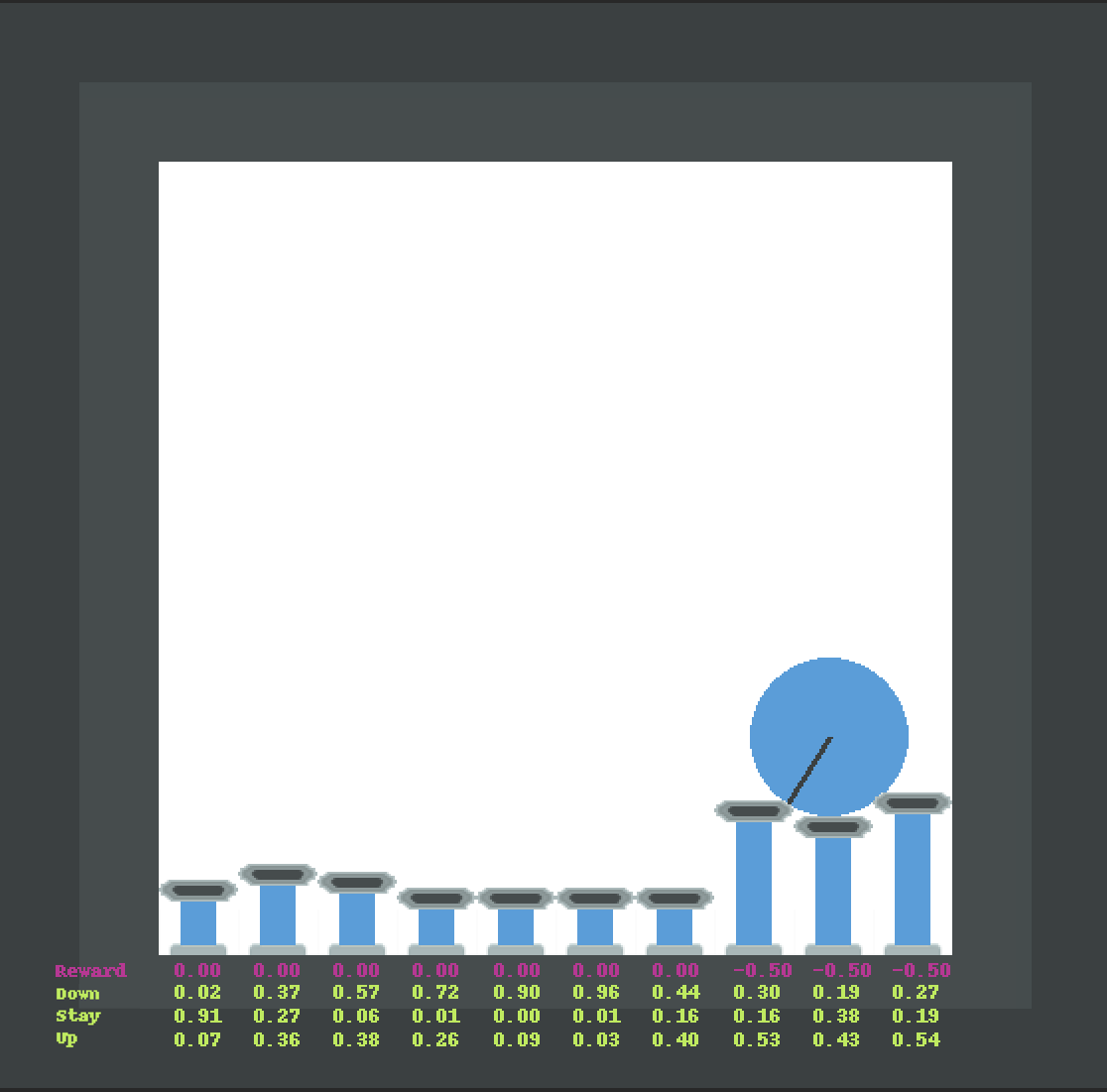}
    \caption{Illustration of the benefit of having a relatively-larger communication range ($\kappa =2$). The agents on the right make a sacrifice by intentionally raised the ball all the way up to provide more flexibility for agents on their left.\label{fig:neighbor}}
\end{figure}

\subsubsection{Experimental results}
We consider the scenario of $10$ agents and label them by $\{1,2,\dots,10\}$ from right to left.
The experimental results are summarized in Figure \ref{fig:exp_piston}, where we test the proposed algorithm based on different communication ranges, constraint RHS values, and dual step-sizes. The algorithm's performance is evaluated using two metrics: the cumulative reward and absolute constraint violation of the learned policy. 
Below, we first present some general observations, followed by individual discussions of three comparisons.

\paragraph{General observations.} 
\begin{itemize}[leftmargin=*]
    \item The safety constraint plays an important role in this environment. Without the constraint, sometimes the learning process is trapped in sub-optimal policies. 
    A common local optimum is that all agents move to the lowest position and keep staying there.
    In this situation, the ball can still move to the left at a significantly slower pace, driven by its initial velocity, and agents will receive some rewards.
    By incorporating a mild entropy constraint (e.g., $c=1.8$), agents are encouraged to explore the environment and escape sub-optimal policies.
    However, our comparison of different right-hand side values below also reveals that the optimality of the learned policy can be compromised if the exploration requirement is too strict. 
    These two findings highlight the trade-off between \textit{exploration} and \textit{exploitation}.
    
    It is worth noting that, although encouraging exploration is a common practice in RL, our formulation allows for the direct incorporation of the entropy of the occupancy measure since we allow the objective and constraint to be general utilities. 
    Compared with standard approaches, such as adding a discounted entropy with respect to the policy in the objective \cite{cen2022fast}, our approach provides a more explicit characterization for the exploration requirement.

    \item We visualize the learned policy with $\kappa = 3$, $c = 2$, and $\eta_\mu = 100$ in Figure \ref{fig:visualization}, considering three different time points where the ball is located in the right-most region, middle region, and left-most region, respectively. 
    The policy (action probability) of agents for the given state is displayed by the text at the bottom of the figure.
    
    As shown in Figure \ref{fig:beginning}, agents' positions are initialized randomly at the beginning. To facilitate the ball's leftward movement, agent $1$ must move upwards, while agent $2$ should move downwards.
    This is confirmed by the current policies of the two agents, where the upward probability of agent $1$ is one, and the downward probability of agent $2$ is 0.83.
    Subsequently, Figure \ref{fig:middle} demonstrates that agents $1-4$ have created a slope for the ball to move leftward rapidly.
    After the ball passes, we can see that the upward probabilities of agents $1-4$ are very close to one, meaning that they move upwards to eliminate the possibility of the ball moving back to the right. 
    However, agents $8-10$ still obstruct the ball's path, as they have not detected the arrival of the ball due to the limited communication range and move mostly randomly to satisfy the entropy constraint.
    Finally, in Figure \ref{fig:end}, we observe that when the ball approaches, the downward probabilities of agents $9-10$ become one, and the upward probabilities of agents $5-8$ also increase to one.
\end{itemize}

\paragraph{Different communication ranges.} We alter the communication radius from $\kappa = 0$ to $3$, keeping other parameters constant. The results indicate that $\kappa = 2$ or $3$ yields better performance, while disallowing any communication ($\kappa = 0$) results in lower rewards.
This outcome can be attributed to the fact that, for the efficient movement of the ball from right to left, each agent must maintain the correct position before the ball's arrival. 
With no communication, agents are unaware of the ball's arrival in advance, and they mostly move randomly 
to fulfill the exploration requirement.
Additionally, when agents cannot share their local shadow Q-functions with neighbors, they may end up learning a "selfish" policy focused solely on their own objectives.
A larger communication radius not only enables agents to observe the ball earlier but also allows some agents to perform actions that assist other agents. In Figure \ref{fig:neighbor}, we observe that agents $1-3$ decide to move the ball all the way up. Despite incurring a time penalty for themselves, this provides more flexibility for agent $4$ and allows more time to take random actions in order to satisfy the safety constraint. 
However, as a trade-off, a larger communication range also implies a larger input size. Therefore, the convergence rate is generally slower for a larger communication range when the same hyperparameters are used (see the comparison of $\kappa = 1$ and $\kappa = 3$ in Figure \ref{fig:exp_piston} at early stages).

In contrast to the objective, we find the constraint violation remains relatively low in all cases. 
This is because the entropy constraint encourages each individual agent to actively explore the environment, enabling the agents to find ways to keep the constraint violations low under different communication ranges.

\paragraph{Different constraint RHS values.} 
Here, we run the algorithm with different constraint RHS values $c\in \{1.5,1.8,2.0,2.2,2.5\}$.
In the middle two plots of Figure \ref{fig:exp_piston}, we observe that increasing $c$ from $1.5$ to $2.0$ yields a policy with higher rewards. This occurs because a slightly stricter exploration requirement helps the algorithm avoid sub-optimal stationary points and discover a superior policy (as explained in general observations). However, further increment of the constraint right-hand side value (from $2.0$ to $2.5$) forces the agents to make many unnecessary moves to meet the constraint, hindering the effective transfer of the ball to the left.

\paragraph{Different dual step-sizes.} Finally, we test four different values $\{0,1,10,100\}$ of dual step-size $\eta_\mu$. The result in the lower two plots in Figure \ref{fig:exp_piston} demonstrates that a larger value of $\eta_\mu$ yields a smaller constraint violation. 
Since $\eta_\mu$ serves as the weight of penalization of the constraint violation in the Lagrangian function, this observation is consistent with the developed theory in this paper.
On the other side, we can observe that the objective is slightly lower for larger values of $\eta_\mu$, which can be viewed as a compromise in exchange for a better-satisfied constraint.

\subsection{Wireless communication environment}\label{subsec: wireless}
Consider an access control problem with safety constraints in wireless communication, following a similar network setup and transition dynamics as presented in \cite{qu2022scalable,liu2022scalable}.
Specifically, we consider a grid with $n^2$ users (agents) $\mathcal{N} = [n]\times [n]$ and $(n-1)^2$ access points $Y$, as illustrated in Figure \ref{fig:wireless}. The goal of the users is to successfully transmit their packets to access points for processing.
Each user $i$ is connected to a set $Y_i\subset Y$ of access points located at the corner of the block it resides in. Two users are considered direct neighbors if they share a common access point. In every period, user $i$ receives a new packet by deadline $d_i$ with probability $p_i\in (0,1)$. The user can then choose to send the earliest packet in its queue to an access point $y\in Y_i$ or not send any packet at all. User $i$ receives a reward $1$ if and only if access point $y$ does not receive transmissions from other users and successfully processes the packet from $i$, which occurs with probability $q_y\in (0,1)$.

\begin{figure}[tb]
    \centering
    \includegraphics[width = 0.4\textwidth]{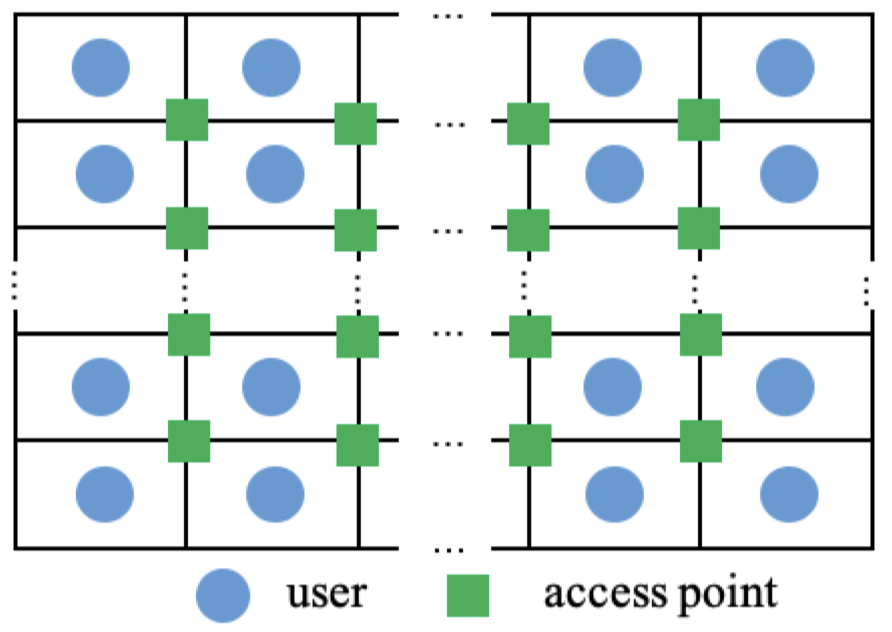}
    \caption{Wireless communication network with $n^2$ agents and $(n-1)^2$ access points \cite{qu2022scalable}.}
    \label{fig:wireless}
\end{figure}

When formulated as a standard RL problem, the state of each user $i$ is defined by a $d_i$-dimensional vector with binary values, i.e., $s_i \in \{0,1\}^{d_i}$. The $k$-th entry of $s_i$ takes the value $1$ when user $i$ currently has a packet with $k$ days remaining until the deadline. The action space of user $i$ is defined as $\mathcal{A}_i = Y_i\cup \{\text{null}\}$, which means agent $i$ can choose to send the packet to an access point $y\in Y_i$ or do nothing.
It is important to note that the local transition dynamic and local reward function of each user depend on the states and actions of other users in its neighborhood. This slightly differs from the setting presented in our paper, as we assume $r_i: \mathcal{S}_i\times \mathcal{A}_i\rightarrow \mathbb{R}$. However, since this objective takes the form of cumulative reward (rather than general utilities), our analysis can be extended to settings where the local reward $r_i$ depends on $(s_{\mathcal{N}_i},a_{\mathcal{N}_i})$.

In this experiment, safety is a critical concern. More specifically, potential risks may arise when agents learn overly randomized policies, causing the neighbors failing to know which access points will be occupied and thereby resulting in a collision. This resonates with real-life applications such as autonomous driving and human-AI collaboration, where \textit{an agent's policy needs to be predictable to other agents}. In light of this, we introduce an additional safety constraint, $1/2\cdot (1-\gamma)^2\cdot \norm{\lambda^{\pi_{\theta}}}_2^2\geq c$, to encourage agents to learn less randomized policies. The term $(1-\gamma)^2$ serves as a normalization constant. In summary, the problem can be formulated as:
\begin{equation}\label{eq: wirelesss_formulation}
\max_{\theta\in \Theta}\ \frac{1}{n} \sum_{i\in \mathcal{N}} V^{\pi_{\theta}}(r_i), \text{ s.t. } \frac{(1-\gamma)^2}{2} \norm{\sum_{s_i\in \mathcal{S}_i}\lambda_i^{\pi_{\theta}}}_2^2\geq c,\ \forall i\in \mathcal{N}.
\end{equation}

\begin{figure}[tb]
\centering
\begin{subfigure}{0.3\textwidth}
\includegraphics[width = \linewidth]{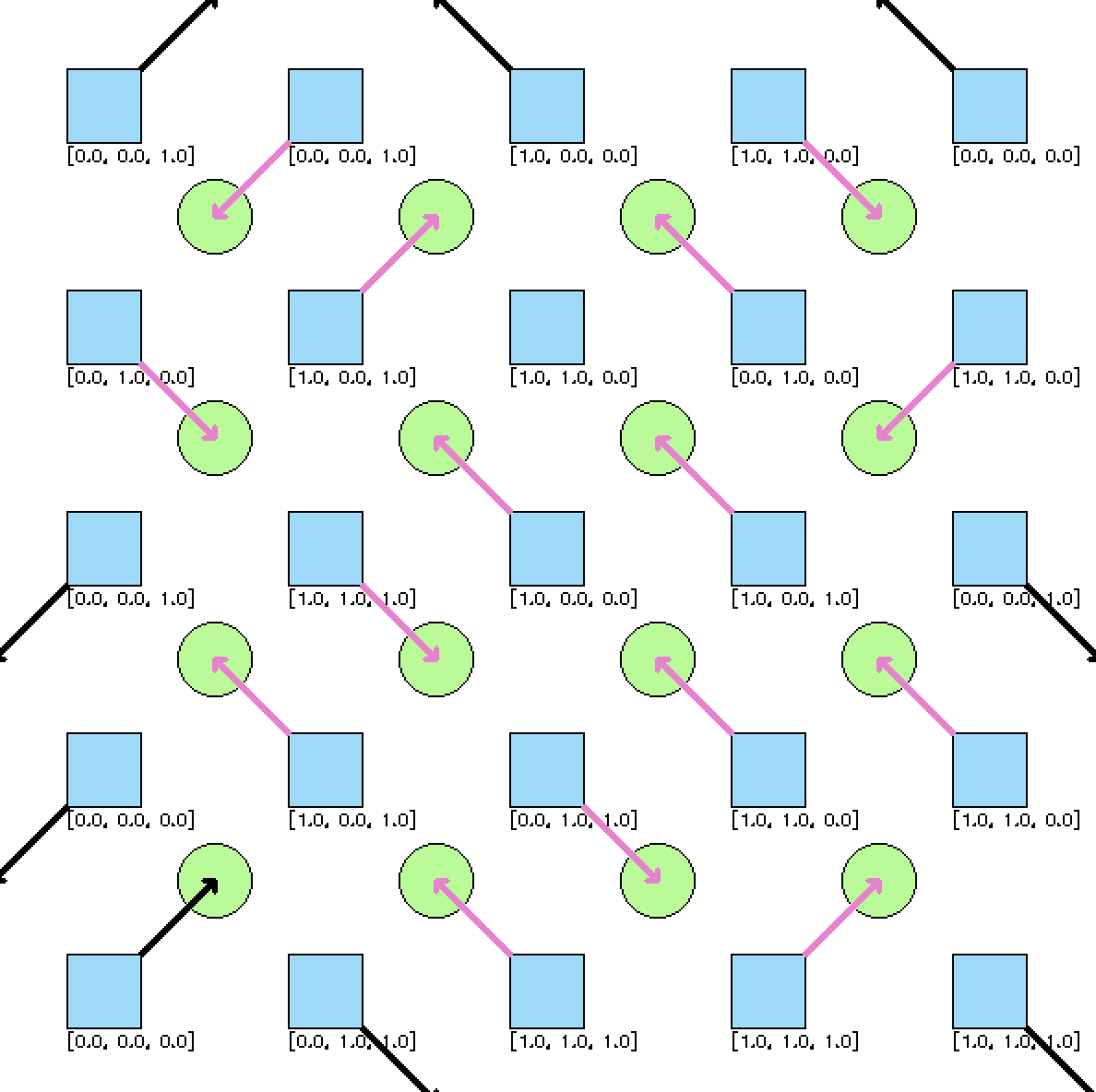}
\caption{Start\label{fig:wire-first}}
\end{subfigure}\hspace{4mm}
\begin{subfigure}{0.3\textwidth}
\includegraphics[width = \linewidth]{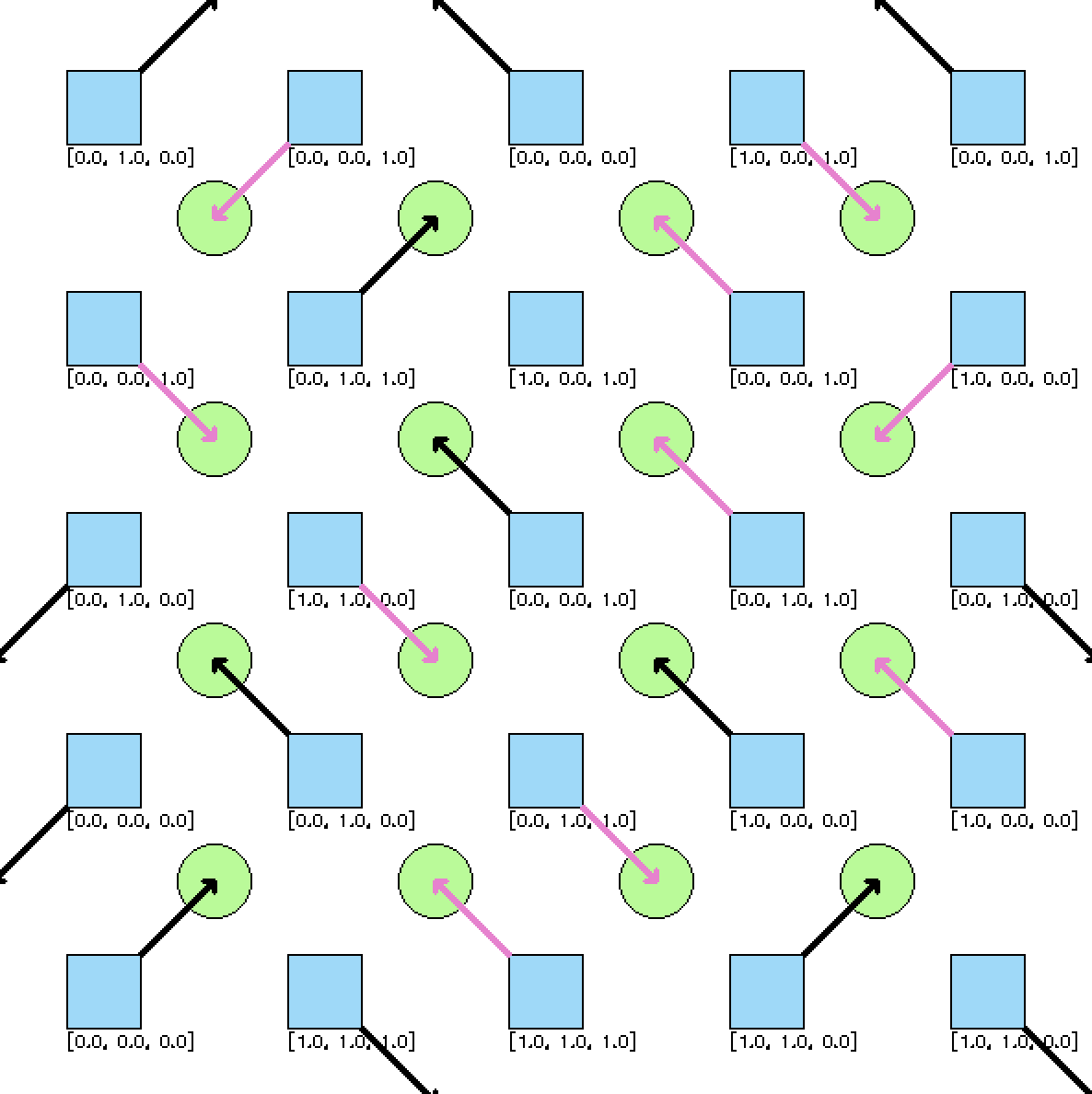}
\caption{Middle\label{fig:wire-second}}
\end{subfigure}
\caption{Two consecutive frames from the wireless communication experiment when $\eta = 100, \text{rhs} = 0.3$. The agents are encouraged to take deterministic actions. \textcolor{pink}{Pink} arrows indicate successful transmissions, and the binary integers below each agent indicates its state.\label{fig:wire-visual}}
\end{figure}

\subsubsection{Experimental results}

In our experiments, we consider a setting with $n=5$ (comprising 25 agents and 16 access points) and $d_i = 3$. Probabilities $p_i$ and $q_y$ are randomly generated. We perform the same set of comparisons based on various communication ranges, constraint RHS values, and dual step-sizes. The experimental results are illustrated in Figure \ref{fig:exp_wireless}, with key findings summarized as follows:
\begin{itemize}[leftmargin = *]
\item The performance of the algorithm with $\kappa = 1$ clearly surpasses that of $\kappa = 0$. This highlights the critical role of communication in situations where potential conflicts between neighbors can occur.

    \item Unlike the Pistonball environment, discouraging exploration via constraints leads to an improved performance in this example ($c=0.3$ yields higher return than $c=0.2$). This can be explained by the fact that when all agents strive to learn less-randomized policies, their actions become more predictable for other agents, thus minimizing the conflicts. As shown in Figure \ref{fig:wire-visual}, agents always take the same actions.
    Some agents even choose to forfeit their own packets, either by not taking actions or selecting non-existent access points, as a strategy to minimize the overall collisions within the environment.
    
    Our model lets the agents learn about the behaviors of other agents,  thereby facilitating understanding of the collective interplay between the locations and actions of those agents. Remarkably, the agents are able to collaboratively identify a plan so that each access point is only used by one agent in order to avoid collision in Figure \ref{fig:wire-visual}.
    
    \item  The final two plots in Figure \ref{fig:exp_wireless} confirm that a relatively large dual step-size is needed in order to achieve a good performance. It is important to note that the policy learned with $\eta = 1$ significantly violates the constraint, while the policy learned with $\eta = 10$ gets trapped in some sub-optimal points.
\end{itemize}

\begin{figure}[tb] 
\centering
\begin{subfigure}{0.3\textwidth}
\includegraphics[width=\linewidth]{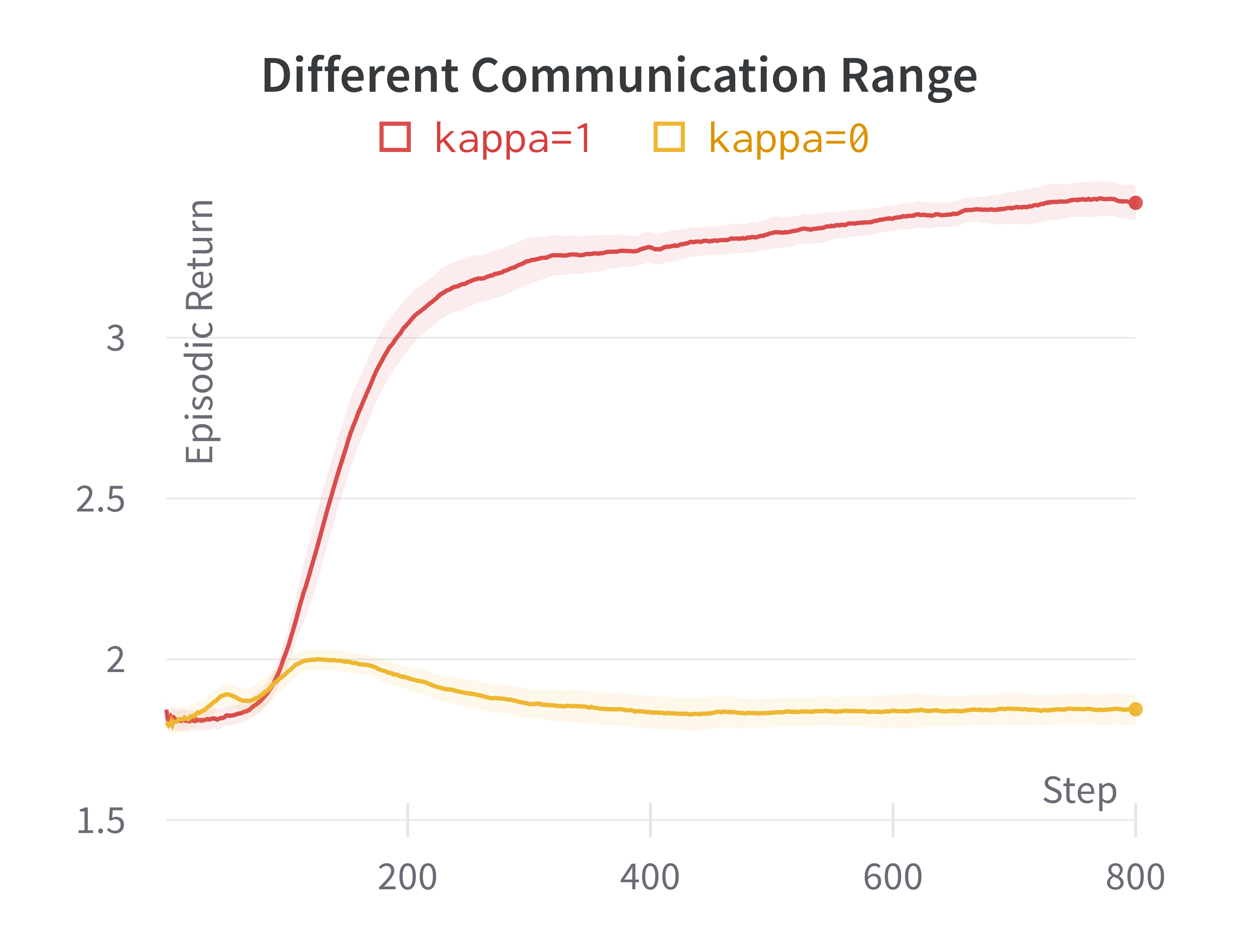}
\end{subfigure}
\begin{subfigure}{0.3\textwidth}
\includegraphics[width=\linewidth]{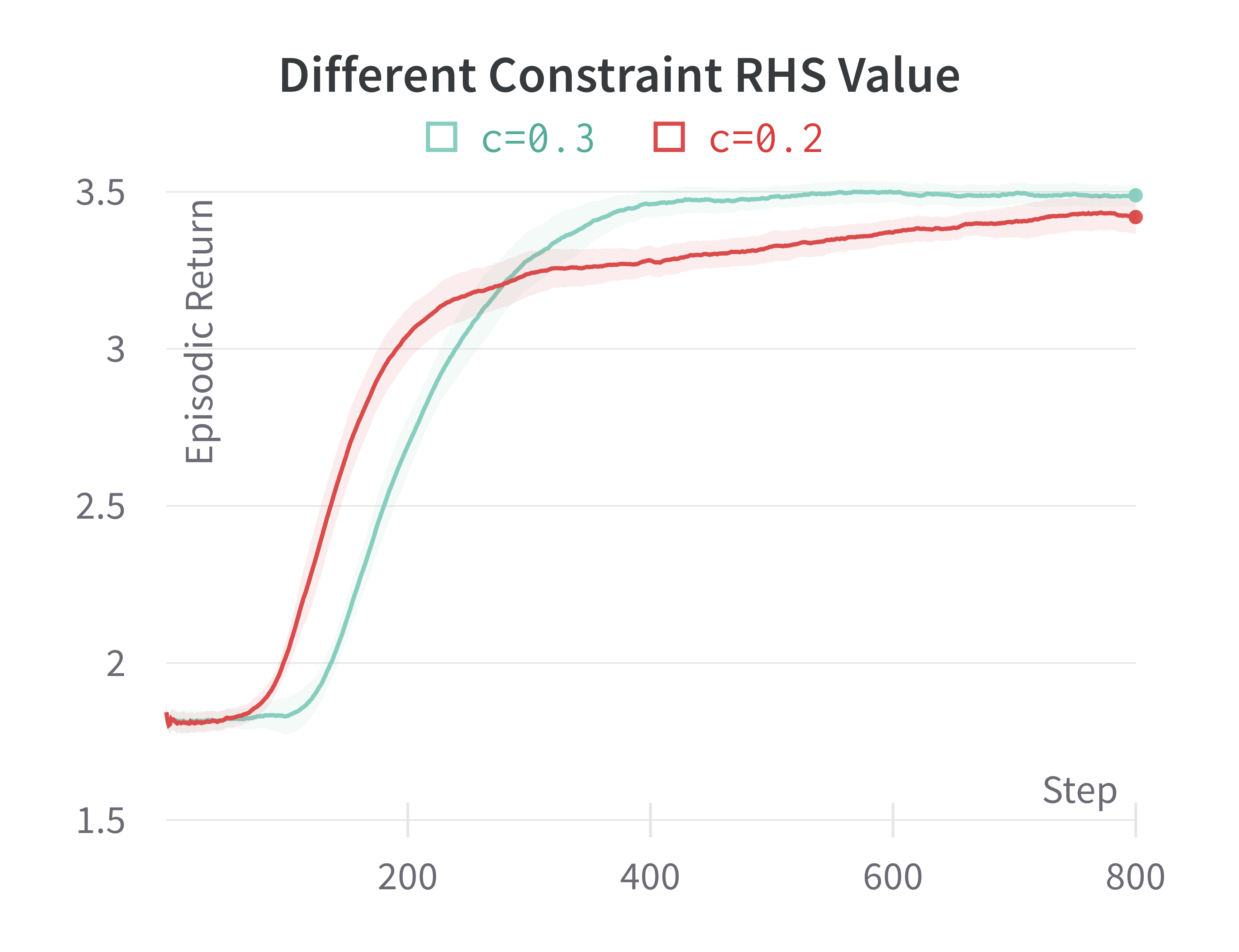}
\end{subfigure}
\begin{subfigure}{0.3\textwidth}
\includegraphics[width=\linewidth]{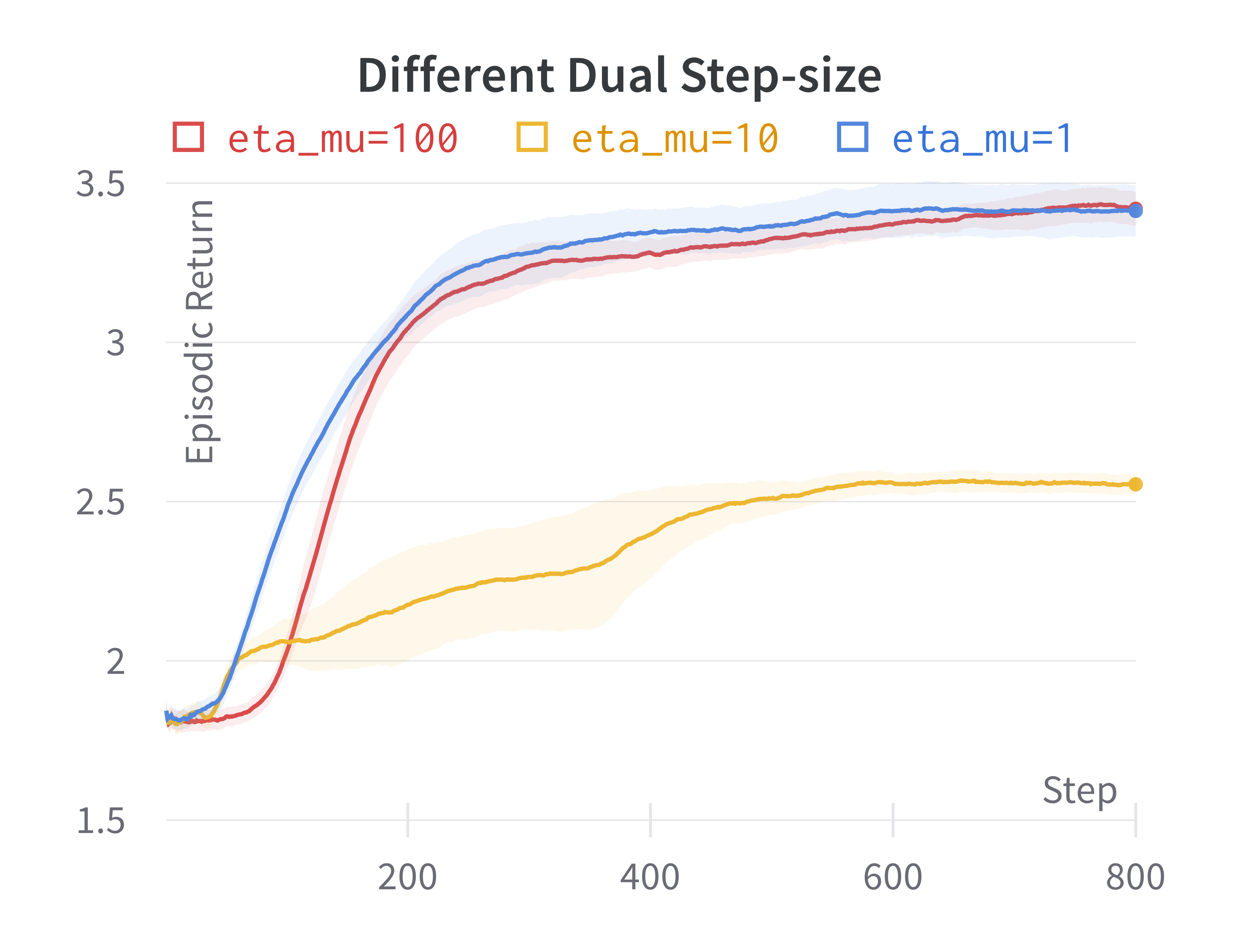}
\end{subfigure}
\medskip
\begin{subfigure}{0.3\textwidth}
\includegraphics[width=\linewidth]{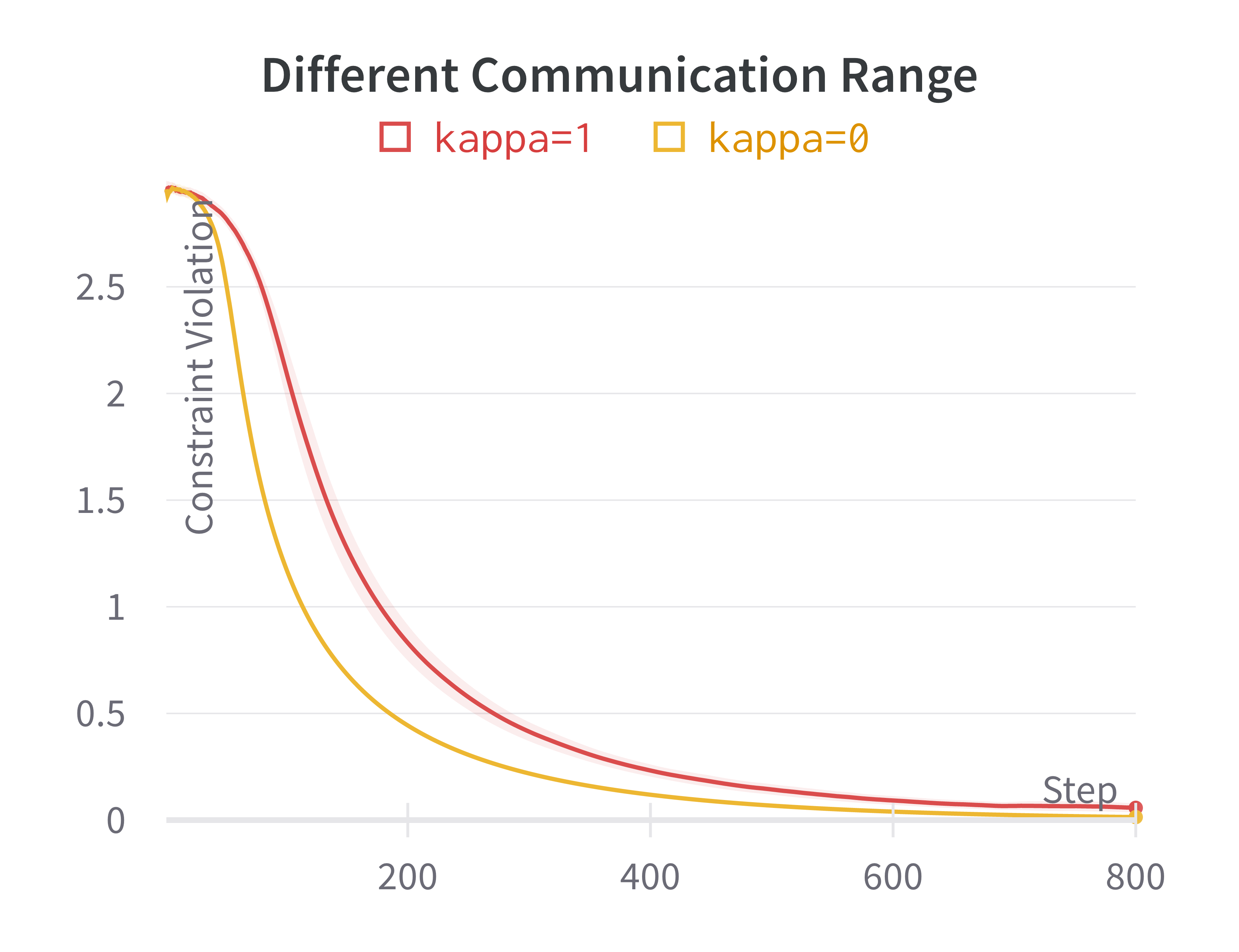}
\end{subfigure}
\begin{subfigure}{0.3\textwidth}
\includegraphics[width=\linewidth]{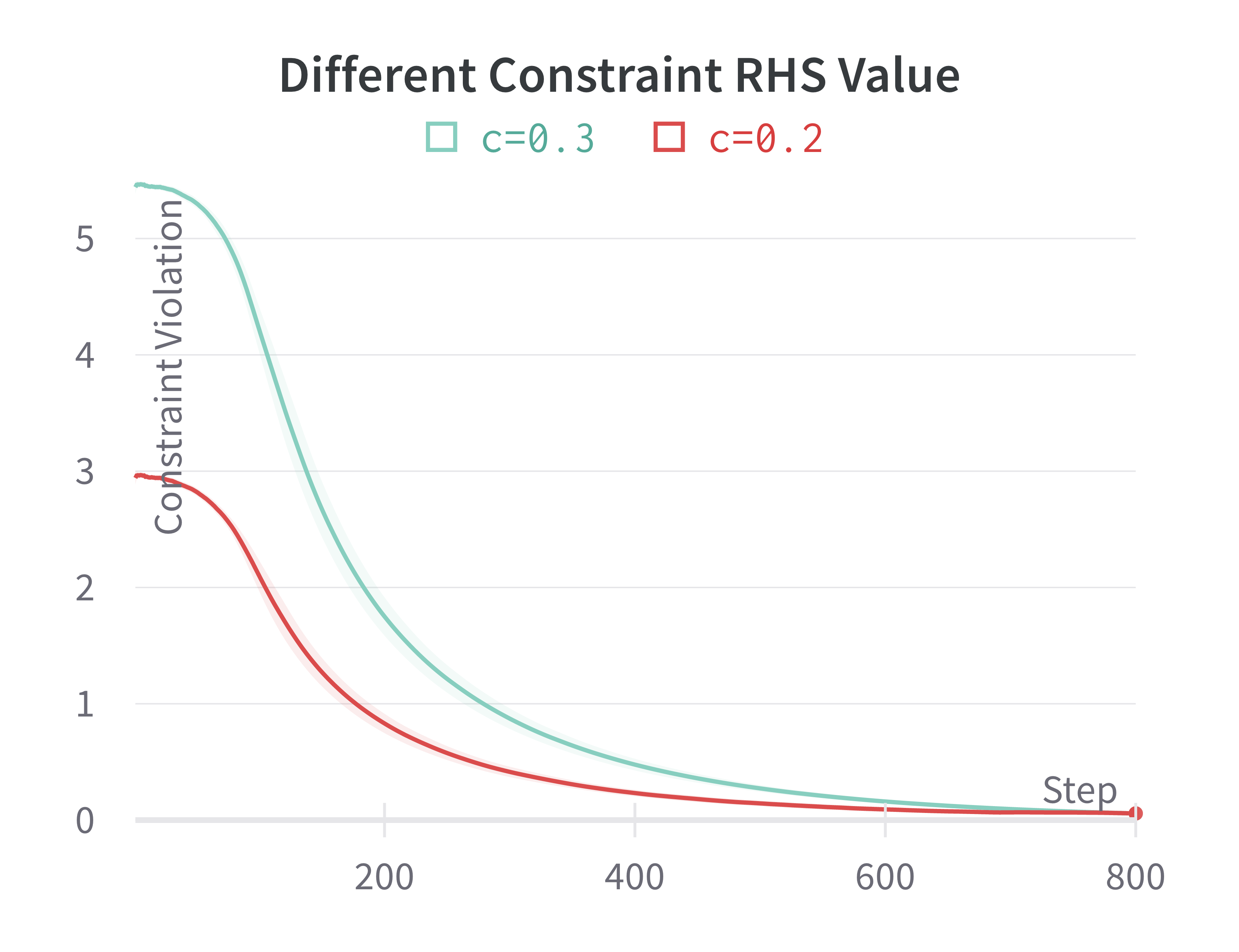}
\end{subfigure}
\begin{subfigure}{0.3\textwidth}
\includegraphics[width=\linewidth]{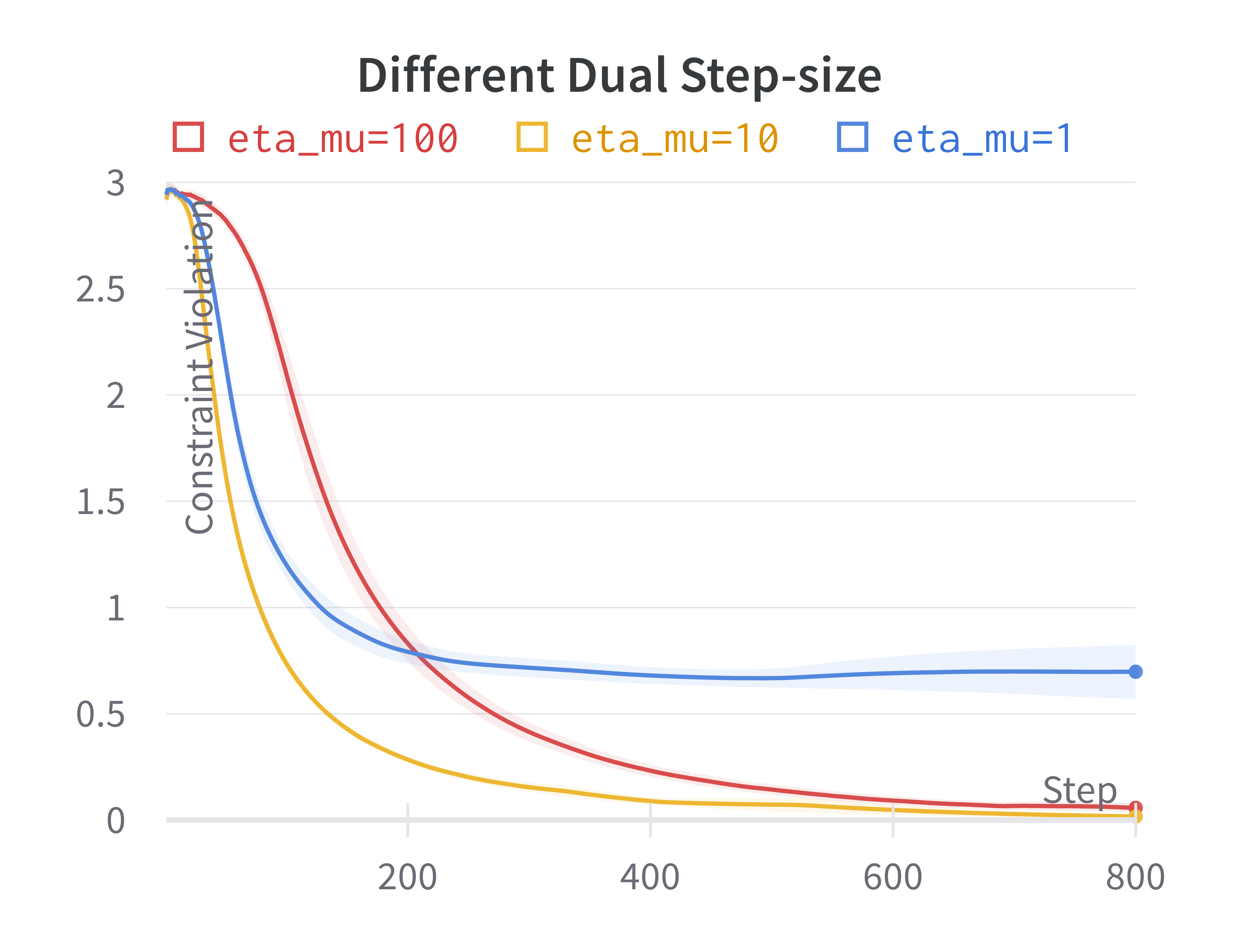}
\end{subfigure}
\caption{Performance of Algorithm \ref{alg:pdac} in wireless communication environment with 25 agents under $\ell_2$-constraints. \textbf{Left}: different communication ranges. \textbf{Middle}: different constraint RHS values. \textbf{Right}: different dual step-sizes.\label{fig:exp_wireless}} 
\end{figure}

\subsection{Baseline comparison}\label{subsec: app_baseline}
We emphasize that our method distinguishes itself from existing approaches like MAPPO-Lagrangian (MAPPO-L) \cite{gu2021multi}, as we allow for the objective and constraints to take the form of general utilities, i.e., nonlinear functions of the occupancy measure.
The adoption of general utilities enable our formulation cover a wider range of problems (as discussed in Section \ref{sec:problem formulation} and Appendix \ref{subsec: benefits_general}), but also renders the existing analysis inapplicable.

To make fair comparisons, we consider two standard safe MARL problems, where both objectives and constrains are defined using cumulative rewards, i.e., the problem can be formulated as \eqref{eq:dec_cmdp}.
The experiment results are illustrated in Figures \ref{fig:baseline} and \ref{fig:baseline_wire} and Table \ref{tab:baseline}.
The two experiments are respectively conducted within the contexts of the Pistonball (10 agents) and wireless communication (25 agents) environments.
In Pistonball, the constraints are designed to keep the pistons away from high positions: each agent $i$ receives an additional reward $u_i$ (constraint reward), proportional to its current height, and we enforce a upper bound for the cumulative reward.
In wireless communication, the constraints are designed to encourage agents only taking actions when necessary: each agent $i$ receives a negative reward once it chooses to send out a packet, and we enforce a lower bound for the cumulative reward.

The original MAPPO-L is not designed for decentralized (distributed) training, as it assumes that each agent has access to global information. 
Therefore, we introduced three baselines based on MAPPO-L and studied their performances in the distributed settings.
\begin{itemize}[leftmargin = *]
    \item \textbf{MAPPO-L}: the original algorithm introduced in \cite{gu2021multi}. Note that each agent has access to global information.
    \item \textbf{Decentralized MAPPO-L}: decentralized version of MAPPO-L, where each agent only has access to information in the local neighborhood. However, since each agent is trained to greedily maximize its individual reward, its behaviors might sacrifice the performance of other agents.
    \item \textbf{Decentralized Aggregate MAPPO-L}: decentralized version of MAPPO-L, where we address the aforementioned issue by redefining each agent’s reward to be the sum of rewards of all agents in its local neighborhood.
\end{itemize}

\begin{table}[tb]
\centering
\caption{Comparison between Scalable Primal-Dual Actor-Critic method in our work with MAPPO-L by \cite{gu2021multi} in Pistonball and wireless communication.\label{tab:baseline}}
\begin{tabular}{
>{\columncolor[HTML]{FFFFFF}}c 
>{\columncolor[HTML]{FFFFFF}}c 
>{\columncolor[HTML]{FFFFFF}}c 
>{\columncolor[HTML]{FFFFFF}}c 
>{\columncolor[HTML]{FFFFFF}}c }
                                           & \multicolumn{2}{c}{\cellcolor[HTML]{FFFFFF}\textbf{Pistonball}}                               & \multicolumn{2}{c}{\cellcolor[HTML]{FFFFFF}\textbf{Wireless Communication}}                            \\ \hline
{\color[HTML]{333333} \textbf{Algorithm}}  & {\color[HTML]{2C3A4A} \textbf{Episodic return}} & {\color[HTML]{2C3A4A} \textbf{Const. vio.}} & {\color[HTML]{2C3A4A} \textbf{Episodic return}} & {\color[HTML]{2C3A4A} \textbf{Const. vio.}} \\ \hline
{\color[HTML]{333333} Ours}                & {\color[HTML]{2C3A4A} \textbf{51.788 ± 1.346}}  & {\color[HTML]{2C3A4A} \textbf{0.04919}}     & {\color[HTML]{2C3A4A} \textbf{3.373 ± 0.112}}   & {\color[HTML]{2C3A4A} \textbf{0.1926}}      \\ \hline
{\color[HTML]{333333} MAPPO-L}             & {\color[HTML]{333333} {50.612 ± 2.118}}  & {\color[HTML]{333333} {0.06884}}     & {\color[HTML]{333333} {3.347 ± 0.131}}   & {\color[HTML]{333333} {0.4000}}      \\ \hline
{\color[HTML]{333333} Decen. Agg. MAPPO-L} & {\color[HTML]{333333} 48.197 ± 6.188}           & {\color[HTML]{333333} 0.2179}               & {\color[HTML]{333333} 3.106 ± 0.673}            & {\color[HTML]{333333} 1.1890}               \\ \hline
{\color[HTML]{333333} Decen. MAPPO-L}      & {\color[HTML]{333333} 41.102 ± 18.769}          & {\color[HTML]{333333} 0.09303}              & {\color[HTML]{333333} 3.148 ± 0.614}            & {\color[HTML]{333333} 1.5760}               \\ 
\end{tabular}
\end{table}

\begin{figure}[tb]
\centering
\begin{subfigure}{0.46\textwidth}
\includegraphics[width = \linewidth]{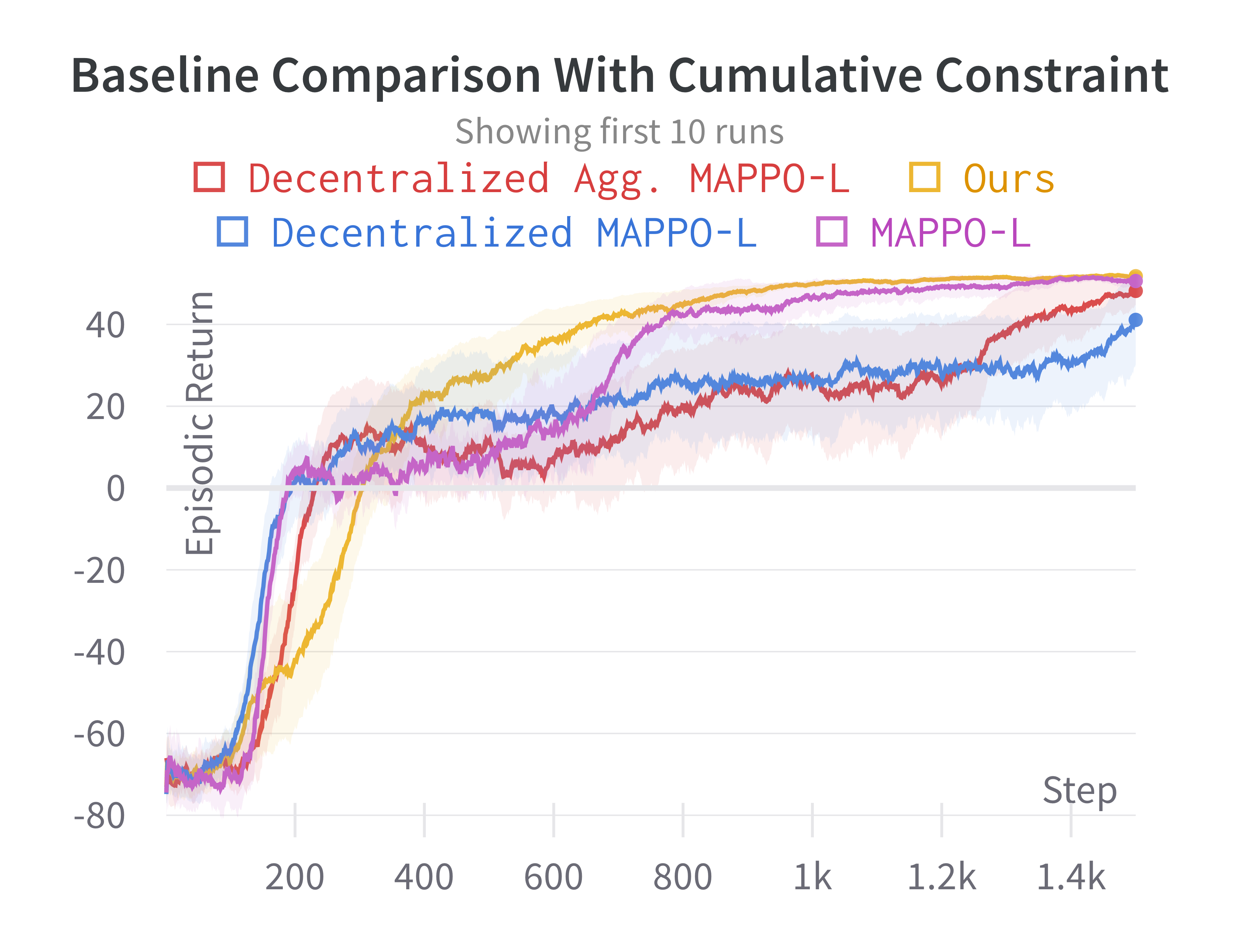}
\end{subfigure}\hspace{4mm}
\begin{subfigure}{0.46\textwidth}
\includegraphics[width = \linewidth]{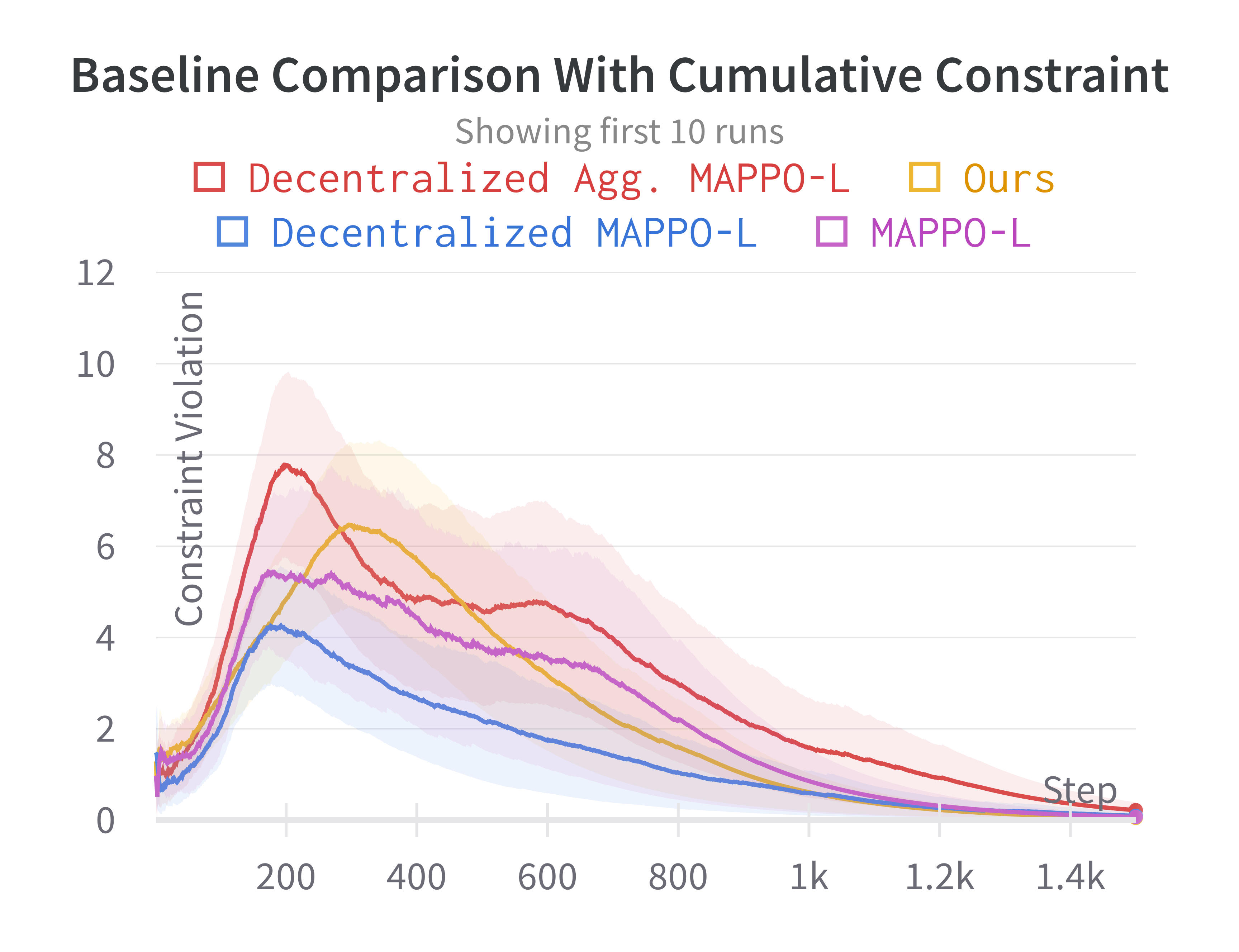}
\end{subfigure}
\caption{Comparison between Scalable Primal-Dual Actor-Critic method in our work with MAPPO-L by \cite{gu2021multi} in Pistonball. \label{fig:baseline}}
\end{figure}
\begin{figure}[tb]
\centering
\begin{subfigure}{0.46\textwidth}
\includegraphics[width = \linewidth]{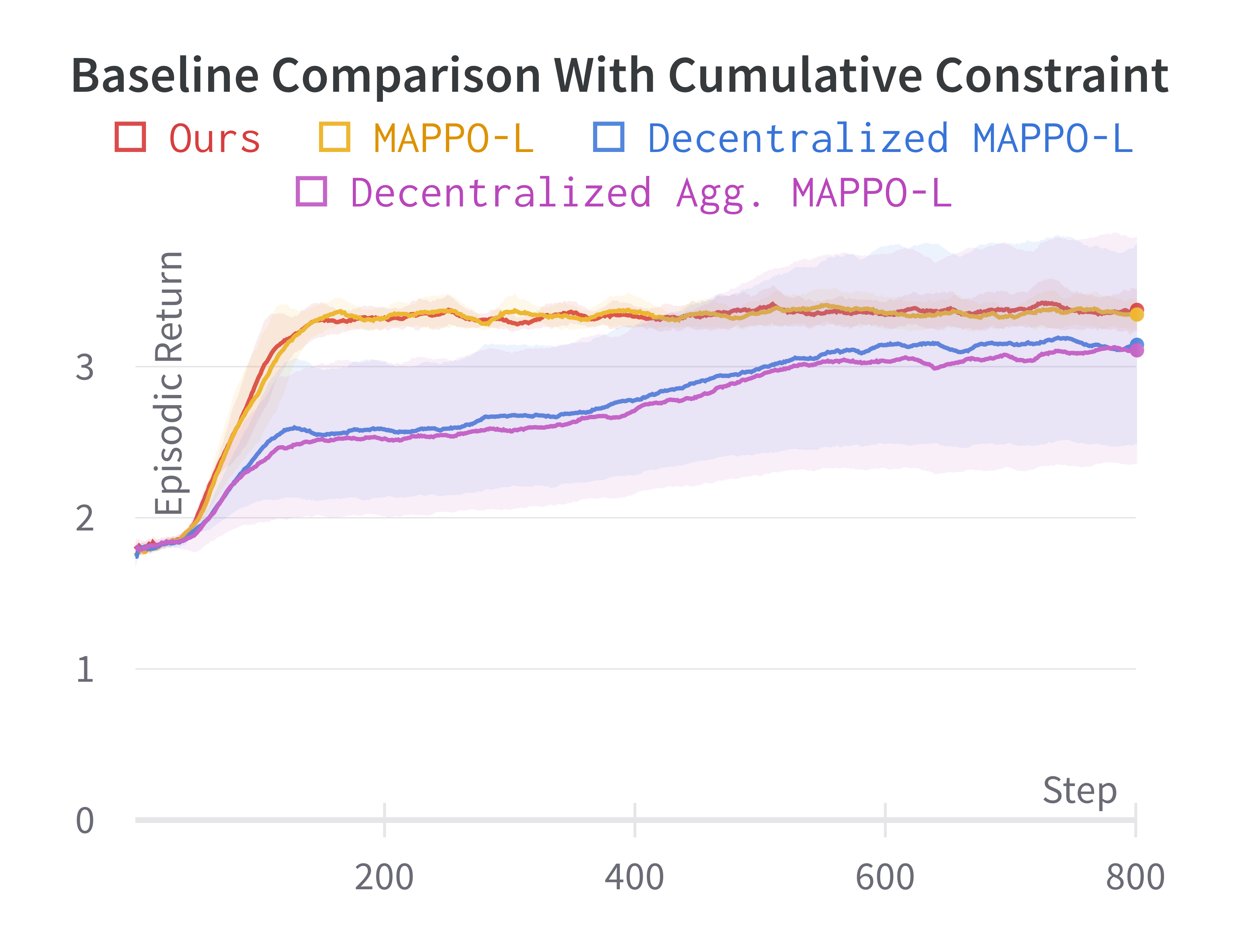}
\end{subfigure}\hspace{4mm}
\begin{subfigure}{0.46\textwidth}
\includegraphics[width = \linewidth]{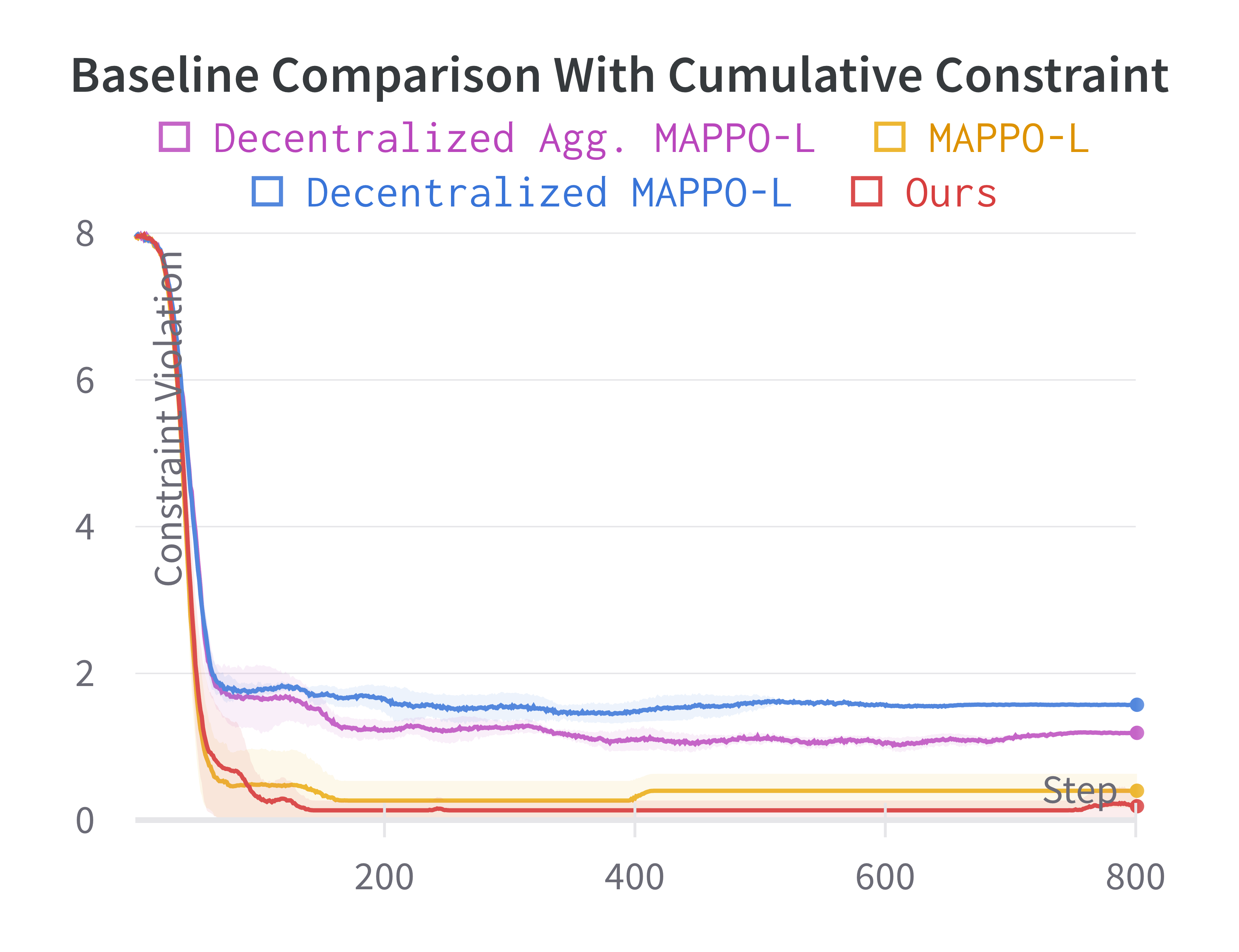}
\end{subfigure}
\caption{Comparison between Scalable Primal-Dual Actor-Critic method in our work with MAPPO-L by \cite{gu2021multi} in wireless communication. \label{fig:baseline_wire}}
\end{figure}

From Figures \ref{fig:baseline} and \ref{fig:baseline_wire} and Table \ref{tab:baseline}, we observe that our method consistently outperforms the baselines while maintaining a satisfying constraint violation. MAPPO-L is the closest baseline in terms of performance, but it requires centralized training and access to global information. If we adapt MAPPO-L to the decentralized case, the performance quickly drops, since in Decentralized MAPPO-L, each agent is only trained to maximize its individual rewards. This is especially problematic in scenarios such as wireless communication where some agents need to make sacrifices. Even if we aggregate all the rewards in the local neighborhood as in Decentralized Aggregate MAPPO-L, the overall returns are still inferior to our algorithm.

\subsection{Benefits of general utilities}\label{subsec: benefits_general}
Finally, we present an experiment underscoring the advantages of using general utilities. 
We remark that prior works such as \cite{zhang2020variational,zhang2021marl} have also demonstrated the power of RL with general utilities compared to traditional cumulative rewards.
It is noteworthy that \textit{using occupancy measures is not guaranteed to get higher returns}. 
Rather, it allows for a more extensive range of objectives and constraints and simplifies the design of cumulative reward-based schemes in certain scenarios. This versatility is particularly useful in tasks like imitation learning (where the agent's actions need to align closely with expert trajectories) and pure exploration, where designing suitable reward schemes can be challenging. Indeed, \cite[Lemma 1]{zahavy2021reward} demonstrated that for certain MDPs, no stationary reward function could equate to a general utility.

To better illustrate the benefits of general utilities, we focus on a scenario where the constraint conflicts with the objective. 
We use the wireless communication environment, which requires less-randomized policies to achieve a good objective value, but this time we instead enforce a high entropy constraint (see \eqref{eq: wirelesss_formulation}). 
A potential alternative for achieving this is introducing a gradient penalty term during critic training by directly deducting the next step action entropy from the policy gradient loss \cite{arjovsky2017wasserstein}.
However, this approach suffers from the ambiguity in selecting the penalty coefficient: while a small coefficient fails to enforce the constraint, an excessively large one can impede the objective. 
In our experiments, we find it challenging to identify a single penalty coefficient $\lambda$ that can achieve high return while keeping the total constraint violation under one.

Figure \ref{fig:grad_penalty} compares the performance of our method with the gradient penalty approach. 
Under a simple grid search, our method (shown in blue) can readily obtain a satisfactory performance while meeting the safety constraint with $\eta_\mu = 200$. 
Conversely, even after an extensive search for the appropriate penalty coefficient, the baseline performances are still unsatisfying. When $\lambda = 0.025$, the gradient penalty baseline is unable to match the return of our method and exceeds the constraint violation requirement. 
The baseline performance only begins to exceed our method at $\lambda = 0.024$, but at this point, the constraint violation significantly exceeds the threshold of one.

\begin{figure}[tb]
\centering
\begin{subfigure}{0.46\textwidth}
\includegraphics[width = \linewidth]{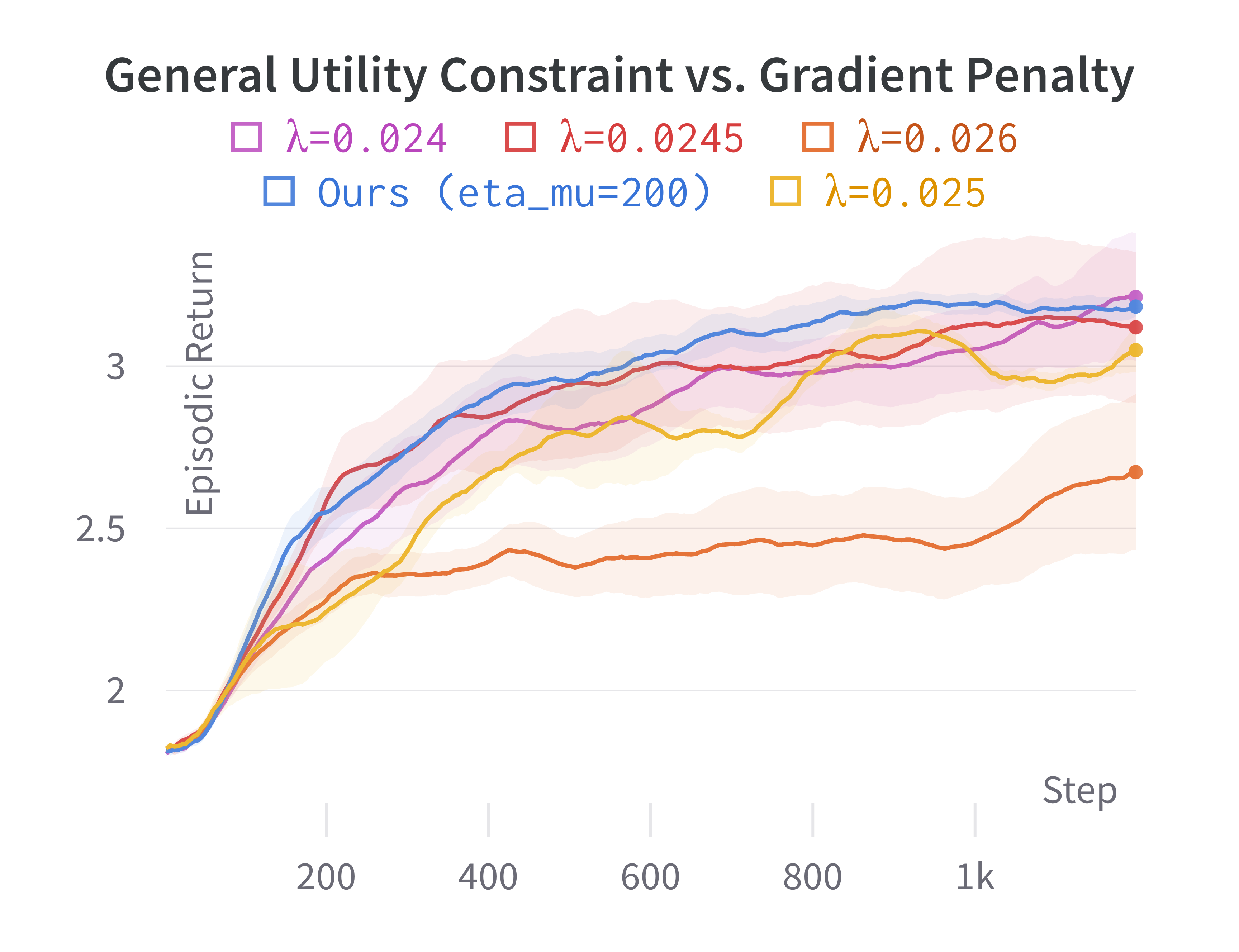}
\end{subfigure}\hspace{4mm}
\begin{subfigure}{0.46\textwidth}
\includegraphics[width = \linewidth]{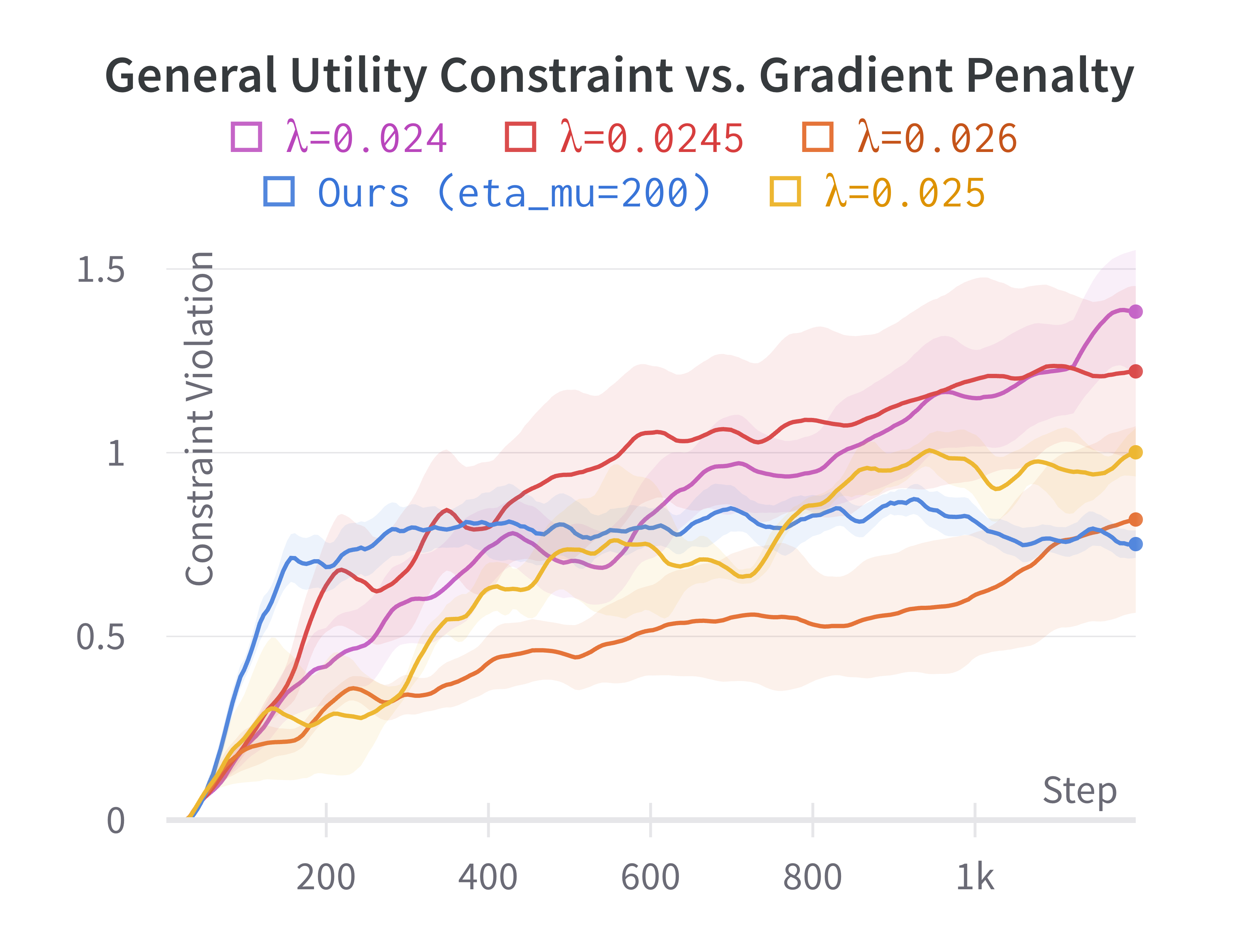}
\end{subfigure}
\caption{General utility constraint versus gradient penalty in wireless communication.\label{fig:grad_penalty}}
\end{figure}

\subsection{Network Architecture and Hyperparameters}
In this section, we specify the network architecture and hyperparameters for our algorithm.

\begin{table}[tb]
    \centering
    \caption{Hyperparameters for Algorithm \ref{alg:pdac}.\label{tab:hyper}}
    \begin{tabular}{l  l  l  l}
        \text { \textbf{Hyperparameter} } & \text{\textbf{Synthetic}} &\text { \textbf{Pistonball} } &\text{\textbf{Wireless Comm.}}  \\
         \hline 
         \text { Total iterations }({T})  & 400 & 1500&800 \\
         \hline \text { Horizon }({H})& 125 & 200 &12\\
\hline \text { Number of agents} & 10 & 10 &5 $\times$ 5\\
\hline \text { Frame stack size} & 0 & 4 & 0\\
\hline \text { Actor lr. } & 10$^{-3}$ & $\in$ \{10, 5, 2, 1\} $\times$  10$^{-4}$ & $\in$ \{5, 1\} $\times$ 10$^{-4}$\\
\hline \text { Critic lr. } & 10$^{-3}$ & $\in$ \{10, 5, 2, 1\} $\times$  10$^{-4}$ & $\in$ \{10, 5\} $\times$ 10$^{-4}$\\
\hline \text { Batch size (B) } & 5 & $\in$ \{5, 8, 16\} & 30\\
\hline \text { Q-evaluation step} & 500 & $\in$ \{512, 1024, 1600\}& 512 \\
\hline \text { Target Q polyak }& 0.95 & 0.995 & $\in$\{0.95, 0.99, 0.995\}\\
\hline \text { Discount }($\gamma$)& 0.99 & $\in$ \{0.8, 0.9, 0.95, 0.99\}&$\in$ \{0.7,0.8,0.9\}
    \end{tabular}
\end{table}

The hyperparameters used are summarized in Table \ref{tab:hyper}. For all experiments, we perform a grid search by randomly sampling from the above list of hyperparameters for each experiment setting and choose the combination that offers the best trade-off between episodic return and constraint violation. We then run 3-6 seeds on the chosen set of hyperparameters to produce the confidence intervals in the above figures. The experiments are produced on Tesla V100s and NVIDIA 3090s. 

Below, we separately introduce the network architecture for the three environments.

\paragraph{Synthetic environment}
The actor network is defined as follows:
$$(2\kappa + 1, 1) \xrightarrow{Embedding} (2\kappa + 1, 4) \xrightarrow{flatten} (4 \times (2 \kappa + 1)) \xrightarrow{linear} (32) \xrightarrow{linear} (\text{num\_actions}).$$
For each agent actor, we first project the states of its $2\kappa$ neighbors along with its own state each into a vector of size $4$. We flatten the resulting embedding and additionally process it with two linear layers. The critic is defined similarly, except that we also include the actions of its $2\kappa$ neighbors along with its own action, so that the resulting vector is of size $8\times (2 \kappa + 1)$.

\paragraph{Pistonball}
The actor network is defined as follows:
$$(\text{frame\_stack\_size}, 2\kappa + 6) \xrightarrow{linear} (\text{frame\_stack\_size}, h_1) \xrightarrow{flatten} (\text{frame\_stack\_size} \times h_1)$$
$$\xrightarrow{linear} (h_2) \xrightarrow{linear} (h_3) \xrightarrow{linear} (\text{num\_actions}).$$

We first process each frame with a linear layer of hidden$\_$dim $h_1 \in \{32, 64\}$ and flatten the result as input into a stack of linear layers, where $h_2 \in \{128, 256, 512\}$ and $h_3 \in \{32, 64\}$. We use ReLu as the activation function. The critic network is defined similarly, except we additionally embed each neighbor action into a vector of size $8$ and concatenate the result together with $(\text{frame\_stack\_size} \times h_1)$.

\paragraph{Wireless communication}
The actor network is defined as follows:
$$(d_i, (2\kappa + 1)^2) \xrightarrow{linear} (d_i, h_1) \xrightarrow{flatten} (d_i \times h_1)\xrightarrow{linear} (h_2) \xrightarrow{linear} (h_3) \xrightarrow{linear} (\text{num\_actions}).$$
Here, $h_1, h_2, h_3$ take the same values as in the Pistonball experiment.

\end{document}